\pdfoutput=1
\documentclass[11pt]{article}
\usepackage[letterpaper,margin=1in]{geometry}
\usepackage[T1]{fontenc}
\usepackage{amsmath,amsthm,mathtools}
\usepackage{newtxtext,newtxmath}
\usepackage{microtype}
\usepackage{booktabs,multirow,array,tabularx}
\usepackage{graphicx}
\usepackage[dvipsnames]{xcolor}
\usepackage[round,authoryear]{natbib}
\usepackage{hyperref}
\hypersetup{colorlinks=true,linkcolor=MidnightBlue,citecolor=MidnightBlue,urlcolor=MidnightBlue}
\usepackage[capitalise,noabbrev]{cleveref}
\usepackage{enumitem}
\usepackage{caption}
\setlist{itemsep=2pt,topsep=3pt}
\newtheorem{lemma}{Lemma}
\newtheorem{proposition}{Proposition}
\newtheorem{theorem}{Theorem}
\theoremstyle{remark}
\newtheorem{remark}{Remark}

\makeatletter
\newcommand{\setnum}[2]{\@namedef{num@#1}{#2}}
\newcommand{\nm}[1]{\@ifundefined{num@#1}{\textcolor{red}{\textbf{??#1??}}}{\@nameuse{num@#1}}}
\newcommand{\ci}[1]{[\nm{#1.lo}, \nm{#1.hi}]}
\newcommand{\pmci}[1]{\ensuremath{\nm{#1}^{+\nm{#1.up}}_{-\nm{#1.dn}}}}
\makeatother
\setnum{att.FULL.early.C}{29.77}
\setnum{att.FULL.early.C_att}{9.26}
\setnum{att.FULL.early.C_pure}{20.51}
\setnum{att.FULL.early.H}{20.12}
\setnum{att.FULL.early.attempt}{37.34}
\setnum{att.FULL.late.C}{42.11}
\setnum{att.FULL.late.C_att}{1.64}
\setnum{att.FULL.late.C_pure}{40.47}
\setnum{att.FULL.late.H}{3.12}
\setnum{att.FULL.late.attempt}{5.55}
\setnum{att.FULLb.s1.early.C}{30.31}
\setnum{att.FULLb.s1.early.C_att}{6.33}
\setnum{att.FULLb.s1.early.C_pure}{23.98}
\setnum{att.FULLb.s1.early.H}{15.35}
\setnum{att.FULLb.s1.early.attempt}{26.60}
\setnum{att.FULLb.s1.late.C}{40.78}
\setnum{att.FULLb.s1.late.C_att}{16.29}
\setnum{att.FULLb.s1.late.C_pure}{24.49}
\setnum{att.FULLb.s1.late.H}{21.09}
\setnum{att.FULLb.s1.late.attempt}{43.01}
\setnum{att.FULLb.s2.early.C}{37.15}
\setnum{att.FULLb.s2.early.C_att}{3.52}
\setnum{att.FULLb.s2.early.C_pure}{33.63}
\setnum{att.FULLb.s2.early.H}{8.05}
\setnum{att.FULLb.s2.early.attempt}{13.44}
\setnum{att.FULLb.s2.late.C}{39.84}
\setnum{att.FULLb.s2.late.C_att}{0.27}
\setnum{att.FULLb.s2.late.C_pure}{39.57}
\setnum{att.FULLb.s2.late.H}{1.37}
\setnum{att.FULLb.s2.late.attempt}{1.80}
\setnum{att.FULLb.s3.early.C}{37.42}
\setnum{att.FULLb.s3.early.C_att}{4.26}
\setnum{att.FULLb.s3.early.C_pure}{33.16}
\setnum{att.FULLb.s3.early.H}{11.48}
\setnum{att.FULLb.s3.early.attempt}{19.77}
\setnum{att.FULLb.s3.late.C}{32.50}
\setnum{att.FULLb.s3.late.C_att}{5.31}
\setnum{att.FULLb.s3.late.C_pure}{27.19}
\setnum{att.FULLb.s3.late.H}{8.36}
\setnum{att.FULLb.s3.late.attempt}{15.08}
\setnum{att.FULLr.s1.early.C}{30.43}
\setnum{att.FULLr.s1.early.C_att}{1.09}
\setnum{att.FULLr.s1.early.C_pure}{29.34}
\setnum{att.FULLr.s1.early.H}{4.73}
\setnum{att.FULLr.s1.early.attempt}{8.16}
\setnum{att.FULLr.s1.late.C}{43.91}
\setnum{att.FULLr.s1.late.C_att}{0.08}
\setnum{att.FULLr.s1.late.C_pure}{43.83}
\setnum{att.FULLr.s1.late.H}{0.00}
\setnum{att.FULLr.s1.late.attempt}{0.12}
\setnum{att.FULLr.s2.early.C}{37.50}
\setnum{att.FULLr.s2.early.C_att}{1.60}
\setnum{att.FULLr.s2.early.C_pure}{35.90}
\setnum{att.FULLr.s2.early.H}{5.12}
\setnum{att.FULLr.s2.early.attempt}{8.63}
\setnum{att.FULLr.s2.late.C}{39.10}
\setnum{att.FULLr.s2.late.C_att}{0.00}
\setnum{att.FULLr.s2.late.C_pure}{39.10}
\setnum{att.FULLr.s2.late.H}{0.00}
\setnum{att.FULLr.s2.late.attempt}{0.00}
\setnum{att.FULLr.s3.early.C}{37.19}
\setnum{att.FULLr.s3.early.C_att}{2.54}
\setnum{att.FULLr.s3.early.C_pure}{34.65}
\setnum{att.FULLr.s3.early.H}{5.98}
\setnum{att.FULLr.s3.early.attempt}{11.25}
\setnum{att.FULLr.s3.late.C}{35.16}
\setnum{att.FULLr.s3.late.C_att}{0.00}
\setnum{att.FULLr.s3.late.C_pure}{35.16}
\setnum{att.FULLr.s3.late.H}{0.00}
\setnum{att.FULLr.s3.late.attempt}{0.00}
\setnum{att.IB.s1.early.C}{29.88}
\setnum{att.IB.s1.early.C_att}{19.65}
\setnum{att.IB.s1.early.C_pure}{10.23}
\setnum{att.IB.s1.early.H}{36.29}
\setnum{att.IB.s1.early.attempt}{66.80}
\setnum{att.IB.s1.late.C}{40.35}
\setnum{att.IB.s1.late.C_att}{38.59}
\setnum{att.IB.s1.late.C_pure}{1.76}
\setnum{att.IB.s1.late.H}{45.62}
\setnum{att.IB.s1.late.attempt}{91.68}
\setnum{att.IB.s2.early.C}{38.71}
\setnum{att.IB.s2.early.C_att}{21.37}
\setnum{att.IB.s2.early.C_pure}{17.34}
\setnum{att.IB.s2.early.H}{27.23}
\setnum{att.IB.s2.early.attempt}{55.39}
\setnum{att.IB.s2.late.C}{38.16}
\setnum{att.IB.s2.late.C_att}{37.19}
\setnum{att.IB.s2.late.C_pure}{0.98}
\setnum{att.IB.s2.late.H}{49.92}
\setnum{att.IB.s2.late.attempt}{90.86}
\setnum{att.IB.s3.early.C}{34.96}
\setnum{att.IB.s3.early.C_att}{14.14}
\setnum{att.IB.s3.early.C_pure}{20.82}
\setnum{att.IB.s3.early.H}{25.23}
\setnum{att.IB.s3.early.attempt}{47.19}
\setnum{att.IB.s3.late.C}{29.14}
\setnum{att.IB.s3.late.C_att}{28.95}
\setnum{att.IB.s3.late.C_pure}{0.20}
\setnum{att.IB.s3.late.H}{55.78}
\setnum{att.IB.s3.late.attempt}{93.01}
\setnum{att.LB.s1.early.C}{28.52}
\setnum{att.LB.s1.early.C_att}{5.43}
\setnum{att.LB.s1.early.C_pure}{23.09}
\setnum{att.LB.s1.early.H}{10.94}
\setnum{att.LB.s1.early.attempt}{21.25}
\setnum{att.LB.s1.late.C}{41.84}
\setnum{att.LB.s1.late.C_att}{0.12}
\setnum{att.LB.s1.late.C_pure}{41.72}
\setnum{att.LB.s1.late.H}{0.08}
\setnum{att.LB.s1.late.attempt}{0.39}
\setnum{att.LB.s2.early.C}{37.97}
\setnum{att.LB.s2.early.C_att}{2.54}
\setnum{att.LB.s2.early.C_pure}{35.43}
\setnum{att.LB.s2.early.H}{6.52}
\setnum{att.LB.s2.early.attempt}{11.41}
\setnum{att.LB.s2.late.C}{38.20}
\setnum{att.LB.s2.late.C_att}{0.08}
\setnum{att.LB.s2.late.C_pure}{38.12}
\setnum{att.LB.s2.late.H}{0.00}
\setnum{att.LB.s2.late.attempt}{0.12}
\setnum{att.LB.s3.early.C}{36.02}
\setnum{att.LB.s3.early.C_att}{4.77}
\setnum{att.LB.s3.early.C_pure}{31.25}
\setnum{att.LB.s3.early.H}{10.59}
\setnum{att.LB.s3.early.attempt}{20.55}
\setnum{att.LB.s3.late.C}{34.22}
\setnum{att.LB.s3.late.C_att}{0.00}
\setnum{att.LB.s3.late.C_pure}{34.22}
\setnum{att.LB.s3.late.H}{0.16}
\setnum{att.LB.s3.late.attempt}{0.35}
\setnum{att.LB120.early.C}{20.04}
\setnum{att.LB120.early.C_att}{18.83}
\setnum{att.LB120.early.C_pure}{1.21}
\setnum{att.LB120.early.H}{56.88}
\setnum{att.LB120.early.attempt}{89.49}
\setnum{att.LB120.late.C}{33.83}
\setnum{att.LB120.late.C_att}{15.94}
\setnum{att.LB120.late.C_pure}{17.89}
\setnum{att.LB120.late.H}{31.25}
\setnum{att.LB120.late.attempt}{52.89}
\setnum{att.LOW.early.C}{28.87}
\setnum{att.LOW.early.C_att}{11.45}
\setnum{att.LOW.early.C_pure}{17.42}
\setnum{att.LOW.early.H}{31.48}
\setnum{att.LOW.early.attempt}{50.70}
\setnum{att.LOW.late.C}{39.88}
\setnum{att.LOW.late.C_att}{39.65}
\setnum{att.LOW.late.C_pure}{0.23}
\setnum{att.LOW.late.H}{46.37}
\setnum{att.LOW.late.attempt}{93.98}
\setnum{att.LOWm.early.C}{29.73}
\setnum{att.LOWm.early.C_att}{3.79}
\setnum{att.LOWm.early.C_pure}{25.94}
\setnum{att.LOWm.early.H}{9.41}
\setnum{att.LOWm.early.attempt}{16.99}
\setnum{att.LOWm.late.C}{44.84}
\setnum{att.LOWm.late.C_att}{0.00}
\setnum{att.LOWm.late.C_pure}{44.84}
\setnum{att.LOWm.late.H}{0.04}
\setnum{att.LOWm.late.attempt}{0.12}
\setnum{att.NONE.early.C}{29.65}
\setnum{att.NONE.early.C_att}{21.29}
\setnum{att.NONE.early.C_pure}{8.36}
\setnum{att.NONE.early.H}{41.80}
\setnum{att.NONE.early.attempt}{71.52}
\setnum{att.NONE.late.C}{36.95}
\setnum{att.NONE.late.C_att}{36.95}
\setnum{att.NONE.late.C_pure}{0.00}
\setnum{att.NONE.late.H}{53.40}
\setnum{att.NONE.late.attempt}{98.28}
\setnum{att.ON120.early.C}{21.13}
\setnum{att.ON120.early.C_att}{19.84}
\setnum{att.ON120.early.C_pure}{1.29}
\setnum{att.ON120.early.H}{57.27}
\setnum{att.ON120.early.attempt}{88.16}
\setnum{att.ON120.late.C}{37.81}
\setnum{att.ON120.late.C_att}{37.81}
\setnum{att.ON120.late.C_pure}{0.00}
\setnum{att.ON120.late.H}{47.81}
\setnum{att.ON120.late.attempt}{92.19}
\setnum{att.ON80.early.C}{30.16}
\setnum{att.ON80.early.C_att}{15.08}
\setnum{att.ON80.early.C_pure}{15.08}
\setnum{att.ON80.early.H}{28.09}
\setnum{att.ON80.early.attempt}{53.40}
\setnum{att.ON80.late.C}{45.04}
\setnum{att.ON80.late.C_att}{0.39}
\setnum{att.ON80.late.C_pure}{44.65}
\setnum{att.ON80.late.H}{0.20}
\setnum{att.ON80.late.attempt}{0.74}
\setnum{att.POS.early.C}{30.04}
\setnum{att.POS.early.C_att}{18.95}
\setnum{att.POS.early.C_pure}{11.09}
\setnum{att.POS.early.H}{36.64}
\setnum{att.POS.early.attempt}{66.17}
\setnum{att.POS.late.C}{40.43}
\setnum{att.POS.late.C_att}{40.16}
\setnum{att.POS.late.C_pure}{0.27}
\setnum{att.POS.late.H}{42.46}
\setnum{att.POS.late.attempt}{89.65}
\setnum{att.RB.s1.early.C}{30.27}
\setnum{att.RB.s1.early.C_att}{16.52}
\setnum{att.RB.s1.early.C_pure}{13.75}
\setnum{att.RB.s1.early.H}{28.40}
\setnum{att.RB.s1.early.attempt}{55.70}
\setnum{att.RB.s1.late.C}{43.40}
\setnum{att.RB.s1.late.C_att}{42.19}
\setnum{att.RB.s1.late.C_pure}{1.21}
\setnum{att.RB.s1.late.H}{36.33}
\setnum{att.RB.s1.late.attempt}{84.38}
\setnum{att.RB.s2.early.C}{37.70}
\setnum{att.RB.s2.early.C_att}{23.71}
\setnum{att.RB.s2.early.C_pure}{13.98}
\setnum{att.RB.s2.early.H}{31.52}
\setnum{att.RB.s2.early.attempt}{61.56}
\setnum{att.RB.s2.late.C}{38.87}
\setnum{att.RB.s2.late.C_att}{38.40}
\setnum{att.RB.s2.late.C_pure}{0.47}
\setnum{att.RB.s2.late.H}{51.68}
\setnum{att.RB.s2.late.attempt}{94.53}
\setnum{att.RB.s3.early.C}{34.06}
\setnum{att.RB.s3.early.C_att}{22.27}
\setnum{att.RB.s3.early.C_pure}{11.80}
\setnum{att.RB.s3.early.H}{30.78}
\setnum{att.RB.s3.early.attempt}{64.34}
\setnum{att.RB.s3.late.C}{32.58}
\setnum{att.RB.s3.late.C_att}{32.58}
\setnum{att.RB.s3.late.C_pure}{0.00}
\setnum{att.RB.s3.late.H}{45.00}
\setnum{att.RB.s3.late.attempt}{83.67}
\setnum{cc.AO.Clate}{43.91}
\setnum{cc.AO.auc}{5.53}
\setnum{cc.AO.hlate}{0.04}
\setnum{cc.AO.labels}{1322}
\setnum{cc.AO.lenlate}{1148}
\setnum{cc.AO.tokensM}{9.39}
\setnum{cc.AO_LB.dClate}{\ensuremath{+}0.66}
\setnum{cc.AO_LB.dClate.abs}{0.66}
\setnum{cc.AO_LB.dClate.dn}{2.11}
\setnum{cc.AO_LB.dClate.hi}{\ensuremath{+}2.81}
\setnum{cc.AO_LB.dClate.hi.abs}{2.81}
\setnum{cc.AO_LB.dClate.lo}{\ensuremath{-}1.45}
\setnum{cc.AO_LB.dClate.lo.abs}{1.45}
\setnum{cc.AO_LB.dClate.up}{2.15}
\setnum{cc.AO_LB.dauc}{\ensuremath{-}1.07}
\setnum{cc.AO_LB.dauc.abs}{1.07}
\setnum{cc.AO_LB.dauc.dn}{0.58}
\setnum{cc.AO_LB.dauc.hi}{\ensuremath{-}0.54}
\setnum{cc.AO_LB.dauc.hi.abs}{0.54}
\setnum{cc.AO_LB.dauc.lo}{\ensuremath{-}1.65}
\setnum{cc.AO_LB.dauc.lo.abs}{1.65}
\setnum{cc.AO_LB.dauc.up}{0.53}
\setnum{cc.AO_LB.dhlate}{\ensuremath{-}0.04}
\setnum{cc.AO_LB.dhlate.abs}{0.04}
\setnum{cc.AO_LB.dhlate.dn}{0.16}
\setnum{cc.AO_LB.dhlate.hi}{\ensuremath{+}0.08}
\setnum{cc.AO_LB.dhlate.hi.abs}{0.08}
\setnum{cc.AO_LB.dhlate.lo}{\ensuremath{-}0.20}
\setnum{cc.AO_LB.dhlate.lo.abs}{0.20}
\setnum{cc.AO_LB.dhlate.up}{0.12}
\setnum{cc.AO_LB.dpass8}{\ensuremath{+}0.31}
\setnum{cc.AO_LB.dpass8.abs}{0.31}
\setnum{cc.AO_LB.dpass8.dn}{3.12}
\setnum{cc.AO_LB.dpass8.hi}{\ensuremath{+}3.44}
\setnum{cc.AO_LB.dpass8.hi.abs}{3.44}
\setnum{cc.AO_LB.dpass8.lo}{\ensuremath{-}2.81}
\setnum{cc.AO_LB.dpass8.lo.abs}{2.81}
\setnum{cc.AO_LB.dpass8.up}{3.13}
\setnum{cc.CAG.Clate}{45.31}
\setnum{cc.CAG.auc}{6.72}
\setnum{cc.CAG.hlate}{0.00}
\setnum{cc.CAG.labels}{648}
\setnum{cc.CAG.lenlate}{914}
\setnum{cc.CAG.tokensM}{8.71}
\setnum{cc.CAG_LB.dClate}{\ensuremath{+}2.07}
\setnum{cc.CAG_LB.dClate.abs}{2.07}
\setnum{cc.CAG_LB.dClate.dn}{2.42}
\setnum{cc.CAG_LB.dClate.hi}{\ensuremath{+}4.38}
\setnum{cc.CAG_LB.dClate.hi.abs}{4.38}
\setnum{cc.CAG_LB.dClate.lo}{\ensuremath{-}0.35}
\setnum{cc.CAG_LB.dClate.lo.abs}{0.35}
\setnum{cc.CAG_LB.dClate.up}{2.31}
\setnum{cc.CAG_LB.dauc}{\ensuremath{+}0.12}
\setnum{cc.CAG_LB.dauc.abs}{0.12}
\setnum{cc.CAG_LB.dauc.dn}{0.56}
\setnum{cc.CAG_LB.dauc.hi}{\ensuremath{+}0.65}
\setnum{cc.CAG_LB.dauc.hi.abs}{0.65}
\setnum{cc.CAG_LB.dauc.lo}{\ensuremath{-}0.44}
\setnum{cc.CAG_LB.dauc.lo.abs}{0.44}
\setnum{cc.CAG_LB.dauc.up}{0.53}
\setnum{cc.CAG_LB.dhlate}{\ensuremath{-}0.08}
\setnum{cc.CAG_LB.dhlate.abs}{0.08}
\setnum{cc.CAG_LB.dhlate.dn}{0.12}
\setnum{cc.CAG_LB.dhlate.hi}{\ensuremath{+}0.00}
\setnum{cc.CAG_LB.dhlate.hi.abs}{0.00}
\setnum{cc.CAG_LB.dhlate.lo}{\ensuremath{-}0.20}
\setnum{cc.CAG_LB.dhlate.lo.abs}{0.20}
\setnum{cc.CAG_LB.dhlate.up}{0.08}
\setnum{cc.FULL.s1.Clate}{48.05}
\setnum{cc.FULL.s1.auc}{4.43}
\setnum{cc.FULL.s1.hlate}{0.12}
\setnum{cc.FULL.s1.labels}{5089}
\setnum{cc.FULL.s2.Clate}{47.46}
\setnum{cc.FULL.s2.auc}{3.45}
\setnum{cc.FULL.s2.hlate}{0.00}
\setnum{cc.FULL.s2.labels}{4931}
\setnum{cc.FULL.s3.Clate}{45.08}
\setnum{cc.FULL.s3.auc}{4.68}
\setnum{cc.FULL.s3.hlate}{0.20}
\setnum{cc.FULL.s3.labels}{4962}
\setnum{cc.FULLLB.s1.dClate}{\ensuremath{+}4.80}
\setnum{cc.FULLLB.s1.dClate.abs}{4.80}
\setnum{cc.FULLLB.s1.dClate.dn}{2.30}
\setnum{cc.FULLLB.s1.dClate.hi}{\ensuremath{+}7.11}
\setnum{cc.FULLLB.s1.dClate.hi.abs}{7.11}
\setnum{cc.FULLLB.s1.dClate.lo}{\ensuremath{+}2.50}
\setnum{cc.FULLLB.s1.dClate.lo.abs}{2.50}
\setnum{cc.FULLLB.s1.dClate.up}{2.31}
\setnum{cc.FULLLB.s1.dauc}{\ensuremath{-}2.17}
\setnum{cc.FULLLB.s1.dauc.abs}{2.17}
\setnum{cc.FULLLB.s1.dauc.dn}{0.50}
\setnum{cc.FULLLB.s1.dauc.hi}{\ensuremath{-}1.67}
\setnum{cc.FULLLB.s1.dauc.hi.abs}{1.67}
\setnum{cc.FULLLB.s1.dauc.lo}{\ensuremath{-}2.67}
\setnum{cc.FULLLB.s1.dauc.lo.abs}{2.67}
\setnum{cc.FULLLB.s1.dauc.up}{0.50}
\setnum{cc.FULLLB.s1.dhlate}{\ensuremath{+}0.04}
\setnum{cc.FULLLB.s1.dhlate.abs}{0.04}
\setnum{cc.FULLLB.s1.dhlate.dn}{0.16}
\setnum{cc.FULLLB.s1.dhlate.hi}{\ensuremath{+}0.20}
\setnum{cc.FULLLB.s1.dhlate.hi.abs}{0.20}
\setnum{cc.FULLLB.s1.dhlate.lo}{\ensuremath{-}0.12}
\setnum{cc.FULLLB.s1.dhlate.lo.abs}{0.12}
\setnum{cc.FULLLB.s1.dhlate.up}{0.16}
\setnum{cc.FULLLB.s1.dpass8}{\ensuremath{+}1.88}
\setnum{cc.FULLLB.s1.dpass8.abs}{1.88}
\setnum{cc.FULLLB.s1.dpass8.dn}{3.13}
\setnum{cc.FULLLB.s1.dpass8.hi}{\ensuremath{+}5.00}
\setnum{cc.FULLLB.s1.dpass8.hi.abs}{5.00}
\setnum{cc.FULLLB.s1.dpass8.lo}{\ensuremath{-}1.25}
\setnum{cc.FULLLB.s1.dpass8.lo.abs}{1.25}
\setnum{cc.FULLLB.s1.dpass8.up}{3.12}
\setnum{cc.FULLLB.s2.dClate}{\ensuremath{+}1.76}
\setnum{cc.FULLLB.s2.dClate.abs}{1.76}
\setnum{cc.FULLLB.s2.dClate.dn}{1.92}
\setnum{cc.FULLLB.s2.dClate.hi}{\ensuremath{+}3.83}
\setnum{cc.FULLLB.s2.dClate.hi.abs}{3.83}
\setnum{cc.FULLLB.s2.dClate.lo}{\ensuremath{-}0.16}
\setnum{cc.FULLLB.s2.dClate.lo.abs}{0.16}
\setnum{cc.FULLLB.s2.dClate.up}{2.07}
\setnum{cc.FULLLB.s2.dauc}{\ensuremath{-}1.52}
\setnum{cc.FULLLB.s2.dauc.abs}{1.52}
\setnum{cc.FULLLB.s2.dauc.dn}{0.47}
\setnum{cc.FULLLB.s2.dauc.hi}{\ensuremath{-}1.06}
\setnum{cc.FULLLB.s2.dauc.hi.abs}{1.06}
\setnum{cc.FULLLB.s2.dauc.lo}{\ensuremath{-}1.99}
\setnum{cc.FULLLB.s2.dauc.lo.abs}{1.99}
\setnum{cc.FULLLB.s2.dauc.up}{0.46}
\setnum{cc.FULLLB.s2.dhlate}{\ensuremath{-}0.04}
\setnum{cc.FULLLB.s2.dhlate.abs}{0.04}
\setnum{cc.FULLLB.s2.dhlate.dn}{0.08}
\setnum{cc.FULLLB.s2.dhlate.hi}{\ensuremath{+}0.00}
\setnum{cc.FULLLB.s2.dhlate.hi.abs}{0.00}
\setnum{cc.FULLLB.s2.dhlate.lo}{\ensuremath{-}0.12}
\setnum{cc.FULLLB.s2.dhlate.lo.abs}{0.12}
\setnum{cc.FULLLB.s2.dhlate.up}{0.04}
\setnum{cc.FULLLB.s2.dpass8}{\ensuremath{+}0.62}
\setnum{cc.FULLLB.s2.dpass8.abs}{0.62}
\setnum{cc.FULLLB.s2.dpass8.dn}{2.81}
\setnum{cc.FULLLB.s2.dpass8.hi}{\ensuremath{+}3.44}
\setnum{cc.FULLLB.s2.dpass8.hi.abs}{3.44}
\setnum{cc.FULLLB.s2.dpass8.lo}{\ensuremath{-}2.19}
\setnum{cc.FULLLB.s2.dpass8.lo.abs}{2.19}
\setnum{cc.FULLLB.s2.dpass8.up}{2.82}
\setnum{cc.FULLLB.s3.dClate}{\ensuremath{-}0.78}
\setnum{cc.FULLLB.s3.dClate.abs}{0.78}
\setnum{cc.FULLLB.s3.dClate.dn}{1.99}
\setnum{cc.FULLLB.s3.dClate.hi}{\ensuremath{+}1.25}
\setnum{cc.FULLLB.s3.dClate.hi.abs}{1.25}
\setnum{cc.FULLLB.s3.dClate.lo}{\ensuremath{-}2.77}
\setnum{cc.FULLLB.s3.dClate.lo.abs}{2.77}
\setnum{cc.FULLLB.s3.dClate.up}{2.03}
\setnum{cc.FULLLB.s3.dauc}{\ensuremath{-}0.21}
\setnum{cc.FULLLB.s3.dauc.abs}{0.21}
\setnum{cc.FULLLB.s3.dauc.dn}{0.52}
\setnum{cc.FULLLB.s3.dauc.hi}{\ensuremath{+}0.30}
\setnum{cc.FULLLB.s3.dauc.hi.abs}{0.30}
\setnum{cc.FULLLB.s3.dauc.lo}{\ensuremath{-}0.73}
\setnum{cc.FULLLB.s3.dauc.lo.abs}{0.73}
\setnum{cc.FULLLB.s3.dauc.up}{0.51}
\setnum{cc.FULLLB.s3.dhlate}{\ensuremath{+}0.16}
\setnum{cc.FULLLB.s3.dhlate.abs}{0.16}
\setnum{cc.FULLLB.s3.dhlate.dn}{0.20}
\setnum{cc.FULLLB.s3.dhlate.hi}{\ensuremath{+}0.35}
\setnum{cc.FULLLB.s3.dhlate.hi.abs}{0.35}
\setnum{cc.FULLLB.s3.dhlate.lo}{\ensuremath{-}0.04}
\setnum{cc.FULLLB.s3.dhlate.lo.abs}{0.04}
\setnum{cc.FULLLB.s3.dhlate.up}{0.19}
\setnum{cc.FULLLB.s3.dpass8}{\ensuremath{+}3.75}
\setnum{cc.FULLLB.s3.dpass8.abs}{3.75}
\setnum{cc.FULLLB.s3.dpass8.dn}{2.81}
\setnum{cc.FULLLB.s3.dpass8.hi}{\ensuremath{+}6.26}
\setnum{cc.FULLLB.s3.dpass8.hi.abs}{6.26}
\setnum{cc.FULLLB.s3.dpass8.lo}{\ensuremath{+}0.94}
\setnum{cc.FULLLB.s3.dpass8.lo.abs}{0.94}
\setnum{cc.FULLLB.s3.dpass8.up}{2.51}
\setnum{cc.IPW.s1.Clate}{38.71}
\setnum{cc.IPW.s1.auc}{6.45}
\setnum{cc.IPW.s1.hlate}{0.82}
\setnum{cc.IPW.s1.labels}{1341}
\setnum{cc.IPW.s1.lenlate}{914}
\setnum{cc.IPW.s1.pass8}{56.25}
\setnum{cc.IPW.s1.tokensM}{9.40}
\setnum{cc.IPW.s2.Clate}{34.57}
\setnum{cc.IPW.s2.auc}{10.29}
\setnum{cc.IPW.s2.hlate}{0.86}
\setnum{cc.IPW.s2.labels}{1355}
\setnum{cc.IPW.s2.lenlate}{1081}
\setnum{cc.IPW.s2.pass8}{48.12}
\setnum{cc.IPW.s2.tokensM}{9.35}
\setnum{cc.IPW.s3.Clate}{38.12}
\setnum{cc.IPW.s3.auc}{7.66}
\setnum{cc.IPW.s3.hlate}{0.35}
\setnum{cc.IPW.s3.labels}{1225}
\setnum{cc.IPW.s3.lenlate}{904}
\setnum{cc.IPW.s3.pass8}{54.06}
\setnum{cc.IPW.s3.tokensM}{8.93}
\setnum{cc.IPW_IPWtrue.dClate}{\ensuremath{+}2.15}
\setnum{cc.IPW_IPWtrue.dClate.abs}{2.15}
\setnum{cc.IPW_IPWtrue.dClate.dn}{1.99}
\setnum{cc.IPW_IPWtrue.dClate.hi}{\ensuremath{+}4.26}
\setnum{cc.IPW_IPWtrue.dClate.hi.abs}{4.26}
\setnum{cc.IPW_IPWtrue.dClate.lo}{\ensuremath{+}0.16}
\setnum{cc.IPW_IPWtrue.dClate.lo.abs}{0.16}
\setnum{cc.IPW_IPWtrue.dClate.up}{2.11}
\setnum{cc.IPW_IPWtrue.dauc}{\ensuremath{-}12.96}
\setnum{cc.IPW_IPWtrue.dauc.abs}{12.96}
\setnum{cc.IPW_IPWtrue.dauc.dn}{1.13}
\setnum{cc.IPW_IPWtrue.dauc.hi}{\ensuremath{-}11.84}
\setnum{cc.IPW_IPWtrue.dauc.hi.abs}{11.84}
\setnum{cc.IPW_IPWtrue.dauc.lo}{\ensuremath{-}14.09}
\setnum{cc.IPW_IPWtrue.dauc.lo.abs}{14.09}
\setnum{cc.IPW_IPWtrue.dauc.up}{1.12}
\setnum{cc.IPW_IPWtrue.dhlate}{\ensuremath{-}13.91}
\setnum{cc.IPW_IPWtrue.dhlate.abs}{13.91}
\setnum{cc.IPW_IPWtrue.dhlate.dn}{1.95}
\setnum{cc.IPW_IPWtrue.dhlate.hi}{\ensuremath{-}11.99}
\setnum{cc.IPW_IPWtrue.dhlate.hi.abs}{11.99}
\setnum{cc.IPW_IPWtrue.dhlate.lo}{\ensuremath{-}15.86}
\setnum{cc.IPW_IPWtrue.dhlate.lo.abs}{15.86}
\setnum{cc.IPW_IPWtrue.dhlate.up}{1.92}
\setnum{cc.IPW_IPWtrue.dpass8}{\ensuremath{+}3.75}
\setnum{cc.IPW_IPWtrue.dpass8.abs}{3.75}
\setnum{cc.IPW_IPWtrue.dpass8.dn}{2.81}
\setnum{cc.IPW_IPWtrue.dpass8.hi}{\ensuremath{+}6.56}
\setnum{cc.IPW_IPWtrue.dpass8.hi.abs}{6.56}
\setnum{cc.IPW_IPWtrue.dpass8.lo}{\ensuremath{+}0.94}
\setnum{cc.IPW_IPWtrue.dpass8.lo.abs}{0.94}
\setnum{cc.IPW_IPWtrue.dpass8.up}{2.81}
\setnum{cc.IPWstd.Clate}{42.54}
\setnum{cc.IPWstd.auc}{20.00}
\setnum{cc.IPWstd.hlate}{19.06}
\setnum{cc.IPWstd.labels}{1832}
\setnum{cc.IPWstd.lenlate}{1015}
\setnum{cc.IPWstd.tokensM}{9.68}
\setnum{cc.IPWstd_IPW.dClate}{\ensuremath{+}3.83}
\setnum{cc.IPWstd_IPW.dClate.abs}{3.83}
\setnum{cc.IPWstd_IPW.dClate.dn}{2.31}
\setnum{cc.IPWstd_IPW.dClate.hi}{\ensuremath{+}6.17}
\setnum{cc.IPWstd_IPW.dClate.hi.abs}{6.17}
\setnum{cc.IPWstd_IPW.dClate.lo}{\ensuremath{+}1.52}
\setnum{cc.IPWstd_IPW.dClate.lo.abs}{1.52}
\setnum{cc.IPWstd_IPW.dClate.up}{2.34}
\setnum{cc.IPWstd_IPW.dauc}{\ensuremath{+}13.55}
\setnum{cc.IPWstd_IPW.dauc.abs}{13.55}
\setnum{cc.IPWstd_IPW.dauc.dn}{1.12}
\setnum{cc.IPWstd_IPW.dauc.hi}{\ensuremath{+}14.63}
\setnum{cc.IPWstd_IPW.dauc.hi.abs}{14.63}
\setnum{cc.IPWstd_IPW.dauc.lo}{\ensuremath{+}12.43}
\setnum{cc.IPWstd_IPW.dauc.lo.abs}{12.43}
\setnum{cc.IPWstd_IPW.dauc.up}{1.08}
\setnum{cc.IPWstd_IPW.dhlate}{\ensuremath{+}18.24}
\setnum{cc.IPWstd_IPW.dhlate.abs}{18.24}
\setnum{cc.IPWstd_IPW.dhlate.dn}{2.26}
\setnum{cc.IPWstd_IPW.dhlate.hi}{\ensuremath{+}20.55}
\setnum{cc.IPWstd_IPW.dhlate.hi.abs}{20.55}
\setnum{cc.IPWstd_IPW.dhlate.lo}{\ensuremath{+}15.98}
\setnum{cc.IPWstd_IPW.dhlate.lo.abs}{15.98}
\setnum{cc.IPWstd_IPW.dhlate.up}{2.31}
\setnum{cc.IPWstd_IPW.dpass8}{\ensuremath{-}5.62}
\setnum{cc.IPWstd_IPW.dpass8.abs}{5.62}
\setnum{cc.IPWstd_IPW.dpass8.dn}{3.13}
\setnum{cc.IPWstd_IPW.dpass8.hi}{\ensuremath{-}2.81}
\setnum{cc.IPWstd_IPW.dpass8.hi.abs}{2.81}
\setnum{cc.IPWstd_IPW.dpass8.lo}{\ensuremath{-}8.75}
\setnum{cc.IPWstd_IPW.dpass8.lo.abs}{8.75}
\setnum{cc.IPWstd_IPW.dpass8.up}{2.81}
\setnum{cc.IPWtrue.Clate}{36.56}
\setnum{cc.IPWtrue.auc}{19.40}
\setnum{cc.IPWtrue.hlate}{14.73}
\setnum{cc.IPWtrue.labels}{1660}
\setnum{cc.IPWtrue.lenlate}{909}
\setnum{cc.IPWtrue.tokensM}{8.23}
\setnum{cc.LB.max.lenlate}{847}
\setnum{cc.LB.min.lenlate}{749}
\setnum{cc.LB.s1.Clate}{43.24}
\setnum{cc.LB.s1.auc}{6.60}
\setnum{cc.LB.s1.hlate}{0.08}
\setnum{cc.LB.s1.labels}{1403}
\setnum{cc.LB.s1.lenlate}{749}
\setnum{cc.LB.s1.lenlate.ref}{749}
\setnum{cc.LB.s1.pass8}{59.06}
\setnum{cc.LB.s1.tokensM}{8.17}
\setnum{cc.LB.s2.Clate}{45.70}
\setnum{cc.LB.s2.auc}{4.97}
\setnum{cc.LB.s2.hlate}{0.04}
\setnum{cc.LB.s2.labels}{1393}
\setnum{cc.LB.s2.lenlate}{846}
\setnum{cc.LB.s2.lenlate.ref}{846}
\setnum{cc.LB.s2.pass8}{57.19}
\setnum{cc.LB.s2.tokensM}{8.21}
\setnum{cc.LB.s3.Clate}{45.86}
\setnum{cc.LB.s3.auc}{4.88}
\setnum{cc.LB.s3.hlate}{0.04}
\setnum{cc.LB.s3.labels}{1395}
\setnum{cc.LB.s3.lenlate}{847}
\setnum{cc.LB.s3.lenlate.ref}{847}
\setnum{cc.LB.s3.pass8}{57.81}
\setnum{cc.LB.s3.tokensM}{8.28}
\setnum{cc.LBFULL.max.labelratio}{28}
\setnum{cc.LBFULL.min.labelratio}{28}
\setnum{cc.LBFULL.s1.labelratio}{28}
\setnum{cc.LBFULL.s2.labelratio}{28}
\setnum{cc.LBFULL.s3.labelratio}{28}
\setnum{cc.LBIPW.labels.IPW}{3921}
\setnum{cc.LBIPW.labels.LB}{4191}
\setnum{cc.LBIPW.labels.excess}{6.9}
\setnum{cc.LBIPW.max.dClate}{\ensuremath{+}11.13}
\setnum{cc.LBIPW.max.dClate.abs}{11.13}
\setnum{cc.LBIPW.max.dauc}{\ensuremath{+}0.16}
\setnum{cc.LBIPW.max.dauc.abs}{0.16}
\setnum{cc.LBIPW.max.dhlate}{\ensuremath{-}0.31}
\setnum{cc.LBIPW.max.dhlate.abs}{0.31}
\setnum{cc.LBIPW.max.dpass8}{\ensuremath{+}9.06}
\setnum{cc.LBIPW.max.dpass8.abs}{9.06}
\setnum{cc.LBIPW.mean.dClate}{\ensuremath{+}7.80}
\setnum{cc.LBIPW.mean.dClate.abs}{7.80}
\setnum{cc.LBIPW.mean.dauc}{\ensuremath{-}2.65}
\setnum{cc.LBIPW.mean.dauc.abs}{2.65}
\setnum{cc.LBIPW.mean.dhlate}{\ensuremath{-}0.62}
\setnum{cc.LBIPW.mean.dhlate.abs}{0.62}
\setnum{cc.LBIPW.mean.dpass8}{\ensuremath{+}5.21}
\setnum{cc.LBIPW.mean.dpass8.abs}{5.21}
\setnum{cc.LBIPW.min.dClate}{\ensuremath{+}4.53}
\setnum{cc.LBIPW.min.dClate.abs}{4.53}
\setnum{cc.LBIPW.min.dauc}{\ensuremath{-}5.32}
\setnum{cc.LBIPW.min.dauc.abs}{5.32}
\setnum{cc.LBIPW.min.dhlate}{\ensuremath{-}0.82}
\setnum{cc.LBIPW.min.dhlate.abs}{0.82}
\setnum{cc.LBIPW.min.dpass8}{\ensuremath{+}2.81}
\setnum{cc.LBIPW.min.dpass8.abs}{2.81}
\setnum{cc.LBIPW.p.dClate}{0.055}
\setnum{cc.LBIPW.p.dauc}{0.236}
\setnum{cc.LBIPW.p.dhlate}{0.058}
\setnum{cc.LBIPW.p.dpass8}{0.116}
\setnum{cc.LBIPW.s1.dClate}{\ensuremath{+}4.53}
\setnum{cc.LBIPW.s1.dClate.abs}{4.53}
\setnum{cc.LBIPW.s1.dClate.dn}{2.15}
\setnum{cc.LBIPW.s1.dClate.hi}{\ensuremath{+}6.76}
\setnum{cc.LBIPW.s1.dClate.hi.abs}{6.76}
\setnum{cc.LBIPW.s1.dClate.lo}{\ensuremath{+}2.38}
\setnum{cc.LBIPW.s1.dClate.lo.abs}{2.38}
\setnum{cc.LBIPW.s1.dClate.up}{2.23}
\setnum{cc.LBIPW.s1.dauc}{\ensuremath{+}0.16}
\setnum{cc.LBIPW.s1.dauc.abs}{0.16}
\setnum{cc.LBIPW.s1.dauc.dn}{0.57}
\setnum{cc.LBIPW.s1.dauc.hi}{\ensuremath{+}0.73}
\setnum{cc.LBIPW.s1.dauc.hi.abs}{0.73}
\setnum{cc.LBIPW.s1.dauc.lo}{\ensuremath{-}0.41}
\setnum{cc.LBIPW.s1.dauc.lo.abs}{0.41}
\setnum{cc.LBIPW.s1.dauc.up}{0.57}
\setnum{cc.LBIPW.s1.dhlate}{\ensuremath{-}0.74}
\setnum{cc.LBIPW.s1.dhlate.abs}{0.74}
\setnum{cc.LBIPW.s1.dhlate.dn}{0.39}
\setnum{cc.LBIPW.s1.dhlate.hi}{\ensuremath{-}0.39}
\setnum{cc.LBIPW.s1.dhlate.hi.abs}{0.39}
\setnum{cc.LBIPW.s1.dhlate.lo}{\ensuremath{-}1.13}
\setnum{cc.LBIPW.s1.dhlate.lo.abs}{1.13}
\setnum{cc.LBIPW.s1.dhlate.up}{0.35}
\setnum{cc.LBIPW.s1.dpass8}{\ensuremath{+}2.81}
\setnum{cc.LBIPW.s1.dpass8.abs}{2.81}
\setnum{cc.LBIPW.s1.dpass8.dn}{3.12}
\setnum{cc.LBIPW.s1.dpass8.hi}{\ensuremath{+}5.94}
\setnum{cc.LBIPW.s1.dpass8.hi.abs}{5.94}
\setnum{cc.LBIPW.s1.dpass8.lo}{\ensuremath{-}0.31}
\setnum{cc.LBIPW.s1.dpass8.lo.abs}{0.31}
\setnum{cc.LBIPW.s1.dpass8.up}{3.13}
\setnum{cc.LBIPW.s1.labelratio}{\ensuremath{+}4.6}
\setnum{cc.LBIPW.s1.labelratio.abs}{4.6}
\setnum{cc.LBIPW.s1.tokenratio}{15}
\setnum{cc.LBIPW.s2.dClate}{\ensuremath{+}11.13}
\setnum{cc.LBIPW.s2.dClate.abs}{11.13}
\setnum{cc.LBIPW.s2.dClate.dn}{2.34}
\setnum{cc.LBIPW.s2.dClate.hi}{\ensuremath{+}13.48}
\setnum{cc.LBIPW.s2.dClate.hi.abs}{13.48}
\setnum{cc.LBIPW.s2.dClate.lo}{\ensuremath{+}8.79}
\setnum{cc.LBIPW.s2.dClate.lo.abs}{8.79}
\setnum{cc.LBIPW.s2.dClate.up}{2.35}
\setnum{cc.LBIPW.s2.dauc}{\ensuremath{-}5.32}
\setnum{cc.LBIPW.s2.dauc.abs}{5.32}
\setnum{cc.LBIPW.s2.dauc.dn}{0.72}
\setnum{cc.LBIPW.s2.dauc.hi}{\ensuremath{-}4.64}
\setnum{cc.LBIPW.s2.dauc.hi.abs}{4.64}
\setnum{cc.LBIPW.s2.dauc.lo}{\ensuremath{-}6.04}
\setnum{cc.LBIPW.s2.dauc.lo.abs}{6.04}
\setnum{cc.LBIPW.s2.dauc.up}{0.68}
\setnum{cc.LBIPW.s2.dhlate}{\ensuremath{-}0.82}
\setnum{cc.LBIPW.s2.dhlate.abs}{0.82}
\setnum{cc.LBIPW.s2.dhlate.dn}{0.39}
\setnum{cc.LBIPW.s2.dhlate.hi}{\ensuremath{-}0.47}
\setnum{cc.LBIPW.s2.dhlate.hi.abs}{0.47}
\setnum{cc.LBIPW.s2.dhlate.lo}{\ensuremath{-}1.21}
\setnum{cc.LBIPW.s2.dhlate.lo.abs}{1.21}
\setnum{cc.LBIPW.s2.dhlate.up}{0.35}
\setnum{cc.LBIPW.s2.dpass8}{\ensuremath{+}9.06}
\setnum{cc.LBIPW.s2.dpass8.abs}{9.06}
\setnum{cc.LBIPW.s2.dpass8.dn}{3.12}
\setnum{cc.LBIPW.s2.dpass8.hi}{\ensuremath{+}12.19}
\setnum{cc.LBIPW.s2.dpass8.hi.abs}{12.19}
\setnum{cc.LBIPW.s2.dpass8.lo}{\ensuremath{+}5.94}
\setnum{cc.LBIPW.s2.dpass8.lo.abs}{5.94}
\setnum{cc.LBIPW.s2.dpass8.up}{3.13}
\setnum{cc.LBIPW.s2.labelratio}{\ensuremath{+}2.8}
\setnum{cc.LBIPW.s2.labelratio.abs}{2.8}
\setnum{cc.LBIPW.s2.tokenratio}{14}
\setnum{cc.LBIPW.s3.dClate}{\ensuremath{+}7.73}
\setnum{cc.LBIPW.s3.dClate.abs}{7.73}
\setnum{cc.LBIPW.s3.dClate.dn}{2.34}
\setnum{cc.LBIPW.s3.dClate.hi}{\ensuremath{+}10.08}
\setnum{cc.LBIPW.s3.dClate.hi.abs}{10.08}
\setnum{cc.LBIPW.s3.dClate.lo}{\ensuremath{+}5.39}
\setnum{cc.LBIPW.s3.dClate.lo.abs}{5.39}
\setnum{cc.LBIPW.s3.dClate.up}{2.35}
\setnum{cc.LBIPW.s3.dauc}{\ensuremath{-}2.77}
\setnum{cc.LBIPW.s3.dauc.abs}{2.77}
\setnum{cc.LBIPW.s3.dauc.dn}{0.64}
\setnum{cc.LBIPW.s3.dauc.hi}{\ensuremath{-}2.15}
\setnum{cc.LBIPW.s3.dauc.hi.abs}{2.15}
\setnum{cc.LBIPW.s3.dauc.lo}{\ensuremath{-}3.41}
\setnum{cc.LBIPW.s3.dauc.lo.abs}{3.41}
\setnum{cc.LBIPW.s3.dauc.up}{0.62}
\setnum{cc.LBIPW.s3.dhlate}{\ensuremath{-}0.31}
\setnum{cc.LBIPW.s3.dhlate.abs}{0.31}
\setnum{cc.LBIPW.s3.dhlate.dn}{0.28}
\setnum{cc.LBIPW.s3.dhlate.hi}{\ensuremath{-}0.08}
\setnum{cc.LBIPW.s3.dhlate.hi.abs}{0.08}
\setnum{cc.LBIPW.s3.dhlate.lo}{\ensuremath{-}0.59}
\setnum{cc.LBIPW.s3.dhlate.lo.abs}{0.59}
\setnum{cc.LBIPW.s3.dhlate.up}{0.23}
\setnum{cc.LBIPW.s3.dpass8}{\ensuremath{+}3.75}
\setnum{cc.LBIPW.s3.dpass8.abs}{3.75}
\setnum{cc.LBIPW.s3.dpass8.dn}{2.81}
\setnum{cc.LBIPW.s3.dpass8.hi}{\ensuremath{+}6.88}
\setnum{cc.LBIPW.s3.dpass8.hi.abs}{6.88}
\setnum{cc.LBIPW.s3.dpass8.lo}{\ensuremath{+}0.94}
\setnum{cc.LBIPW.s3.dpass8.lo.abs}{0.94}
\setnum{cc.LBIPW.s3.dpass8.up}{3.13}
\setnum{cc.LBIPW.s3.labelratio}{\ensuremath{+}13.9}
\setnum{cc.LBIPW.s3.labelratio.abs}{13.9}
\setnum{cc.LBIPW.s3.tokenratio}{8}
\setnum{cc.LBIPW.t.dClate}{4.09}
\setnum{cc.LBIPW.t.dauc}{\ensuremath{-}1.67}
\setnum{cc.LBIPW.t.dhlate}{\ensuremath{-}3.96}
\setnum{cc.LBIPW.t.dpass8}{2.68}
\setnum{cc.LB_IPWtrue.dClate}{\ensuremath{+}6.68}
\setnum{cc.LB_IPWtrue.dClate.abs}{6.68}
\setnum{cc.LB_IPWtrue.dClate.dn}{2.46}
\setnum{cc.LB_IPWtrue.dClate.hi}{\ensuremath{+}9.22}
\setnum{cc.LB_IPWtrue.dClate.hi.abs}{9.22}
\setnum{cc.LB_IPWtrue.dClate.lo}{\ensuremath{+}4.22}
\setnum{cc.LB_IPWtrue.dClate.lo.abs}{4.22}
\setnum{cc.LB_IPWtrue.dClate.up}{2.54}
\setnum{cc.LB_IPWtrue.dauc}{\ensuremath{-}12.80}
\setnum{cc.LB_IPWtrue.dauc.abs}{12.80}
\setnum{cc.LB_IPWtrue.dauc.dn}{1.10}
\setnum{cc.LB_IPWtrue.dauc.hi}{\ensuremath{-}11.66}
\setnum{cc.LB_IPWtrue.dauc.hi.abs}{11.66}
\setnum{cc.LB_IPWtrue.dauc.lo}{\ensuremath{-}13.90}
\setnum{cc.LB_IPWtrue.dauc.lo.abs}{13.90}
\setnum{cc.LB_IPWtrue.dauc.up}{1.14}
\setnum{cc.LB_IPWtrue.dhlate}{\ensuremath{-}14.65}
\setnum{cc.LB_IPWtrue.dhlate.abs}{14.65}
\setnum{cc.LB_IPWtrue.dhlate.dn}{1.95}
\setnum{cc.LB_IPWtrue.dhlate.hi}{\ensuremath{-}12.70}
\setnum{cc.LB_IPWtrue.dhlate.hi.abs}{12.70}
\setnum{cc.LB_IPWtrue.dhlate.lo}{\ensuremath{-}16.60}
\setnum{cc.LB_IPWtrue.dhlate.lo.abs}{16.60}
\setnum{cc.LB_IPWtrue.dhlate.up}{1.95}
\setnum{cc.LBnostd.Clate}{54.30}
\setnum{cc.LBnostd.auc}{5.80}
\setnum{cc.LBnostd.hlate}{0.00}
\setnum{cc.LBnostd.labels}{1559}
\setnum{cc.LBnostd.lenlate}{1082}
\setnum{cc.LBnostd.s1.Clate}{54.30}
\setnum{cc.LBnostd.s1.hlate}{0.00}
\setnum{cc.LBnostd.s1.labels}{1559}
\setnum{cc.LBnostd.s1.lenlate}{1082}
\setnum{cc.LBnostd.s1.pass8}{75.00}
\setnum{cc.LBnostd.s2.Clate}{51.60}
\setnum{cc.LBnostd.s2.hlate}{0.04}
\setnum{cc.LBnostd.s2.labels}{1539}
\setnum{cc.LBnostd.s2.lenlate}{997}
\setnum{cc.LBnostd.s2.pass8}{61.88}
\setnum{cc.LBnostd.s3.Clate}{50.31}
\setnum{cc.LBnostd.s3.hlate}{0.12}
\setnum{cc.LBnostd.s3.labels}{1467}
\setnum{cc.LBnostd.s3.lenlate}{1137}
\setnum{cc.LBnostd.s3.pass8}{64.38}
\setnum{cc.LBnostd.tokensM}{9.66}
\setnum{cc.LBnostdIPW.s2.dClate}{\ensuremath{+}17.03}
\setnum{cc.LBnostdIPW.s2.dClate.abs}{17.03}
\setnum{cc.LBnostdIPW.s2.dClate.dn}{3.20}
\setnum{cc.LBnostdIPW.s2.dClate.hi}{\ensuremath{+}20.16}
\setnum{cc.LBnostdIPW.s2.dClate.hi.abs}{20.16}
\setnum{cc.LBnostdIPW.s2.dClate.lo}{\ensuremath{+}13.83}
\setnum{cc.LBnostdIPW.s2.dClate.lo.abs}{13.83}
\setnum{cc.LBnostdIPW.s2.dClate.up}{3.13}
\setnum{cc.LBnostdIPW.s2.dauc}{\ensuremath{-}5.61}
\setnum{cc.LBnostdIPW.s2.dauc.abs}{5.61}
\setnum{cc.LBnostdIPW.s2.dauc.dn}{0.74}
\setnum{cc.LBnostdIPW.s2.dauc.hi}{\ensuremath{-}4.89}
\setnum{cc.LBnostdIPW.s2.dauc.hi.abs}{4.89}
\setnum{cc.LBnostdIPW.s2.dauc.lo}{\ensuremath{-}6.35}
\setnum{cc.LBnostdIPW.s2.dauc.lo.abs}{6.35}
\setnum{cc.LBnostdIPW.s2.dauc.up}{0.72}
\setnum{cc.LBnostdIPW.s2.dhlate}{\ensuremath{-}0.82}
\setnum{cc.LBnostdIPW.s2.dhlate.abs}{0.82}
\setnum{cc.LBnostdIPW.s2.dhlate.dn}{0.39}
\setnum{cc.LBnostdIPW.s2.dhlate.hi}{\ensuremath{-}0.47}
\setnum{cc.LBnostdIPW.s2.dhlate.hi.abs}{0.47}
\setnum{cc.LBnostdIPW.s2.dhlate.lo}{\ensuremath{-}1.21}
\setnum{cc.LBnostdIPW.s2.dhlate.lo.abs}{1.21}
\setnum{cc.LBnostdIPW.s2.dhlate.up}{0.35}
\setnum{cc.LBnostdIPW.s2.dpass8}{\ensuremath{+}13.75}
\setnum{cc.LBnostdIPW.s2.dpass8.abs}{13.75}
\setnum{cc.LBnostdIPW.s2.dpass8.dn}{3.75}
\setnum{cc.LBnostdIPW.s2.dpass8.hi}{\ensuremath{+}17.50}
\setnum{cc.LBnostdIPW.s2.dpass8.hi.abs}{17.50}
\setnum{cc.LBnostdIPW.s2.dpass8.lo}{\ensuremath{+}10.00}
\setnum{cc.LBnostdIPW.s2.dpass8.lo.abs}{10.00}
\setnum{cc.LBnostdIPW.s2.dpass8.up}{3.75}
\setnum{cc.LBnostdIPW.s3.dClate}{\ensuremath{+}12.19}
\setnum{cc.LBnostdIPW.s3.dClate.abs}{12.19}
\setnum{cc.LBnostdIPW.s3.dClate.dn}{2.46}
\setnum{cc.LBnostdIPW.s3.dClate.hi}{\ensuremath{+}14.73}
\setnum{cc.LBnostdIPW.s3.dClate.hi.abs}{14.73}
\setnum{cc.LBnostdIPW.s3.dClate.lo}{\ensuremath{+}9.73}
\setnum{cc.LBnostdIPW.s3.dClate.lo.abs}{9.73}
\setnum{cc.LBnostdIPW.s3.dClate.up}{2.54}
\setnum{cc.LBnostdIPW.s3.dauc}{\ensuremath{-}1.68}
\setnum{cc.LBnostdIPW.s3.dauc.abs}{1.68}
\setnum{cc.LBnostdIPW.s3.dauc.dn}{0.61}
\setnum{cc.LBnostdIPW.s3.dauc.hi}{\ensuremath{-}1.10}
\setnum{cc.LBnostdIPW.s3.dauc.hi.abs}{1.10}
\setnum{cc.LBnostdIPW.s3.dauc.lo}{\ensuremath{-}2.29}
\setnum{cc.LBnostdIPW.s3.dauc.lo.abs}{2.29}
\setnum{cc.LBnostdIPW.s3.dauc.up}{0.58}
\setnum{cc.LBnostdIPW.s3.dhlate}{\ensuremath{-}0.23}
\setnum{cc.LBnostdIPW.s3.dhlate.abs}{0.23}
\setnum{cc.LBnostdIPW.s3.dhlate.dn}{0.32}
\setnum{cc.LBnostdIPW.s3.dhlate.hi}{\ensuremath{+}0.04}
\setnum{cc.LBnostdIPW.s3.dhlate.hi.abs}{0.04}
\setnum{cc.LBnostdIPW.s3.dhlate.lo}{\ensuremath{-}0.55}
\setnum{cc.LBnostdIPW.s3.dhlate.lo.abs}{0.55}
\setnum{cc.LBnostdIPW.s3.dhlate.up}{0.27}
\setnum{cc.LBnostdIPW.s3.dpass8}{\ensuremath{+}10.31}
\setnum{cc.LBnostdIPW.s3.dpass8.abs}{10.31}
\setnum{cc.LBnostdIPW.s3.dpass8.dn}{3.12}
\setnum{cc.LBnostdIPW.s3.dpass8.hi}{\ensuremath{+}13.75}
\setnum{cc.LBnostdIPW.s3.dpass8.hi.abs}{13.75}
\setnum{cc.LBnostdIPW.s3.dpass8.lo}{\ensuremath{+}7.19}
\setnum{cc.LBnostdIPW.s3.dpass8.lo.abs}{7.19}
\setnum{cc.LBnostdIPW.s3.dpass8.up}{3.44}
\setnum{cc.LBnostdLB.max.dClate}{\ensuremath{+}11.05}
\setnum{cc.LBnostdLB.max.dClate.abs}{11.05}
\setnum{cc.LBnostdLB.mean.dClate}{\ensuremath{+}7.14}
\setnum{cc.LBnostdLB.mean.dClate.abs}{7.14}
\setnum{cc.LBnostdLB.min.dClate}{\ensuremath{+}4.45}
\setnum{cc.LBnostdLB.min.dClate.abs}{4.45}
\setnum{cc.LBnostdLB.nseeds}{3}
\setnum{cc.LBnostdLB.p.dClate}{0.071}
\setnum{cc.LBnostdLB.s1.dClate}{\ensuremath{+}11.05}
\setnum{cc.LBnostdLB.s1.dClate.abs}{11.05}
\setnum{cc.LBnostdLB.s1.dClate.dn}{2.50}
\setnum{cc.LBnostdLB.s1.dClate.hi}{\ensuremath{+}13.55}
\setnum{cc.LBnostdLB.s1.dClate.hi.abs}{13.55}
\setnum{cc.LBnostdLB.s1.dClate.lo}{\ensuremath{+}8.55}
\setnum{cc.LBnostdLB.s1.dClate.lo.abs}{8.55}
\setnum{cc.LBnostdLB.s1.dClate.up}{2.50}
\setnum{cc.LBnostdLB.s1.dauc}{\ensuremath{-}0.80}
\setnum{cc.LBnostdLB.s1.dauc.abs}{0.80}
\setnum{cc.LBnostdLB.s1.dauc.dn}{0.55}
\setnum{cc.LBnostdLB.s1.dauc.hi}{\ensuremath{-}0.27}
\setnum{cc.LBnostdLB.s1.dauc.hi.abs}{0.27}
\setnum{cc.LBnostdLB.s1.dauc.lo}{\ensuremath{-}1.35}
\setnum{cc.LBnostdLB.s1.dauc.lo.abs}{1.35}
\setnum{cc.LBnostdLB.s1.dauc.up}{0.53}
\setnum{cc.LBnostdLB.s1.dhlate}{\ensuremath{-}0.08}
\setnum{cc.LBnostdLB.s1.dhlate.abs}{0.08}
\setnum{cc.LBnostdLB.s1.dhlate.dn}{0.12}
\setnum{cc.LBnostdLB.s1.dhlate.hi}{\ensuremath{+}0.00}
\setnum{cc.LBnostdLB.s1.dhlate.hi.abs}{0.00}
\setnum{cc.LBnostdLB.s1.dhlate.lo}{\ensuremath{-}0.20}
\setnum{cc.LBnostdLB.s1.dhlate.lo.abs}{0.20}
\setnum{cc.LBnostdLB.s1.dhlate.up}{0.08}
\setnum{cc.LBnostdLB.s1.dpass8}{\ensuremath{+}15.94}
\setnum{cc.LBnostdLB.s1.dpass8.abs}{15.94}
\setnum{cc.LBnostdLB.s1.dpass8.dn}{4.06}
\setnum{cc.LBnostdLB.s1.dpass8.hi}{\ensuremath{+}20.31}
\setnum{cc.LBnostdLB.s1.dpass8.hi.abs}{20.31}
\setnum{cc.LBnostdLB.s1.dpass8.lo}{\ensuremath{+}11.88}
\setnum{cc.LBnostdLB.s1.dpass8.lo.abs}{11.88}
\setnum{cc.LBnostdLB.s1.dpass8.up}{4.37}
\setnum{cc.LBnostdLB.s1.labelratio}{\ensuremath{+}11}
\setnum{cc.LBnostdLB.s1.labelratio.abs}{11}
\setnum{cc.LBnostdLB.s1.tokenratio}{\ensuremath{+}18}
\setnum{cc.LBnostdLB.s1.tokenratio.abs}{18}
\setnum{cc.LBnostdLB.s2.dClate}{\ensuremath{+}5.90}
\setnum{cc.LBnostdLB.s2.dClate.abs}{5.90}
\setnum{cc.LBnostdLB.s2.dClate.dn}{2.11}
\setnum{cc.LBnostdLB.s2.dClate.hi}{\ensuremath{+}8.12}
\setnum{cc.LBnostdLB.s2.dClate.hi.abs}{8.12}
\setnum{cc.LBnostdLB.s2.dClate.lo}{\ensuremath{+}3.79}
\setnum{cc.LBnostdLB.s2.dClate.lo.abs}{3.79}
\setnum{cc.LBnostdLB.s2.dClate.up}{2.22}
\setnum{cc.LBnostdLB.s2.dauc}{\ensuremath{-}0.28}
\setnum{cc.LBnostdLB.s2.dauc.abs}{0.28}
\setnum{cc.LBnostdLB.s2.dauc.dn}{0.50}
\setnum{cc.LBnostdLB.s2.dauc.hi}{\ensuremath{+}0.21}
\setnum{cc.LBnostdLB.s2.dauc.hi.abs}{0.21}
\setnum{cc.LBnostdLB.s2.dauc.lo}{\ensuremath{-}0.78}
\setnum{cc.LBnostdLB.s2.dauc.lo.abs}{0.78}
\setnum{cc.LBnostdLB.s2.dauc.up}{0.49}
\setnum{cc.LBnostdLB.s2.dhlate}{\ensuremath{+}0.00}
\setnum{cc.LBnostdLB.s2.dhlate.abs}{0.00}
\setnum{cc.LBnostdLB.s2.dhlate.dn}{0.12}
\setnum{cc.LBnostdLB.s2.dhlate.hi}{\ensuremath{+}0.12}
\setnum{cc.LBnostdLB.s2.dhlate.hi.abs}{0.12}
\setnum{cc.LBnostdLB.s2.dhlate.lo}{\ensuremath{-}0.12}
\setnum{cc.LBnostdLB.s2.dhlate.lo.abs}{0.12}
\setnum{cc.LBnostdLB.s2.dhlate.up}{0.12}
\setnum{cc.LBnostdLB.s2.dpass8}{\ensuremath{+}4.69}
\setnum{cc.LBnostdLB.s2.dpass8.abs}{4.69}
\setnum{cc.LBnostdLB.s2.dpass8.dn}{2.81}
\setnum{cc.LBnostdLB.s2.dpass8.hi}{\ensuremath{+}7.81}
\setnum{cc.LBnostdLB.s2.dpass8.hi.abs}{7.81}
\setnum{cc.LBnostdLB.s2.dpass8.lo}{\ensuremath{+}1.88}
\setnum{cc.LBnostdLB.s2.dpass8.lo.abs}{1.88}
\setnum{cc.LBnostdLB.s2.dpass8.up}{3.12}
\setnum{cc.LBnostdLB.s2.labelratio}{\ensuremath{+}10}
\setnum{cc.LBnostdLB.s2.labelratio.abs}{10}
\setnum{cc.LBnostdLB.s2.tokenratio}{\ensuremath{+}22}
\setnum{cc.LBnostdLB.s2.tokenratio.abs}{22}
\setnum{cc.LBnostdLB.s3.dClate}{\ensuremath{+}4.45}
\setnum{cc.LBnostdLB.s3.dClate.abs}{4.45}
\setnum{cc.LBnostdLB.s3.dClate.dn}{1.91}
\setnum{cc.LBnostdLB.s3.dClate.hi}{\ensuremath{+}6.41}
\setnum{cc.LBnostdLB.s3.dClate.hi.abs}{6.41}
\setnum{cc.LBnostdLB.s3.dClate.lo}{\ensuremath{+}2.54}
\setnum{cc.LBnostdLB.s3.dClate.lo.abs}{2.54}
\setnum{cc.LBnostdLB.s3.dClate.up}{1.96}
\setnum{cc.LBnostdLB.s3.dauc}{\ensuremath{+}1.09}
\setnum{cc.LBnostdLB.s3.dauc.abs}{1.09}
\setnum{cc.LBnostdLB.s3.dauc.dn}{0.54}
\setnum{cc.LBnostdLB.s3.dauc.hi}{\ensuremath{+}1.65}
\setnum{cc.LBnostdLB.s3.dauc.hi.abs}{1.65}
\setnum{cc.LBnostdLB.s3.dauc.lo}{\ensuremath{+}0.55}
\setnum{cc.LBnostdLB.s3.dauc.lo.abs}{0.55}
\setnum{cc.LBnostdLB.s3.dauc.up}{0.56}
\setnum{cc.LBnostdLB.s3.dhlate}{\ensuremath{+}0.08}
\setnum{cc.LBnostdLB.s3.dhlate.abs}{0.08}
\setnum{cc.LBnostdLB.s3.dhlate.dn}{0.16}
\setnum{cc.LBnostdLB.s3.dhlate.hi}{\ensuremath{+}0.23}
\setnum{cc.LBnostdLB.s3.dhlate.hi.abs}{0.23}
\setnum{cc.LBnostdLB.s3.dhlate.lo}{\ensuremath{-}0.08}
\setnum{cc.LBnostdLB.s3.dhlate.lo.abs}{0.08}
\setnum{cc.LBnostdLB.s3.dhlate.up}{0.15}
\setnum{cc.LBnostdLB.s3.dpass8}{\ensuremath{+}6.56}
\setnum{cc.LBnostdLB.s3.dpass8.abs}{6.56}
\setnum{cc.LBnostdLB.s3.dpass8.dn}{2.81}
\setnum{cc.LBnostdLB.s3.dpass8.hi}{\ensuremath{+}9.38}
\setnum{cc.LBnostdLB.s3.dpass8.hi.abs}{9.38}
\setnum{cc.LBnostdLB.s3.dpass8.lo}{\ensuremath{+}3.75}
\setnum{cc.LBnostdLB.s3.dpass8.lo.abs}{3.75}
\setnum{cc.LBnostdLB.s3.dpass8.up}{2.82}
\setnum{cc.LBnostdLB.s3.labelratio}{\ensuremath{+}5}
\setnum{cc.LBnostdLB.s3.labelratio.abs}{5}
\setnum{cc.LBnostdLB.s3.tokenratio}{\ensuremath{+}30}
\setnum{cc.LBnostdLB.s3.tokenratio.abs}{30}
\setnum{cc.LBnostd_LB.dClate}{\ensuremath{+}11.05}
\setnum{cc.LBnostd_LB.dClate.abs}{11.05}
\setnum{cc.LBnostd_LB.dClate.dn}{2.50}
\setnum{cc.LBnostd_LB.dClate.hi}{\ensuremath{+}13.55}
\setnum{cc.LBnostd_LB.dClate.hi.abs}{13.55}
\setnum{cc.LBnostd_LB.dClate.lo}{\ensuremath{+}8.55}
\setnum{cc.LBnostd_LB.dClate.lo.abs}{8.55}
\setnum{cc.LBnostd_LB.dClate.up}{2.50}
\setnum{cc.LBnostd_LB.dauc}{\ensuremath{-}0.80}
\setnum{cc.LBnostd_LB.dauc.abs}{0.80}
\setnum{cc.LBnostd_LB.dauc.dn}{0.55}
\setnum{cc.LBnostd_LB.dauc.hi}{\ensuremath{-}0.27}
\setnum{cc.LBnostd_LB.dauc.hi.abs}{0.27}
\setnum{cc.LBnostd_LB.dauc.lo}{\ensuremath{-}1.35}
\setnum{cc.LBnostd_LB.dauc.lo.abs}{1.35}
\setnum{cc.LBnostd_LB.dauc.up}{0.53}
\setnum{cc.LBnostd_LB.dhlate}{\ensuremath{-}0.08}
\setnum{cc.LBnostd_LB.dhlate.abs}{0.08}
\setnum{cc.LBnostd_LB.dhlate.dn}{0.12}
\setnum{cc.LBnostd_LB.dhlate.hi}{\ensuremath{+}0.00}
\setnum{cc.LBnostd_LB.dhlate.hi.abs}{0.00}
\setnum{cc.LBnostd_LB.dhlate.lo}{\ensuremath{-}0.20}
\setnum{cc.LBnostd_LB.dhlate.lo.abs}{0.20}
\setnum{cc.LBnostd_LB.dhlate.up}{0.08}
\setnum{cc.LBnostd_LB.dpass8}{\ensuremath{+}15.94}
\setnum{cc.LBnostd_LB.dpass8.abs}{15.94}
\setnum{cc.LBnostd_LB.dpass8.dn}{4.06}
\setnum{cc.LBnostd_LB.dpass8.hi}{\ensuremath{+}20.31}
\setnum{cc.LBnostd_LB.dpass8.hi.abs}{20.31}
\setnum{cc.LBnostd_LB.dpass8.lo}{\ensuremath{+}11.88}
\setnum{cc.LBnostd_LB.dpass8.lo.abs}{11.88}
\setnum{cc.LBnostd_LB.dpass8.up}{4.37}
\setnum{cc.PAC.Clate}{18.09}
\setnum{cc.PAC.auc}{43.53}
\setnum{cc.PAC.hlate}{64.80}
\setnum{cc.PAC.labels}{2006}
\setnum{cc.PAC.lenlate}{479}
\setnum{cc.PAC.tokensM}{5.79}
\setnum{cc.PAC_LB.dClate}{\ensuremath{-}25.16}
\setnum{cc.PAC_LB.dClate.abs}{25.16}
\setnum{cc.PAC_LB.dClate.dn}{3.82}
\setnum{cc.PAC_LB.dClate.hi}{\ensuremath{-}21.41}
\setnum{cc.PAC_LB.dClate.hi.abs}{21.41}
\setnum{cc.PAC_LB.dClate.lo}{\ensuremath{-}28.98}
\setnum{cc.PAC_LB.dClate.lo.abs}{28.98}
\setnum{cc.PAC_LB.dClate.up}{3.75}
\setnum{cc.PAC_LB.dauc}{\ensuremath{+}36.92}
\setnum{cc.PAC_LB.dauc.abs}{36.92}
\setnum{cc.PAC_LB.dauc.dn}{1.93}
\setnum{cc.PAC_LB.dauc.hi}{\ensuremath{+}38.84}
\setnum{cc.PAC_LB.dauc.hi.abs}{38.84}
\setnum{cc.PAC_LB.dauc.lo}{\ensuremath{+}34.99}
\setnum{cc.PAC_LB.dauc.lo.abs}{34.99}
\setnum{cc.PAC_LB.dauc.up}{1.92}
\setnum{cc.PAC_LB.dhlate}{\ensuremath{+}64.73}
\setnum{cc.PAC_LB.dhlate.abs}{64.73}
\setnum{cc.PAC_LB.dhlate.dn}{3.75}
\setnum{cc.PAC_LB.dhlate.hi}{\ensuremath{+}68.28}
\setnum{cc.PAC_LB.dhlate.hi.abs}{68.28}
\setnum{cc.PAC_LB.dhlate.lo}{\ensuremath{+}60.98}
\setnum{cc.PAC_LB.dhlate.lo.abs}{60.98}
\setnum{cc.PAC_LB.dhlate.up}{3.55}
\setnum{cc.PAC_LB.dpass8}{\ensuremath{-}27.19}
\setnum{cc.PAC_LB.dpass8.abs}{27.19}
\setnum{cc.PAC_LB.dpass8.dn}{5.00}
\setnum{cc.PAC_LB.dpass8.hi}{\ensuremath{-}21.88}
\setnum{cc.PAC_LB.dpass8.hi.abs}{21.88}
\setnum{cc.PAC_LB.dpass8.lo}{\ensuremath{-}32.19}
\setnum{cc.PAC_LB.dpass8.lo.abs}{32.19}
\setnum{cc.PAC_LB.dpass8.up}{5.31}
\setnum{cc.PAC_RawA.dClate}{\ensuremath{-}8.36}
\setnum{cc.PAC_RawA.dClate.abs}{8.36}
\setnum{cc.PAC_RawA.dClate.dn}{2.97}
\setnum{cc.PAC_RawA.dClate.hi}{\ensuremath{-}5.51}
\setnum{cc.PAC_RawA.dClate.hi.abs}{5.51}
\setnum{cc.PAC_RawA.dClate.lo}{\ensuremath{-}11.33}
\setnum{cc.PAC_RawA.dClate.lo.abs}{11.33}
\setnum{cc.PAC_RawA.dClate.up}{2.85}
\setnum{cc.PAC_RawA.dauc}{\ensuremath{+}20.73}
\setnum{cc.PAC_RawA.dauc.abs}{20.73}
\setnum{cc.PAC_RawA.dauc.dn}{2.10}
\setnum{cc.PAC_RawA.dauc.hi}{\ensuremath{+}22.73}
\setnum{cc.PAC_RawA.dauc.hi.abs}{22.73}
\setnum{cc.PAC_RawA.dauc.lo}{\ensuremath{+}18.63}
\setnum{cc.PAC_RawA.dauc.lo.abs}{18.63}
\setnum{cc.PAC_RawA.dauc.up}{2.00}
\setnum{cc.PAC_RawA.dhlate}{\ensuremath{+}49.61}
\setnum{cc.PAC_RawA.dhlate.abs}{49.61}
\setnum{cc.PAC_RawA.dhlate.dn}{4.65}
\setnum{cc.PAC_RawA.dhlate.hi}{\ensuremath{+}54.02}
\setnum{cc.PAC_RawA.dhlate.hi.abs}{54.02}
\setnum{cc.PAC_RawA.dhlate.lo}{\ensuremath{+}44.96}
\setnum{cc.PAC_RawA.dhlate.lo.abs}{44.96}
\setnum{cc.PAC_RawA.dhlate.up}{4.41}
\setnum{cc.PAC_RawA.dpass8}{\ensuremath{-}11.25}
\setnum{cc.PAC_RawA.dpass8.abs}{11.25}
\setnum{cc.PAC_RawA.dpass8.dn}{4.06}
\setnum{cc.PAC_RawA.dpass8.hi}{\ensuremath{-}7.19}
\setnum{cc.PAC_RawA.dpass8.hi.abs}{7.19}
\setnum{cc.PAC_RawA.dpass8.lo}{\ensuremath{-}15.31}
\setnum{cc.PAC_RawA.dpass8.lo.abs}{15.31}
\setnum{cc.PAC_RawA.dpass8.up}{4.06}
\setnum{cc.RB.Clate}{40.39}
\setnum{cc.RB.auc}{40.11}
\setnum{cc.RB.hlate}{50.12}
\setnum{cc.RB.labels}{2368}
\setnum{cc.RB.lenlate}{869}
\setnum{cc.RBLB.dClate}{\ensuremath{-}2.85}
\setnum{cc.RBLB.dClate.abs}{2.85}
\setnum{cc.RBLB.dClate.dn}{2.74}
\setnum{cc.RBLB.dClate.hi}{\ensuremath{-}0.27}
\setnum{cc.RBLB.dClate.hi.abs}{0.27}
\setnum{cc.RBLB.dClate.lo}{\ensuremath{-}5.59}
\setnum{cc.RBLB.dClate.lo.abs}{5.59}
\setnum{cc.RBLB.dClate.up}{2.58}
\setnum{cc.RBLB.dauc}{\ensuremath{+}33.51}
\setnum{cc.RBLB.dauc.abs}{33.51}
\setnum{cc.RBLB.dauc.dn}{2.01}
\setnum{cc.RBLB.dauc.hi}{\ensuremath{+}35.40}
\setnum{cc.RBLB.dauc.hi.abs}{35.40}
\setnum{cc.RBLB.dauc.lo}{\ensuremath{+}31.50}
\setnum{cc.RBLB.dauc.lo.abs}{31.50}
\setnum{cc.RBLB.dauc.up}{1.89}
\setnum{cc.RBLB.dhlate}{\ensuremath{+}50.04}
\setnum{cc.RBLB.dhlate.abs}{50.04}
\setnum{cc.RBLB.dhlate.dn}{4.38}
\setnum{cc.RBLB.dhlate.hi}{\ensuremath{+}54.26}
\setnum{cc.RBLB.dhlate.hi.abs}{54.26}
\setnum{cc.RBLB.dhlate.lo}{\ensuremath{+}45.66}
\setnum{cc.RBLB.dhlate.lo.abs}{45.66}
\setnum{cc.RBLB.dhlate.up}{4.22}
\setnum{cc.RBLB.dpass8}{\ensuremath{-}9.06}
\setnum{cc.RBLB.dpass8.abs}{9.06}
\setnum{cc.RBLB.dpass8.dn}{3.44}
\setnum{cc.RBLB.dpass8.hi}{\ensuremath{-}5.94}
\setnum{cc.RBLB.dpass8.hi.abs}{5.94}
\setnum{cc.RBLB.dpass8.lo}{\ensuremath{-}12.50}
\setnum{cc.RBLB.dpass8.lo.abs}{12.50}
\setnum{cc.RBLB.dpass8.up}{3.12}
\setnum{cc.RBLB.labelratio}{\ensuremath{+}69}
\setnum{cc.RBLB.labelratio.abs}{69}
\setnum{cc.RBLB.tokenratio}{\ensuremath{-}4}
\setnum{cc.RBLB.tokenratio.abs}{4}
\setnum{cc.RawA.Clate}{26.45}
\setnum{cc.RawA.auc}{22.79}
\setnum{cc.RawA.hlate}{15.20}
\setnum{cc.RawA.labels}{1639}
\setnum{cc.RawA.lenlate}{623}
\setnum{cc.RawA.tokensM}{7.32}
\setnum{cc.RawA_LB.dClate}{\ensuremath{-}16.80}
\setnum{cc.RawA_LB.dClate.abs}{16.80}
\setnum{cc.RawA_LB.dClate.dn}{3.08}
\setnum{cc.RawA_LB.dClate.hi}{\ensuremath{-}13.79}
\setnum{cc.RawA_LB.dClate.hi.abs}{13.79}
\setnum{cc.RawA_LB.dClate.lo}{\ensuremath{-}19.88}
\setnum{cc.RawA_LB.dClate.lo.abs}{19.88}
\setnum{cc.RawA_LB.dClate.up}{3.01}
\setnum{cc.RawA_LB.dauc}{\ensuremath{+}16.19}
\setnum{cc.RawA_LB.dauc.abs}{16.19}
\setnum{cc.RawA_LB.dauc.dn}{1.43}
\setnum{cc.RawA_LB.dauc.hi}{\ensuremath{+}17.67}
\setnum{cc.RawA_LB.dauc.hi.abs}{17.67}
\setnum{cc.RawA_LB.dauc.lo}{\ensuremath{+}14.76}
\setnum{cc.RawA_LB.dauc.lo.abs}{14.76}
\setnum{cc.RawA_LB.dauc.up}{1.48}
\setnum{cc.RawA_LB.dhlate}{\ensuremath{+}15.12}
\setnum{cc.RawA_LB.dhlate.abs}{15.12}
\setnum{cc.RawA_LB.dhlate.dn}{2.62}
\setnum{cc.RawA_LB.dhlate.hi}{\ensuremath{+}17.97}
\setnum{cc.RawA_LB.dhlate.hi.abs}{17.97}
\setnum{cc.RawA_LB.dhlate.lo}{\ensuremath{+}12.50}
\setnum{cc.RawA_LB.dhlate.lo.abs}{12.50}
\setnum{cc.RawA_LB.dhlate.up}{2.85}
\setnum{cc.RawA_LB.dpass8}{\ensuremath{-}15.94}
\setnum{cc.RawA_LB.dpass8.abs}{15.94}
\setnum{cc.RawA_LB.dpass8.dn}{4.37}
\setnum{cc.RawA_LB.dpass8.hi}{\ensuremath{-}11.88}
\setnum{cc.RawA_LB.dpass8.hi.abs}{11.88}
\setnum{cc.RawA_LB.dpass8.lo}{\ensuremath{-}20.31}
\setnum{cc.RawA_LB.dpass8.lo.abs}{20.31}
\setnum{cc.RawA_LB.dpass8.up}{4.06}
\setnum{cc.SH5.IPW}{\ensuremath{-}0.149}
\setnum{cc.SH5.IPWstd}{\ensuremath{-}0.005}
\setnum{cc.SH5.LB}{\ensuremath{-}0.124}
\setnum{cc.SH5.LBnostd}{\ensuremath{-}0.077}
\setnum{cc.advIPW.Clate}{39.77}
\setnum{cc.advIPW.auc}{8.36}
\setnum{cc.advIPW.hlate}{2.19}
\setnum{cc.advIPW.labels}{1429}
\setnum{cc.advIPW.lenlate}{1117}
\setnum{cc.advIPW.max.hlate}{2.19}
\setnum{cc.advIPW.max.lenlate}{1117}
\setnum{cc.advIPW.min.hlate}{0.35}
\setnum{cc.advIPW.min.lenlate}{677}
\setnum{cc.advIPWLB.dClate}{\ensuremath{-}3.48}
\setnum{cc.advIPWLB.dClate.abs}{3.48}
\setnum{cc.advIPWLB.dClate.dn}{2.22}
\setnum{cc.advIPWLB.dClate.hi}{\ensuremath{-}1.33}
\setnum{cc.advIPWLB.dClate.hi.abs}{1.33}
\setnum{cc.advIPWLB.dClate.lo}{\ensuremath{-}5.70}
\setnum{cc.advIPWLB.dClate.lo.abs}{5.70}
\setnum{cc.advIPWLB.dClate.up}{2.15}
\setnum{cc.advIPWLB.dauc}{\ensuremath{+}1.76}
\setnum{cc.advIPWLB.dauc.abs}{1.76}
\setnum{cc.advIPWLB.dauc.dn}{0.66}
\setnum{cc.advIPWLB.dauc.hi}{\ensuremath{+}2.38}
\setnum{cc.advIPWLB.dauc.hi.abs}{2.38}
\setnum{cc.advIPWLB.dauc.lo}{\ensuremath{+}1.10}
\setnum{cc.advIPWLB.dauc.lo.abs}{1.10}
\setnum{cc.advIPWLB.dauc.up}{0.62}
\setnum{cc.advIPWLB.dhlate}{\ensuremath{+}2.11}
\setnum{cc.advIPWLB.dhlate.abs}{2.11}
\setnum{cc.advIPWLB.dhlate.dn}{0.59}
\setnum{cc.advIPWLB.dhlate.hi}{\ensuremath{+}2.73}
\setnum{cc.advIPWLB.dhlate.hi.abs}{2.73}
\setnum{cc.advIPWLB.dhlate.lo}{\ensuremath{+}1.52}
\setnum{cc.advIPWLB.dhlate.lo.abs}{1.52}
\setnum{cc.advIPWLB.dhlate.up}{0.62}
\setnum{cc.advIPWLB.dpass8}{\ensuremath{-}2.50}
\setnum{cc.advIPWLB.dpass8.abs}{2.50}
\setnum{cc.advIPWLB.dpass8.dn}{2.81}
\setnum{cc.advIPWLB.dpass8.hi}{\ensuremath{+}0.00}
\setnum{cc.advIPWLB.dpass8.hi.abs}{0.00}
\setnum{cc.advIPWLB.dpass8.lo}{\ensuremath{-}5.31}
\setnum{cc.advIPWLB.dpass8.lo.abs}{5.31}
\setnum{cc.advIPWLB.dpass8.up}{2.50}
\setnum{cc.advIPWLB.labelratio}{\ensuremath{+}2}
\setnum{cc.advIPWLB.labelratio.abs}{2}
\setnum{cc.advIPWLB.max.dClate.abs}{9.92}
\setnum{cc.advIPWLB.min.dClate.abs}{3.48}
\setnum{cc.advIPWLB.s1.dClate}{\ensuremath{-}3.48}
\setnum{cc.advIPWLB.s1.dClate.abs}{3.48}
\setnum{cc.advIPWLB.s1.dClate.dn}{2.22}
\setnum{cc.advIPWLB.s1.dClate.hi}{\ensuremath{-}1.33}
\setnum{cc.advIPWLB.s1.dClate.hi.abs}{1.33}
\setnum{cc.advIPWLB.s1.dClate.lo}{\ensuremath{-}5.70}
\setnum{cc.advIPWLB.s1.dClate.lo.abs}{5.70}
\setnum{cc.advIPWLB.s1.dClate.up}{2.15}
\setnum{cc.advIPWLB.s1.dauc}{\ensuremath{+}1.76}
\setnum{cc.advIPWLB.s1.dauc.abs}{1.76}
\setnum{cc.advIPWLB.s1.dauc.dn}{0.66}
\setnum{cc.advIPWLB.s1.dauc.hi}{\ensuremath{+}2.38}
\setnum{cc.advIPWLB.s1.dauc.hi.abs}{2.38}
\setnum{cc.advIPWLB.s1.dauc.lo}{\ensuremath{+}1.10}
\setnum{cc.advIPWLB.s1.dauc.lo.abs}{1.10}
\setnum{cc.advIPWLB.s1.dauc.up}{0.62}
\setnum{cc.advIPWLB.s1.dhlate}{\ensuremath{+}2.11}
\setnum{cc.advIPWLB.s1.dhlate.abs}{2.11}
\setnum{cc.advIPWLB.s1.dhlate.dn}{0.59}
\setnum{cc.advIPWLB.s1.dhlate.hi}{\ensuremath{+}2.73}
\setnum{cc.advIPWLB.s1.dhlate.hi.abs}{2.73}
\setnum{cc.advIPWLB.s1.dhlate.lo}{\ensuremath{+}1.52}
\setnum{cc.advIPWLB.s1.dhlate.lo.abs}{1.52}
\setnum{cc.advIPWLB.s1.dhlate.up}{0.62}
\setnum{cc.advIPWLB.s1.dpass8}{\ensuremath{-}2.50}
\setnum{cc.advIPWLB.s1.dpass8.abs}{2.50}
\setnum{cc.advIPWLB.s1.dpass8.dn}{2.81}
\setnum{cc.advIPWLB.s1.dpass8.hi}{\ensuremath{+}0.00}
\setnum{cc.advIPWLB.s1.dpass8.hi.abs}{0.00}
\setnum{cc.advIPWLB.s1.dpass8.lo}{\ensuremath{-}5.31}
\setnum{cc.advIPWLB.s1.dpass8.lo.abs}{5.31}
\setnum{cc.advIPWLB.s1.dpass8.up}{2.50}
\setnum{cc.advIPWLB.s2.dClate}{\ensuremath{-}9.38}
\setnum{cc.advIPWLB.s2.dClate.abs}{9.38}
\setnum{cc.advIPWLB.s2.dClate.dn}{2.34}
\setnum{cc.advIPWLB.s2.dClate.hi}{\ensuremath{-}7.03}
\setnum{cc.advIPWLB.s2.dClate.hi.abs}{7.03}
\setnum{cc.advIPWLB.s2.dClate.lo}{\ensuremath{-}11.72}
\setnum{cc.advIPWLB.s2.dClate.lo.abs}{11.72}
\setnum{cc.advIPWLB.s2.dClate.up}{2.35}
\setnum{cc.advIPWLB.s2.dauc}{\ensuremath{+}5.65}
\setnum{cc.advIPWLB.s2.dauc.abs}{5.65}
\setnum{cc.advIPWLB.s2.dauc.dn}{0.80}
\setnum{cc.advIPWLB.s2.dauc.hi}{\ensuremath{+}6.46}
\setnum{cc.advIPWLB.s2.dauc.hi.abs}{6.46}
\setnum{cc.advIPWLB.s2.dauc.lo}{\ensuremath{+}4.85}
\setnum{cc.advIPWLB.s2.dauc.lo.abs}{4.85}
\setnum{cc.advIPWLB.s2.dauc.up}{0.81}
\setnum{cc.advIPWLB.s2.dhlate}{\ensuremath{+}1.48}
\setnum{cc.advIPWLB.s2.dhlate.abs}{1.48}
\setnum{cc.advIPWLB.s2.dhlate.dn}{0.46}
\setnum{cc.advIPWLB.s2.dhlate.hi}{\ensuremath{+}2.03}
\setnum{cc.advIPWLB.s2.dhlate.hi.abs}{2.03}
\setnum{cc.advIPWLB.s2.dhlate.lo}{\ensuremath{+}1.02}
\setnum{cc.advIPWLB.s2.dhlate.lo.abs}{1.02}
\setnum{cc.advIPWLB.s2.dhlate.up}{0.55}
\setnum{cc.advIPWLB.s2.dpass8}{\ensuremath{-}4.69}
\setnum{cc.advIPWLB.s2.dpass8.abs}{4.69}
\setnum{cc.advIPWLB.s2.dpass8.dn}{2.81}
\setnum{cc.advIPWLB.s2.dpass8.hi}{\ensuremath{-}1.88}
\setnum{cc.advIPWLB.s2.dpass8.hi.abs}{1.88}
\setnum{cc.advIPWLB.s2.dpass8.lo}{\ensuremath{-}7.50}
\setnum{cc.advIPWLB.s2.dpass8.lo.abs}{7.50}
\setnum{cc.advIPWLB.s2.dpass8.up}{2.81}
\setnum{cc.advIPWLB.s3.dClate}{\ensuremath{-}9.92}
\setnum{cc.advIPWLB.s3.dClate.abs}{9.92}
\setnum{cc.advIPWLB.s3.dClate.dn}{2.11}
\setnum{cc.advIPWLB.s3.dClate.hi}{\ensuremath{-}7.85}
\setnum{cc.advIPWLB.s3.dClate.hi.abs}{7.85}
\setnum{cc.advIPWLB.s3.dClate.lo}{\ensuremath{-}12.03}
\setnum{cc.advIPWLB.s3.dClate.lo.abs}{12.03}
\setnum{cc.advIPWLB.s3.dClate.up}{2.07}
\setnum{cc.advIPWLB.s3.dauc}{\ensuremath{+}6.31}
\setnum{cc.advIPWLB.s3.dauc.abs}{6.31}
\setnum{cc.advIPWLB.s3.dauc.dn}{0.75}
\setnum{cc.advIPWLB.s3.dauc.hi}{\ensuremath{+}7.12}
\setnum{cc.advIPWLB.s3.dauc.hi.abs}{7.12}
\setnum{cc.advIPWLB.s3.dauc.lo}{\ensuremath{+}5.56}
\setnum{cc.advIPWLB.s3.dauc.lo.abs}{5.56}
\setnum{cc.advIPWLB.s3.dauc.up}{0.81}
\setnum{cc.advIPWLB.s3.dhlate}{\ensuremath{+}0.31}
\setnum{cc.advIPWLB.s3.dhlate.abs}{0.31}
\setnum{cc.advIPWLB.s3.dhlate.dn}{0.27}
\setnum{cc.advIPWLB.s3.dhlate.hi}{\ensuremath{+}0.66}
\setnum{cc.advIPWLB.s3.dhlate.hi.abs}{0.66}
\setnum{cc.advIPWLB.s3.dhlate.lo}{\ensuremath{+}0.04}
\setnum{cc.advIPWLB.s3.dhlate.lo.abs}{0.04}
\setnum{cc.advIPWLB.s3.dhlate.up}{0.35}
\setnum{cc.advIPWLB.s3.dpass8}{\ensuremath{-}8.44}
\setnum{cc.advIPWLB.s3.dpass8.abs}{8.44}
\setnum{cc.advIPWLB.s3.dpass8.dn}{3.12}
\setnum{cc.advIPWLB.s3.dpass8.hi}{\ensuremath{-}5.62}
\setnum{cc.advIPWLB.s3.dpass8.hi.abs}{5.62}
\setnum{cc.advIPWLB.s3.dpass8.lo}{\ensuremath{-}11.56}
\setnum{cc.advIPWLB.s3.dpass8.lo.abs}{11.56}
\setnum{cc.advIPWLB.s3.dpass8.up}{2.82}
\setnum{cc.advIPWLB.tokenratio}{\ensuremath{+}38}
\setnum{cc.advIPWLB.tokenratio.abs}{38}
\setnum{cc.optIPW.Clate}{44.65}
\setnum{cc.optIPW.auc}{7.90}
\setnum{cc.optIPW.hlate}{0.27}
\setnum{cc.optIPW.labels}{1498}
\setnum{cc.optIPWLB.dClate}{\ensuremath{+}1.41}
\setnum{cc.optIPWLB.dClate.abs}{1.41}
\setnum{cc.optIPWLB.dClate.dn}{2.19}
\setnum{cc.optIPWLB.dClate.hi}{\ensuremath{+}3.44}
\setnum{cc.optIPWLB.dClate.hi.abs}{3.44}
\setnum{cc.optIPWLB.dClate.lo}{\ensuremath{-}0.78}
\setnum{cc.optIPWLB.dClate.lo.abs}{0.78}
\setnum{cc.optIPWLB.dClate.up}{2.03}
\setnum{cc.optIPWLB.dauc}{\ensuremath{+}1.30}
\setnum{cc.optIPWLB.dauc.abs}{1.30}
\setnum{cc.optIPWLB.dauc.dn}{0.68}
\setnum{cc.optIPWLB.dauc.hi}{\ensuremath{+}1.96}
\setnum{cc.optIPWLB.dauc.hi.abs}{1.96}
\setnum{cc.optIPWLB.dauc.lo}{\ensuremath{+}0.62}
\setnum{cc.optIPWLB.dauc.lo.abs}{0.62}
\setnum{cc.optIPWLB.dauc.up}{0.66}
\setnum{cc.optIPWLB.dhlate}{\ensuremath{+}0.20}
\setnum{cc.optIPWLB.dhlate.abs}{0.20}
\setnum{cc.optIPWLB.dhlate.dn}{0.28}
\setnum{cc.optIPWLB.dhlate.hi}{\ensuremath{+}0.55}
\setnum{cc.optIPWLB.dhlate.hi.abs}{0.55}
\setnum{cc.optIPWLB.dhlate.lo}{\ensuremath{-}0.08}
\setnum{cc.optIPWLB.dhlate.lo.abs}{0.08}
\setnum{cc.optIPWLB.dhlate.up}{0.35}
\setnum{cc.optIPWLB.dpass8}{\ensuremath{-}0.62}
\setnum{cc.optIPWLB.dpass8.abs}{0.62}
\setnum{cc.optIPWLB.dpass8.dn}{2.82}
\setnum{cc.optIPWLB.dpass8.hi}{\ensuremath{+}2.19}
\setnum{cc.optIPWLB.dpass8.hi.abs}{2.19}
\setnum{cc.optIPWLB.dpass8.lo}{\ensuremath{-}3.44}
\setnum{cc.optIPWLB.dpass8.lo.abs}{3.44}
\setnum{cc.optIPWLB.dpass8.up}{2.81}
\setnum{cc.optIPWadvIPW.dClate}{\ensuremath{+}4.88}
\setnum{cc.optIPWadvIPW.dClate.abs}{4.88}
\setnum{cc.optIPWadvIPW.dClate.dn}{2.07}
\setnum{cc.optIPWadvIPW.dClate.hi}{\ensuremath{+}6.88}
\setnum{cc.optIPWadvIPW.dClate.hi.abs}{6.88}
\setnum{cc.optIPWadvIPW.dClate.lo}{\ensuremath{+}2.81}
\setnum{cc.optIPWadvIPW.dClate.lo.abs}{2.81}
\setnum{cc.optIPWadvIPW.dClate.up}{2.00}
\setnum{cc.optIPWadvIPW.dauc}{\ensuremath{-}0.46}
\setnum{cc.optIPWadvIPW.dauc.abs}{0.46}
\setnum{cc.optIPWadvIPW.dauc.dn}{0.66}
\setnum{cc.optIPWadvIPW.dauc.hi}{\ensuremath{+}0.21}
\setnum{cc.optIPWadvIPW.dauc.hi.abs}{0.21}
\setnum{cc.optIPWadvIPW.dauc.lo}{\ensuremath{-}1.12}
\setnum{cc.optIPWadvIPW.dauc.lo.abs}{1.12}
\setnum{cc.optIPWadvIPW.dauc.up}{0.67}
\setnum{cc.optIPWadvIPW.dhlate}{\ensuremath{-}1.91}
\setnum{cc.optIPWadvIPW.dhlate.abs}{1.91}
\setnum{cc.optIPWadvIPW.dhlate.dn}{0.67}
\setnum{cc.optIPWadvIPW.dhlate.hi}{\ensuremath{-}1.25}
\setnum{cc.optIPWadvIPW.dhlate.hi.abs}{1.25}
\setnum{cc.optIPWadvIPW.dhlate.lo}{\ensuremath{-}2.58}
\setnum{cc.optIPWadvIPW.dhlate.lo.abs}{2.58}
\setnum{cc.optIPWadvIPW.dhlate.up}{0.66}
\setnum{cc.optIPWadvIPW.dpass8}{\ensuremath{+}1.88}
\setnum{cc.optIPWadvIPW.dpass8.abs}{1.88}
\setnum{cc.optIPWadvIPW.dpass8.dn}{2.19}
\setnum{cc.optIPWadvIPW.dpass8.hi}{\ensuremath{+}4.38}
\setnum{cc.optIPWadvIPW.dpass8.hi.abs}{4.38}
\setnum{cc.optIPWadvIPW.dpass8.lo}{\ensuremath{-}0.31}
\setnum{cc.optIPWadvIPW.dpass8.lo.abs}{0.31}
\setnum{cc.optIPWadvIPW.dpass8.up}{2.50}
\setnum{cc.start.h}{25.4}
\setnum{diag.batches}{8}
\setnum{diag.cos}{0.50}
\setnum{diag.dSH}{\ensuremath{-}0.0428}
\setnum{diag.dSH.se}{0.0063}
\setnum{diag.dfromH}{\ensuremath{-}0.0039}
\setnum{diag.dfromH.se}{0.0009}
\setnum{diag.dfromRest}{\ensuremath{+}0.0042}
\setnum{diag.dfromRest.se}{0.0010}
\setnum{diag.dsgdH}{\ensuremath{+}0.0003}
\setnum{diag.dsgdH.se}{0.0004}
\setnum{diag.online.net}{\ensuremath{-}0.0011}
\setnum{diag.perm.mean}{\ensuremath{+}0.0002}
\setnum{diag.perm.pairs}{64}
\setnum{diag.perm.ratio}{2.2}
\setnum{diag.perm.rms}{0.0024}
\setnum{diag2.check.maxabs}{2\times 10^{-5}}
\setnum{diag2.d.C}{\ensuremath{-}0.0001}
\setnum{diag2.d.C.se}{0.0002}
\setnum{diag2.d.Fatt}{\ensuremath{+}0.0019}
\setnum{diag2.d.Fatt.se}{0.0007}
\setnum{diag2.d.Fpure}{\ensuremath{+}0.0024}
\setnum{diag2.d.Fpure.se}{0.0007}
\setnum{diag2.dS.C}{\ensuremath{-}0.0007}
\setnum{diag2.dS.C.se}{0.0020}
\setnum{diag2.dS.Fatt}{\ensuremath{+}0.0204}
\setnum{diag2.dS.Fatt.se}{0.0076}
\setnum{diag2.dS.Fpure}{\ensuremath{+}0.0232}
\setnum{diag2.dS.Fpure.se}{0.0050}
\setnum{diag2.dSA}{\ensuremath{-}0.0226}
\setnum{diag2.dSA.se}{0.0051}
\setnum{diag2.direct}{\ensuremath{-}0.0003}
\setnum{diag2.direct.se}{0.0003}
\setnum{diag2.proj.max}{\ensuremath{+}63}
\setnum{diag2.proj.min}{\ensuremath{+}27}
\setnum{diag2.reweight}{\ensuremath{+}0.0006}
\setnum{diag2.reweight.se}{0.0003}
\setnum{diag3.ctrl.C}{50.0}
\setnum{diag3.ctrl.M}{1.8}
\setnum{diag3.test.C}{18.9}
\setnum{diag3.test.M}{56.5}
\setnum{diag3.train.C}{17.8}
\setnum{diag3.train.M}{56.2}
\setnum{diag3.train_eot.C}{49.9}
\setnum{diag3.train_eot.M}{0.4}
\setnum{diagp1.dSH}{\ensuremath{-}0.0193}
\setnum{diagp1.dSH.se}{0.0066}
\setnum{diagp1.dfromH}{\ensuremath{-}0.0002}
\setnum{diagp1.dfromH.se}{0.0003}
\setnum{diagp1.dfromRest}{\ensuremath{-}0.0002}
\setnum{diagp1.dfromRest.se}{0.0005}
\setnum{diagp1.direct}{\ensuremath{+}0.0004}
\setnum{diagp1.direct.se}{0.0006}
\setnum{diagp1.dsgdH}{\ensuremath{-}0.0004}
\setnum{diagp1.dsgdH.se}{0.0006}
\setnum{diagp1.online.net}{\ensuremath{+}0.0020}
\setnum{diagp1.perm.ratio}{0.85}
\setnum{diagp1.perm.rms}{0.0017}
\setnum{diagp1.proj.max}{\ensuremath{+}3}
\setnum{diagp1.proj.min}{\ensuremath{-}11}
\setnum{diagp1.reweight}{\ensuremath{-}0.0008}
\setnum{diagp1.reweight.se}{0.0002}
\setnum{diff.FULLb.emp_easiest.C}{98.08}
\setnum{diff.FULLb.emp_easiest.H}{0.17}
\setnum{diff.FULLb.emp_easiest.attempt}{9.64}
\setnum{diff.FULLb.emp_hardest.C}{60.56}
\setnum{diff.FULLb.emp_hardest.H}{4.46}
\setnum{diff.FULLb.emp_hardest.attempt}{15.60}
\setnum{diff.FULLb.emp_middle.C}{94.01}
\setnum{diff.FULLb.emp_middle.H}{0.90}
\setnum{diff.FULLb.emp_middle.attempt}{16.65}
\setnum{diff.FULLb.label_hard.C}{65.91}
\setnum{diff.FULLb.label_hard.H}{3.17}
\setnum{diff.FULLb.label_hard.attempt}{10.84}
\setnum{diff.FULLb.label_medium.C}{83.90}
\setnum{diff.FULLb.label_medium.H}{1.97}
\setnum{diff.FULLb.label_medium.attempt}{14.44}
\setnum{diff.FULLbgap.emp.H}{\ensuremath{+}4.29}
\setnum{diff.FULLbgap.emp.H.abs}{4.29}
\setnum{diff.FULLbgap.emp.H.dn}{1.01}
\setnum{diff.FULLbgap.emp.H.hi}{\ensuremath{+}5.40}
\setnum{diff.FULLbgap.emp.H.hi.abs}{5.40}
\setnum{diff.FULLbgap.emp.H.lo}{\ensuremath{+}3.28}
\setnum{diff.FULLbgap.emp.H.lo.abs}{3.28}
\setnum{diff.FULLbgap.emp.H.up}{1.11}
\setnum{diff.FULLbgap.emp.attempt}{\ensuremath{+}5.96}
\setnum{diff.FULLbgap.emp.attempt.abs}{5.96}
\setnum{diff.FULLbgap.emp.attempt.dn}{3.11}
\setnum{diff.FULLbgap.emp.attempt.hi}{\ensuremath{+}9.04}
\setnum{diff.FULLbgap.emp.attempt.hi.abs}{9.04}
\setnum{diff.FULLbgap.emp.attempt.lo}{\ensuremath{+}2.85}
\setnum{diff.FULLbgap.emp.attempt.lo.abs}{2.85}
\setnum{diff.FULLbgap.emp.attempt.up}{3.08}
\setnum{diff.FULLbgap.label.H}{\ensuremath{+}1.20}
\setnum{diff.FULLbgap.label.H.abs}{1.20}
\setnum{diff.FULLbgap.label.H.dn}{1.68}
\setnum{diff.FULLbgap.label.H.hi}{\ensuremath{+}3.25}
\setnum{diff.FULLbgap.label.H.hi.abs}{3.25}
\setnum{diff.FULLbgap.label.H.lo}{\ensuremath{-}0.48}
\setnum{diff.FULLbgap.label.H.lo.abs}{0.48}
\setnum{diff.FULLbgap.label.H.up}{2.05}
\setnum{diff.FULLbgap.label.attempt}{\ensuremath{-}3.60}
\setnum{diff.FULLbgap.label.attempt.abs}{3.60}
\setnum{diff.FULLbgap.label.attempt.dn}{3.73}
\setnum{diff.FULLbgap.label.attempt.hi}{\ensuremath{+}0.22}
\setnum{diff.FULLbgap.label.attempt.hi.abs}{0.22}
\setnum{diff.FULLbgap.label.attempt.lo}{\ensuremath{-}7.33}
\setnum{diff.FULLbgap.label.attempt.lo.abs}{7.33}
\setnum{diff.FULLbgap.label.attempt.up}{3.82}
\setnum{diff.FULLr.emp_easiest.C}{97.90}
\setnum{diff.FULLr.emp_easiest.H}{0.00}
\setnum{diff.FULLr.emp_easiest.attempt}{0.02}
\setnum{diff.FULLr.emp_hardest.C}{63.02}
\setnum{diff.FULLr.emp_hardest.H}{0.01}
\setnum{diff.FULLr.emp_hardest.attempt}{0.04}
\setnum{diff.FULLr.emp_middle.C}{95.25}
\setnum{diff.FULLr.emp_middle.H}{0.00}
\setnum{diff.FULLr.emp_middle.attempt}{0.04}
\setnum{diff.FULLr.label_hard.C}{68.70}
\setnum{diff.FULLr.label_hard.H}{0.05}
\setnum{diff.FULLr.label_hard.attempt}{0.05}
\setnum{diff.FULLr.label_medium.C}{84.97}
\setnum{diff.FULLr.label_medium.H}{0.00}
\setnum{diff.FULLr.label_medium.attempt}{0.03}
\setnum{diff.IB.emp_easiest.C}{97.25}
\setnum{diff.IB.emp_easiest.H}{2.10}
\setnum{diff.IB.emp_easiest.attempt}{96.11}
\setnum{diff.IB.emp_hardest.C}{58.77}
\setnum{diff.IB.emp_hardest.H}{32.58}
\setnum{diff.IB.emp_hardest.attempt}{91.96}
\setnum{diff.IB.emp_middle.C}{93.13}
\setnum{diff.IB.emp_middle.H}{5.41}
\setnum{diff.IB.emp_middle.attempt}{96.54}
\setnum{diff.IB.label_hard.C}{66.62}
\setnum{diff.IB.label_hard.H}{22.06}
\setnum{diff.IB.label_hard.attempt}{88.26}
\setnum{diff.IB.label_medium.C}{82.38}
\setnum{diff.IB.label_medium.H}{14.50}
\setnum{diff.IB.label_medium.attempt}{95.43}
\setnum{diff.LB.emp_easiest.C}{98.31}
\setnum{diff.LB.emp_easiest.H}{0.00}
\setnum{diff.LB.emp_easiest.attempt}{0.02}
\setnum{diff.LB.emp_hardest.C}{60.37}
\setnum{diff.LB.emp_hardest.H}{0.04}
\setnum{diff.LB.emp_hardest.attempt}{0.19}
\setnum{diff.LB.emp_middle.C}{94.44}
\setnum{diff.LB.emp_middle.H}{0.00}
\setnum{diff.LB.emp_middle.attempt}{0.21}
\setnum{diff.LB.label_hard.C}{68.28}
\setnum{diff.LB.label_hard.H}{0.00}
\setnum{diff.LB.label_hard.attempt}{0.05}
\setnum{diff.LB.label_medium.C}{83.69}
\setnum{diff.LB.label_medium.H}{0.02}
\setnum{diff.LB.label_medium.attempt}{0.16}
\setnum{diff.RB.emp_easiest.C}{97.47}
\setnum{diff.RB.emp_easiest.H}{1.90}
\setnum{diff.RB.emp_easiest.attempt}{98.60}
\setnum{diff.RB.emp_hardest.C}{60.27}
\setnum{diff.RB.emp_hardest.H}{31.83}
\setnum{diff.RB.emp_hardest.attempt}{94.00}
\setnum{diff.RB.emp_middle.C}{93.49}
\setnum{diff.RB.emp_middle.H}{5.20}
\setnum{diff.RB.emp_middle.attempt}{97.64}
\setnum{diff.RB.label_hard.C}{65.29}
\setnum{diff.RB.label_hard.H}{23.44}
\setnum{diff.RB.label_hard.attempt}{89.91}
\setnum{diff.RB.label_medium.C}{83.47}
\setnum{diff.RB.label_medium.H}{13.81}
\setnum{diff.RB.label_medium.attempt}{97.39}
\setnum{diff.RBgap.emp.H}{\ensuremath{+}29.93}
\setnum{diff.RBgap.emp.H.abs}{29.93}
\setnum{diff.RBgap.emp.H.dn}{4.24}
\setnum{diff.RBgap.emp.H.hi}{\ensuremath{+}34.40}
\setnum{diff.RBgap.emp.H.hi.abs}{34.40}
\setnum{diff.RBgap.emp.H.lo}{\ensuremath{+}25.69}
\setnum{diff.RBgap.emp.H.lo.abs}{25.69}
\setnum{diff.RBgap.emp.H.up}{4.47}
\setnum{diff.RBgap.emp.attempt}{\ensuremath{-}4.61}
\setnum{diff.RBgap.emp.attempt.abs}{4.61}
\setnum{diff.RBgap.emp.attempt.dn}{2.67}
\setnum{diff.RBgap.emp.attempt.hi}{\ensuremath{-}2.37}
\setnum{diff.RBgap.emp.attempt.hi.abs}{2.37}
\setnum{diff.RBgap.emp.attempt.lo}{\ensuremath{-}7.28}
\setnum{diff.RBgap.emp.attempt.lo.abs}{7.28}
\setnum{diff.RBgap.emp.attempt.up}{2.24}
\setnum{diff.RBgap.label.H}{\ensuremath{+}9.62}
\setnum{diff.RBgap.label.H.abs}{9.62}
\setnum{diff.RBgap.label.H.dn}{7.85}
\setnum{diff.RBgap.label.H.hi}{\ensuremath{+}17.68}
\setnum{diff.RBgap.label.H.hi.abs}{17.68}
\setnum{diff.RBgap.label.H.lo}{\ensuremath{+}1.77}
\setnum{diff.RBgap.label.H.lo.abs}{1.77}
\setnum{diff.RBgap.label.H.up}{8.06}
\setnum{diff.RBgap.label.attempt}{\ensuremath{-}7.48}
\setnum{diff.RBgap.label.attempt.abs}{7.48}
\setnum{diff.RBgap.label.attempt.dn}{6.96}
\setnum{diff.RBgap.label.attempt.hi}{\ensuremath{-}1.76}
\setnum{diff.RBgap.label.attempt.hi.abs}{1.76}
\setnum{diff.RBgap.label.attempt.lo}{\ensuremath{-}14.44}
\setnum{diff.RBgap.label.attempt.lo.abs}{14.44}
\setnum{diff.RBgap.label.attempt.up}{5.72}
\setnum{diff.size.emp_easiest}{112}
\setnum{diff.size.emp_hardest}{144}
\setnum{diff.size.emp_middle}{97}
\setnum{diff.size.label_hard}{44}
\setnum{diff.size.label_medium}{309}
\setnum{diff.start.emp_easiest.C}{100.00}
\setnum{diff.start.emp_easiest.H}{0.00}
\setnum{diff.start.emp_easiest.attempt}{15.12}
\setnum{diff.start.emp_hardest.C}{51.95}
\setnum{diff.start.emp_hardest.H}{7.68}
\setnum{diff.start.emp_hardest.attempt}{30.30}
\setnum{diff.start.emp_middle.C}{91.82}
\setnum{diff.start.emp_middle.H}{0.90}
\setnum{diff.start.emp_middle.attempt}{27.00}
\setnum{diff.start.label_hard.C}{65.20}
\setnum{diff.start.label_hard.H}{4.40}
\setnum{diff.start.label_hard.attempt}{18.32}
\setnum{diff.start.label_medium.C}{80.00}
\setnum{diff.start.label_medium.H}{3.24}
\setnum{diff.start.label_medium.attempt}{25.47}
\setnum{ho.cc.AO-LB.C}{\ensuremath{-}0.2}
\setnum{ho.cc.AO-LB.C.abs}{0.2}
\setnum{ho.cc.AO-LB.C.dn}{1.3}
\setnum{ho.cc.AO-LB.C.hi}{\ensuremath{+}1.1}
\setnum{ho.cc.AO-LB.C.hi.abs}{1.1}
\setnum{ho.cc.AO-LB.C.lo}{\ensuremath{-}1.5}
\setnum{ho.cc.AO-LB.C.lo.abs}{1.5}
\setnum{ho.cc.AO-LB.C.up}{1.3}
\setnum{ho.cc.AO-LB.H}{\ensuremath{-}0.1}
\setnum{ho.cc.AO-LB.H.abs}{0.1}
\setnum{ho.cc.AO-LB.H.dn}{0.1}
\setnum{ho.cc.AO-LB.H.hi}{\ensuremath{+}0.1}
\setnum{ho.cc.AO-LB.H.hi.abs}{0.1}
\setnum{ho.cc.AO-LB.H.lo}{\ensuremath{-}0.2}
\setnum{ho.cc.AO-LB.H.lo.abs}{0.2}
\setnum{ho.cc.AO-LB.H.up}{0.2}
\setnum{ho.cc.AO-LB.M}{\ensuremath{+}34.2}
\setnum{ho.cc.AO-LB.M.abs}{34.2}
\setnum{ho.cc.AO-LB.M.dn}{2.3}
\setnum{ho.cc.AO-LB.M.hi}{\ensuremath{+}36.4}
\setnum{ho.cc.AO-LB.M.hi.abs}{36.4}
\setnum{ho.cc.AO-LB.M.lo}{\ensuremath{+}31.9}
\setnum{ho.cc.AO-LB.M.lo.abs}{31.9}
\setnum{ho.cc.AO-LB.M.up}{2.2}
\setnum{ho.cc.AO-LB.pass@8}{\ensuremath{-}1.0}
\setnum{ho.cc.AO-LB.pass@8.abs}{1.0}
\setnum{ho.cc.AO-LB.pass@8.dn}{1.8}
\setnum{ho.cc.AO-LB.pass@8.hi}{\ensuremath{+}0.7}
\setnum{ho.cc.AO-LB.pass@8.hi.abs}{0.7}
\setnum{ho.cc.AO-LB.pass@8.lo}{\ensuremath{-}2.8}
\setnum{ho.cc.AO-LB.pass@8.lo.abs}{2.8}
\setnum{ho.cc.AO-LB.pass@8.up}{1.7}
\setnum{ho.cc.AO-start.C}{\ensuremath{+}7.8}
\setnum{ho.cc.AO-start.C.abs}{7.8}
\setnum{ho.cc.AO-start.C.dn}{1.3}
\setnum{ho.cc.AO-start.C.hi}{\ensuremath{+}9.3}
\setnum{ho.cc.AO-start.C.hi.abs}{9.3}
\setnum{ho.cc.AO-start.C.lo}{\ensuremath{+}6.5}
\setnum{ho.cc.AO-start.C.lo.abs}{6.5}
\setnum{ho.cc.AO-start.C.up}{1.5}
\setnum{ho.cc.AO-start.H}{\ensuremath{-}23.1}
\setnum{ho.cc.AO-start.H.abs}{23.1}
\setnum{ho.cc.AO-start.H.dn}{1.5}
\setnum{ho.cc.AO-start.H.hi}{\ensuremath{-}21.7}
\setnum{ho.cc.AO-start.H.hi.abs}{21.7}
\setnum{ho.cc.AO-start.H.lo}{\ensuremath{-}24.6}
\setnum{ho.cc.AO-start.H.lo.abs}{24.6}
\setnum{ho.cc.AO-start.H.up}{1.4}
\setnum{ho.cc.AO-start.M}{\ensuremath{+}32.8}
\setnum{ho.cc.AO-start.M.abs}{32.8}
\setnum{ho.cc.AO-start.M.dn}{2.2}
\setnum{ho.cc.AO-start.M.hi}{\ensuremath{+}35.0}
\setnum{ho.cc.AO-start.M.hi.abs}{35.0}
\setnum{ho.cc.AO-start.M.lo}{\ensuremath{+}30.6}
\setnum{ho.cc.AO-start.M.lo.abs}{30.6}
\setnum{ho.cc.AO-start.M.up}{2.2}
\setnum{ho.cc.AO-start.pass@8}{\ensuremath{+}6.4}
\setnum{ho.cc.AO-start.pass@8.abs}{6.4}
\setnum{ho.cc.AO-start.pass@8.dn}{1.8}
\setnum{ho.cc.AO-start.pass@8.hi}{\ensuremath{+}8.2}
\setnum{ho.cc.AO-start.pass@8.hi.abs}{8.2}
\setnum{ho.cc.AO-start.pass@8.lo}{\ensuremath{+}4.6}
\setnum{ho.cc.AO-start.pass@8.lo.abs}{4.6}
\setnum{ho.cc.AO-start.pass@8.up}{1.8}
\setnum{ho.cc.AO.C}{51.1}
\setnum{ho.cc.AO.H}{0.1}
\setnum{ho.cc.AO.M}{34.4}
\setnum{ho.cc.AO.pass@8}{63.8}
\setnum{ho.cc.CAG-LB.C}{\ensuremath{+}1.0}
\setnum{ho.cc.CAG-LB.C.abs}{1.0}
\setnum{ho.cc.CAG-LB.C.dn}{1.3}
\setnum{ho.cc.CAG-LB.C.hi}{\ensuremath{+}2.4}
\setnum{ho.cc.CAG-LB.C.hi.abs}{2.4}
\setnum{ho.cc.CAG-LB.C.lo}{\ensuremath{-}0.3}
\setnum{ho.cc.CAG-LB.C.lo.abs}{0.3}
\setnum{ho.cc.CAG-LB.C.up}{1.4}
\setnum{ho.cc.CAG-LB.H}{\ensuremath{-}0.2}
\setnum{ho.cc.CAG-LB.H.abs}{0.2}
\setnum{ho.cc.CAG-LB.H.dn}{0.1}
\setnum{ho.cc.CAG-LB.H.hi}{\ensuremath{-}0.1}
\setnum{ho.cc.CAG-LB.H.hi.abs}{0.1}
\setnum{ho.cc.CAG-LB.H.lo}{\ensuremath{-}0.3}
\setnum{ho.cc.CAG-LB.H.lo.abs}{0.3}
\setnum{ho.cc.CAG-LB.H.up}{0.1}
\setnum{ho.cc.CAG-LB.M}{\ensuremath{+}0.0}
\setnum{ho.cc.CAG-LB.M.abs}{0.0}
\setnum{ho.cc.CAG-LB.M.dn}{0.1}
\setnum{ho.cc.CAG-LB.M.hi}{\ensuremath{+}0.1}
\setnum{ho.cc.CAG-LB.M.hi.abs}{0.1}
\setnum{ho.cc.CAG-LB.M.lo}{\ensuremath{-}0.1}
\setnum{ho.cc.CAG-LB.M.lo.abs}{0.1}
\setnum{ho.cc.CAG-LB.M.up}{0.1}
\setnum{ho.cc.CAG-LB.pass@8}{\ensuremath{+}2.1}
\setnum{ho.cc.CAG-LB.pass@8.abs}{2.1}
\setnum{ho.cc.CAG-LB.pass@8.dn}{1.9}
\setnum{ho.cc.CAG-LB.pass@8.hi}{\ensuremath{+}4.0}
\setnum{ho.cc.CAG-LB.pass@8.hi.abs}{4.0}
\setnum{ho.cc.CAG-LB.pass@8.lo}{\ensuremath{+}0.2}
\setnum{ho.cc.CAG-LB.pass@8.lo.abs}{0.2}
\setnum{ho.cc.CAG-LB.pass@8.up}{1.9}
\setnum{ho.cc.CAG-start.C}{\ensuremath{+}9.1}
\setnum{ho.cc.CAG-start.C.abs}{9.1}
\setnum{ho.cc.CAG-start.C.dn}{1.2}
\setnum{ho.cc.CAG-start.C.hi}{\ensuremath{+}10.3}
\setnum{ho.cc.CAG-start.C.hi.abs}{10.3}
\setnum{ho.cc.CAG-start.C.lo}{\ensuremath{+}7.9}
\setnum{ho.cc.CAG-start.C.lo.abs}{7.9}
\setnum{ho.cc.CAG-start.C.up}{1.2}
\setnum{ho.cc.CAG-start.H}{\ensuremath{-}23.2}
\setnum{ho.cc.CAG-start.H.abs}{23.2}
\setnum{ho.cc.CAG-start.H.dn}{1.6}
\setnum{ho.cc.CAG-start.H.hi}{\ensuremath{-}21.8}
\setnum{ho.cc.CAG-start.H.hi.abs}{21.8}
\setnum{ho.cc.CAG-start.H.lo}{\ensuremath{-}24.8}
\setnum{ho.cc.CAG-start.H.lo.abs}{24.8}
\setnum{ho.cc.CAG-start.H.up}{1.4}
\setnum{ho.cc.CAG-start.M}{\ensuremath{-}1.4}
\setnum{ho.cc.CAG-start.M.abs}{1.4}
\setnum{ho.cc.CAG-start.M.dn}{0.3}
\setnum{ho.cc.CAG-start.M.hi}{\ensuremath{-}1.1}
\setnum{ho.cc.CAG-start.M.hi.abs}{1.1}
\setnum{ho.cc.CAG-start.M.lo}{\ensuremath{-}1.7}
\setnum{ho.cc.CAG-start.M.lo.abs}{1.7}
\setnum{ho.cc.CAG-start.M.up}{0.3}
\setnum{ho.cc.CAG-start.pass@8}{\ensuremath{+}9.5}
\setnum{ho.cc.CAG-start.pass@8.abs}{9.5}
\setnum{ho.cc.CAG-start.pass@8.dn}{2.0}
\setnum{ho.cc.CAG-start.pass@8.hi}{\ensuremath{+}11.4}
\setnum{ho.cc.CAG-start.pass@8.hi.abs}{11.4}
\setnum{ho.cc.CAG-start.pass@8.lo}{\ensuremath{+}7.5}
\setnum{ho.cc.CAG-start.pass@8.lo.abs}{7.5}
\setnum{ho.cc.CAG-start.pass@8.up}{1.9}
\setnum{ho.cc.CAG.C}{52.3}
\setnum{ho.cc.CAG.H}{0.0}
\setnum{ho.cc.CAG.M}{0.2}
\setnum{ho.cc.CAG.pass@8}{66.9}
\setnum{ho.cc.FULL.s1.C}{53.4}
\setnum{ho.cc.FULL.s1.M}{0.2}
\setnum{ho.cc.FULL.s2.C}{52.4}
\setnum{ho.cc.FULL.s2.M}{0.3}
\setnum{ho.cc.FULL.s3.C}{50.0}
\setnum{ho.cc.FULL.s3.M}{1.4}
\setnum{ho.cc.IPW-IPWtrue.C}{\ensuremath{+}4.0}
\setnum{ho.cc.IPW-IPWtrue.C.abs}{4.0}
\setnum{ho.cc.IPW-IPWtrue.C.dn}{1.2}
\setnum{ho.cc.IPW-IPWtrue.C.hi}{\ensuremath{+}5.3}
\setnum{ho.cc.IPW-IPWtrue.C.hi.abs}{5.3}
\setnum{ho.cc.IPW-IPWtrue.C.lo}{\ensuremath{+}2.8}
\setnum{ho.cc.IPW-IPWtrue.C.lo.abs}{2.8}
\setnum{ho.cc.IPW-IPWtrue.C.up}{1.3}
\setnum{ho.cc.IPW-IPWtrue.H}{\ensuremath{-}14.4}
\setnum{ho.cc.IPW-IPWtrue.H.abs}{14.4}
\setnum{ho.cc.IPW-IPWtrue.H.dn}{1.2}
\setnum{ho.cc.IPW-IPWtrue.H.hi}{\ensuremath{-}13.4}
\setnum{ho.cc.IPW-IPWtrue.H.hi.abs}{13.4}
\setnum{ho.cc.IPW-IPWtrue.H.lo}{\ensuremath{-}15.6}
\setnum{ho.cc.IPW-IPWtrue.H.lo.abs}{15.6}
\setnum{ho.cc.IPW-IPWtrue.H.up}{1.0}
\setnum{ho.cc.IPW-IPWtrue.M}{\ensuremath{-}2.2}
\setnum{ho.cc.IPW-IPWtrue.M.abs}{2.2}
\setnum{ho.cc.IPW-IPWtrue.M.dn}{0.6}
\setnum{ho.cc.IPW-IPWtrue.M.hi}{\ensuremath{-}1.5}
\setnum{ho.cc.IPW-IPWtrue.M.hi.abs}{1.5}
\setnum{ho.cc.IPW-IPWtrue.M.lo}{\ensuremath{-}2.8}
\setnum{ho.cc.IPW-IPWtrue.M.lo.abs}{2.8}
\setnum{ho.cc.IPW-IPWtrue.M.up}{0.7}
\setnum{ho.cc.IPW-IPWtrue.pass@8}{\ensuremath{+}3.5}
\setnum{ho.cc.IPW-IPWtrue.pass@8.abs}{3.5}
\setnum{ho.cc.IPW-IPWtrue.pass@8.dn}{1.6}
\setnum{ho.cc.IPW-IPWtrue.pass@8.hi}{\ensuremath{+}5.2}
\setnum{ho.cc.IPW-IPWtrue.pass@8.hi.abs}{5.2}
\setnum{ho.cc.IPW-IPWtrue.pass@8.lo}{\ensuremath{+}1.9}
\setnum{ho.cc.IPW-IPWtrue.pass@8.lo.abs}{1.9}
\setnum{ho.cc.IPW-IPWtrue.pass@8.up}{1.7}
\setnum{ho.cc.IPW-start.C}{\ensuremath{+}2.0}
\setnum{ho.cc.IPW-start.C.abs}{2.0}
\setnum{ho.cc.IPW-start.C.dn}{1.1}
\setnum{ho.cc.IPW-start.C.hi}{\ensuremath{+}3.1}
\setnum{ho.cc.IPW-start.C.hi.abs}{3.1}
\setnum{ho.cc.IPW-start.C.lo}{\ensuremath{+}0.9}
\setnum{ho.cc.IPW-start.C.lo.abs}{0.9}
\setnum{ho.cc.IPW-start.C.up}{1.1}
\setnum{ho.cc.IPW-start.H}{\ensuremath{-}22.8}
\setnum{ho.cc.IPW-start.H.abs}{22.8}
\setnum{ho.cc.IPW-start.H.dn}{1.5}
\setnum{ho.cc.IPW-start.H.hi}{\ensuremath{-}21.3}
\setnum{ho.cc.IPW-start.H.hi.abs}{21.3}
\setnum{ho.cc.IPW-start.H.lo}{\ensuremath{-}24.3}
\setnum{ho.cc.IPW-start.H.lo.abs}{24.3}
\setnum{ho.cc.IPW-start.H.up}{1.5}
\setnum{ho.cc.IPW-start.M}{\ensuremath{+}2.1}
\setnum{ho.cc.IPW-start.M.abs}{2.1}
\setnum{ho.cc.IPW-start.M.dn}{0.5}
\setnum{ho.cc.IPW-start.M.hi}{\ensuremath{+}2.6}
\setnum{ho.cc.IPW-start.M.hi.abs}{2.6}
\setnum{ho.cc.IPW-start.M.lo}{\ensuremath{+}1.6}
\setnum{ho.cc.IPW-start.M.lo.abs}{1.6}
\setnum{ho.cc.IPW-start.M.up}{0.5}
\setnum{ho.cc.IPW-start.pass@8}{\ensuremath{+}2.3}
\setnum{ho.cc.IPW-start.pass@8.abs}{2.3}
\setnum{ho.cc.IPW-start.pass@8.dn}{1.5}
\setnum{ho.cc.IPW-start.pass@8.hi}{\ensuremath{+}3.8}
\setnum{ho.cc.IPW-start.pass@8.hi.abs}{3.8}
\setnum{ho.cc.IPW-start.pass@8.lo}{\ensuremath{+}0.8}
\setnum{ho.cc.IPW-start.pass@8.lo.abs}{0.8}
\setnum{ho.cc.IPW-start.pass@8.up}{1.5}
\setnum{ho.cc.IPW.C}{45.2}
\setnum{ho.cc.IPW.H}{0.5}
\setnum{ho.cc.IPW.M}{3.7}
\setnum{ho.cc.IPW.pass@8}{59.7}
\setnum{ho.cc.IPWtrue-start.C}{\ensuremath{-}2.0}
\setnum{ho.cc.IPWtrue-start.C.abs}{2.0}
\setnum{ho.cc.IPWtrue-start.C.dn}{1.1}
\setnum{ho.cc.IPWtrue-start.C.hi}{\ensuremath{-}0.9}
\setnum{ho.cc.IPWtrue-start.C.hi.abs}{0.9}
\setnum{ho.cc.IPWtrue-start.C.lo}{\ensuremath{-}3.1}
\setnum{ho.cc.IPWtrue-start.C.lo.abs}{3.1}
\setnum{ho.cc.IPWtrue-start.C.up}{1.1}
\setnum{ho.cc.IPWtrue-start.H}{\ensuremath{-}8.3}
\setnum{ho.cc.IPWtrue-start.H.abs}{8.3}
\setnum{ho.cc.IPWtrue-start.H.dn}{1.3}
\setnum{ho.cc.IPWtrue-start.H.hi}{\ensuremath{-}7.0}
\setnum{ho.cc.IPWtrue-start.H.hi.abs}{7.0}
\setnum{ho.cc.IPWtrue-start.H.lo}{\ensuremath{-}9.6}
\setnum{ho.cc.IPWtrue-start.H.lo.abs}{9.6}
\setnum{ho.cc.IPWtrue-start.H.up}{1.3}
\setnum{ho.cc.IPWtrue-start.M}{\ensuremath{+}4.3}
\setnum{ho.cc.IPWtrue-start.M.abs}{4.3}
\setnum{ho.cc.IPWtrue-start.M.dn}{0.7}
\setnum{ho.cc.IPWtrue-start.M.hi}{\ensuremath{+}4.9}
\setnum{ho.cc.IPWtrue-start.M.hi.abs}{4.9}
\setnum{ho.cc.IPWtrue-start.M.lo}{\ensuremath{+}3.6}
\setnum{ho.cc.IPWtrue-start.M.lo.abs}{3.6}
\setnum{ho.cc.IPWtrue-start.M.up}{0.6}
\setnum{ho.cc.IPWtrue-start.pass@8}{\ensuremath{-}1.2}
\setnum{ho.cc.IPWtrue-start.pass@8.abs}{1.2}
\setnum{ho.cc.IPWtrue-start.pass@8.dn}{1.5}
\setnum{ho.cc.IPWtrue-start.pass@8.hi}{\ensuremath{+}0.2}
\setnum{ho.cc.IPWtrue-start.pass@8.hi.abs}{0.2}
\setnum{ho.cc.IPWtrue-start.pass@8.lo}{\ensuremath{-}2.7}
\setnum{ho.cc.IPWtrue-start.pass@8.lo.abs}{2.7}
\setnum{ho.cc.IPWtrue-start.pass@8.up}{1.4}
\setnum{ho.cc.IPWtrue.C}{41.2}
\setnum{ho.cc.IPWtrue.H}{14.9}
\setnum{ho.cc.IPWtrue.M}{5.9}
\setnum{ho.cc.IPWtrue.pass@8}{56.2}
\setnum{ho.cc.LB-IPW.C}{\ensuremath{+}6.1}
\setnum{ho.cc.LB-IPW.C.abs}{6.1}
\setnum{ho.cc.LB-IPW.C.dn}{1.4}
\setnum{ho.cc.LB-IPW.C.hi}{\ensuremath{+}7.4}
\setnum{ho.cc.LB-IPW.C.hi.abs}{7.4}
\setnum{ho.cc.LB-IPW.C.lo}{\ensuremath{+}4.7}
\setnum{ho.cc.LB-IPW.C.lo.abs}{4.7}
\setnum{ho.cc.LB-IPW.C.up}{1.3}
\setnum{ho.cc.LB-IPW.H}{\ensuremath{-}0.3}
\setnum{ho.cc.LB-IPW.H.abs}{0.3}
\setnum{ho.cc.LB-IPW.H.dn}{0.2}
\setnum{ho.cc.LB-IPW.H.hi}{\ensuremath{-}0.1}
\setnum{ho.cc.LB-IPW.H.hi.abs}{0.1}
\setnum{ho.cc.LB-IPW.H.lo}{\ensuremath{-}0.5}
\setnum{ho.cc.LB-IPW.H.lo.abs}{0.5}
\setnum{ho.cc.LB-IPW.H.up}{0.2}
\setnum{ho.cc.LB-IPW.M}{\ensuremath{-}3.5}
\setnum{ho.cc.LB-IPW.M.abs}{3.5}
\setnum{ho.cc.LB-IPW.M.dn}{0.4}
\setnum{ho.cc.LB-IPW.M.hi}{\ensuremath{-}3.0}
\setnum{ho.cc.LB-IPW.M.hi.abs}{3.0}
\setnum{ho.cc.LB-IPW.M.lo}{\ensuremath{-}3.9}
\setnum{ho.cc.LB-IPW.M.lo.abs}{3.9}
\setnum{ho.cc.LB-IPW.M.up}{0.5}
\setnum{ho.cc.LB-IPW.pass@8}{\ensuremath{+}5.1}
\setnum{ho.cc.LB-IPW.pass@8.abs}{5.1}
\setnum{ho.cc.LB-IPW.pass@8.dn}{1.8}
\setnum{ho.cc.LB-IPW.pass@8.hi}{\ensuremath{+}7.0}
\setnum{ho.cc.LB-IPW.pass@8.hi.abs}{7.0}
\setnum{ho.cc.LB-IPW.pass@8.lo}{\ensuremath{+}3.3}
\setnum{ho.cc.LB-IPW.pass@8.lo.abs}{3.3}
\setnum{ho.cc.LB-IPW.pass@8.up}{1.9}
\setnum{ho.cc.LB-IPWtrue.C}{\ensuremath{+}10.1}
\setnum{ho.cc.LB-IPWtrue.C.abs}{10.1}
\setnum{ho.cc.LB-IPWtrue.C.dn}{1.5}
\setnum{ho.cc.LB-IPWtrue.C.hi}{\ensuremath{+}11.7}
\setnum{ho.cc.LB-IPWtrue.C.hi.abs}{11.7}
\setnum{ho.cc.LB-IPWtrue.C.lo}{\ensuremath{+}8.6}
\setnum{ho.cc.LB-IPWtrue.C.lo.abs}{8.6}
\setnum{ho.cc.LB-IPWtrue.C.up}{1.6}
\setnum{ho.cc.LB-IPWtrue.H}{\ensuremath{-}14.8}
\setnum{ho.cc.LB-IPWtrue.H.abs}{14.8}
\setnum{ho.cc.LB-IPWtrue.H.dn}{1.1}
\setnum{ho.cc.LB-IPWtrue.H.hi}{\ensuremath{-}13.7}
\setnum{ho.cc.LB-IPWtrue.H.hi.abs}{13.7}
\setnum{ho.cc.LB-IPWtrue.H.lo}{\ensuremath{-}15.9}
\setnum{ho.cc.LB-IPWtrue.H.lo.abs}{15.9}
\setnum{ho.cc.LB-IPWtrue.H.up}{1.1}
\setnum{ho.cc.LB-IPWtrue.M}{\ensuremath{-}5.6}
\setnum{ho.cc.LB-IPWtrue.M.abs}{5.6}
\setnum{ho.cc.LB-IPWtrue.M.dn}{0.6}
\setnum{ho.cc.LB-IPWtrue.M.hi}{\ensuremath{-}5.0}
\setnum{ho.cc.LB-IPWtrue.M.hi.abs}{5.0}
\setnum{ho.cc.LB-IPWtrue.M.lo}{\ensuremath{-}6.2}
\setnum{ho.cc.LB-IPWtrue.M.lo.abs}{6.2}
\setnum{ho.cc.LB-IPWtrue.M.up}{0.6}
\setnum{ho.cc.LB-IPWtrue.pass@8}{\ensuremath{+}8.6}
\setnum{ho.cc.LB-IPWtrue.pass@8.abs}{8.6}
\setnum{ho.cc.LB-IPWtrue.pass@8.dn}{1.9}
\setnum{ho.cc.LB-IPWtrue.pass@8.hi}{\ensuremath{+}10.6}
\setnum{ho.cc.LB-IPWtrue.pass@8.hi.abs}{10.6}
\setnum{ho.cc.LB-IPWtrue.pass@8.lo}{\ensuremath{+}6.7}
\setnum{ho.cc.LB-IPWtrue.pass@8.lo.abs}{6.7}
\setnum{ho.cc.LB-IPWtrue.pass@8.up}{2.0}
\setnum{ho.cc.LB-start.C}{\ensuremath{+}8.1}
\setnum{ho.cc.LB-start.C.abs}{8.1}
\setnum{ho.cc.LB-start.C.dn}{1.3}
\setnum{ho.cc.LB-start.C.hi}{\ensuremath{+}9.5}
\setnum{ho.cc.LB-start.C.hi.abs}{9.5}
\setnum{ho.cc.LB-start.C.lo}{\ensuremath{+}6.8}
\setnum{ho.cc.LB-start.C.lo.abs}{6.8}
\setnum{ho.cc.LB-start.C.up}{1.4}
\setnum{ho.cc.LB-start.H}{\ensuremath{-}23.1}
\setnum{ho.cc.LB-start.H.abs}{23.1}
\setnum{ho.cc.LB-start.H.dn}{1.5}
\setnum{ho.cc.LB-start.H.hi}{\ensuremath{-}21.6}
\setnum{ho.cc.LB-start.H.hi.abs}{21.6}
\setnum{ho.cc.LB-start.H.lo}{\ensuremath{-}24.6}
\setnum{ho.cc.LB-start.H.lo.abs}{24.6}
\setnum{ho.cc.LB-start.H.up}{1.5}
\setnum{ho.cc.LB-start.M}{\ensuremath{-}1.4}
\setnum{ho.cc.LB-start.M.abs}{1.4}
\setnum{ho.cc.LB-start.M.dn}{0.3}
\setnum{ho.cc.LB-start.M.hi}{\ensuremath{-}1.1}
\setnum{ho.cc.LB-start.M.hi.abs}{1.1}
\setnum{ho.cc.LB-start.M.lo}{\ensuremath{-}1.7}
\setnum{ho.cc.LB-start.M.lo.abs}{1.7}
\setnum{ho.cc.LB-start.M.up}{0.3}
\setnum{ho.cc.LB-start.pass@8}{\ensuremath{+}7.4}
\setnum{ho.cc.LB-start.pass@8.abs}{7.4}
\setnum{ho.cc.LB-start.pass@8.dn}{1.8}
\setnum{ho.cc.LB-start.pass@8.hi}{\ensuremath{+}9.2}
\setnum{ho.cc.LB-start.pass@8.hi.abs}{9.2}
\setnum{ho.cc.LB-start.pass@8.lo}{\ensuremath{+}5.6}
\setnum{ho.cc.LB-start.pass@8.lo.abs}{5.6}
\setnum{ho.cc.LB-start.pass@8.up}{1.8}
\setnum{ho.cc.LB.C}{51.3}
\setnum{ho.cc.LB.H}{0.2}
\setnum{ho.cc.LB.M}{0.2}
\setnum{ho.cc.LB.pass@8}{64.8}
\setnum{ho.cc.LBFULL.s1.dC}{\ensuremath{-}2.1}
\setnum{ho.cc.LBFULL.s1.dC.abs}{2.1}
\setnum{ho.cc.LBFULL.s1.dC.dn}{1.5}
\setnum{ho.cc.LBFULL.s1.dC.hi}{\ensuremath{-}0.7}
\setnum{ho.cc.LBFULL.s1.dC.hi.abs}{0.7}
\setnum{ho.cc.LBFULL.s1.dC.lo}{\ensuremath{-}3.6}
\setnum{ho.cc.LBFULL.s1.dC.lo.abs}{3.6}
\setnum{ho.cc.LBFULL.s1.dC.up}{1.4}
\setnum{ho.cc.LBFULL.s1.dC_parseable}{\ensuremath{-}2.1}
\setnum{ho.cc.LBFULL.s1.dC_parseable.abs}{2.1}
\setnum{ho.cc.LBFULL.s1.dC_parseable.dn}{1.5}
\setnum{ho.cc.LBFULL.s1.dC_parseable.hi}{\ensuremath{-}0.7}
\setnum{ho.cc.LBFULL.s1.dC_parseable.hi.abs}{0.7}
\setnum{ho.cc.LBFULL.s1.dC_parseable.lo}{\ensuremath{-}3.6}
\setnum{ho.cc.LBFULL.s1.dC_parseable.lo.abs}{3.6}
\setnum{ho.cc.LBFULL.s1.dC_parseable.up}{1.4}
\setnum{ho.cc.LBFULL.s1.dH}{\ensuremath{+}0.2}
\setnum{ho.cc.LBFULL.s1.dH.abs}{0.2}
\setnum{ho.cc.LBFULL.s1.dH.dn}{0.1}
\setnum{ho.cc.LBFULL.s1.dH.hi}{\ensuremath{+}0.3}
\setnum{ho.cc.LBFULL.s1.dH.hi.abs}{0.3}
\setnum{ho.cc.LBFULL.s1.dH.lo}{\ensuremath{+}0.1}
\setnum{ho.cc.LBFULL.s1.dH.lo.abs}{0.1}
\setnum{ho.cc.LBFULL.s1.dH.up}{0.1}
\setnum{ho.cc.LBFULL.s1.dM}{\ensuremath{+}0.1}
\setnum{ho.cc.LBFULL.s1.dM.abs}{0.1}
\setnum{ho.cc.LBFULL.s1.dM.dn}{0.2}
\setnum{ho.cc.LBFULL.s1.dM.hi}{\ensuremath{+}0.2}
\setnum{ho.cc.LBFULL.s1.dM.hi.abs}{0.2}
\setnum{ho.cc.LBFULL.s1.dM.lo}{\ensuremath{-}0.1}
\setnum{ho.cc.LBFULL.s1.dM.lo.abs}{0.1}
\setnum{ho.cc.LBFULL.s1.dM.up}{0.1}
\setnum{ho.cc.LBFULL.s2.dC}{\ensuremath{-}0.0}
\setnum{ho.cc.LBFULL.s2.dC.abs}{0.0}
\setnum{ho.cc.LBFULL.s2.dC.dn}{1.3}
\setnum{ho.cc.LBFULL.s2.dC.hi}{\ensuremath{+}1.2}
\setnum{ho.cc.LBFULL.s2.dC.hi.abs}{1.2}
\setnum{ho.cc.LBFULL.s2.dC.lo}{\ensuremath{-}1.3}
\setnum{ho.cc.LBFULL.s2.dC.lo.abs}{1.3}
\setnum{ho.cc.LBFULL.s2.dC.up}{1.2}
\setnum{ho.cc.LBFULL.s2.dC_parseable}{\ensuremath{+}0.9}
\setnum{ho.cc.LBFULL.s2.dC_parseable.abs}{0.9}
\setnum{ho.cc.LBFULL.s2.dC_parseable.dn}{1.3}
\setnum{ho.cc.LBFULL.s2.dC_parseable.hi}{\ensuremath{+}2.1}
\setnum{ho.cc.LBFULL.s2.dC_parseable.hi.abs}{2.1}
\setnum{ho.cc.LBFULL.s2.dC_parseable.lo}{\ensuremath{-}0.4}
\setnum{ho.cc.LBFULL.s2.dC_parseable.lo.abs}{0.4}
\setnum{ho.cc.LBFULL.s2.dC_parseable.up}{1.2}
\setnum{ho.cc.LBFULL.s2.dH}{\ensuremath{+}0.0}
\setnum{ho.cc.LBFULL.s2.dH.abs}{0.0}
\setnum{ho.cc.LBFULL.s2.dH.dn}{0.0}
\setnum{ho.cc.LBFULL.s2.dH.hi}{\ensuremath{+}0.1}
\setnum{ho.cc.LBFULL.s2.dH.hi.abs}{0.1}
\setnum{ho.cc.LBFULL.s2.dH.lo}{\ensuremath{+}0.0}
\setnum{ho.cc.LBFULL.s2.dH.lo.abs}{0.0}
\setnum{ho.cc.LBFULL.s2.dH.up}{0.1}
\setnum{ho.cc.LBFULL.s2.dM}{\ensuremath{+}1.7}
\setnum{ho.cc.LBFULL.s2.dM.abs}{1.7}
\setnum{ho.cc.LBFULL.s2.dM.dn}{0.3}
\setnum{ho.cc.LBFULL.s2.dM.hi}{\ensuremath{+}2.0}
\setnum{ho.cc.LBFULL.s2.dM.hi.abs}{2.0}
\setnum{ho.cc.LBFULL.s2.dM.lo}{\ensuremath{+}1.4}
\setnum{ho.cc.LBFULL.s2.dM.lo.abs}{1.4}
\setnum{ho.cc.LBFULL.s2.dM.up}{0.3}
\setnum{ho.cc.LBFULL.s3.dC}{\ensuremath{+}2.2}
\setnum{ho.cc.LBFULL.s3.dC.abs}{2.2}
\setnum{ho.cc.LBFULL.s3.dC.dn}{1.2}
\setnum{ho.cc.LBFULL.s3.dC.hi}{\ensuremath{+}3.4}
\setnum{ho.cc.LBFULL.s3.dC.hi.abs}{3.4}
\setnum{ho.cc.LBFULL.s3.dC.lo}{\ensuremath{+}1.0}
\setnum{ho.cc.LBFULL.s3.dC.lo.abs}{1.0}
\setnum{ho.cc.LBFULL.s3.dC.up}{1.2}
\setnum{ho.cc.LBFULL.s3.dC_parseable}{\ensuremath{+}1.6}
\setnum{ho.cc.LBFULL.s3.dC_parseable.abs}{1.6}
\setnum{ho.cc.LBFULL.s3.dC_parseable.dn}{1.2}
\setnum{ho.cc.LBFULL.s3.dC_parseable.hi}{\ensuremath{+}2.8}
\setnum{ho.cc.LBFULL.s3.dC_parseable.hi.abs}{2.8}
\setnum{ho.cc.LBFULL.s3.dC_parseable.lo}{\ensuremath{+}0.4}
\setnum{ho.cc.LBFULL.s3.dC_parseable.lo.abs}{0.4}
\setnum{ho.cc.LBFULL.s3.dC_parseable.up}{1.2}
\setnum{ho.cc.LBFULL.s3.dH}{\ensuremath{-}0.1}
\setnum{ho.cc.LBFULL.s3.dH.abs}{0.1}
\setnum{ho.cc.LBFULL.s3.dH.dn}{0.1}
\setnum{ho.cc.LBFULL.s3.dH.hi}{\ensuremath{-}0.1}
\setnum{ho.cc.LBFULL.s3.dH.hi.abs}{0.1}
\setnum{ho.cc.LBFULL.s3.dH.lo}{\ensuremath{-}0.2}
\setnum{ho.cc.LBFULL.s3.dH.lo.abs}{0.2}
\setnum{ho.cc.LBFULL.s3.dH.up}{0.0}
\setnum{ho.cc.LBFULL.s3.dM}{\ensuremath{-}1.1}
\setnum{ho.cc.LBFULL.s3.dM.abs}{1.1}
\setnum{ho.cc.LBFULL.s3.dM.dn}{0.3}
\setnum{ho.cc.LBFULL.s3.dM.hi}{\ensuremath{-}0.9}
\setnum{ho.cc.LBFULL.s3.dM.hi.abs}{0.9}
\setnum{ho.cc.LBFULL.s3.dM.lo}{\ensuremath{-}1.4}
\setnum{ho.cc.LBFULL.s3.dM.lo.abs}{1.4}
\setnum{ho.cc.LBFULL.s3.dM.up}{0.2}
\setnum{ho.cc.LBRB.dC}{\ensuremath{+}5.3}
\setnum{ho.cc.LBRB.dC.abs}{5.3}
\setnum{ho.cc.LBRB.dC.dn}{1.4}
\setnum{ho.cc.LBRB.dC.hi}{\ensuremath{+}6.7}
\setnum{ho.cc.LBRB.dC.hi.abs}{6.7}
\setnum{ho.cc.LBRB.dC.lo}{\ensuremath{+}3.9}
\setnum{ho.cc.LBRB.dC.lo.abs}{3.9}
\setnum{ho.cc.LBRB.dC.up}{1.4}
\setnum{ho.cc.LBRB.dC_parseable}{\ensuremath{+}5.2}
\setnum{ho.cc.LBRB.dC_parseable.abs}{5.2}
\setnum{ho.cc.LBRB.dC_parseable.dn}{1.5}
\setnum{ho.cc.LBRB.dC_parseable.hi}{\ensuremath{+}6.6}
\setnum{ho.cc.LBRB.dC_parseable.hi.abs}{6.6}
\setnum{ho.cc.LBRB.dC_parseable.lo}{\ensuremath{+}3.7}
\setnum{ho.cc.LBRB.dC_parseable.lo.abs}{3.7}
\setnum{ho.cc.LBRB.dC_parseable.up}{1.4}
\setnum{ho.cc.LBRB.dH}{\ensuremath{-}48.6}
\setnum{ho.cc.LBRB.dH.abs}{48.6}
\setnum{ho.cc.LBRB.dH.dn}{2.7}
\setnum{ho.cc.LBRB.dH.hi}{\ensuremath{-}46.0}
\setnum{ho.cc.LBRB.dH.hi.abs}{46.0}
\setnum{ho.cc.LBRB.dH.lo}{\ensuremath{-}51.3}
\setnum{ho.cc.LBRB.dH.lo.abs}{51.3}
\setnum{ho.cc.LBRB.dH.up}{2.6}
\setnum{ho.cc.LBRB.dM}{\ensuremath{-}0.3}
\setnum{ho.cc.LBRB.dM.abs}{0.3}
\setnum{ho.cc.LBRB.dM.dn}{0.2}
\setnum{ho.cc.LBRB.dM.hi}{\ensuremath{-}0.1}
\setnum{ho.cc.LBRB.dM.hi.abs}{0.1}
\setnum{ho.cc.LBRB.dM.lo}{\ensuremath{-}0.5}
\setnum{ho.cc.LBRB.dM.lo.abs}{0.5}
\setnum{ho.cc.LBRB.dM.up}{0.2}
\setnum{ho.cc.LBadvIPW.dC}{\ensuremath{+}8.0}
\setnum{ho.cc.LBadvIPW.dC.abs}{8.0}
\setnum{ho.cc.LBadvIPW.dC.dn}{1.3}
\setnum{ho.cc.LBadvIPW.dC.hi}{\ensuremath{+}9.3}
\setnum{ho.cc.LBadvIPW.dC.hi.abs}{9.3}
\setnum{ho.cc.LBadvIPW.dC.lo}{\ensuremath{+}6.7}
\setnum{ho.cc.LBadvIPW.dC.lo.abs}{6.7}
\setnum{ho.cc.LBadvIPW.dC.up}{1.3}
\setnum{ho.cc.LBadvIPW.dC_parseable}{\ensuremath{-}5.2}
\setnum{ho.cc.LBadvIPW.dC_parseable.abs}{5.2}
\setnum{ho.cc.LBadvIPW.dC_parseable.dn}{1.6}
\setnum{ho.cc.LBadvIPW.dC_parseable.hi}{\ensuremath{-}3.5}
\setnum{ho.cc.LBadvIPW.dC_parseable.hi.abs}{3.5}
\setnum{ho.cc.LBadvIPW.dC_parseable.lo}{\ensuremath{-}6.8}
\setnum{ho.cc.LBadvIPW.dC_parseable.lo.abs}{6.8}
\setnum{ho.cc.LBadvIPW.dC_parseable.up}{1.7}
\setnum{ho.cc.LBadvIPW.dH}{\ensuremath{-}1.3}
\setnum{ho.cc.LBadvIPW.dH.abs}{1.3}
\setnum{ho.cc.LBadvIPW.dH.dn}{0.3}
\setnum{ho.cc.LBadvIPW.dH.hi}{\ensuremath{-}1.0}
\setnum{ho.cc.LBadvIPW.dH.hi.abs}{1.0}
\setnum{ho.cc.LBadvIPW.dH.lo}{\ensuremath{-}1.6}
\setnum{ho.cc.LBadvIPW.dH.lo.abs}{1.6}
\setnum{ho.cc.LBadvIPW.dH.up}{0.3}
\setnum{ho.cc.LBadvIPW.dM}{\ensuremath{-}23.2}
\setnum{ho.cc.LBadvIPW.dM.abs}{23.2}
\setnum{ho.cc.LBadvIPW.dM.dn}{1.6}
\setnum{ho.cc.LBadvIPW.dM.hi}{\ensuremath{-}21.5}
\setnum{ho.cc.LBadvIPW.dM.hi.abs}{21.5}
\setnum{ho.cc.LBadvIPW.dM.lo}{\ensuremath{-}24.8}
\setnum{ho.cc.LBadvIPW.dM.lo.abs}{24.8}
\setnum{ho.cc.LBadvIPW.dM.up}{1.7}
\setnum{ho.cc.LBadvIPW.max.dC}{11.5}
\setnum{ho.cc.LBadvIPW.max23.dC_parseable}{10.9}
\setnum{ho.cc.LBadvIPW.min.dC}{8.0}
\setnum{ho.cc.LBadvIPW.min23.dC_parseable}{10.0}
\setnum{ho.cc.LBadvIPW.s1.dC}{\ensuremath{+}8.0}
\setnum{ho.cc.LBadvIPW.s1.dC.abs}{8.0}
\setnum{ho.cc.LBadvIPW.s1.dC.dn}{1.3}
\setnum{ho.cc.LBadvIPW.s1.dC.hi}{\ensuremath{+}9.3}
\setnum{ho.cc.LBadvIPW.s1.dC.hi.abs}{9.3}
\setnum{ho.cc.LBadvIPW.s1.dC.lo}{\ensuremath{+}6.7}
\setnum{ho.cc.LBadvIPW.s1.dC.lo.abs}{6.7}
\setnum{ho.cc.LBadvIPW.s1.dC.up}{1.3}
\setnum{ho.cc.LBadvIPW.s1.dC_parseable}{\ensuremath{-}5.2}
\setnum{ho.cc.LBadvIPW.s1.dC_parseable.abs}{5.2}
\setnum{ho.cc.LBadvIPW.s1.dC_parseable.dn}{1.6}
\setnum{ho.cc.LBadvIPW.s1.dC_parseable.hi}{\ensuremath{-}3.5}
\setnum{ho.cc.LBadvIPW.s1.dC_parseable.hi.abs}{3.5}
\setnum{ho.cc.LBadvIPW.s1.dC_parseable.lo}{\ensuremath{-}6.8}
\setnum{ho.cc.LBadvIPW.s1.dC_parseable.lo.abs}{6.8}
\setnum{ho.cc.LBadvIPW.s1.dC_parseable.up}{1.7}
\setnum{ho.cc.LBadvIPW.s1.dH}{\ensuremath{-}1.3}
\setnum{ho.cc.LBadvIPW.s1.dH.abs}{1.3}
\setnum{ho.cc.LBadvIPW.s1.dH.dn}{0.3}
\setnum{ho.cc.LBadvIPW.s1.dH.hi}{\ensuremath{-}1.0}
\setnum{ho.cc.LBadvIPW.s1.dH.hi.abs}{1.0}
\setnum{ho.cc.LBadvIPW.s1.dH.lo}{\ensuremath{-}1.6}
\setnum{ho.cc.LBadvIPW.s1.dH.lo.abs}{1.6}
\setnum{ho.cc.LBadvIPW.s1.dH.up}{0.3}
\setnum{ho.cc.LBadvIPW.s1.dM}{\ensuremath{-}23.2}
\setnum{ho.cc.LBadvIPW.s1.dM.abs}{23.2}
\setnum{ho.cc.LBadvIPW.s1.dM.dn}{1.6}
\setnum{ho.cc.LBadvIPW.s1.dM.hi}{\ensuremath{-}21.5}
\setnum{ho.cc.LBadvIPW.s1.dM.hi.abs}{21.5}
\setnum{ho.cc.LBadvIPW.s1.dM.lo}{\ensuremath{-}24.8}
\setnum{ho.cc.LBadvIPW.s1.dM.lo.abs}{24.8}
\setnum{ho.cc.LBadvIPW.s1.dM.up}{1.7}
\setnum{ho.cc.LBadvIPW.s2.dC}{\ensuremath{+}11.5}
\setnum{ho.cc.LBadvIPW.s2.dC.abs}{11.5}
\setnum{ho.cc.LBadvIPW.s2.dC.dn}{1.5}
\setnum{ho.cc.LBadvIPW.s2.dC.hi}{\ensuremath{+}13.1}
\setnum{ho.cc.LBadvIPW.s2.dC.hi.abs}{13.1}
\setnum{ho.cc.LBadvIPW.s2.dC.lo}{\ensuremath{+}10.0}
\setnum{ho.cc.LBadvIPW.s2.dC.lo.abs}{10.0}
\setnum{ho.cc.LBadvIPW.s2.dC.up}{1.6}
\setnum{ho.cc.LBadvIPW.s2.dC_parseable}{\ensuremath{+}10.9}
\setnum{ho.cc.LBadvIPW.s2.dC_parseable.abs}{10.9}
\setnum{ho.cc.LBadvIPW.s2.dC_parseable.dn}{1.5}
\setnum{ho.cc.LBadvIPW.s2.dC_parseable.hi}{\ensuremath{+}12.5}
\setnum{ho.cc.LBadvIPW.s2.dC_parseable.hi.abs}{12.5}
\setnum{ho.cc.LBadvIPW.s2.dC_parseable.lo}{\ensuremath{+}9.4}
\setnum{ho.cc.LBadvIPW.s2.dC_parseable.lo.abs}{9.4}
\setnum{ho.cc.LBadvIPW.s2.dC_parseable.up}{1.6}
\setnum{ho.cc.LBadvIPW.s2.dH}{\ensuremath{-}1.2}
\setnum{ho.cc.LBadvIPW.s2.dH.abs}{1.2}
\setnum{ho.cc.LBadvIPW.s2.dH.dn}{0.2}
\setnum{ho.cc.LBadvIPW.s2.dH.hi}{\ensuremath{-}0.9}
\setnum{ho.cc.LBadvIPW.s2.dH.hi.abs}{0.9}
\setnum{ho.cc.LBadvIPW.s2.dH.lo}{\ensuremath{-}1.4}
\setnum{ho.cc.LBadvIPW.s2.dH.lo.abs}{1.4}
\setnum{ho.cc.LBadvIPW.s2.dH.up}{0.3}
\setnum{ho.cc.LBadvIPW.s2.dM}{\ensuremath{-}1.9}
\setnum{ho.cc.LBadvIPW.s2.dM.abs}{1.9}
\setnum{ho.cc.LBadvIPW.s2.dM.dn}{0.5}
\setnum{ho.cc.LBadvIPW.s2.dM.hi}{\ensuremath{-}1.3}
\setnum{ho.cc.LBadvIPW.s2.dM.hi.abs}{1.3}
\setnum{ho.cc.LBadvIPW.s2.dM.lo}{\ensuremath{-}2.4}
\setnum{ho.cc.LBadvIPW.s2.dM.lo.abs}{2.4}
\setnum{ho.cc.LBadvIPW.s2.dM.up}{0.6}
\setnum{ho.cc.LBadvIPW.s3.dC}{\ensuremath{+}10.0}
\setnum{ho.cc.LBadvIPW.s3.dC.abs}{10.0}
\setnum{ho.cc.LBadvIPW.s3.dC.dn}{1.4}
\setnum{ho.cc.LBadvIPW.s3.dC.hi}{\ensuremath{+}11.4}
\setnum{ho.cc.LBadvIPW.s3.dC.hi.abs}{11.4}
\setnum{ho.cc.LBadvIPW.s3.dC.lo}{\ensuremath{+}8.6}
\setnum{ho.cc.LBadvIPW.s3.dC.lo.abs}{8.6}
\setnum{ho.cc.LBadvIPW.s3.dC.up}{1.4}
\setnum{ho.cc.LBadvIPW.s3.dC_parseable}{\ensuremath{+}10.0}
\setnum{ho.cc.LBadvIPW.s3.dC_parseable.abs}{10.0}
\setnum{ho.cc.LBadvIPW.s3.dC_parseable.dn}{1.3}
\setnum{ho.cc.LBadvIPW.s3.dC_parseable.hi}{\ensuremath{+}11.4}
\setnum{ho.cc.LBadvIPW.s3.dC_parseable.hi.abs}{11.4}
\setnum{ho.cc.LBadvIPW.s3.dC_parseable.lo}{\ensuremath{+}8.7}
\setnum{ho.cc.LBadvIPW.s3.dC_parseable.lo.abs}{8.7}
\setnum{ho.cc.LBadvIPW.s3.dC_parseable.up}{1.4}
\setnum{ho.cc.LBadvIPW.s3.dH}{\ensuremath{-}0.0}
\setnum{ho.cc.LBadvIPW.s3.dH.abs}{0.0}
\setnum{ho.cc.LBadvIPW.s3.dH.dn}{0.0}
\setnum{ho.cc.LBadvIPW.s3.dH.hi}{\ensuremath{+}0.0}
\setnum{ho.cc.LBadvIPW.s3.dH.hi.abs}{0.0}
\setnum{ho.cc.LBadvIPW.s3.dH.lo}{\ensuremath{-}0.0}
\setnum{ho.cc.LBadvIPW.s3.dH.lo.abs}{0.0}
\setnum{ho.cc.LBadvIPW.s3.dH.up}{0.0}
\setnum{ho.cc.LBadvIPW.s3.dM}{\ensuremath{-}0.0}
\setnum{ho.cc.LBadvIPW.s3.dM.abs}{0.0}
\setnum{ho.cc.LBadvIPW.s3.dM.dn}{0.2}
\setnum{ho.cc.LBadvIPW.s3.dM.hi}{\ensuremath{+}0.1}
\setnum{ho.cc.LBadvIPW.s3.dM.hi.abs}{0.1}
\setnum{ho.cc.LBadvIPW.s3.dM.lo}{\ensuremath{-}0.2}
\setnum{ho.cc.LBadvIPW.s3.dM.lo.abs}{0.2}
\setnum{ho.cc.LBadvIPW.s3.dM.up}{0.1}
\setnum{ho.cc.LBoptIPW.dC}{\ensuremath{+}1.2}
\setnum{ho.cc.LBoptIPW.dC.abs}{1.2}
\setnum{ho.cc.LBoptIPW.dC.dn}{1.3}
\setnum{ho.cc.LBoptIPW.dC.hi}{\ensuremath{+}2.5}
\setnum{ho.cc.LBoptIPW.dC.hi.abs}{2.5}
\setnum{ho.cc.LBoptIPW.dC.lo}{\ensuremath{-}0.1}
\setnum{ho.cc.LBoptIPW.dC.lo.abs}{0.1}
\setnum{ho.cc.LBoptIPW.dC.up}{1.3}
\setnum{ho.cc.LBoptIPW.dC_parseable}{\ensuremath{+}1.0}
\setnum{ho.cc.LBoptIPW.dC_parseable.abs}{1.0}
\setnum{ho.cc.LBoptIPW.dC_parseable.dn}{1.3}
\setnum{ho.cc.LBoptIPW.dC_parseable.hi}{\ensuremath{+}2.3}
\setnum{ho.cc.LBoptIPW.dC_parseable.hi.abs}{2.3}
\setnum{ho.cc.LBoptIPW.dC_parseable.lo}{\ensuremath{-}0.3}
\setnum{ho.cc.LBoptIPW.dC_parseable.lo.abs}{0.3}
\setnum{ho.cc.LBoptIPW.dC_parseable.up}{1.3}
\setnum{ho.cc.LBoptIPW.dH}{\ensuremath{+}0.1}
\setnum{ho.cc.LBoptIPW.dH.abs}{0.1}
\setnum{ho.cc.LBoptIPW.dH.dn}{0.1}
\setnum{ho.cc.LBoptIPW.dH.hi}{\ensuremath{+}0.3}
\setnum{ho.cc.LBoptIPW.dH.hi.abs}{0.3}
\setnum{ho.cc.LBoptIPW.dH.lo}{\ensuremath{+}0.0}
\setnum{ho.cc.LBoptIPW.dH.lo.abs}{0.0}
\setnum{ho.cc.LBoptIPW.dH.up}{0.2}
\setnum{ho.cc.LBoptIPW.dM}{\ensuremath{-}0.4}
\setnum{ho.cc.LBoptIPW.dM.abs}{0.4}
\setnum{ho.cc.LBoptIPW.dM.dn}{0.2}
\setnum{ho.cc.LBoptIPW.dM.hi}{\ensuremath{-}0.2}
\setnum{ho.cc.LBoptIPW.dM.hi.abs}{0.2}
\setnum{ho.cc.LBoptIPW.dM.lo}{\ensuremath{-}0.6}
\setnum{ho.cc.LBoptIPW.dM.lo.abs}{0.6}
\setnum{ho.cc.LBoptIPW.dM.up}{0.2}
\setnum{ho.cc.N28.C}{47.1}
\setnum{ho.cc.N28.H}{28.1}
\setnum{ho.cc.N28.M}{8.3}
\setnum{ho.cc.N28.pass@8}{55.1}
\setnum{ho.cc.PAC-LB.C}{\ensuremath{-}30.6}
\setnum{ho.cc.PAC-LB.C.abs}{30.6}
\setnum{ho.cc.PAC-LB.C.dn}{2.3}
\setnum{ho.cc.PAC-LB.C.hi}{\ensuremath{-}28.2}
\setnum{ho.cc.PAC-LB.C.hi.abs}{28.2}
\setnum{ho.cc.PAC-LB.C.lo}{\ensuremath{-}32.9}
\setnum{ho.cc.PAC-LB.C.lo.abs}{32.9}
\setnum{ho.cc.PAC-LB.C.up}{2.4}
\setnum{ho.cc.PAC-LB.H}{\ensuremath{+}70.4}
\setnum{ho.cc.PAC-LB.H.abs}{70.4}
\setnum{ho.cc.PAC-LB.H.dn}{2.2}
\setnum{ho.cc.PAC-LB.H.hi}{\ensuremath{+}72.4}
\setnum{ho.cc.PAC-LB.H.hi.abs}{72.4}
\setnum{ho.cc.PAC-LB.H.lo}{\ensuremath{+}68.2}
\setnum{ho.cc.PAC-LB.H.lo.abs}{68.2}
\setnum{ho.cc.PAC-LB.H.up}{2.0}
\setnum{ho.cc.PAC-LB.M}{\ensuremath{+}0.3}
\setnum{ho.cc.PAC-LB.M.abs}{0.3}
\setnum{ho.cc.PAC-LB.M.dn}{0.2}
\setnum{ho.cc.PAC-LB.M.hi}{\ensuremath{+}0.5}
\setnum{ho.cc.PAC-LB.M.hi.abs}{0.5}
\setnum{ho.cc.PAC-LB.M.lo}{\ensuremath{+}0.1}
\setnum{ho.cc.PAC-LB.M.lo.abs}{0.1}
\setnum{ho.cc.PAC-LB.M.up}{0.2}
\setnum{ho.cc.PAC-LB.pass@8}{\ensuremath{-}28.7}
\setnum{ho.cc.PAC-LB.pass@8.abs}{28.7}
\setnum{ho.cc.PAC-LB.pass@8.dn}{2.8}
\setnum{ho.cc.PAC-LB.pass@8.hi}{\ensuremath{-}25.8}
\setnum{ho.cc.PAC-LB.pass@8.hi.abs}{25.8}
\setnum{ho.cc.PAC-LB.pass@8.lo}{\ensuremath{-}31.5}
\setnum{ho.cc.PAC-LB.pass@8.lo.abs}{31.5}
\setnum{ho.cc.PAC-LB.pass@8.up}{2.9}
\setnum{ho.cc.PAC-RawA.C}{\ensuremath{-}14.3}
\setnum{ho.cc.PAC-RawA.C.abs}{14.3}
\setnum{ho.cc.PAC-RawA.C.dn}{1.9}
\setnum{ho.cc.PAC-RawA.C.hi}{\ensuremath{-}12.4}
\setnum{ho.cc.PAC-RawA.C.hi.abs}{12.4}
\setnum{ho.cc.PAC-RawA.C.lo}{\ensuremath{-}16.2}
\setnum{ho.cc.PAC-RawA.C.lo.abs}{16.2}
\setnum{ho.cc.PAC-RawA.C.up}{1.9}
\setnum{ho.cc.PAC-RawA.H}{\ensuremath{+}70.4}
\setnum{ho.cc.PAC-RawA.H.abs}{70.4}
\setnum{ho.cc.PAC-RawA.H.dn}{2.1}
\setnum{ho.cc.PAC-RawA.H.hi}{\ensuremath{+}72.5}
\setnum{ho.cc.PAC-RawA.H.hi.abs}{72.5}
\setnum{ho.cc.PAC-RawA.H.lo}{\ensuremath{+}68.3}
\setnum{ho.cc.PAC-RawA.H.lo.abs}{68.3}
\setnum{ho.cc.PAC-RawA.H.up}{2.1}
\setnum{ho.cc.PAC-RawA.M}{\ensuremath{+}0.2}
\setnum{ho.cc.PAC-RawA.M.abs}{0.2}
\setnum{ho.cc.PAC-RawA.M.dn}{0.2}
\setnum{ho.cc.PAC-RawA.M.hi}{\ensuremath{+}0.4}
\setnum{ho.cc.PAC-RawA.M.hi.abs}{0.4}
\setnum{ho.cc.PAC-RawA.M.lo}{\ensuremath{-}0.0}
\setnum{ho.cc.PAC-RawA.M.lo.abs}{0.0}
\setnum{ho.cc.PAC-RawA.M.up}{0.2}
\setnum{ho.cc.PAC-RawA.pass@8}{\ensuremath{-}14.3}
\setnum{ho.cc.PAC-RawA.pass@8.abs}{14.3}
\setnum{ho.cc.PAC-RawA.pass@8.dn}{2.3}
\setnum{ho.cc.PAC-RawA.pass@8.hi}{\ensuremath{-}12.0}
\setnum{ho.cc.PAC-RawA.pass@8.hi.abs}{12.0}
\setnum{ho.cc.PAC-RawA.pass@8.lo}{\ensuremath{-}16.6}
\setnum{ho.cc.PAC-RawA.pass@8.lo.abs}{16.6}
\setnum{ho.cc.PAC-RawA.pass@8.up}{2.3}
\setnum{ho.cc.PAC-start.C}{\ensuremath{-}22.5}
\setnum{ho.cc.PAC-start.C.abs}{22.5}
\setnum{ho.cc.PAC-start.C.dn}{1.9}
\setnum{ho.cc.PAC-start.C.hi}{\ensuremath{-}20.5}
\setnum{ho.cc.PAC-start.C.hi.abs}{20.5}
\setnum{ho.cc.PAC-start.C.lo}{\ensuremath{-}24.4}
\setnum{ho.cc.PAC-start.C.lo.abs}{24.4}
\setnum{ho.cc.PAC-start.C.up}{2.0}
\setnum{ho.cc.PAC-start.H}{\ensuremath{+}47.3}
\setnum{ho.cc.PAC-start.H.abs}{47.3}
\setnum{ho.cc.PAC-start.H.dn}{1.9}
\setnum{ho.cc.PAC-start.H.hi}{\ensuremath{+}49.2}
\setnum{ho.cc.PAC-start.H.hi.abs}{49.2}
\setnum{ho.cc.PAC-start.H.lo}{\ensuremath{+}45.4}
\setnum{ho.cc.PAC-start.H.lo.abs}{45.4}
\setnum{ho.cc.PAC-start.H.up}{1.9}
\setnum{ho.cc.PAC-start.M}{\ensuremath{-}1.1}
\setnum{ho.cc.PAC-start.M.abs}{1.1}
\setnum{ho.cc.PAC-start.M.dn}{0.3}
\setnum{ho.cc.PAC-start.M.hi}{\ensuremath{-}0.7}
\setnum{ho.cc.PAC-start.M.hi.abs}{0.7}
\setnum{ho.cc.PAC-start.M.lo}{\ensuremath{-}1.4}
\setnum{ho.cc.PAC-start.M.lo.abs}{1.4}
\setnum{ho.cc.PAC-start.M.up}{0.4}
\setnum{ho.cc.PAC-start.pass@8}{\ensuremath{-}21.3}
\setnum{ho.cc.PAC-start.pass@8.abs}{21.3}
\setnum{ho.cc.PAC-start.pass@8.dn}{2.5}
\setnum{ho.cc.PAC-start.pass@8.hi}{\ensuremath{-}18.7}
\setnum{ho.cc.PAC-start.pass@8.hi.abs}{18.7}
\setnum{ho.cc.PAC-start.pass@8.lo}{\ensuremath{-}23.8}
\setnum{ho.cc.PAC-start.pass@8.lo.abs}{23.8}
\setnum{ho.cc.PAC-start.pass@8.up}{2.6}
\setnum{ho.cc.PAC.C}{20.8}
\setnum{ho.cc.PAC.H}{70.6}
\setnum{ho.cc.PAC.M}{0.5}
\setnum{ho.cc.PAC.pass@8}{36.1}
\setnum{ho.cc.RB.C}{46.0}
\setnum{ho.cc.RB.C_parseable}{46.3}
\setnum{ho.cc.RB.H}{48.8}
\setnum{ho.cc.RB.M}{0.5}
\setnum{ho.cc.RB.trunc}{33}
\setnum{ho.cc.RawA-LB.C}{\ensuremath{-}16.2}
\setnum{ho.cc.RawA-LB.C.abs}{16.2}
\setnum{ho.cc.RawA-LB.C.dn}{1.9}
\setnum{ho.cc.RawA-LB.C.hi}{\ensuremath{-}14.4}
\setnum{ho.cc.RawA-LB.C.hi.abs}{14.4}
\setnum{ho.cc.RawA-LB.C.lo}{\ensuremath{-}18.1}
\setnum{ho.cc.RawA-LB.C.lo.abs}{18.1}
\setnum{ho.cc.RawA-LB.C.up}{1.8}
\setnum{ho.cc.RawA-LB.H}{\ensuremath{-}0.0}
\setnum{ho.cc.RawA-LB.H.abs}{0.0}
\setnum{ho.cc.RawA-LB.H.dn}{0.2}
\setnum{ho.cc.RawA-LB.H.hi}{\ensuremath{+}0.1}
\setnum{ho.cc.RawA-LB.H.hi.abs}{0.1}
\setnum{ho.cc.RawA-LB.H.lo}{\ensuremath{-}0.2}
\setnum{ho.cc.RawA-LB.H.lo.abs}{0.2}
\setnum{ho.cc.RawA-LB.H.up}{0.1}
\setnum{ho.cc.RawA-LB.M}{\ensuremath{+}0.1}
\setnum{ho.cc.RawA-LB.M.abs}{0.1}
\setnum{ho.cc.RawA-LB.M.dn}{0.2}
\setnum{ho.cc.RawA-LB.M.hi}{\ensuremath{+}0.3}
\setnum{ho.cc.RawA-LB.M.hi.abs}{0.3}
\setnum{ho.cc.RawA-LB.M.lo}{\ensuremath{-}0.1}
\setnum{ho.cc.RawA-LB.M.lo.abs}{0.1}
\setnum{ho.cc.RawA-LB.M.up}{0.2}
\setnum{ho.cc.RawA-LB.pass@8}{\ensuremath{-}14.4}
\setnum{ho.cc.RawA-LB.pass@8.abs}{14.4}
\setnum{ho.cc.RawA-LB.pass@8.dn}{2.3}
\setnum{ho.cc.RawA-LB.pass@8.hi}{\ensuremath{-}12.3}
\setnum{ho.cc.RawA-LB.pass@8.hi.abs}{12.3}
\setnum{ho.cc.RawA-LB.pass@8.lo}{\ensuremath{-}16.7}
\setnum{ho.cc.RawA-LB.pass@8.lo.abs}{16.7}
\setnum{ho.cc.RawA-LB.pass@8.up}{2.1}
\setnum{ho.cc.RawA-start.C}{\ensuremath{-}8.2}
\setnum{ho.cc.RawA-start.C.abs}{8.2}
\setnum{ho.cc.RawA-start.C.dn}{1.3}
\setnum{ho.cc.RawA-start.C.hi}{\ensuremath{-}6.8}
\setnum{ho.cc.RawA-start.C.hi.abs}{6.8}
\setnum{ho.cc.RawA-start.C.lo}{\ensuremath{-}9.5}
\setnum{ho.cc.RawA-start.C.lo.abs}{9.5}
\setnum{ho.cc.RawA-start.C.up}{1.4}
\setnum{ho.cc.RawA-start.H}{\ensuremath{-}23.1}
\setnum{ho.cc.RawA-start.H.abs}{23.1}
\setnum{ho.cc.RawA-start.H.dn}{1.5}
\setnum{ho.cc.RawA-start.H.hi}{\ensuremath{-}21.6}
\setnum{ho.cc.RawA-start.H.hi.abs}{21.6}
\setnum{ho.cc.RawA-start.H.lo}{\ensuremath{-}24.6}
\setnum{ho.cc.RawA-start.H.lo.abs}{24.6}
\setnum{ho.cc.RawA-start.H.up}{1.5}
\setnum{ho.cc.RawA-start.M}{\ensuremath{-}1.2}
\setnum{ho.cc.RawA-start.M.abs}{1.2}
\setnum{ho.cc.RawA-start.M.dn}{0.4}
\setnum{ho.cc.RawA-start.M.hi}{\ensuremath{-}0.9}
\setnum{ho.cc.RawA-start.M.hi.abs}{0.9}
\setnum{ho.cc.RawA-start.M.lo}{\ensuremath{-}1.6}
\setnum{ho.cc.RawA-start.M.lo.abs}{1.6}
\setnum{ho.cc.RawA-start.M.up}{0.3}
\setnum{ho.cc.RawA-start.pass@8}{\ensuremath{-}7.0}
\setnum{ho.cc.RawA-start.pass@8.abs}{7.0}
\setnum{ho.cc.RawA-start.pass@8.dn}{1.7}
\setnum{ho.cc.RawA-start.pass@8.hi}{\ensuremath{-}5.4}
\setnum{ho.cc.RawA-start.pass@8.hi.abs}{5.4}
\setnum{ho.cc.RawA-start.pass@8.lo}{\ensuremath{-}8.7}
\setnum{ho.cc.RawA-start.pass@8.lo.abs}{8.7}
\setnum{ho.cc.RawA-start.pass@8.up}{1.6}
\setnum{ho.cc.RawA.C}{35.1}
\setnum{ho.cc.RawA.H}{0.1}
\setnum{ho.cc.RawA.M}{0.3}
\setnum{ho.cc.RawA.pass@8}{50.4}
\setnum{ho.cc.advIPW.C}{43.4}
\setnum{ho.cc.advIPW.C_parseable}{56.6}
\setnum{ho.cc.advIPW.H}{1.5}
\setnum{ho.cc.advIPW.M}{23.4}
\setnum{ho.cc.advIPW.s1.C}{43.4}
\setnum{ho.cc.advIPW.s1.M}{23.4}
\setnum{ho.cc.advIPW.s2.C}{40.9}
\setnum{ho.cc.advIPW.s2.M}{3.8}
\setnum{ho.cc.advIPW.s3.C}{42.2}
\setnum{ho.cc.advIPW.s3.M}{0.3}
\setnum{ho.cc.advIPW.trunc}{1863}
\setnum{ho.cc.optIPW.C}{50.1}
\setnum{ho.cc.optIPW.M}{0.6}
\setnum{ho.cc.optIPWadvIPW.C}{50.1}
\setnum{ho.cc.optIPWadvIPW.M}{0.6}
\setnum{ho.cc.optIPWadvIPW.dC}{\ensuremath{+}6.8}
\setnum{ho.cc.optIPWadvIPW.dC.abs}{6.8}
\setnum{ho.cc.optIPWadvIPW.dC.dn}{1.4}
\setnum{ho.cc.optIPWadvIPW.dC.hi}{\ensuremath{+}8.1}
\setnum{ho.cc.optIPWadvIPW.dC.hi.abs}{8.1}
\setnum{ho.cc.optIPWadvIPW.dC.lo}{\ensuremath{+}5.4}
\setnum{ho.cc.optIPWadvIPW.dC.lo.abs}{5.4}
\setnum{ho.cc.optIPWadvIPW.dC.up}{1.3}
\setnum{ho.cc.optIPWadvIPW.dC_parseable}{\ensuremath{-}6.2}
\setnum{ho.cc.optIPWadvIPW.dC_parseable.abs}{6.2}
\setnum{ho.cc.optIPWadvIPW.dC_parseable.dn}{1.6}
\setnum{ho.cc.optIPWadvIPW.dC_parseable.hi}{\ensuremath{-}4.4}
\setnum{ho.cc.optIPWadvIPW.dC_parseable.hi.abs}{4.4}
\setnum{ho.cc.optIPWadvIPW.dC_parseable.lo}{\ensuremath{-}7.8}
\setnum{ho.cc.optIPWadvIPW.dC_parseable.lo.abs}{7.8}
\setnum{ho.cc.optIPWadvIPW.dC_parseable.up}{1.8}
\setnum{ho.cc.optIPWadvIPW.dH}{\ensuremath{-}1.5}
\setnum{ho.cc.optIPWadvIPW.dH.abs}{1.5}
\setnum{ho.cc.optIPWadvIPW.dH.dn}{0.3}
\setnum{ho.cc.optIPWadvIPW.dH.hi}{\ensuremath{-}1.2}
\setnum{ho.cc.optIPWadvIPW.dH.hi.abs}{1.2}
\setnum{ho.cc.optIPWadvIPW.dH.lo}{\ensuremath{-}1.8}
\setnum{ho.cc.optIPWadvIPW.dH.lo.abs}{1.8}
\setnum{ho.cc.optIPWadvIPW.dH.up}{0.3}
\setnum{ho.cc.optIPWadvIPW.dM}{\ensuremath{-}22.8}
\setnum{ho.cc.optIPWadvIPW.dM.abs}{22.8}
\setnum{ho.cc.optIPWadvIPW.dM.dn}{1.6}
\setnum{ho.cc.optIPWadvIPW.dM.hi}{\ensuremath{-}21.1}
\setnum{ho.cc.optIPWadvIPW.dM.hi.abs}{21.1}
\setnum{ho.cc.optIPWadvIPW.dM.lo}{\ensuremath{-}24.4}
\setnum{ho.cc.optIPWadvIPW.dM.lo.abs}{24.4}
\setnum{ho.cc.optIPWadvIPW.dM.up}{1.7}
\setnum{ho.cc.start.C}{43.2}
\setnum{ho.cc.start.H}{23.2}
\setnum{ho.cc.start.M}{1.6}
\setnum{ho.cc.start.pass@8}{57.4}
\setnum{ho.lc.LB-RB.C}{\ensuremath{-}1.3}
\setnum{ho.lc.LB-RB.C.abs}{1.3}
\setnum{ho.lc.LB-RB.C.dn}{1.6}
\setnum{ho.lc.LB-RB.C.hi}{\ensuremath{+}0.2}
\setnum{ho.lc.LB-RB.C.hi.abs}{0.2}
\setnum{ho.lc.LB-RB.C.lo}{\ensuremath{-}2.9}
\setnum{ho.lc.LB-RB.C.lo.abs}{2.9}
\setnum{ho.lc.LB-RB.C.up}{1.5}
\setnum{ho.lc.LB-RB.H}{\ensuremath{-}50.4}
\setnum{ho.lc.LB-RB.H.abs}{50.4}
\setnum{ho.lc.LB-RB.H.dn}{5.6}
\setnum{ho.lc.LB-RB.H.hi}{\ensuremath{-}44.9}
\setnum{ho.lc.LB-RB.H.hi.abs}{44.9}
\setnum{ho.lc.LB-RB.H.lo}{\ensuremath{-}56.0}
\setnum{ho.lc.LB-RB.H.lo.abs}{56.0}
\setnum{ho.lc.LB-RB.H.up}{5.5}
\setnum{ho.lc.LB-RB.pass@16}{\ensuremath{-}1.9}
\setnum{ho.lc.LB-RB.pass@16.abs}{1.9}
\setnum{ho.lc.LB-RB.pass@16.dn}{3.3}
\setnum{ho.lc.LB-RB.pass@16.hi}{\ensuremath{+}1.3}
\setnum{ho.lc.LB-RB.pass@16.hi.abs}{1.3}
\setnum{ho.lc.LB-RB.pass@16.lo}{\ensuremath{-}5.2}
\setnum{ho.lc.LB-RB.pass@16.lo.abs}{5.2}
\setnum{ho.lc.LB-RB.pass@16.up}{3.2}
\setnum{ho.lc.LB-RB.pass@32}{\ensuremath{+}0.0}
\setnum{ho.lc.LB-RB.pass@32.abs}{0.0}
\setnum{ho.lc.LB-RB.pass@32.dn}{4.2}
\setnum{ho.lc.LB-RB.pass@32.hi}{\ensuremath{+}5.0}
\setnum{ho.lc.LB-RB.pass@32.hi.abs}{5.0}
\setnum{ho.lc.LB-RB.pass@32.lo}{\ensuremath{-}4.2}
\setnum{ho.lc.LB-RB.pass@32.lo.abs}{4.2}
\setnum{ho.lc.LB-RB.pass@32.up}{5.0}
\setnum{ho.lc.LB-start.C}{\ensuremath{+}0.9}
\setnum{ho.lc.LB-start.C.abs}{0.9}
\setnum{ho.lc.LB-start.C.dn}{1.1}
\setnum{ho.lc.LB-start.C.hi}{\ensuremath{+}2.0}
\setnum{ho.lc.LB-start.C.hi.abs}{2.0}
\setnum{ho.lc.LB-start.C.lo}{\ensuremath{-}0.2}
\setnum{ho.lc.LB-start.C.lo.abs}{0.2}
\setnum{ho.lc.LB-start.C.up}{1.1}
\setnum{ho.lc.LB-start.H}{\ensuremath{-}14.6}
\setnum{ho.lc.LB-start.H.abs}{14.6}
\setnum{ho.lc.LB-start.H.dn}{2.7}
\setnum{ho.lc.LB-start.H.hi}{\ensuremath{-}12.1}
\setnum{ho.lc.LB-start.H.hi.abs}{12.1}
\setnum{ho.lc.LB-start.H.lo}{\ensuremath{-}17.3}
\setnum{ho.lc.LB-start.H.lo.abs}{17.3}
\setnum{ho.lc.LB-start.H.up}{2.5}
\setnum{ho.lc.LB-start.pass@16}{\ensuremath{+}0.0}
\setnum{ho.lc.LB-start.pass@16.abs}{0.0}
\setnum{ho.lc.LB-start.pass@16.dn}{3.5}
\setnum{ho.lc.LB-start.pass@16.hi}{\ensuremath{+}3.6}
\setnum{ho.lc.LB-start.pass@16.hi.abs}{3.6}
\setnum{ho.lc.LB-start.pass@16.lo}{\ensuremath{-}3.5}
\setnum{ho.lc.LB-start.pass@16.lo.abs}{3.5}
\setnum{ho.lc.LB-start.pass@16.up}{3.6}
\setnum{ho.lc.LB-start.pass@32}{\ensuremath{+}0.0}
\setnum{ho.lc.LB-start.pass@32.abs}{0.0}
\setnum{ho.lc.LB-start.pass@32.dn}{5.0}
\setnum{ho.lc.LB-start.pass@32.hi}{\ensuremath{+}5.0}
\setnum{ho.lc.LB-start.pass@32.hi.abs}{5.0}
\setnum{ho.lc.LB-start.pass@32.lo}{\ensuremath{-}5.0}
\setnum{ho.lc.LB-start.pass@32.lo.abs}{5.0}
\setnum{ho.lc.LB-start.pass@32.up}{5.0}
\setnum{ho.lc.LB.C}{19.2}
\setnum{ho.lc.LB.H}{0.3}
\setnum{ho.lc.LB.pass@16}{42.1}
\setnum{ho.lc.LB.pass@32}{47.1}
\setnum{ho.lc.RB-start.C}{\ensuremath{+}2.2}
\setnum{ho.lc.RB-start.C.abs}{2.2}
\setnum{ho.lc.RB-start.C.dn}{1.7}
\setnum{ho.lc.RB-start.C.hi}{\ensuremath{+}4.0}
\setnum{ho.lc.RB-start.C.hi.abs}{4.0}
\setnum{ho.lc.RB-start.C.lo}{\ensuremath{+}0.5}
\setnum{ho.lc.RB-start.C.lo.abs}{0.5}
\setnum{ho.lc.RB-start.C.up}{1.8}
\setnum{ho.lc.RB-start.H}{\ensuremath{+}35.8}
\setnum{ho.lc.RB-start.H.abs}{35.8}
\setnum{ho.lc.RB-start.H.dn}{4.6}
\setnum{ho.lc.RB-start.H.hi}{\ensuremath{+}40.4}
\setnum{ho.lc.RB-start.H.hi.abs}{40.4}
\setnum{ho.lc.RB-start.H.lo}{\ensuremath{+}31.2}
\setnum{ho.lc.RB-start.H.lo.abs}{31.2}
\setnum{ho.lc.RB-start.H.up}{4.6}
\setnum{ho.lc.RB-start.pass@16}{\ensuremath{+}1.9}
\setnum{ho.lc.RB-start.pass@16.abs}{1.9}
\setnum{ho.lc.RB-start.pass@16.dn}{4.0}
\setnum{ho.lc.RB-start.pass@16.hi}{\ensuremath{+}6.0}
\setnum{ho.lc.RB-start.pass@16.hi.abs}{6.0}
\setnum{ho.lc.RB-start.pass@16.lo}{\ensuremath{-}2.1}
\setnum{ho.lc.RB-start.pass@16.lo.abs}{2.1}
\setnum{ho.lc.RB-start.pass@16.up}{4.1}
\setnum{ho.lc.RB-start.pass@32}{\ensuremath{+}0.0}
\setnum{ho.lc.RB-start.pass@32.abs}{0.0}
\setnum{ho.lc.RB-start.pass@32.dn}{5.9}
\setnum{ho.lc.RB-start.pass@32.hi}{\ensuremath{+}5.9}
\setnum{ho.lc.RB-start.pass@32.hi.abs}{5.9}
\setnum{ho.lc.RB-start.pass@32.lo}{\ensuremath{-}5.9}
\setnum{ho.lc.RB-start.pass@32.lo.abs}{5.9}
\setnum{ho.lc.RB-start.pass@32.up}{5.9}
\setnum{ho.lc.RB.C}{20.6}
\setnum{ho.lc.RB.H}{50.8}
\setnum{ho.lc.RB.pass@16}{43.9}
\setnum{ho.lc.RB.pass@32}{47.1}
\setnum{ho.lc.start.C}{18.3}
\setnum{ho.lc.start.H}{14.9}
\setnum{ho.lc.start.pass@16}{42.0}
\setnum{ho.lc.start.pass@32}{47.1}
\setnum{ho3.cc.IPW.trunc.max}{13.9}
\setnum{ho3.cc.IPW.trunc.min}{3.3}
\setnum{ho3.cc.LB.trunc.max}{1.6}
\setnum{ho3.cc.LB.trunc.min}{0.2}
\setnum{ho3.cc.LBIPW.max.C}{\ensuremath{+}11.8}
\setnum{ho3.cc.LBIPW.max.C.abs}{11.8}
\setnum{ho3.cc.LBIPW.max.Cparse}{\ensuremath{+}5.9}
\setnum{ho3.cc.LBIPW.max.Cparse.abs}{5.9}
\setnum{ho3.cc.LBIPW.max.H}{\ensuremath{-}0.1}
\setnum{ho3.cc.LBIPW.max.H.abs}{0.1}
\setnum{ho3.cc.LBIPW.max.M}{\ensuremath{-}3.5}
\setnum{ho3.cc.LBIPW.max.M.abs}{3.5}
\setnum{ho3.cc.LBIPW.max.pass8}{\ensuremath{+}11.6}
\setnum{ho3.cc.LBIPW.max.pass8.abs}{11.6}
\setnum{ho3.cc.LBIPW.mean.C}{\ensuremath{+}8.4}
\setnum{ho3.cc.LBIPW.mean.C.abs}{8.4}
\setnum{ho3.cc.LBIPW.mean.Cparse}{\ensuremath{+}5.1}
\setnum{ho3.cc.LBIPW.mean.Cparse.abs}{5.1}
\setnum{ho3.cc.LBIPW.mean.H}{\ensuremath{-}0.3}
\setnum{ho3.cc.LBIPW.mean.H.abs}{0.3}
\setnum{ho3.cc.LBIPW.mean.M}{\ensuremath{-}7.1}
\setnum{ho3.cc.LBIPW.mean.M.abs}{7.1}
\setnum{ho3.cc.LBIPW.mean.pass8}{\ensuremath{+}5.9}
\setnum{ho3.cc.LBIPW.mean.pass8.abs}{5.9}
\setnum{ho3.cc.LBIPW.min.C}{\ensuremath{+}6.1}
\setnum{ho3.cc.LBIPW.min.C.abs}{6.1}
\setnum{ho3.cc.LBIPW.min.Cparse}{\ensuremath{+}4.5}
\setnum{ho3.cc.LBIPW.min.Cparse.abs}{4.5}
\setnum{ho3.cc.LBIPW.min.H}{\ensuremath{-}0.6}
\setnum{ho3.cc.LBIPW.min.H.abs}{0.6}
\setnum{ho3.cc.LBIPW.min.M}{\ensuremath{-}12.7}
\setnum{ho3.cc.LBIPW.min.M.abs}{12.7}
\setnum{ho3.cc.LBIPW.min.pass8}{\ensuremath{+}0.9}
\setnum{ho3.cc.LBIPW.min.pass8.abs}{0.9}
\setnum{ho3.cc.LBIPW.p.C}{0.040}
\setnum{ho3.cc.LBIPW.p.H}{0.151}
\setnum{ho3.cc.LBIPW.p.M}{0.131}
\setnum{ho3.cc.LBIPW.p.pass8}{0.200}
\setnum{ho3.cc.LBIPW.s1.Cparse}{\ensuremath{+}4.5}
\setnum{ho3.cc.LBIPW.s1.Cparse.abs}{4.5}
\setnum{ho3.cc.LBIPW.s1.Cparse.dn}{1.4}
\setnum{ho3.cc.LBIPW.s1.Cparse.hi}{\ensuremath{+}5.8}
\setnum{ho3.cc.LBIPW.s1.Cparse.hi.abs}{5.8}
\setnum{ho3.cc.LBIPW.s1.Cparse.lo}{\ensuremath{+}3.1}
\setnum{ho3.cc.LBIPW.s1.Cparse.lo.abs}{3.1}
\setnum{ho3.cc.LBIPW.s1.Cparse.up}{1.3}
\setnum{ho3.cc.LBIPW.s2.Cparse}{\ensuremath{+}5.9}
\setnum{ho3.cc.LBIPW.s2.Cparse.abs}{5.9}
\setnum{ho3.cc.LBIPW.s2.Cparse.dn}{1.5}
\setnum{ho3.cc.LBIPW.s2.Cparse.hi}{\ensuremath{+}7.3}
\setnum{ho3.cc.LBIPW.s2.Cparse.hi.abs}{7.3}
\setnum{ho3.cc.LBIPW.s2.Cparse.lo}{\ensuremath{+}4.4}
\setnum{ho3.cc.LBIPW.s2.Cparse.lo.abs}{4.4}
\setnum{ho3.cc.LBIPW.s2.Cparse.up}{1.4}
\setnum{ho3.cc.LBIPW.s3.Cparse}{\ensuremath{+}5.0}
\setnum{ho3.cc.LBIPW.s3.Cparse.abs}{5.0}
\setnum{ho3.cc.LBIPW.s3.Cparse.dn}{1.4}
\setnum{ho3.cc.LBIPW.s3.Cparse.hi}{\ensuremath{+}6.4}
\setnum{ho3.cc.LBIPW.s3.Cparse.hi.abs}{6.4}
\setnum{ho3.cc.LBIPW.s3.Cparse.lo}{\ensuremath{+}3.6}
\setnum{ho3.cc.LBIPW.s3.Cparse.lo.abs}{3.6}
\setnum{ho3.cc.LBIPW.s3.Cparse.up}{1.4}
\setnum{ho3.cc.LBIPW.t.C}{4.85}
\setnum{ho3.cc.LBIPW.t.H}{\ensuremath{-}2.27}
\setnum{ho3.cc.LBIPW.t.M}{\ensuremath{-}2.48}
\setnum{ho3.cc.LBIPW.t.pass8}{1.88}
\setnum{ho3.cc.others.maxgapM}{4.2}
\setnum{ho3.cc.others.n}{10}
\setnum{ho3.cc.s1.IPW.C}{45.2}
\setnum{ho3.cc.s1.IPW.H}{0.5}
\setnum{ho3.cc.s1.IPW.M}{3.7}
\setnum{ho3.cc.s1.IPW.pass8}{59.7}
\setnum{ho3.cc.s1.LB.C}{51.3}
\setnum{ho3.cc.s1.LB.H}{0.2}
\setnum{ho3.cc.s1.LB.M}{0.2}
\setnum{ho3.cc.s1.LB.pass8}{64.8}
\setnum{ho3.cc.s1.LBIPW.C}{\ensuremath{+}6.1}
\setnum{ho3.cc.s1.LBIPW.C.abs}{6.1}
\setnum{ho3.cc.s1.LBIPW.C.dn}{1.4}
\setnum{ho3.cc.s1.LBIPW.C.hi}{\ensuremath{+}7.4}
\setnum{ho3.cc.s1.LBIPW.C.hi.abs}{7.4}
\setnum{ho3.cc.s1.LBIPW.C.lo}{\ensuremath{+}4.7}
\setnum{ho3.cc.s1.LBIPW.C.lo.abs}{4.7}
\setnum{ho3.cc.s1.LBIPW.C.up}{1.3}
\setnum{ho3.cc.s1.LBIPW.H}{\ensuremath{-}0.3}
\setnum{ho3.cc.s1.LBIPW.H.abs}{0.3}
\setnum{ho3.cc.s1.LBIPW.H.dn}{0.2}
\setnum{ho3.cc.s1.LBIPW.H.hi}{\ensuremath{-}0.1}
\setnum{ho3.cc.s1.LBIPW.H.hi.abs}{0.1}
\setnum{ho3.cc.s1.LBIPW.H.lo}{\ensuremath{-}0.5}
\setnum{ho3.cc.s1.LBIPW.H.lo.abs}{0.5}
\setnum{ho3.cc.s1.LBIPW.H.up}{0.2}
\setnum{ho3.cc.s1.LBIPW.M}{\ensuremath{-}3.5}
\setnum{ho3.cc.s1.LBIPW.M.abs}{3.5}
\setnum{ho3.cc.s1.LBIPW.M.dn}{0.4}
\setnum{ho3.cc.s1.LBIPW.M.hi}{\ensuremath{-}3.0}
\setnum{ho3.cc.s1.LBIPW.M.hi.abs}{3.0}
\setnum{ho3.cc.s1.LBIPW.M.lo}{\ensuremath{-}3.9}
\setnum{ho3.cc.s1.LBIPW.M.lo.abs}{3.9}
\setnum{ho3.cc.s1.LBIPW.M.up}{0.5}
\setnum{ho3.cc.s1.LBIPW.pass8}{\ensuremath{+}5.1}
\setnum{ho3.cc.s1.LBIPW.pass8.abs}{5.1}
\setnum{ho3.cc.s1.LBIPW.pass8.dn}{1.8}
\setnum{ho3.cc.s1.LBIPW.pass8.hi}{\ensuremath{+}7.0}
\setnum{ho3.cc.s1.LBIPW.pass8.hi.abs}{7.0}
\setnum{ho3.cc.s1.LBIPW.pass8.lo}{\ensuremath{+}3.3}
\setnum{ho3.cc.s1.LBIPW.pass8.lo.abs}{3.3}
\setnum{ho3.cc.s1.LBIPW.pass8.up}{1.9}
\setnum{ho3.cc.s2.IPW.C}{40.6}
\setnum{ho3.cc.s2.IPW.H}{0.1}
\setnum{ho3.cc.s2.IPW.M}{14.6}
\setnum{ho3.cc.s2.IPW.pass8}{53.2}
\setnum{ho3.cc.s2.LB.C}{52.4}
\setnum{ho3.cc.s2.LB.H}{0.0}
\setnum{ho3.cc.s2.LB.M}{2.0}
\setnum{ho3.cc.s2.LB.pass8}{64.8}
\setnum{ho3.cc.s2.LBIPW.C}{\ensuremath{+}11.8}
\setnum{ho3.cc.s2.LBIPW.C.abs}{11.8}
\setnum{ho3.cc.s2.LBIPW.C.dn}{1.3}
\setnum{ho3.cc.s2.LBIPW.C.hi}{\ensuremath{+}13.2}
\setnum{ho3.cc.s2.LBIPW.C.hi.abs}{13.2}
\setnum{ho3.cc.s2.LBIPW.C.lo}{\ensuremath{+}10.5}
\setnum{ho3.cc.s2.LBIPW.C.lo.abs}{10.5}
\setnum{ho3.cc.s2.LBIPW.C.up}{1.4}
\setnum{ho3.cc.s2.LBIPW.H}{\ensuremath{-}0.1}
\setnum{ho3.cc.s2.LBIPW.H.abs}{0.1}
\setnum{ho3.cc.s2.LBIPW.H.dn}{0.1}
\setnum{ho3.cc.s2.LBIPW.H.hi}{\ensuremath{+}0.0}
\setnum{ho3.cc.s2.LBIPW.H.hi.abs}{0.0}
\setnum{ho3.cc.s2.LBIPW.H.lo}{\ensuremath{-}0.2}
\setnum{ho3.cc.s2.LBIPW.H.lo.abs}{0.2}
\setnum{ho3.cc.s2.LBIPW.H.up}{0.1}
\setnum{ho3.cc.s2.LBIPW.M}{\ensuremath{-}12.7}
\setnum{ho3.cc.s2.LBIPW.M.abs}{12.7}
\setnum{ho3.cc.s2.LBIPW.M.dn}{1.0}
\setnum{ho3.cc.s2.LBIPW.M.hi}{\ensuremath{-}11.7}
\setnum{ho3.cc.s2.LBIPW.M.hi.abs}{11.7}
\setnum{ho3.cc.s2.LBIPW.M.lo}{\ensuremath{-}13.7}
\setnum{ho3.cc.s2.LBIPW.M.lo.abs}{13.7}
\setnum{ho3.cc.s2.LBIPW.M.up}{1.0}
\setnum{ho3.cc.s2.LBIPW.pass8}{\ensuremath{+}11.6}
\setnum{ho3.cc.s2.LBIPW.pass8.abs}{11.6}
\setnum{ho3.cc.s2.LBIPW.pass8.dn}{2.0}
\setnum{ho3.cc.s2.LBIPW.pass8.hi}{\ensuremath{+}13.7}
\setnum{ho3.cc.s2.LBIPW.pass8.hi.abs}{13.7}
\setnum{ho3.cc.s2.LBIPW.pass8.lo}{\ensuremath{+}9.6}
\setnum{ho3.cc.s2.LBIPW.pass8.lo.abs}{9.6}
\setnum{ho3.cc.s2.LBIPW.pass8.up}{2.1}
\setnum{ho3.cc.s3.IPW.C}{44.8}
\setnum{ho3.cc.s3.IPW.H}{0.6}
\setnum{ho3.cc.s3.IPW.M}{5.3}
\setnum{ho3.cc.s3.IPW.pass8}{60.9}
\setnum{ho3.cc.s3.LB.C}{52.1}
\setnum{ho3.cc.s3.LB.H}{0.0}
\setnum{ho3.cc.s3.LB.M}{0.2}
\setnum{ho3.cc.s3.LB.bugM}{56.4}
\setnum{ho3.cc.s3.LB.eot.C}{51.8}
\setnum{ho3.cc.s3.LB.eot.M}{0.3}
\setnum{ho3.cc.s3.LB.pass8}{61.8}
\setnum{ho3.cc.s3.LB.trainM.late}{0.6}
\setnum{ho3.cc.s3.LB.trainM.max}{1.6}
\setnum{ho3.cc.s3.LBIPW.C}{\ensuremath{+}7.3}
\setnum{ho3.cc.s3.LBIPW.C.abs}{7.3}
\setnum{ho3.cc.s3.LBIPW.C.dn}{1.2}
\setnum{ho3.cc.s3.LBIPW.C.hi}{\ensuremath{+}8.7}
\setnum{ho3.cc.s3.LBIPW.C.hi.abs}{8.7}
\setnum{ho3.cc.s3.LBIPW.C.lo}{\ensuremath{+}6.1}
\setnum{ho3.cc.s3.LBIPW.C.lo.abs}{6.1}
\setnum{ho3.cc.s3.LBIPW.C.up}{1.4}
\setnum{ho3.cc.s3.LBIPW.H}{\ensuremath{-}0.6}
\setnum{ho3.cc.s3.LBIPW.H.abs}{0.6}
\setnum{ho3.cc.s3.LBIPW.H.dn}{0.2}
\setnum{ho3.cc.s3.LBIPW.H.hi}{\ensuremath{-}0.4}
\setnum{ho3.cc.s3.LBIPW.H.hi.abs}{0.4}
\setnum{ho3.cc.s3.LBIPW.H.lo}{\ensuremath{-}0.8}
\setnum{ho3.cc.s3.LBIPW.H.lo.abs}{0.8}
\setnum{ho3.cc.s3.LBIPW.H.up}{0.2}
\setnum{ho3.cc.s3.LBIPW.M}{\ensuremath{-}5.1}
\setnum{ho3.cc.s3.LBIPW.M.abs}{5.1}
\setnum{ho3.cc.s3.LBIPW.M.dn}{0.6}
\setnum{ho3.cc.s3.LBIPW.M.hi}{\ensuremath{-}4.5}
\setnum{ho3.cc.s3.LBIPW.M.hi.abs}{4.5}
\setnum{ho3.cc.s3.LBIPW.M.lo}{\ensuremath{-}5.7}
\setnum{ho3.cc.s3.LBIPW.M.lo.abs}{5.7}
\setnum{ho3.cc.s3.LBIPW.M.up}{0.6}
\setnum{ho3.cc.s3.LBIPW.pass8}{\ensuremath{+}0.9}
\setnum{ho3.cc.s3.LBIPW.pass8.abs}{0.9}
\setnum{ho3.cc.s3.LBIPW.pass8.dn}{1.7}
\setnum{ho3.cc.s3.LBIPW.pass8.hi}{\ensuremath{+}2.5}
\setnum{ho3.cc.s3.LBIPW.pass8.hi.abs}{2.5}
\setnum{ho3.cc.s3.LBIPW.pass8.lo}{\ensuremath{-}0.8}
\setnum{ho3.cc.s3.LBIPW.pass8.lo.abs}{0.8}
\setnum{ho3.cc.s3.LBIPW.pass8.up}{1.6}
\setnum{ho3.lc.LBRB.max.C}{\ensuremath{+}1.5}
\setnum{ho3.lc.LBRB.max.C.abs}{1.5}
\setnum{ho3.lc.LBRB.max.H}{\ensuremath{-}49.1}
\setnum{ho3.lc.LBRB.max.H.abs}{49.1}
\setnum{ho3.lc.LBRB.max.pass16}{\ensuremath{+}0.3}
\setnum{ho3.lc.LBRB.max.pass16.abs}{0.3}
\setnum{ho3.lc.LBRB.max.pass32}{\ensuremath{+}0.0}
\setnum{ho3.lc.LBRB.max.pass32.abs}{0.0}
\setnum{ho3.lc.LBRB.mean.C}{\ensuremath{-}0.2}
\setnum{ho3.lc.LBRB.mean.C.abs}{0.2}
\setnum{ho3.lc.LBRB.mean.H}{\ensuremath{-}53.9}
\setnum{ho3.lc.LBRB.mean.H.abs}{53.9}
\setnum{ho3.lc.LBRB.mean.pass16}{\ensuremath{-}1.4}
\setnum{ho3.lc.LBRB.mean.pass16.abs}{1.4}
\setnum{ho3.lc.LBRB.mean.pass32}{\ensuremath{-}0.8}
\setnum{ho3.lc.LBRB.mean.pass32.abs}{0.8}
\setnum{ho3.lc.LBRB.min.C}{\ensuremath{-}1.3}
\setnum{ho3.lc.LBRB.min.C.abs}{1.3}
\setnum{ho3.lc.LBRB.min.H}{\ensuremath{-}62.2}
\setnum{ho3.lc.LBRB.min.H.abs}{62.2}
\setnum{ho3.lc.LBRB.min.pass16}{\ensuremath{-}2.7}
\setnum{ho3.lc.LBRB.min.pass16.abs}{2.7}
\setnum{ho3.lc.LBRB.min.pass32}{\ensuremath{-}2.5}
\setnum{ho3.lc.LBRB.min.pass32.abs}{2.5}
\setnum{ho3.lc.LBRB.p.C}{0.871}
\setnum{ho3.lc.LBRB.p.H}{0.006}
\setnum{ho3.lc.LBRB.p.pass16}{0.252}
\setnum{ho3.lc.LBRB.p.pass32}{0.423}
\setnum{ho3.lc.LBRB.t.C}{\ensuremath{-}0.18}
\setnum{ho3.lc.LBRB.t.H}{\ensuremath{-}12.99}
\setnum{ho3.lc.LBRB.t.pass16}{\ensuremath{-}1.59}
\setnum{ho3.lc.LBRB.t.pass32}{\ensuremath{-}1.00}
\setnum{ho3.lc.s1.LB.C}{19.2}
\setnum{ho3.lc.s1.LB.H}{0.3}
\setnum{ho3.lc.s1.LB.pass16}{42.1}
\setnum{ho3.lc.s1.LB.pass32}{47.1}
\setnum{ho3.lc.s1.LBRB.C}{\ensuremath{-}1.3}
\setnum{ho3.lc.s1.LBRB.C.abs}{1.3}
\setnum{ho3.lc.s1.LBRB.C.dn}{1.6}
\setnum{ho3.lc.s1.LBRB.C.hi}{\ensuremath{+}0.2}
\setnum{ho3.lc.s1.LBRB.C.hi.abs}{0.2}
\setnum{ho3.lc.s1.LBRB.C.lo}{\ensuremath{-}2.9}
\setnum{ho3.lc.s1.LBRB.C.lo.abs}{2.9}
\setnum{ho3.lc.s1.LBRB.C.up}{1.5}
\setnum{ho3.lc.s1.LBRB.H}{\ensuremath{-}50.4}
\setnum{ho3.lc.s1.LBRB.H.abs}{50.4}
\setnum{ho3.lc.s1.LBRB.H.dn}{5.6}
\setnum{ho3.lc.s1.LBRB.H.hi}{\ensuremath{-}44.9}
\setnum{ho3.lc.s1.LBRB.H.hi.abs}{44.9}
\setnum{ho3.lc.s1.LBRB.H.lo}{\ensuremath{-}56.0}
\setnum{ho3.lc.s1.LBRB.H.lo.abs}{56.0}
\setnum{ho3.lc.s1.LBRB.H.up}{5.5}
\setnum{ho3.lc.s1.LBRB.pass16}{\ensuremath{-}1.9}
\setnum{ho3.lc.s1.LBRB.pass16.abs}{1.9}
\setnum{ho3.lc.s1.LBRB.pass16.dn}{3.3}
\setnum{ho3.lc.s1.LBRB.pass16.hi}{\ensuremath{+}1.3}
\setnum{ho3.lc.s1.LBRB.pass16.hi.abs}{1.3}
\setnum{ho3.lc.s1.LBRB.pass16.lo}{\ensuremath{-}5.2}
\setnum{ho3.lc.s1.LBRB.pass16.lo.abs}{5.2}
\setnum{ho3.lc.s1.LBRB.pass16.up}{3.2}
\setnum{ho3.lc.s1.LBRB.pass32}{\ensuremath{+}0.0}
\setnum{ho3.lc.s1.LBRB.pass32.abs}{0.0}
\setnum{ho3.lc.s1.LBRB.pass32.dn}{4.2}
\setnum{ho3.lc.s1.LBRB.pass32.hi}{\ensuremath{+}5.0}
\setnum{ho3.lc.s1.LBRB.pass32.hi.abs}{5.0}
\setnum{ho3.lc.s1.LBRB.pass32.lo}{\ensuremath{-}4.2}
\setnum{ho3.lc.s1.LBRB.pass32.lo.abs}{4.2}
\setnum{ho3.lc.s1.LBRB.pass32.up}{5.0}
\setnum{ho3.lc.s1.RB.C}{20.6}
\setnum{ho3.lc.s1.RB.H}{50.8}
\setnum{ho3.lc.s1.RB.pass16}{43.9}
\setnum{ho3.lc.s1.RB.pass32}{47.1}
\setnum{ho3.lc.s2.LB.C}{20.2}
\setnum{ho3.lc.s2.LB.H}{0.1}
\setnum{ho3.lc.s2.LB.pass16}{43.2}
\setnum{ho3.lc.s2.LB.pass32}{47.1}
\setnum{ho3.lc.s2.LBRB.C}{\ensuremath{+}1.5}
\setnum{ho3.lc.s2.LBRB.C.abs}{1.5}
\setnum{ho3.lc.s2.LBRB.C.dn}{1.9}
\setnum{ho3.lc.s2.LBRB.C.hi}{\ensuremath{+}3.5}
\setnum{ho3.lc.s2.LBRB.C.hi.abs}{3.5}
\setnum{ho3.lc.s2.LBRB.C.lo}{\ensuremath{-}0.4}
\setnum{ho3.lc.s2.LBRB.C.lo.abs}{0.4}
\setnum{ho3.lc.s2.LBRB.C.up}{2.0}
\setnum{ho3.lc.s2.LBRB.H}{\ensuremath{-}62.2}
\setnum{ho3.lc.s2.LBRB.H.abs}{62.2}
\setnum{ho3.lc.s2.LBRB.H.dn}{5.1}
\setnum{ho3.lc.s2.LBRB.H.hi}{\ensuremath{-}56.8}
\setnum{ho3.lc.s2.LBRB.H.hi.abs}{56.8}
\setnum{ho3.lc.s2.LBRB.H.lo}{\ensuremath{-}67.3}
\setnum{ho3.lc.s2.LBRB.H.lo.abs}{67.3}
\setnum{ho3.lc.s2.LBRB.H.up}{5.4}
\setnum{ho3.lc.s2.LBRB.pass16}{\ensuremath{+}0.3}
\setnum{ho3.lc.s2.LBRB.pass16.abs}{0.3}
\setnum{ho3.lc.s2.LBRB.pass16.dn}{4.4}
\setnum{ho3.lc.s2.LBRB.pass16.hi}{\ensuremath{+}4.7}
\setnum{ho3.lc.s2.LBRB.pass16.hi.abs}{4.7}
\setnum{ho3.lc.s2.LBRB.pass16.lo}{\ensuremath{-}4.1}
\setnum{ho3.lc.s2.LBRB.pass16.lo.abs}{4.1}
\setnum{ho3.lc.s2.LBRB.pass16.up}{4.4}
\setnum{ho3.lc.s2.LBRB.pass32}{\ensuremath{+}0.0}
\setnum{ho3.lc.s2.LBRB.pass32.abs}{0.0}
\setnum{ho3.lc.s2.LBRB.pass32.dn}{5.0}
\setnum{ho3.lc.s2.LBRB.pass32.hi}{\ensuremath{+}5.0}
\setnum{ho3.lc.s2.LBRB.pass32.hi.abs}{5.0}
\setnum{ho3.lc.s2.LBRB.pass32.lo}{\ensuremath{-}5.0}
\setnum{ho3.lc.s2.LBRB.pass32.lo.abs}{5.0}
\setnum{ho3.lc.s2.LBRB.pass32.up}{5.0}
\setnum{ho3.lc.s2.RB.C}{18.7}
\setnum{ho3.lc.s2.RB.H}{62.2}
\setnum{ho3.lc.s2.RB.pass16}{42.9}
\setnum{ho3.lc.s2.RB.pass32}{47.1}
\setnum{ho3.lc.s3.LB.C}{19.2}
\setnum{ho3.lc.s3.LB.H}{0.1}
\setnum{ho3.lc.s3.LB.pass16}{40.7}
\setnum{ho3.lc.s3.LB.pass32}{44.5}
\setnum{ho3.lc.s3.LBRB.C}{\ensuremath{-}0.7}
\setnum{ho3.lc.s3.LBRB.C.abs}{0.7}
\setnum{ho3.lc.s3.LBRB.C.dn}{1.6}
\setnum{ho3.lc.s3.LBRB.C.hi}{\ensuremath{+}1.1}
\setnum{ho3.lc.s3.LBRB.C.hi.abs}{1.1}
\setnum{ho3.lc.s3.LBRB.C.lo}{\ensuremath{-}2.3}
\setnum{ho3.lc.s3.LBRB.C.lo.abs}{2.3}
\setnum{ho3.lc.s3.LBRB.C.up}{1.8}
\setnum{ho3.lc.s3.LBRB.H}{\ensuremath{-}49.1}
\setnum{ho3.lc.s3.LBRB.H.abs}{49.1}
\setnum{ho3.lc.s3.LBRB.H.dn}{5.4}
\setnum{ho3.lc.s3.LBRB.H.hi}{\ensuremath{-}43.8}
\setnum{ho3.lc.s3.LBRB.H.hi.abs}{43.8}
\setnum{ho3.lc.s3.LBRB.H.lo}{\ensuremath{-}54.5}
\setnum{ho3.lc.s3.LBRB.H.lo.abs}{54.5}
\setnum{ho3.lc.s3.LBRB.H.up}{5.3}
\setnum{ho3.lc.s3.LBRB.pass16}{\ensuremath{-}2.7}
\setnum{ho3.lc.s3.LBRB.pass16.abs}{2.7}
\setnum{ho3.lc.s3.LBRB.pass16.dn}{3.4}
\setnum{ho3.lc.s3.LBRB.pass16.hi}{\ensuremath{+}0.4}
\setnum{ho3.lc.s3.LBRB.pass16.hi.abs}{0.4}
\setnum{ho3.lc.s3.LBRB.pass16.lo}{\ensuremath{-}6.1}
\setnum{ho3.lc.s3.LBRB.pass16.lo.abs}{6.1}
\setnum{ho3.lc.s3.LBRB.pass16.up}{3.1}
\setnum{ho3.lc.s3.LBRB.pass32}{\ensuremath{-}2.5}
\setnum{ho3.lc.s3.LBRB.pass32.abs}{2.5}
\setnum{ho3.lc.s3.LBRB.pass32.dn}{4.2}
\setnum{ho3.lc.s3.LBRB.pass32.hi}{\ensuremath{+}1.7}
\setnum{ho3.lc.s3.LBRB.pass32.hi.abs}{1.7}
\setnum{ho3.lc.s3.LBRB.pass32.lo}{\ensuremath{-}6.7}
\setnum{ho3.lc.s3.LBRB.pass32.lo.abs}{6.7}
\setnum{ho3.lc.s3.LBRB.pass32.up}{4.2}
\setnum{ho3.lc.s3.RB.C}{19.9}
\setnum{ho3.lc.s3.RB.H}{49.1}
\setnum{ho3.lc.s3.RB.pass16}{43.4}
\setnum{ho3.lc.s3.RB.pass32}{47.1}
\setnum{ho3.nostd.s2.IPW.C}{40.6}
\setnum{ho3.nostd.s2.IPW.H}{0.1}
\setnum{ho3.nostd.s2.IPW.M}{14.6}
\setnum{ho3.nostd.s2.IPW.pass8}{53.2}
\setnum{ho3.nostd.s2.LB.C}{52.4}
\setnum{ho3.nostd.s2.LB.H}{0.0}
\setnum{ho3.nostd.s2.LB.M}{2.0}
\setnum{ho3.nostd.s2.LB.pass8}{64.8}
\setnum{ho3.nostd.s2.LBnostd.C}{54.6}
\setnum{ho3.nostd.s2.LBnostd.H}{0.0}
\setnum{ho3.nostd.s2.LBnostd.M}{0.3}
\setnum{ho3.nostd.s2.LBnostd.pass8}{66.1}
\setnum{ho3.nostd.s2.LBnostdIPW.C}{\ensuremath{+}14.0}
\setnum{ho3.nostd.s2.LBnostdIPW.C.abs}{14.0}
\setnum{ho3.nostd.s2.LBnostdIPW.C.dn}{1.5}
\setnum{ho3.nostd.s2.LBnostdIPW.C.hi}{\ensuremath{+}15.6}
\setnum{ho3.nostd.s2.LBnostdIPW.C.hi.abs}{15.6}
\setnum{ho3.nostd.s2.LBnostdIPW.C.lo}{\ensuremath{+}12.5}
\setnum{ho3.nostd.s2.LBnostdIPW.C.lo.abs}{12.5}
\setnum{ho3.nostd.s2.LBnostdIPW.C.up}{1.6}
\setnum{ho3.nostd.s2.LBnostdIPW.H}{\ensuremath{-}0.1}
\setnum{ho3.nostd.s2.LBnostdIPW.H.abs}{0.1}
\setnum{ho3.nostd.s2.LBnostdIPW.H.dn}{0.1}
\setnum{ho3.nostd.s2.LBnostdIPW.H.hi}{\ensuremath{-}0.0}
\setnum{ho3.nostd.s2.LBnostdIPW.H.hi.abs}{0.0}
\setnum{ho3.nostd.s2.LBnostdIPW.H.lo}{\ensuremath{-}0.2}
\setnum{ho3.nostd.s2.LBnostdIPW.H.lo.abs}{0.2}
\setnum{ho3.nostd.s2.LBnostdIPW.H.up}{0.1}
\setnum{ho3.nostd.s2.LBnostdIPW.M}{\ensuremath{-}14.3}
\setnum{ho3.nostd.s2.LBnostdIPW.M.abs}{14.3}
\setnum{ho3.nostd.s2.LBnostdIPW.M.dn}{1.1}
\setnum{ho3.nostd.s2.LBnostdIPW.M.hi}{\ensuremath{-}13.3}
\setnum{ho3.nostd.s2.LBnostdIPW.M.hi.abs}{13.3}
\setnum{ho3.nostd.s2.LBnostdIPW.M.lo}{\ensuremath{-}15.4}
\setnum{ho3.nostd.s2.LBnostdIPW.M.lo.abs}{15.4}
\setnum{ho3.nostd.s2.LBnostdIPW.M.up}{1.0}
\setnum{ho3.nostd.s2.LBnostdIPW.pass8}{\ensuremath{+}12.9}
\setnum{ho3.nostd.s2.LBnostdIPW.pass8.abs}{12.9}
\setnum{ho3.nostd.s2.LBnostdIPW.pass8.dn}{2.1}
\setnum{ho3.nostd.s2.LBnostdIPW.pass8.hi}{\ensuremath{+}15.1}
\setnum{ho3.nostd.s2.LBnostdIPW.pass8.hi.abs}{15.1}
\setnum{ho3.nostd.s2.LBnostdIPW.pass8.lo}{\ensuremath{+}10.8}
\setnum{ho3.nostd.s2.LBnostdIPW.pass8.lo.abs}{10.8}
\setnum{ho3.nostd.s2.LBnostdIPW.pass8.up}{2.2}
\setnum{ho3.nostd.s2.LBnostdLB.C}{\ensuremath{+}2.2}
\setnum{ho3.nostd.s2.LBnostdLB.C.abs}{2.2}
\setnum{ho3.nostd.s2.LBnostdLB.C.dn}{1.1}
\setnum{ho3.nostd.s2.LBnostdLB.C.hi}{\ensuremath{+}3.3}
\setnum{ho3.nostd.s2.LBnostdLB.C.hi.abs}{3.3}
\setnum{ho3.nostd.s2.LBnostdLB.C.lo}{\ensuremath{+}1.1}
\setnum{ho3.nostd.s2.LBnostdLB.C.lo.abs}{1.1}
\setnum{ho3.nostd.s2.LBnostdLB.C.up}{1.1}
\setnum{ho3.nostd.s2.LBnostdLB.H}{\ensuremath{-}0.0}
\setnum{ho3.nostd.s2.LBnostdLB.H.abs}{0.0}
\setnum{ho3.nostd.s2.LBnostdLB.H.dn}{0.1}
\setnum{ho3.nostd.s2.LBnostdLB.H.hi}{\ensuremath{+}0.0}
\setnum{ho3.nostd.s2.LBnostdLB.H.hi.abs}{0.0}
\setnum{ho3.nostd.s2.LBnostdLB.H.lo}{\ensuremath{-}0.1}
\setnum{ho3.nostd.s2.LBnostdLB.H.lo.abs}{0.1}
\setnum{ho3.nostd.s2.LBnostdLB.H.up}{0.0}
\setnum{ho3.nostd.s2.LBnostdLB.M}{\ensuremath{-}1.7}
\setnum{ho3.nostd.s2.LBnostdLB.M.abs}{1.7}
\setnum{ho3.nostd.s2.LBnostdLB.M.dn}{0.3}
\setnum{ho3.nostd.s2.LBnostdLB.M.hi}{\ensuremath{-}1.4}
\setnum{ho3.nostd.s2.LBnostdLB.M.hi.abs}{1.4}
\setnum{ho3.nostd.s2.LBnostdLB.M.lo}{\ensuremath{-}2.0}
\setnum{ho3.nostd.s2.LBnostdLB.M.lo.abs}{2.0}
\setnum{ho3.nostd.s2.LBnostdLB.M.up}{0.3}
\setnum{ho3.nostd.s2.LBnostdLB.pass8}{\ensuremath{+}1.3}
\setnum{ho3.nostd.s2.LBnostdLB.pass8.abs}{1.3}
\setnum{ho3.nostd.s2.LBnostdLB.pass8.dn}{1.7}
\setnum{ho3.nostd.s2.LBnostdLB.pass8.hi}{\ensuremath{+}2.9}
\setnum{ho3.nostd.s2.LBnostdLB.pass8.hi.abs}{2.9}
\setnum{ho3.nostd.s2.LBnostdLB.pass8.lo}{\ensuremath{-}0.4}
\setnum{ho3.nostd.s2.LBnostdLB.pass8.lo.abs}{0.4}
\setnum{ho3.nostd.s2.LBnostdLB.pass8.up}{1.6}
\setnum{ho3.nostd.s3.IPW.C}{44.8}
\setnum{ho3.nostd.s3.IPW.H}{0.6}
\setnum{ho3.nostd.s3.IPW.M}{5.3}
\setnum{ho3.nostd.s3.IPW.pass8}{60.9}
\setnum{ho3.nostd.s3.LB.C}{52.1}
\setnum{ho3.nostd.s3.LB.H}{0.0}
\setnum{ho3.nostd.s3.LB.M}{0.2}
\setnum{ho3.nostd.s3.LB.pass8}{61.8}
\setnum{ho3.nostd.s3.LBnostd.C}{54.2}
\setnum{ho3.nostd.s3.LBnostd.H}{0.4}
\setnum{ho3.nostd.s3.LBnostd.M}{18.7}
\setnum{ho3.nostd.s3.LBnostd.pass8}{66.3}
\setnum{ho3.nostd.s3.LBnostdIPW.C}{\ensuremath{+}9.4}
\setnum{ho3.nostd.s3.LBnostdIPW.C.abs}{9.4}
\setnum{ho3.nostd.s3.LBnostdIPW.C.dn}{1.3}
\setnum{ho3.nostd.s3.LBnostdIPW.C.hi}{\ensuremath{+}10.7}
\setnum{ho3.nostd.s3.LBnostdIPW.C.hi.abs}{10.7}
\setnum{ho3.nostd.s3.LBnostdIPW.C.lo}{\ensuremath{+}8.1}
\setnum{ho3.nostd.s3.LBnostdIPW.C.lo.abs}{8.1}
\setnum{ho3.nostd.s3.LBnostdIPW.C.up}{1.3}
\setnum{ho3.nostd.s3.LBnostdIPW.H}{\ensuremath{-}0.2}
\setnum{ho3.nostd.s3.LBnostdIPW.H.abs}{0.2}
\setnum{ho3.nostd.s3.LBnostdIPW.H.dn}{0.2}
\setnum{ho3.nostd.s3.LBnostdIPW.H.hi}{\ensuremath{+}0.1}
\setnum{ho3.nostd.s3.LBnostdIPW.H.hi.abs}{0.1}
\setnum{ho3.nostd.s3.LBnostdIPW.H.lo}{\ensuremath{-}0.4}
\setnum{ho3.nostd.s3.LBnostdIPW.H.lo.abs}{0.4}
\setnum{ho3.nostd.s3.LBnostdIPW.H.up}{0.3}
\setnum{ho3.nostd.s3.LBnostdIPW.M}{\ensuremath{+}13.4}
\setnum{ho3.nostd.s3.LBnostdIPW.M.abs}{13.4}
\setnum{ho3.nostd.s3.LBnostdIPW.M.dn}{1.4}
\setnum{ho3.nostd.s3.LBnostdIPW.M.hi}{\ensuremath{+}14.8}
\setnum{ho3.nostd.s3.LBnostdIPW.M.hi.abs}{14.8}
\setnum{ho3.nostd.s3.LBnostdIPW.M.lo}{\ensuremath{+}12.0}
\setnum{ho3.nostd.s3.LBnostdIPW.M.lo.abs}{12.0}
\setnum{ho3.nostd.s3.LBnostdIPW.M.up}{1.4}
\setnum{ho3.nostd.s3.LBnostdIPW.pass8}{\ensuremath{+}5.4}
\setnum{ho3.nostd.s3.LBnostdIPW.pass8.abs}{5.4}
\setnum{ho3.nostd.s3.LBnostdIPW.pass8.dn}{1.8}
\setnum{ho3.nostd.s3.LBnostdIPW.pass8.hi}{\ensuremath{+}7.2}
\setnum{ho3.nostd.s3.LBnostdIPW.pass8.hi.abs}{7.2}
\setnum{ho3.nostd.s3.LBnostdIPW.pass8.lo}{\ensuremath{+}3.6}
\setnum{ho3.nostd.s3.LBnostdIPW.pass8.lo.abs}{3.6}
\setnum{ho3.nostd.s3.LBnostdIPW.pass8.up}{1.8}
\setnum{ho3.nostd.s3.LBnostdLB.C}{\ensuremath{+}2.0}
\setnum{ho3.nostd.s3.LBnostdLB.C.abs}{2.0}
\setnum{ho3.nostd.s3.LBnostdLB.C.dn}{1.0}
\setnum{ho3.nostd.s3.LBnostdLB.C.hi}{\ensuremath{+}3.1}
\setnum{ho3.nostd.s3.LBnostdLB.C.hi.abs}{3.1}
\setnum{ho3.nostd.s3.LBnostdLB.C.lo}{\ensuremath{+}1.0}
\setnum{ho3.nostd.s3.LBnostdLB.C.lo.abs}{1.0}
\setnum{ho3.nostd.s3.LBnostdLB.C.up}{1.1}
\setnum{ho3.nostd.s3.LBnostdLB.H}{\ensuremath{+}0.4}
\setnum{ho3.nostd.s3.LBnostdLB.H.abs}{0.4}
\setnum{ho3.nostd.s3.LBnostdLB.H.dn}{0.1}
\setnum{ho3.nostd.s3.LBnostdLB.H.hi}{\ensuremath{+}0.5}
\setnum{ho3.nostd.s3.LBnostdLB.H.hi.abs}{0.5}
\setnum{ho3.nostd.s3.LBnostdLB.H.lo}{\ensuremath{+}0.3}
\setnum{ho3.nostd.s3.LBnostdLB.H.lo.abs}{0.3}
\setnum{ho3.nostd.s3.LBnostdLB.H.up}{0.1}
\setnum{ho3.nostd.s3.LBnostdLB.M}{\ensuremath{+}18.5}
\setnum{ho3.nostd.s3.LBnostdLB.M.abs}{18.5}
\setnum{ho3.nostd.s3.LBnostdLB.M.dn}{1.5}
\setnum{ho3.nostd.s3.LBnostdLB.M.hi}{\ensuremath{+}20.0}
\setnum{ho3.nostd.s3.LBnostdLB.M.hi.abs}{20.0}
\setnum{ho3.nostd.s3.LBnostdLB.M.lo}{\ensuremath{+}17.0}
\setnum{ho3.nostd.s3.LBnostdLB.M.lo.abs}{17.0}
\setnum{ho3.nostd.s3.LBnostdLB.M.up}{1.5}
\setnum{ho3.nostd.s3.LBnostdLB.pass8}{\ensuremath{+}4.5}
\setnum{ho3.nostd.s3.LBnostdLB.pass8.abs}{4.5}
\setnum{ho3.nostd.s3.LBnostdLB.pass8.dn}{1.6}
\setnum{ho3.nostd.s3.LBnostdLB.pass8.hi}{\ensuremath{+}6.2}
\setnum{ho3.nostd.s3.LBnostdLB.pass8.hi.abs}{6.2}
\setnum{ho3.nostd.s3.LBnostdLB.pass8.lo}{\ensuremath{+}2.9}
\setnum{ho3.nostd.s3.LBnostdLB.pass8.lo.abs}{2.9}
\setnum{ho3.nostd.s3.LBnostdLB.pass8.up}{1.7}
\setnum{ho3.nostd.s3.trainM.late}{26.4}
\setnum{ho4k.cc.IPW.continued}{1779}
\setnum{ho4k.cc.IPW.rescued}{9}
\setnum{ho4k.cc.LBIPW.max.dC}{11.7}
\setnum{ho4k.cc.LBIPW.min.dC}{6.1}
\setnum{ho4k.cc.LBIPW.s1.dC}{\ensuremath{+}6.1}
\setnum{ho4k.cc.LBIPW.s1.dC.abs}{6.1}
\setnum{ho4k.cc.LBIPW.s1.dC.dn}{1.4}
\setnum{ho4k.cc.LBIPW.s1.dC.hi}{\ensuremath{+}7.4}
\setnum{ho4k.cc.LBIPW.s1.dC.hi.abs}{7.4}
\setnum{ho4k.cc.LBIPW.s1.dC.lo}{\ensuremath{+}4.7}
\setnum{ho4k.cc.LBIPW.s1.dC.lo.abs}{4.7}
\setnum{ho4k.cc.LBIPW.s1.dC.up}{1.3}
\setnum{ho4k.cc.LBIPW.s2.dC}{\ensuremath{+}11.7}
\setnum{ho4k.cc.LBIPW.s2.dC.abs}{11.7}
\setnum{ho4k.cc.LBIPW.s2.dC.dn}{1.3}
\setnum{ho4k.cc.LBIPW.s2.dC.hi}{\ensuremath{+}13.1}
\setnum{ho4k.cc.LBIPW.s2.dC.hi.abs}{13.1}
\setnum{ho4k.cc.LBIPW.s2.dC.lo}{\ensuremath{+}10.4}
\setnum{ho4k.cc.LBIPW.s2.dC.lo.abs}{10.4}
\setnum{ho4k.cc.LBIPW.s2.dC.up}{1.4}
\setnum{ho4k.cc.LBIPW.s3.dC}{\ensuremath{+}7.3}
\setnum{ho4k.cc.LBIPW.s3.dC.abs}{7.3}
\setnum{ho4k.cc.LBIPW.s3.dC.dn}{1.3}
\setnum{ho4k.cc.LBIPW.s3.dC.hi}{\ensuremath{+}8.7}
\setnum{ho4k.cc.LBIPW.s3.dC.hi.abs}{8.7}
\setnum{ho4k.cc.LBIPW.s3.dC.lo}{\ensuremath{+}6.0}
\setnum{ho4k.cc.LBIPW.s3.dC.lo.abs}{6.0}
\setnum{ho4k.cc.LBIPW.s3.dC.up}{1.4}
\setnum{ho4k.cc.LBadvIPW.s1.dC}{\ensuremath{+}7.9}
\setnum{ho4k.cc.LBadvIPW.s1.dC.abs}{7.9}
\setnum{ho4k.cc.LBadvIPW.s1.dC.dn}{1.3}
\setnum{ho4k.cc.LBadvIPW.s1.dC.hi}{\ensuremath{+}9.2}
\setnum{ho4k.cc.LBadvIPW.s1.dC.hi.abs}{9.2}
\setnum{ho4k.cc.LBadvIPW.s1.dC.lo}{\ensuremath{+}6.6}
\setnum{ho4k.cc.LBadvIPW.s1.dC.lo.abs}{6.6}
\setnum{ho4k.cc.LBadvIPW.s1.dC.up}{1.3}
\setnum{ho4k.cc.LBadvIPW.s2.dC}{\ensuremath{+}11.5}
\setnum{ho4k.cc.LBadvIPW.s2.dC.abs}{11.5}
\setnum{ho4k.cc.LBadvIPW.s2.dC.dn}{1.5}
\setnum{ho4k.cc.LBadvIPW.s2.dC.hi}{\ensuremath{+}13.1}
\setnum{ho4k.cc.LBadvIPW.s2.dC.hi.abs}{13.1}
\setnum{ho4k.cc.LBadvIPW.s2.dC.lo}{\ensuremath{+}10.0}
\setnum{ho4k.cc.LBadvIPW.s2.dC.lo.abs}{10.0}
\setnum{ho4k.cc.LBadvIPW.s2.dC.up}{1.6}
\setnum{ho4k.cc.LBadvIPW.s3.dC}{\ensuremath{+}10.0}
\setnum{ho4k.cc.LBadvIPW.s3.dC.abs}{10.0}
\setnum{ho4k.cc.LBadvIPW.s3.dC.dn}{1.4}
\setnum{ho4k.cc.LBadvIPW.s3.dC.hi}{\ensuremath{+}11.4}
\setnum{ho4k.cc.LBadvIPW.s3.dC.hi.abs}{11.4}
\setnum{ho4k.cc.LBadvIPW.s3.dC.lo}{\ensuremath{+}8.6}
\setnum{ho4k.cc.LBadvIPW.s3.dC.lo.abs}{8.6}
\setnum{ho4k.cc.LBadvIPW.s3.dC.up}{1.4}
\setnum{ho4k.cc.advIPW.s1.continued}{1863}
\setnum{ho4k.cc.advIPW.s1.rescued}{7}
\setnum{lc.D0.mean.dauc}{\ensuremath{+}1.21}
\setnum{lc.D0.mean.dauc.abs}{1.21}
\setnum{lc.D0.p.dauc}{0.206}
\setnum{lc.D0.s1.dClate}{\ensuremath{-}1.58}
\setnum{lc.D0.s1.dClate.abs}{1.58}
\setnum{lc.D0.s1.dClate.dn}{1.51}
\setnum{lc.D0.s1.dClate.hi}{\ensuremath{-}0.12}
\setnum{lc.D0.s1.dClate.hi.abs}{0.12}
\setnum{lc.D0.s1.dClate.lo}{\ensuremath{-}3.09}
\setnum{lc.D0.s1.dClate.lo.abs}{3.09}
\setnum{lc.D0.s1.dClate.up}{1.46}
\setnum{lc.D0.s1.dauc}{\ensuremath{+}1.09}
\setnum{lc.D0.s1.dauc.abs}{1.09}
\setnum{lc.D0.s1.dauc.dn}{0.86}
\setnum{lc.D0.s1.dauc.hi}{\ensuremath{+}1.88}
\setnum{lc.D0.s1.dauc.hi.abs}{1.88}
\setnum{lc.D0.s1.dauc.lo}{\ensuremath{+}0.23}
\setnum{lc.D0.s1.dauc.lo.abs}{0.23}
\setnum{lc.D0.s1.dauc.up}{0.79}
\setnum{lc.D0.s1.dhlate}{\ensuremath{+}1.99}
\setnum{lc.D0.s1.dhlate.abs}{1.99}
\setnum{lc.D0.s1.dhlate.dn}{1.79}
\setnum{lc.D0.s1.dhlate.hi}{\ensuremath{+}3.73}
\setnum{lc.D0.s1.dhlate.hi.abs}{3.73}
\setnum{lc.D0.s1.dhlate.lo}{\ensuremath{+}0.20}
\setnum{lc.D0.s1.dhlate.lo.abs}{0.20}
\setnum{lc.D0.s1.dhlate.up}{1.74}
\setnum{lc.D0.s2.dClate}{\ensuremath{-}1.27}
\setnum{lc.D0.s2.dClate.abs}{1.27}
\setnum{lc.D0.s2.dClate.dn}{1.41}
\setnum{lc.D0.s2.dClate.hi}{\ensuremath{+}0.16}
\setnum{lc.D0.s2.dClate.hi.abs}{0.16}
\setnum{lc.D0.s2.dClate.lo}{\ensuremath{-}2.68}
\setnum{lc.D0.s2.dClate.lo.abs}{2.68}
\setnum{lc.D0.s2.dClate.up}{1.43}
\setnum{lc.D0.s2.dauc}{\ensuremath{+}0.14}
\setnum{lc.D0.s2.dauc.abs}{0.14}
\setnum{lc.D0.s2.dauc.dn}{0.94}
\setnum{lc.D0.s2.dauc.hi}{\ensuremath{+}1.08}
\setnum{lc.D0.s2.dauc.hi.abs}{1.08}
\setnum{lc.D0.s2.dauc.lo}{\ensuremath{-}0.80}
\setnum{lc.D0.s2.dauc.lo.abs}{0.80}
\setnum{lc.D0.s2.dauc.up}{0.94}
\setnum{lc.D0.s2.dhlate}{\ensuremath{-}3.16}
\setnum{lc.D0.s2.dhlate.abs}{3.16}
\setnum{lc.D0.s2.dhlate.dn}{2.11}
\setnum{lc.D0.s2.dhlate.hi}{\ensuremath{-}1.13}
\setnum{lc.D0.s2.dhlate.hi.abs}{1.13}
\setnum{lc.D0.s2.dhlate.lo}{\ensuremath{-}5.27}
\setnum{lc.D0.s2.dhlate.lo.abs}{5.27}
\setnum{lc.D0.s2.dhlate.up}{2.03}
\setnum{lc.D0.s3.dClate}{\ensuremath{-}1.60}
\setnum{lc.D0.s3.dClate.abs}{1.60}
\setnum{lc.D0.s3.dClate.dn}{1.51}
\setnum{lc.D0.s3.dClate.hi}{\ensuremath{-}0.18}
\setnum{lc.D0.s3.dClate.hi.abs}{0.18}
\setnum{lc.D0.s3.dClate.lo}{\ensuremath{-}3.11}
\setnum{lc.D0.s3.dClate.lo.abs}{3.11}
\setnum{lc.D0.s3.dClate.up}{1.42}
\setnum{lc.D0.s3.dauc}{\ensuremath{+}2.40}
\setnum{lc.D0.s3.dauc.abs}{2.40}
\setnum{lc.D0.s3.dauc.dn}{0.89}
\setnum{lc.D0.s3.dauc.hi}{\ensuremath{+}3.22}
\setnum{lc.D0.s3.dauc.hi.abs}{3.22}
\setnum{lc.D0.s3.dauc.lo}{\ensuremath{+}1.51}
\setnum{lc.D0.s3.dauc.lo.abs}{1.51}
\setnum{lc.D0.s3.dauc.up}{0.82}
\setnum{lc.D0.s3.dhlate}{\ensuremath{+}3.30}
\setnum{lc.D0.s3.dhlate.abs}{3.30}
\setnum{lc.D0.s3.dhlate.dn}{1.84}
\setnum{lc.D0.s3.dhlate.hi}{\ensuremath{+}5.23}
\setnum{lc.D0.s3.dhlate.hi.abs}{5.23}
\setnum{lc.D0.s3.dhlate.lo}{\ensuremath{+}1.46}
\setnum{lc.D0.s3.dhlate.lo.abs}{1.46}
\setnum{lc.D0.s3.dhlate.up}{1.93}
\setnum{lc.D4.s1.dClate}{\ensuremath{+}2.07}
\setnum{lc.D4.s1.dClate.abs}{2.07}
\setnum{lc.D4.s1.dClate.dn}{2.38}
\setnum{lc.D4.s1.dClate.hi}{\ensuremath{+}4.41}
\setnum{lc.D4.s1.dClate.hi.abs}{4.41}
\setnum{lc.D4.s1.dClate.lo}{\ensuremath{-}0.31}
\setnum{lc.D4.s1.dClate.lo.abs}{0.31}
\setnum{lc.D4.s1.dClate.up}{2.34}
\setnum{lc.D4.s1.dauc}{\ensuremath{-}28.90}
\setnum{lc.D4.s1.dauc.abs}{28.90}
\setnum{lc.D4.s1.dauc.dn}{2.23}
\setnum{lc.D4.s1.dauc.hi}{\ensuremath{-}26.77}
\setnum{lc.D4.s1.dauc.hi.abs}{26.77}
\setnum{lc.D4.s1.dauc.lo}{\ensuremath{-}31.13}
\setnum{lc.D4.s1.dauc.lo.abs}{31.13}
\setnum{lc.D4.s1.dauc.up}{2.13}
\setnum{lc.D4.s1.dhlate}{\ensuremath{-}39.26}
\setnum{lc.D4.s1.dhlate.abs}{39.26}
\setnum{lc.D4.s1.dhlate.dn}{4.45}
\setnum{lc.D4.s1.dhlate.hi}{\ensuremath{-}34.88}
\setnum{lc.D4.s1.dhlate.hi.abs}{34.88}
\setnum{lc.D4.s1.dhlate.lo}{\ensuremath{-}43.71}
\setnum{lc.D4.s1.dhlate.lo.abs}{43.71}
\setnum{lc.D4.s1.dhlate.up}{4.38}
\setnum{lc.D4.s2.dClate}{\ensuremath{+}1.21}
\setnum{lc.D4.s2.dClate.abs}{1.21}
\setnum{lc.D4.s2.dClate.dn}{1.91}
\setnum{lc.D4.s2.dClate.hi}{\ensuremath{+}3.16}
\setnum{lc.D4.s2.dClate.hi.abs}{3.16}
\setnum{lc.D4.s2.dClate.lo}{\ensuremath{-}0.70}
\setnum{lc.D4.s2.dClate.lo.abs}{0.70}
\setnum{lc.D4.s2.dClate.up}{1.95}
\setnum{lc.D4.s2.dauc}{\ensuremath{-}29.08}
\setnum{lc.D4.s2.dauc.abs}{29.08}
\setnum{lc.D4.s2.dauc.dn}{2.07}
\setnum{lc.D4.s2.dauc.hi}{\ensuremath{-}27.06}
\setnum{lc.D4.s2.dauc.hi.abs}{27.06}
\setnum{lc.D4.s2.dauc.lo}{\ensuremath{-}31.15}
\setnum{lc.D4.s2.dauc.lo.abs}{31.15}
\setnum{lc.D4.s2.dauc.up}{2.02}
\setnum{lc.D4.s2.dhlate}{\ensuremath{-}31.72}
\setnum{lc.D4.s2.dhlate.abs}{31.72}
\setnum{lc.D4.s2.dhlate.dn}{3.98}
\setnum{lc.D4.s2.dhlate.hi}{\ensuremath{-}27.85}
\setnum{lc.D4.s2.dhlate.hi.abs}{27.85}
\setnum{lc.D4.s2.dhlate.lo}{\ensuremath{-}35.70}
\setnum{lc.D4.s2.dhlate.lo.abs}{35.70}
\setnum{lc.D4.s2.dhlate.up}{3.87}
\setnum{lc.FULL.Clate}{42.11}
\setnum{lc.FULL.auc}{12.37}
\setnum{lc.FULL.hlate}{3.12}
\setnum{lc.FULL.labels}{5161}
\setnum{lc.FULLb.pool.attempt}{19.96}
\setnum{lc.FULLb.pool.hlate}{10.27}
\setnum{lc.FULLb.pool.residue}{9.69}
\setnum{lc.FULLb.s1.Clate}{40.78}
\setnum{lc.FULLb.s1.auc}{13.69}
\setnum{lc.FULLb.s1.hlate}{21.09}
\setnum{lc.FULLb.s1.labels}{5222}
\setnum{lc.FULLb.s2.Clate}{39.84}
\setnum{lc.FULLb.s2.auc}{3.39}
\setnum{lc.FULLb.s2.hlate}{1.37}
\setnum{lc.FULLb.s2.labels}{4138}
\setnum{lc.FULLb.s3.Clate}{32.50}
\setnum{lc.FULLb.s3.auc}{7.58}
\setnum{lc.FULLb.s3.hlate}{8.36}
\setnum{lc.FULLb.s3.labels}{4555}
\setnum{lc.FULLbFULLr.max.d_att}{\ensuremath{+}42.89}
\setnum{lc.FULLbFULLr.max.d_att.abs}{42.89}
\setnum{lc.FULLbFULLr.max.d_auc}{\ensuremath{+}12.48}
\setnum{lc.FULLbFULLr.max.d_auc.abs}{12.48}
\setnum{lc.FULLbFULLr.max.d_c}{\ensuremath{+}0.74}
\setnum{lc.FULLbFULLr.max.d_c.abs}{0.74}
\setnum{lc.FULLbFULLr.max.d_h}{\ensuremath{+}21.09}
\setnum{lc.FULLbFULLr.max.d_h.abs}{21.09}
\setnum{lc.FULLbFULLr.mean.d_att}{\ensuremath{+}19.92}
\setnum{lc.FULLbFULLr.mean.d_att.abs}{19.92}
\setnum{lc.FULLbFULLr.mean.d_auc}{\ensuremath{+}6.84}
\setnum{lc.FULLbFULLr.mean.d_auc.abs}{6.84}
\setnum{lc.FULLbFULLr.mean.d_c}{\ensuremath{-}1.68}
\setnum{lc.FULLbFULLr.mean.d_c.abs}{1.68}
\setnum{lc.FULLbFULLr.mean.d_h}{\ensuremath{+}10.27}
\setnum{lc.FULLbFULLr.mean.d_h.abs}{10.27}
\setnum{lc.FULLbFULLr.min.d_att}{\ensuremath{+}1.80}
\setnum{lc.FULLbFULLr.min.d_att.abs}{1.80}
\setnum{lc.FULLbFULLr.min.d_auc}{\ensuremath{+}2.09}
\setnum{lc.FULLbFULLr.min.d_auc.abs}{2.09}
\setnum{lc.FULLbFULLr.min.d_c}{\ensuremath{-}3.12}
\setnum{lc.FULLbFULLr.min.d_c.abs}{3.12}
\setnum{lc.FULLbFULLr.min.d_h}{\ensuremath{+}1.37}
\setnum{lc.FULLbFULLr.min.d_h.abs}{1.37}
\setnum{lc.FULLbFULLr.p.d_att}{0.242}
\setnum{lc.FULLbFULLr.p.d_auc}{0.153}
\setnum{lc.FULLbFULLr.p.d_c}{0.302}
\setnum{lc.FULLbFULLr.p.d_h}{0.217}
\setnum{lc.FULLbFULLr.s1.dClate}{\ensuremath{-}3.12}
\setnum{lc.FULLbFULLr.s1.dClate.abs}{3.12}
\setnum{lc.FULLbFULLr.s1.dClate.dn}{2.43}
\setnum{lc.FULLbFULLr.s1.dClate.hi}{\ensuremath{-}0.62}
\setnum{lc.FULLbFULLr.s1.dClate.hi.abs}{0.62}
\setnum{lc.FULLbFULLr.s1.dClate.lo}{\ensuremath{-}5.55}
\setnum{lc.FULLbFULLr.s1.dClate.lo.abs}{5.55}
\setnum{lc.FULLbFULLr.s1.dClate.up}{2.50}
\setnum{lc.FULLbFULLr.s1.datt}{\ensuremath{+}42.89}
\setnum{lc.FULLbFULLr.s1.datt.abs}{42.89}
\setnum{lc.FULLbFULLr.s1.datt.dn}{4.69}
\setnum{lc.FULLbFULLr.s1.datt.hi}{\ensuremath{+}47.66}
\setnum{lc.FULLbFULLr.s1.datt.hi.abs}{47.66}
\setnum{lc.FULLbFULLr.s1.datt.lo}{\ensuremath{+}38.20}
\setnum{lc.FULLbFULLr.s1.datt.lo.abs}{38.20}
\setnum{lc.FULLbFULLr.s1.datt.up}{4.77}
\setnum{lc.FULLbFULLr.s1.dauc}{\ensuremath{+}12.48}
\setnum{lc.FULLbFULLr.s1.dauc.abs}{12.48}
\setnum{lc.FULLbFULLr.s1.dauc.dn}{1.40}
\setnum{lc.FULLbFULLr.s1.dauc.hi}{\ensuremath{+}13.96}
\setnum{lc.FULLbFULLr.s1.dauc.hi.abs}{13.96}
\setnum{lc.FULLbFULLr.s1.dauc.lo}{\ensuremath{+}11.08}
\setnum{lc.FULLbFULLr.s1.dauc.lo.abs}{11.08}
\setnum{lc.FULLbFULLr.s1.dauc.up}{1.48}
\setnum{lc.FULLbFULLr.s1.dhlate}{\ensuremath{+}21.09}
\setnum{lc.FULLbFULLr.s1.dhlate.abs}{21.09}
\setnum{lc.FULLbFULLr.s1.dhlate.dn}{3.63}
\setnum{lc.FULLbFULLr.s1.dhlate.hi}{\ensuremath{+}24.92}
\setnum{lc.FULLbFULLr.s1.dhlate.hi.abs}{24.92}
\setnum{lc.FULLbFULLr.s1.dhlate.lo}{\ensuremath{+}17.46}
\setnum{lc.FULLbFULLr.s1.dhlate.lo.abs}{17.46}
\setnum{lc.FULLbFULLr.s1.dhlate.up}{3.83}
\setnum{lc.FULLbFULLr.s2.dClate}{\ensuremath{+}0.74}
\setnum{lc.FULLbFULLr.s2.dClate.abs}{0.74}
\setnum{lc.FULLbFULLr.s2.dClate.dn}{2.19}
\setnum{lc.FULLbFULLr.s2.dClate.hi}{\ensuremath{+}2.89}
\setnum{lc.FULLbFULLr.s2.dClate.hi.abs}{2.89}
\setnum{lc.FULLbFULLr.s2.dClate.lo}{\ensuremath{-}1.45}
\setnum{lc.FULLbFULLr.s2.dClate.lo.abs}{1.45}
\setnum{lc.FULLbFULLr.s2.dClate.up}{2.15}
\setnum{lc.FULLbFULLr.s2.datt}{\ensuremath{+}1.80}
\setnum{lc.FULLbFULLr.s2.datt.abs}{1.80}
\setnum{lc.FULLbFULLr.s2.datt.dn}{0.94}
\setnum{lc.FULLbFULLr.s2.datt.hi}{\ensuremath{+}2.97}
\setnum{lc.FULLbFULLr.s2.datt.hi.abs}{2.97}
\setnum{lc.FULLbFULLr.s2.datt.lo}{\ensuremath{+}0.86}
\setnum{lc.FULLbFULLr.s2.datt.lo.abs}{0.86}
\setnum{lc.FULLbFULLr.s2.datt.up}{1.17}
\setnum{lc.FULLbFULLr.s2.dauc}{\ensuremath{+}2.09}
\setnum{lc.FULLbFULLr.s2.dauc.abs}{2.09}
\setnum{lc.FULLbFULLr.s2.dauc.dn}{0.51}
\setnum{lc.FULLbFULLr.s2.dauc.hi}{\ensuremath{+}2.62}
\setnum{lc.FULLbFULLr.s2.dauc.hi.abs}{2.62}
\setnum{lc.FULLbFULLr.s2.dauc.lo}{\ensuremath{+}1.58}
\setnum{lc.FULLbFULLr.s2.dauc.lo.abs}{1.58}
\setnum{lc.FULLbFULLr.s2.dauc.up}{0.53}
\setnum{lc.FULLbFULLr.s2.dhlate}{\ensuremath{+}1.37}
\setnum{lc.FULLbFULLr.s2.dhlate.abs}{1.37}
\setnum{lc.FULLbFULLr.s2.dhlate.dn}{0.78}
\setnum{lc.FULLbFULLr.s2.dhlate.hi}{\ensuremath{+}2.42}
\setnum{lc.FULLbFULLr.s2.dhlate.hi.abs}{2.42}
\setnum{lc.FULLbFULLr.s2.dhlate.lo}{\ensuremath{+}0.59}
\setnum{lc.FULLbFULLr.s2.dhlate.lo.abs}{0.59}
\setnum{lc.FULLbFULLr.s2.dhlate.up}{1.05}
\setnum{lc.FULLbFULLr.s3.dClate}{\ensuremath{-}2.66}
\setnum{lc.FULLbFULLr.s3.dClate.abs}{2.66}
\setnum{lc.FULLbFULLr.s3.dClate.dn}{2.18}
\setnum{lc.FULLbFULLr.s3.dClate.hi}{\ensuremath{-}0.47}
\setnum{lc.FULLbFULLr.s3.dClate.hi.abs}{0.47}
\setnum{lc.FULLbFULLr.s3.dClate.lo}{\ensuremath{-}4.84}
\setnum{lc.FULLbFULLr.s3.dClate.lo.abs}{4.84}
\setnum{lc.FULLbFULLr.s3.dClate.up}{2.19}
\setnum{lc.FULLbFULLr.s3.datt}{\ensuremath{+}15.08}
\setnum{lc.FULLbFULLr.s3.datt.abs}{15.08}
\setnum{lc.FULLbFULLr.s3.datt.dn}{3.36}
\setnum{lc.FULLbFULLr.s3.datt.hi}{\ensuremath{+}18.48}
\setnum{lc.FULLbFULLr.s3.datt.hi.abs}{18.48}
\setnum{lc.FULLbFULLr.s3.datt.lo}{\ensuremath{+}11.72}
\setnum{lc.FULLbFULLr.s3.datt.lo.abs}{11.72}
\setnum{lc.FULLbFULLr.s3.datt.up}{3.40}
\setnum{lc.FULLbFULLr.s3.dauc}{\ensuremath{+}5.95}
\setnum{lc.FULLbFULLr.s3.dauc.abs}{5.95}
\setnum{lc.FULLbFULLr.s3.dauc.dn}{0.85}
\setnum{lc.FULLbFULLr.s3.dauc.hi}{\ensuremath{+}6.86}
\setnum{lc.FULLbFULLr.s3.dauc.hi.abs}{6.86}
\setnum{lc.FULLbFULLr.s3.dauc.lo}{\ensuremath{+}5.10}
\setnum{lc.FULLbFULLr.s3.dauc.lo.abs}{5.10}
\setnum{lc.FULLbFULLr.s3.dauc.up}{0.91}
\setnum{lc.FULLbFULLr.s3.dhlate}{\ensuremath{+}8.36}
\setnum{lc.FULLbFULLr.s3.dhlate.abs}{8.36}
\setnum{lc.FULLbFULLr.s3.dhlate.dn}{2.11}
\setnum{lc.FULLbFULLr.s3.dhlate.hi}{\ensuremath{+}10.55}
\setnum{lc.FULLbFULLr.s3.dhlate.hi.abs}{10.55}
\setnum{lc.FULLbFULLr.s3.dhlate.lo}{\ensuremath{+}6.25}
\setnum{lc.FULLbFULLr.s3.dhlate.lo.abs}{6.25}
\setnum{lc.FULLbFULLr.s3.dhlate.up}{2.19}
\setnum{lc.FULLbFULLr.t.d_att}{1.65}
\setnum{lc.FULLbFULLr.t.d_auc}{2.26}
\setnum{lc.FULLbFULLr.t.d_c}{\ensuremath{-}1.38}
\setnum{lc.FULLbFULLr.t.d_h}{1.78}
\setnum{lc.FULLr.s1.Clate}{43.91}
\setnum{lc.FULLr.s1.auc}{1.21}
\setnum{lc.FULLr.s1.hlate}{0.00}
\setnum{lc.FULLr.s1.labels}{4087}
\setnum{lc.FULLr.s2.Clate}{39.10}
\setnum{lc.FULLr.s2.auc}{1.30}
\setnum{lc.FULLr.s2.hlate}{0.00}
\setnum{lc.FULLr.s2.labels}{3947}
\setnum{lc.FULLr.s3.Clate}{35.16}
\setnum{lc.FULLr.s3.auc}{1.63}
\setnum{lc.FULLr.s3.hlate}{0.00}
\setnum{lc.FULLr.s3.labels}{4020}
\setnum{lc.IB.s1.Clate}{40.35}
\setnum{lc.IB.s1.auc}{44.29}
\setnum{lc.IB.s1.hlate}{45.62}
\setnum{lc.IB.s1.labels}{2746}
\setnum{lc.IB.s2.Clate}{38.16}
\setnum{lc.IB.s2.auc}{42.15}
\setnum{lc.IB.s2.hlate}{49.92}
\setnum{lc.IB.s2.labels}{2634}
\setnum{lc.IB.s3.Clate}{29.14}
\setnum{lc.IB.s3.auc}{43.69}
\setnum{lc.IB.s3.hlate}{55.78}
\setnum{lc.IB.s3.labels}{2617}
\setnum{lc.IBLB.max.d_att}{\ensuremath{+}92.66}
\setnum{lc.IBLB.max.d_att.abs}{92.66}
\setnum{lc.IBLB.max.d_auc}{\ensuremath{+}41.25}
\setnum{lc.IBLB.max.d_auc.abs}{41.25}
\setnum{lc.IBLB.max.d_c}{\ensuremath{-}0.04}
\setnum{lc.IBLB.max.d_c.abs}{0.04}
\setnum{lc.IBLB.max.d_h}{\ensuremath{+}55.62}
\setnum{lc.IBLB.max.d_h.abs}{55.62}
\setnum{lc.IBLB.mean.d_att}{\ensuremath{+}91.56}
\setnum{lc.IBLB.mean.d_att.abs}{91.56}
\setnum{lc.IBLB.mean.d_auc}{\ensuremath{+}40.68}
\setnum{lc.IBLB.mean.d_auc.abs}{40.68}
\setnum{lc.IBLB.mean.d_c}{\ensuremath{-}2.20}
\setnum{lc.IBLB.mean.d_c.abs}{2.20}
\setnum{lc.IBLB.mean.d_h}{\ensuremath{+}50.36}
\setnum{lc.IBLB.mean.d_h.abs}{50.36}
\setnum{lc.IBLB.min.d_att}{\ensuremath{+}90.74}
\setnum{lc.IBLB.min.d_att.abs}{90.74}
\setnum{lc.IBLB.min.d_auc}{\ensuremath{+}40.31}
\setnum{lc.IBLB.min.d_auc.abs}{40.31}
\setnum{lc.IBLB.min.d_c}{\ensuremath{-}5.08}
\setnum{lc.IBLB.min.d_c.abs}{5.08}
\setnum{lc.IBLB.min.d_h}{\ensuremath{+}45.55}
\setnum{lc.IBLB.min.d_h.abs}{45.55}
\setnum{lc.IBLB.p.d_att}{\ensuremath{<}0.001}
\setnum{lc.IBLB.p.d_auc}{\ensuremath{<}0.001}
\setnum{lc.IBLB.p.d_c}{0.280}
\setnum{lc.IBLB.p.d_h}{0.003}
\setnum{lc.IBLB.s1.dClate}{\ensuremath{-}1.48}
\setnum{lc.IBLB.s1.dClate.abs}{1.48}
\setnum{lc.IBLB.s1.dClate.dn}{2.11}
\setnum{lc.IBLB.s1.dClate.hi}{\ensuremath{+}0.62}
\setnum{lc.IBLB.s1.dClate.hi.abs}{0.62}
\setnum{lc.IBLB.s1.dClate.lo}{\ensuremath{-}3.59}
\setnum{lc.IBLB.s1.dClate.lo.abs}{3.59}
\setnum{lc.IBLB.s1.dClate.up}{2.10}
\setnum{lc.IBLB.s1.datt}{\ensuremath{+}91.29}
\setnum{lc.IBLB.s1.datt.abs}{91.29}
\setnum{lc.IBLB.s1.datt.dn}{2.34}
\setnum{lc.IBLB.s1.datt.hi}{\ensuremath{+}93.56}
\setnum{lc.IBLB.s1.datt.hi.abs}{93.56}
\setnum{lc.IBLB.s1.datt.lo}{\ensuremath{+}88.95}
\setnum{lc.IBLB.s1.datt.lo.abs}{88.95}
\setnum{lc.IBLB.s1.datt.up}{2.27}
\setnum{lc.IBLB.s1.dauc}{\ensuremath{+}41.25}
\setnum{lc.IBLB.s1.dauc.abs}{41.25}
\setnum{lc.IBLB.s1.dauc.dn}{2.42}
\setnum{lc.IBLB.s1.dauc.hi}{\ensuremath{+}43.67}
\setnum{lc.IBLB.s1.dauc.hi.abs}{43.67}
\setnum{lc.IBLB.s1.dauc.lo}{\ensuremath{+}38.83}
\setnum{lc.IBLB.s1.dauc.lo.abs}{38.83}
\setnum{lc.IBLB.s1.dauc.up}{2.42}
\setnum{lc.IBLB.s1.dhlate}{\ensuremath{+}45.55}
\setnum{lc.IBLB.s1.dhlate.abs}{45.55}
\setnum{lc.IBLB.s1.dhlate.dn}{4.81}
\setnum{lc.IBLB.s1.dhlate.hi}{\ensuremath{+}50.27}
\setnum{lc.IBLB.s1.dhlate.hi.abs}{50.27}
\setnum{lc.IBLB.s1.dhlate.lo}{\ensuremath{+}40.74}
\setnum{lc.IBLB.s1.dhlate.lo.abs}{40.74}
\setnum{lc.IBLB.s1.dhlate.up}{4.72}
\setnum{lc.IBLB.s2.dClate}{\ensuremath{-}0.04}
\setnum{lc.IBLB.s2.dClate.abs}{0.04}
\setnum{lc.IBLB.s2.dClate.dn}{1.91}
\setnum{lc.IBLB.s2.dClate.hi}{\ensuremath{+}1.95}
\setnum{lc.IBLB.s2.dClate.hi.abs}{1.95}
\setnum{lc.IBLB.s2.dClate.lo}{\ensuremath{-}1.95}
\setnum{lc.IBLB.s2.dClate.lo.abs}{1.95}
\setnum{lc.IBLB.s2.dClate.up}{1.99}
\setnum{lc.IBLB.s2.datt}{\ensuremath{+}90.74}
\setnum{lc.IBLB.s2.datt.abs}{90.74}
\setnum{lc.IBLB.s2.datt.dn}{2.81}
\setnum{lc.IBLB.s2.datt.hi}{\ensuremath{+}93.28}
\setnum{lc.IBLB.s2.datt.hi.abs}{93.28}
\setnum{lc.IBLB.s2.datt.lo}{\ensuremath{+}87.93}
\setnum{lc.IBLB.s2.datt.lo.abs}{87.93}
\setnum{lc.IBLB.s2.datt.up}{2.54}
\setnum{lc.IBLB.s2.dauc}{\ensuremath{+}40.31}
\setnum{lc.IBLB.s2.dauc.abs}{40.31}
\setnum{lc.IBLB.s2.dauc.dn}{2.26}
\setnum{lc.IBLB.s2.dauc.hi}{\ensuremath{+}42.64}
\setnum{lc.IBLB.s2.dauc.hi.abs}{42.64}
\setnum{lc.IBLB.s2.dauc.lo}{\ensuremath{+}38.05}
\setnum{lc.IBLB.s2.dauc.lo.abs}{38.05}
\setnum{lc.IBLB.s2.dauc.up}{2.33}
\setnum{lc.IBLB.s2.dhlate}{\ensuremath{+}49.92}
\setnum{lc.IBLB.s2.dhlate.abs}{49.92}
\setnum{lc.IBLB.s2.dhlate.dn}{4.84}
\setnum{lc.IBLB.s2.dhlate.hi}{\ensuremath{+}54.84}
\setnum{lc.IBLB.s2.dhlate.hi.abs}{54.84}
\setnum{lc.IBLB.s2.dhlate.lo}{\ensuremath{+}45.08}
\setnum{lc.IBLB.s2.dhlate.lo.abs}{45.08}
\setnum{lc.IBLB.s2.dhlate.up}{4.92}
\setnum{lc.IBLB.s3.dClate}{\ensuremath{-}5.08}
\setnum{lc.IBLB.s3.dClate.abs}{5.08}
\setnum{lc.IBLB.s3.dClate.dn}{2.54}
\setnum{lc.IBLB.s3.dClate.hi}{\ensuremath{-}2.70}
\setnum{lc.IBLB.s3.dClate.hi.abs}{2.70}
\setnum{lc.IBLB.s3.dClate.lo}{\ensuremath{-}7.62}
\setnum{lc.IBLB.s3.dClate.lo.abs}{7.62}
\setnum{lc.IBLB.s3.dClate.up}{2.38}
\setnum{lc.IBLB.s3.datt}{\ensuremath{+}92.66}
\setnum{lc.IBLB.s3.datt.abs}{92.66}
\setnum{lc.IBLB.s3.datt.dn}{2.19}
\setnum{lc.IBLB.s3.datt.hi}{\ensuremath{+}94.65}
\setnum{lc.IBLB.s3.datt.hi.abs}{94.65}
\setnum{lc.IBLB.s3.datt.lo}{\ensuremath{+}90.47}
\setnum{lc.IBLB.s3.datt.lo.abs}{90.47}
\setnum{lc.IBLB.s3.datt.up}{1.99}
\setnum{lc.IBLB.s3.dauc}{\ensuremath{+}40.47}
\setnum{lc.IBLB.s3.dauc.abs}{40.47}
\setnum{lc.IBLB.s3.dauc.dn}{2.41}
\setnum{lc.IBLB.s3.dauc.hi}{\ensuremath{+}42.88}
\setnum{lc.IBLB.s3.dauc.hi.abs}{42.88}
\setnum{lc.IBLB.s3.dauc.lo}{\ensuremath{+}38.06}
\setnum{lc.IBLB.s3.dauc.lo.abs}{38.06}
\setnum{lc.IBLB.s3.dauc.up}{2.41}
\setnum{lc.IBLB.s3.dhlate}{\ensuremath{+}55.62}
\setnum{lc.IBLB.s3.dhlate.abs}{55.62}
\setnum{lc.IBLB.s3.dhlate.dn}{4.53}
\setnum{lc.IBLB.s3.dhlate.hi}{\ensuremath{+}60.16}
\setnum{lc.IBLB.s3.dhlate.hi.abs}{60.16}
\setnum{lc.IBLB.s3.dhlate.lo}{\ensuremath{+}51.09}
\setnum{lc.IBLB.s3.dhlate.lo.abs}{51.09}
\setnum{lc.IBLB.s3.dhlate.up}{4.54}
\setnum{lc.IBLB.t.d_att}{160.9}
\setnum{lc.IBLB.t.d_auc}{140.3}
\setnum{lc.IBLB.t.d_c}{\ensuremath{-}1.5}
\setnum{lc.IBLB.t.d_h}{17.3}
\setnum{lc.IBRB.s1.dauc}{\ensuremath{+}10.36}
\setnum{lc.IBRB.s1.dauc.abs}{10.36}
\setnum{lc.IBRB.s1.dauc.dn}{1.47}
\setnum{lc.IBRB.s1.dauc.hi}{\ensuremath{+}11.75}
\setnum{lc.IBRB.s1.dauc.hi.abs}{11.75}
\setnum{lc.IBRB.s1.dauc.lo}{\ensuremath{+}8.89}
\setnum{lc.IBRB.s1.dauc.lo.abs}{8.89}
\setnum{lc.IBRB.s1.dauc.up}{1.39}
\setnum{lc.IBRB.s2.dauc}{\ensuremath{-}1.22}
\setnum{lc.IBRB.s2.dauc.abs}{1.22}
\setnum{lc.IBRB.s2.dauc.dn}{1.12}
\setnum{lc.IBRB.s2.dauc.hi}{\ensuremath{-}0.02}
\setnum{lc.IBRB.s2.dauc.hi.abs}{0.02}
\setnum{lc.IBRB.s2.dauc.lo}{\ensuremath{-}2.34}
\setnum{lc.IBRB.s2.dauc.lo.abs}{2.34}
\setnum{lc.IBRB.s2.dauc.up}{1.20}
\setnum{lc.IBRB.s3.dauc}{\ensuremath{+}4.08}
\setnum{lc.IBRB.s3.dauc.abs}{4.08}
\setnum{lc.IBRB.s3.dauc.dn}{1.55}
\setnum{lc.IBRB.s3.dauc.hi}{\ensuremath{+}5.64}
\setnum{lc.IBRB.s3.dauc.hi.abs}{5.64}
\setnum{lc.IBRB.s3.dauc.lo}{\ensuremath{+}2.53}
\setnum{lc.IBRB.s3.dauc.lo.abs}{2.53}
\setnum{lc.IBRB.s3.dauc.up}{1.56}
\setnum{lc.LB.s1.Clate}{41.84}
\setnum{lc.LB.s1.auc}{3.04}
\setnum{lc.LB.s1.caught5}{135}
\setnum{lc.LB.s1.h5}{17.7}
\setnum{lc.LB.s1.hit5}{62.5}
\setnum{lc.LB.s1.hlate}{0.08}
\setnum{lc.LB.s1.labels}{1385}
\setnum{lc.LB.s1.labels5}{216}
\setnum{lc.LB.s2.Clate}{38.20}
\setnum{lc.LB.s2.auc}{1.84}
\setnum{lc.LB.s2.caught5}{108}
\setnum{lc.LB.s2.h5}{12.0}
\setnum{lc.LB.s2.hit5}{55.4}
\setnum{lc.LB.s2.hlate}{0.00}
\setnum{lc.LB.s2.labels}{1314}
\setnum{lc.LB.s2.labels5}{195}
\setnum{lc.LB.s3.Clate}{34.22}
\setnum{lc.LB.s3.auc}{3.22}
\setnum{lc.LB.s3.caught5}{122}
\setnum{lc.LB.s3.h5}{14.8}
\setnum{lc.LB.s3.hit5}{56.5}
\setnum{lc.LB.s3.hlate}{0.16}
\setnum{lc.LB.s3.labels}{1359}
\setnum{lc.LB.s3.labels5}{216}
\setnum{lc.LB120.Clate}{33.83}
\setnum{lc.LB120.auc}{47.99}
\setnum{lc.LB120.hlate}{31.25}
\setnum{lc.LB120.labels}{1556}
\setnum{lc.LBIB.max.d_att}{\ensuremath{-}90.74}
\setnum{lc.LBIB.max.d_att.abs}{90.74}
\setnum{lc.LBIB.max.d_auc}{\ensuremath{-}40.31}
\setnum{lc.LBIB.max.d_auc.abs}{40.31}
\setnum{lc.LBIB.max.d_c}{\ensuremath{+}5.08}
\setnum{lc.LBIB.max.d_c.abs}{5.08}
\setnum{lc.LBIB.max.d_h}{\ensuremath{-}45.55}
\setnum{lc.LBIB.max.d_h.abs}{45.55}
\setnum{lc.LBIB.mean.d_att}{\ensuremath{-}91.56}
\setnum{lc.LBIB.mean.d_att.abs}{91.56}
\setnum{lc.LBIB.mean.d_auc}{\ensuremath{-}40.68}
\setnum{lc.LBIB.mean.d_auc.abs}{40.68}
\setnum{lc.LBIB.mean.d_c}{\ensuremath{+}2.20}
\setnum{lc.LBIB.mean.d_c.abs}{2.20}
\setnum{lc.LBIB.mean.d_h}{\ensuremath{-}50.36}
\setnum{lc.LBIB.mean.d_h.abs}{50.36}
\setnum{lc.LBIB.mean.labelratio}{51}
\setnum{lc.LBIB.min.d_att}{\ensuremath{-}92.66}
\setnum{lc.LBIB.min.d_att.abs}{92.66}
\setnum{lc.LBIB.min.d_auc}{\ensuremath{-}41.25}
\setnum{lc.LBIB.min.d_auc.abs}{41.25}
\setnum{lc.LBIB.min.d_c}{\ensuremath{+}0.04}
\setnum{lc.LBIB.min.d_c.abs}{0.04}
\setnum{lc.LBIB.min.d_h}{\ensuremath{-}55.62}
\setnum{lc.LBIB.min.d_h.abs}{55.62}
\setnum{lc.LBIB.s1.dClate}{\ensuremath{+}1.48}
\setnum{lc.LBIB.s1.dClate.abs}{1.48}
\setnum{lc.LBIB.s1.dClate.dn}{2.10}
\setnum{lc.LBIB.s1.dClate.hi}{\ensuremath{+}3.59}
\setnum{lc.LBIB.s1.dClate.hi.abs}{3.59}
\setnum{lc.LBIB.s1.dClate.lo}{\ensuremath{-}0.62}
\setnum{lc.LBIB.s1.dClate.lo.abs}{0.62}
\setnum{lc.LBIB.s1.dClate.up}{2.11}
\setnum{lc.LBIB.s1.datt}{\ensuremath{-}91.29}
\setnum{lc.LBIB.s1.datt.abs}{91.29}
\setnum{lc.LBIB.s1.datt.dn}{2.27}
\setnum{lc.LBIB.s1.datt.hi}{\ensuremath{-}88.95}
\setnum{lc.LBIB.s1.datt.hi.abs}{88.95}
\setnum{lc.LBIB.s1.datt.lo}{\ensuremath{-}93.56}
\setnum{lc.LBIB.s1.datt.lo.abs}{93.56}
\setnum{lc.LBIB.s1.datt.up}{2.34}
\setnum{lc.LBIB.s1.dauc}{\ensuremath{-}41.25}
\setnum{lc.LBIB.s1.dauc.abs}{41.25}
\setnum{lc.LBIB.s1.dauc.dn}{2.42}
\setnum{lc.LBIB.s1.dauc.hi}{\ensuremath{-}38.83}
\setnum{lc.LBIB.s1.dauc.hi.abs}{38.83}
\setnum{lc.LBIB.s1.dauc.lo}{\ensuremath{-}43.67}
\setnum{lc.LBIB.s1.dauc.lo.abs}{43.67}
\setnum{lc.LBIB.s1.dauc.up}{2.42}
\setnum{lc.LBIB.s1.dhlate}{\ensuremath{-}45.55}
\setnum{lc.LBIB.s1.dhlate.abs}{45.55}
\setnum{lc.LBIB.s1.dhlate.dn}{4.72}
\setnum{lc.LBIB.s1.dhlate.hi}{\ensuremath{-}40.74}
\setnum{lc.LBIB.s1.dhlate.hi.abs}{40.74}
\setnum{lc.LBIB.s1.dhlate.lo}{\ensuremath{-}50.27}
\setnum{lc.LBIB.s1.dhlate.lo.abs}{50.27}
\setnum{lc.LBIB.s1.dhlate.up}{4.81}
\setnum{lc.LBIB.s2.dClate}{\ensuremath{+}0.04}
\setnum{lc.LBIB.s2.dClate.abs}{0.04}
\setnum{lc.LBIB.s2.dClate.dn}{1.99}
\setnum{lc.LBIB.s2.dClate.hi}{\ensuremath{+}1.95}
\setnum{lc.LBIB.s2.dClate.hi.abs}{1.95}
\setnum{lc.LBIB.s2.dClate.lo}{\ensuremath{-}1.95}
\setnum{lc.LBIB.s2.dClate.lo.abs}{1.95}
\setnum{lc.LBIB.s2.dClate.up}{1.91}
\setnum{lc.LBIB.s2.datt}{\ensuremath{-}90.74}
\setnum{lc.LBIB.s2.datt.abs}{90.74}
\setnum{lc.LBIB.s2.datt.dn}{2.54}
\setnum{lc.LBIB.s2.datt.hi}{\ensuremath{-}87.93}
\setnum{lc.LBIB.s2.datt.hi.abs}{87.93}
\setnum{lc.LBIB.s2.datt.lo}{\ensuremath{-}93.28}
\setnum{lc.LBIB.s2.datt.lo.abs}{93.28}
\setnum{lc.LBIB.s2.datt.up}{2.81}
\setnum{lc.LBIB.s2.dauc}{\ensuremath{-}40.31}
\setnum{lc.LBIB.s2.dauc.abs}{40.31}
\setnum{lc.LBIB.s2.dauc.dn}{2.33}
\setnum{lc.LBIB.s2.dauc.hi}{\ensuremath{-}38.05}
\setnum{lc.LBIB.s2.dauc.hi.abs}{38.05}
\setnum{lc.LBIB.s2.dauc.lo}{\ensuremath{-}42.64}
\setnum{lc.LBIB.s2.dauc.lo.abs}{42.64}
\setnum{lc.LBIB.s2.dauc.up}{2.26}
\setnum{lc.LBIB.s2.dhlate}{\ensuremath{-}49.92}
\setnum{lc.LBIB.s2.dhlate.abs}{49.92}
\setnum{lc.LBIB.s2.dhlate.dn}{4.92}
\setnum{lc.LBIB.s2.dhlate.hi}{\ensuremath{-}45.08}
\setnum{lc.LBIB.s2.dhlate.hi.abs}{45.08}
\setnum{lc.LBIB.s2.dhlate.lo}{\ensuremath{-}54.84}
\setnum{lc.LBIB.s2.dhlate.lo.abs}{54.84}
\setnum{lc.LBIB.s2.dhlate.up}{4.84}
\setnum{lc.LBIB.s3.dClate}{\ensuremath{+}5.08}
\setnum{lc.LBIB.s3.dClate.abs}{5.08}
\setnum{lc.LBIB.s3.dClate.dn}{2.38}
\setnum{lc.LBIB.s3.dClate.hi}{\ensuremath{+}7.62}
\setnum{lc.LBIB.s3.dClate.hi.abs}{7.62}
\setnum{lc.LBIB.s3.dClate.lo}{\ensuremath{+}2.70}
\setnum{lc.LBIB.s3.dClate.lo.abs}{2.70}
\setnum{lc.LBIB.s3.dClate.up}{2.54}
\setnum{lc.LBIB.s3.datt}{\ensuremath{-}92.66}
\setnum{lc.LBIB.s3.datt.abs}{92.66}
\setnum{lc.LBIB.s3.datt.dn}{1.99}
\setnum{lc.LBIB.s3.datt.hi}{\ensuremath{-}90.47}
\setnum{lc.LBIB.s3.datt.hi.abs}{90.47}
\setnum{lc.LBIB.s3.datt.lo}{\ensuremath{-}94.65}
\setnum{lc.LBIB.s3.datt.lo.abs}{94.65}
\setnum{lc.LBIB.s3.datt.up}{2.19}
\setnum{lc.LBIB.s3.dauc}{\ensuremath{-}40.47}
\setnum{lc.LBIB.s3.dauc.abs}{40.47}
\setnum{lc.LBIB.s3.dauc.dn}{2.41}
\setnum{lc.LBIB.s3.dauc.hi}{\ensuremath{-}38.06}
\setnum{lc.LBIB.s3.dauc.hi.abs}{38.06}
\setnum{lc.LBIB.s3.dauc.lo}{\ensuremath{-}42.88}
\setnum{lc.LBIB.s3.dauc.lo.abs}{42.88}
\setnum{lc.LBIB.s3.dauc.up}{2.41}
\setnum{lc.LBIB.s3.dhlate}{\ensuremath{-}55.62}
\setnum{lc.LBIB.s3.dhlate.abs}{55.62}
\setnum{lc.LBIB.s3.dhlate.dn}{4.54}
\setnum{lc.LBIB.s3.dhlate.hi}{\ensuremath{-}51.09}
\setnum{lc.LBIB.s3.dhlate.hi.abs}{51.09}
\setnum{lc.LBIB.s3.dhlate.lo}{\ensuremath{-}60.16}
\setnum{lc.LBIB.s3.dhlate.lo.abs}{60.16}
\setnum{lc.LBIB.s3.dhlate.up}{4.53}
\setnum{lc.LBRB.max.d_auc}{\ensuremath{-}30.89}
\setnum{lc.LBRB.max.d_auc.abs}{30.89}
\setnum{lc.LBRB.max.d_c}{\ensuremath{+}1.64}
\setnum{lc.LBRB.max.d_c.abs}{1.64}
\setnum{lc.LBRB.max.d_h}{\ensuremath{-}36.25}
\setnum{lc.LBRB.max.d_h.abs}{36.25}
\setnum{lc.LBRB.max.d_p16}{\ensuremath{+}0.63}
\setnum{lc.LBRB.max.d_p16.abs}{0.63}
\setnum{lc.LBRB.mean.d_auc}{\ensuremath{-}36.27}
\setnum{lc.LBRB.mean.d_auc.abs}{36.27}
\setnum{lc.LBRB.mean.d_c}{\ensuremath{-}0.20}
\setnum{lc.LBRB.mean.d_c.abs}{0.20}
\setnum{lc.LBRB.mean.d_h}{\ensuremath{-}44.26}
\setnum{lc.LBRB.mean.d_h.abs}{44.26}
\setnum{lc.LBRB.mean.d_p16}{\ensuremath{-}0.62}
\setnum{lc.LBRB.mean.d_p16.abs}{0.62}
\setnum{lc.LBRB.mean.labelratio}{53}
\setnum{lc.LBRB.min.d_auc}{\ensuremath{-}41.53}
\setnum{lc.LBRB.min.d_auc.abs}{41.53}
\setnum{lc.LBRB.min.d_c}{\ensuremath{-}1.56}
\setnum{lc.LBRB.min.d_c.abs}{1.56}
\setnum{lc.LBRB.min.d_h}{\ensuremath{-}51.68}
\setnum{lc.LBRB.min.d_h.abs}{51.68}
\setnum{lc.LBRB.min.d_p16}{\ensuremath{-}2.50}
\setnum{lc.LBRB.min.d_p16.abs}{2.50}
\setnum{lc.LBRB.p.d_auc}{0.007}
\setnum{lc.LBRB.p.d_c}{0.857}
\setnum{lc.LBRB.p.d_h}{0.010}
\setnum{lc.LBRB.p.d_p16}{0.580}
\setnum{lc.LBRB.p.labelratio}{0.002}
\setnum{lc.LBRB.s1.dClate}{\ensuremath{-}1.56}
\setnum{lc.LBRB.s1.dClate.abs}{1.56}
\setnum{lc.LBRB.s1.dClate.dn}{2.35}
\setnum{lc.LBRB.s1.dClate.hi}{\ensuremath{+}0.70}
\setnum{lc.LBRB.s1.dClate.hi.abs}{0.70}
\setnum{lc.LBRB.s1.dClate.lo}{\ensuremath{-}3.91}
\setnum{lc.LBRB.s1.dClate.lo.abs}{3.91}
\setnum{lc.LBRB.s1.dClate.up}{2.26}
\setnum{lc.LBRB.s1.dauc}{\ensuremath{-}30.89}
\setnum{lc.LBRB.s1.dauc.abs}{30.89}
\setnum{lc.LBRB.s1.dauc.dn}{2.30}
\setnum{lc.LBRB.s1.dauc.hi}{\ensuremath{-}28.63}
\setnum{lc.LBRB.s1.dauc.hi.abs}{28.63}
\setnum{lc.LBRB.s1.dauc.lo}{\ensuremath{-}33.19}
\setnum{lc.LBRB.s1.dauc.lo.abs}{33.19}
\setnum{lc.LBRB.s1.dauc.up}{2.26}
\setnum{lc.LBRB.s1.dhlate}{\ensuremath{-}36.25}
\setnum{lc.LBRB.s1.dhlate.abs}{36.25}
\setnum{lc.LBRB.s1.dhlate.dn}{4.69}
\setnum{lc.LBRB.s1.dhlate.hi}{\ensuremath{-}31.68}
\setnum{lc.LBRB.s1.dhlate.hi.abs}{31.68}
\setnum{lc.LBRB.s1.dhlate.lo}{\ensuremath{-}40.94}
\setnum{lc.LBRB.s1.dhlate.lo.abs}{40.94}
\setnum{lc.LBRB.s1.dhlate.up}{4.57}
\setnum{lc.LBRB.s1.dpass16}{\ensuremath{+}0.00}
\setnum{lc.LBRB.s1.dpass16.abs}{0.00}
\setnum{lc.LBRB.s1.dpass16.dn}{4.38}
\setnum{lc.LBRB.s1.dpass16.hi}{\ensuremath{+}4.38}
\setnum{lc.LBRB.s1.dpass16.hi.abs}{4.38}
\setnum{lc.LBRB.s1.dpass16.lo}{\ensuremath{-}4.38}
\setnum{lc.LBRB.s1.dpass16.lo.abs}{4.38}
\setnum{lc.LBRB.s1.dpass16.up}{4.38}
\setnum{lc.LBRB.s1.labelratio}{57}
\setnum{lc.LBRB.s1.pass16.LB}{76.88}
\setnum{lc.LBRB.s1.pass16.RB}{76.88}
\setnum{lc.LBRB.s2.dClate}{\ensuremath{-}0.66}
\setnum{lc.LBRB.s2.dClate.abs}{0.66}
\setnum{lc.LBRB.s2.dClate.dn}{2.00}
\setnum{lc.LBRB.s2.dClate.hi}{\ensuremath{+}1.33}
\setnum{lc.LBRB.s2.dClate.hi.abs}{1.33}
\setnum{lc.LBRB.s2.dClate.lo}{\ensuremath{-}2.66}
\setnum{lc.LBRB.s2.dClate.lo.abs}{2.66}
\setnum{lc.LBRB.s2.dClate.up}{1.99}
\setnum{lc.LBRB.s2.dauc}{\ensuremath{-}41.53}
\setnum{lc.LBRB.s2.dauc.abs}{41.53}
\setnum{lc.LBRB.s2.dauc.dn}{2.44}
\setnum{lc.LBRB.s2.dauc.hi}{\ensuremath{-}39.20}
\setnum{lc.LBRB.s2.dauc.hi.abs}{39.20}
\setnum{lc.LBRB.s2.dauc.lo}{\ensuremath{-}43.97}
\setnum{lc.LBRB.s2.dauc.lo.abs}{43.97}
\setnum{lc.LBRB.s2.dauc.up}{2.33}
\setnum{lc.LBRB.s2.dhlate}{\ensuremath{-}51.68}
\setnum{lc.LBRB.s2.dhlate.abs}{51.68}
\setnum{lc.LBRB.s2.dhlate.dn}{4.96}
\setnum{lc.LBRB.s2.dhlate.hi}{\ensuremath{-}46.72}
\setnum{lc.LBRB.s2.dhlate.hi.abs}{46.72}
\setnum{lc.LBRB.s2.dhlate.lo}{\ensuremath{-}56.64}
\setnum{lc.LBRB.s2.dhlate.lo.abs}{56.64}
\setnum{lc.LBRB.s2.dhlate.up}{4.96}
\setnum{lc.LBRB.s2.dpass16}{\ensuremath{-}2.50}
\setnum{lc.LBRB.s2.dpass16.abs}{2.50}
\setnum{lc.LBRB.s2.dpass16.dn}{4.38}
\setnum{lc.LBRB.s2.dpass16.hi}{\ensuremath{+}1.88}
\setnum{lc.LBRB.s2.dpass16.hi.abs}{1.88}
\setnum{lc.LBRB.s2.dpass16.lo}{\ensuremath{-}6.88}
\setnum{lc.LBRB.s2.dpass16.lo.abs}{6.88}
\setnum{lc.LBRB.s2.dpass16.up}{4.38}
\setnum{lc.LBRB.s2.labelratio}{48}
\setnum{lc.LBRB.s2.pass16.LB}{70.00}
\setnum{lc.LBRB.s2.pass16.RB}{72.50}
\setnum{lc.LBRB.s3.dClate}{\ensuremath{+}1.64}
\setnum{lc.LBRB.s3.dClate.abs}{1.64}
\setnum{lc.LBRB.s3.dClate.dn}{1.68}
\setnum{lc.LBRB.s3.dClate.hi}{\ensuremath{+}3.44}
\setnum{lc.LBRB.s3.dClate.hi.abs}{3.44}
\setnum{lc.LBRB.s3.dClate.lo}{\ensuremath{-}0.04}
\setnum{lc.LBRB.s3.dClate.lo.abs}{0.04}
\setnum{lc.LBRB.s3.dClate.up}{1.80}
\setnum{lc.LBRB.s3.dauc}{\ensuremath{-}36.39}
\setnum{lc.LBRB.s3.dauc.abs}{36.39}
\setnum{lc.LBRB.s3.dauc.dn}{2.18}
\setnum{lc.LBRB.s3.dauc.hi}{\ensuremath{-}34.17}
\setnum{lc.LBRB.s3.dauc.hi.abs}{34.17}
\setnum{lc.LBRB.s3.dauc.lo}{\ensuremath{-}38.57}
\setnum{lc.LBRB.s3.dauc.lo.abs}{38.57}
\setnum{lc.LBRB.s3.dauc.up}{2.22}
\setnum{lc.LBRB.s3.dhlate}{\ensuremath{-}44.84}
\setnum{lc.LBRB.s3.dhlate.abs}{44.84}
\setnum{lc.LBRB.s3.dhlate.dn}{4.77}
\setnum{lc.LBRB.s3.dhlate.hi}{\ensuremath{-}40.12}
\setnum{lc.LBRB.s3.dhlate.hi.abs}{40.12}
\setnum{lc.LBRB.s3.dhlate.lo}{\ensuremath{-}49.61}
\setnum{lc.LBRB.s3.dhlate.lo.abs}{49.61}
\setnum{lc.LBRB.s3.dhlate.up}{4.72}
\setnum{lc.LBRB.s3.dpass16}{\ensuremath{+}0.63}
\setnum{lc.LBRB.s3.dpass16.abs}{0.63}
\setnum{lc.LBRB.s3.dpass16.dn}{4.38}
\setnum{lc.LBRB.s3.dpass16.hi}{\ensuremath{+}5.00}
\setnum{lc.LBRB.s3.dpass16.hi.abs}{5.00}
\setnum{lc.LBRB.s3.dpass16.lo}{\ensuremath{-}3.75}
\setnum{lc.LBRB.s3.dpass16.lo.abs}{3.75}
\setnum{lc.LBRB.s3.dpass16.up}{4.37}
\setnum{lc.LBRB.s3.labelratio}{53}
\setnum{lc.LBRB.s3.pass16.LB}{67.50}
\setnum{lc.LBRB.s3.pass16.RB}{66.88}
\setnum{lc.LBRB.t.d_auc}{\ensuremath{-}11.80}
\setnum{lc.LBRB.t.d_c}{\ensuremath{-}0.20}
\setnum{lc.LBRB.t.d_h}{\ensuremath{-}9.91}
\setnum{lc.LBRB.t.d_p16}{\ensuremath{-}0.65}
\setnum{lc.LBp10.s1.hlate}{40.2}
\setnum{lc.LBp10.s1.labels}{801}
\setnum{lc.LBp10.s2.hlate}{45.0}
\setnum{lc.LBp10.s2.labels}{792}
\setnum{lc.LBp10h.s1.hlate}{7.9}
\setnum{lc.LBp10h.s1.labels}{428}
\setnum{lc.LBp10h.s1.lastbin}{10.2}
\setnum{lc.LBp10h.s1.minbin}{2.0}
\setnum{lc.LBp10hc40.bin2}{44.9}
\setnum{lc.LBp10hc40.hlate}{40.5}
\setnum{lc.LBp10hc40.labels}{837}
\setnum{lc.LBp20.max.labelratio}{63}
\setnum{lc.LBp20.min.labelratio}{61}
\setnum{lc.LBp20.s1.hlate}{0.12}
\setnum{lc.LBp20.s1.labels}{851}
\setnum{lc.LBp20.s2.hlate}{0.27}
\setnum{lc.LBp20.s2.labels}{822}
\setnum{lc.LBscale10.hlate}{0.04}
\setnum{lc.LBscale10.labelratio}{33}
\setnum{lc.LBscale10.labels}{456}
\setnum{lc.LBscale10LBm.dClate}{\ensuremath{+}0.04}
\setnum{lc.LBscale10LBm.dClate.abs}{0.04}
\setnum{lc.LBscale10LBm.dClate.dn}{2.07}
\setnum{lc.LBscale10LBm.dClate.hi}{\ensuremath{+}2.15}
\setnum{lc.LBscale10LBm.dClate.hi.abs}{2.15}
\setnum{lc.LBscale10LBm.dClate.lo}{\ensuremath{-}2.03}
\setnum{lc.LBscale10LBm.dClate.lo.abs}{2.03}
\setnum{lc.LBscale10LBm.dClate.up}{2.11}
\setnum{lc.LBscale10LBm.dauc}{\ensuremath{+}0.33}
\setnum{lc.LBscale10LBm.dauc.abs}{0.33}
\setnum{lc.LBscale10LBm.dauc.dn}{0.47}
\setnum{lc.LBscale10LBm.dauc.hi}{\ensuremath{+}0.81}
\setnum{lc.LBscale10LBm.dauc.hi.abs}{0.81}
\setnum{lc.LBscale10LBm.dauc.lo}{\ensuremath{-}0.14}
\setnum{lc.LBscale10LBm.dauc.lo.abs}{0.14}
\setnum{lc.LBscale10LBm.dauc.up}{0.48}
\setnum{lc.LBscale10LBm.dhlate}{\ensuremath{-}0.04}
\setnum{lc.LBscale10LBm.dhlate.abs}{0.04}
\setnum{lc.LBscale10LBm.dhlate.dn}{0.16}
\setnum{lc.LBscale10LBm.dhlate.hi}{\ensuremath{+}0.08}
\setnum{lc.LBscale10LBm.dhlate.hi.abs}{0.08}
\setnum{lc.LBscale10LBm.dhlate.lo}{\ensuremath{-}0.20}
\setnum{lc.LBscale10LBm.dhlate.lo.abs}{0.20}
\setnum{lc.LBscale10LBm.dhlate.up}{0.12}
\setnum{lc.LBscale10advIPWp10.dClate}{\ensuremath{-}2.23}
\setnum{lc.LBscale10advIPWp10.dClate.abs}{2.23}
\setnum{lc.LBscale10advIPWp10.dClate.dn}{1.68}
\setnum{lc.LBscale10advIPWp10.dClate.hi}{\ensuremath{-}0.59}
\setnum{lc.LBscale10advIPWp10.dClate.hi.abs}{0.59}
\setnum{lc.LBscale10advIPWp10.dClate.lo}{\ensuremath{-}3.91}
\setnum{lc.LBscale10advIPWp10.dClate.lo.abs}{3.91}
\setnum{lc.LBscale10advIPWp10.dClate.up}{1.64}
\setnum{lc.LBscale10advIPWp10.dauc}{\ensuremath{-}1.56}
\setnum{lc.LBscale10advIPWp10.dauc.abs}{1.56}
\setnum{lc.LBscale10advIPWp10.dauc.dn}{0.56}
\setnum{lc.LBscale10advIPWp10.dauc.hi}{\ensuremath{-}1.02}
\setnum{lc.LBscale10advIPWp10.dauc.hi.abs}{1.02}
\setnum{lc.LBscale10advIPWp10.dauc.lo}{\ensuremath{-}2.12}
\setnum{lc.LBscale10advIPWp10.dauc.lo.abs}{2.12}
\setnum{lc.LBscale10advIPWp10.dauc.up}{0.54}
\setnum{lc.LBscale10advIPWp10.dhlate}{\ensuremath{-}0.78}
\setnum{lc.LBscale10advIPWp10.dhlate.abs}{0.78}
\setnum{lc.LBscale10advIPWp10.dhlate.dn}{0.59}
\setnum{lc.LBscale10advIPWp10.dhlate.hi}{\ensuremath{-}0.31}
\setnum{lc.LBscale10advIPWp10.dhlate.hi.abs}{0.31}
\setnum{lc.LBscale10advIPWp10.dhlate.lo}{\ensuremath{-}1.37}
\setnum{lc.LBscale10advIPWp10.dhlate.lo.abs}{1.37}
\setnum{lc.LBscale10advIPWp10.dhlate.up}{0.47}
\setnum{lc.LOW.Clate}{39.88}
\setnum{lc.LOW.auc}{42.62}
\setnum{lc.LOW.hlate}{46.37}
\setnum{lc.LOW.labels}{3348}
\setnum{lc.LOW_POS.dClate}{\ensuremath{-}0.55}
\setnum{lc.LOW_POS.dClate.abs}{0.55}
\setnum{lc.LOW_POS.dClate.dn}{2.07}
\setnum{lc.LOW_POS.dClate.hi}{\ensuremath{+}1.48}
\setnum{lc.LOW_POS.dClate.hi.abs}{1.48}
\setnum{lc.LOW_POS.dClate.lo}{\ensuremath{-}2.62}
\setnum{lc.LOW_POS.dClate.lo.abs}{2.62}
\setnum{lc.LOW_POS.dClate.up}{2.03}
\setnum{lc.LOW_POS.dauc}{\ensuremath{+}1.27}
\setnum{lc.LOW_POS.dauc.abs}{1.27}
\setnum{lc.LOW_POS.dauc.dn}{1.35}
\setnum{lc.LOW_POS.dauc.hi}{\ensuremath{+}2.57}
\setnum{lc.LOW_POS.dauc.hi.abs}{2.57}
\setnum{lc.LOW_POS.dauc.lo}{\ensuremath{-}0.08}
\setnum{lc.LOW_POS.dauc.lo.abs}{0.08}
\setnum{lc.LOW_POS.dauc.up}{1.30}
\setnum{lc.LOW_POS.dhlate}{\ensuremath{+}3.91}
\setnum{lc.LOW_POS.dhlate.abs}{3.91}
\setnum{lc.LOW_POS.dhlate.dn}{2.66}
\setnum{lc.LOW_POS.dhlate.hi}{\ensuremath{+}6.49}
\setnum{lc.LOW_POS.dhlate.hi.abs}{6.49}
\setnum{lc.LOW_POS.dhlate.lo}{\ensuremath{+}1.25}
\setnum{lc.LOW_POS.dhlate.lo.abs}{1.25}
\setnum{lc.LOW_POS.dhlate.up}{2.58}
\setnum{lc.LOW_POS.labelratio}{82}
\setnum{lc.LOWm.Clate}{44.84}
\setnum{lc.LOWm.auc}{2.52}
\setnum{lc.LOWm.hlate}{0.04}
\setnum{lc.LOWm.labels}{1779}
\setnum{lc.LOWm_FULL.dClate}{\ensuremath{+}2.73}
\setnum{lc.LOWm_FULL.dClate.abs}{2.73}
\setnum{lc.LOWm_FULL.dClate.dn}{2.30}
\setnum{lc.LOWm_FULL.dClate.hi}{\ensuremath{+}5.00}
\setnum{lc.LOWm_FULL.dClate.hi.abs}{5.00}
\setnum{lc.LOWm_FULL.dClate.lo}{\ensuremath{+}0.43}
\setnum{lc.LOWm_FULL.dClate.lo.abs}{0.43}
\setnum{lc.LOWm_FULL.dClate.up}{2.27}
\setnum{lc.LOWm_FULL.dauc}{\ensuremath{-}9.85}
\setnum{lc.LOWm_FULL.dauc.abs}{9.85}
\setnum{lc.LOWm_FULL.dauc.dn}{1.18}
\setnum{lc.LOWm_FULL.dauc.hi}{\ensuremath{-}8.71}
\setnum{lc.LOWm_FULL.dauc.hi.abs}{8.71}
\setnum{lc.LOWm_FULL.dauc.lo}{\ensuremath{-}11.03}
\setnum{lc.LOWm_FULL.dauc.lo.abs}{11.03}
\setnum{lc.LOWm_FULL.dauc.up}{1.14}
\setnum{lc.LOWm_FULL.dhlate}{\ensuremath{-}3.09}
\setnum{lc.LOWm_FULL.dhlate.abs}{3.09}
\setnum{lc.LOWm_FULL.dhlate.dn}{1.09}
\setnum{lc.LOWm_FULL.dhlate.hi}{\ensuremath{-}2.11}
\setnum{lc.LOWm_FULL.dhlate.hi.abs}{2.11}
\setnum{lc.LOWm_FULL.dhlate.lo}{\ensuremath{-}4.18}
\setnum{lc.LOWm_FULL.dhlate.lo.abs}{4.18}
\setnum{lc.LOWm_FULL.dhlate.up}{0.98}
\setnum{lc.LOWm_FULL.labelratio}{34}
\setnum{lc.LOWm_LOW.dClate}{\ensuremath{+}4.96}
\setnum{lc.LOWm_LOW.dClate.abs}{4.96}
\setnum{lc.LOWm_LOW.dClate.dn}{2.58}
\setnum{lc.LOWm_LOW.dClate.hi}{\ensuremath{+}7.58}
\setnum{lc.LOWm_LOW.dClate.hi.abs}{7.58}
\setnum{lc.LOWm_LOW.dClate.lo}{\ensuremath{+}2.38}
\setnum{lc.LOWm_LOW.dClate.lo.abs}{2.38}
\setnum{lc.LOWm_LOW.dClate.up}{2.62}
\setnum{lc.LOWm_LOW.dauc}{\ensuremath{-}40.10}
\setnum{lc.LOWm_LOW.dauc.abs}{40.10}
\setnum{lc.LOWm_LOW.dauc.dn}{2.38}
\setnum{lc.LOWm_LOW.dauc.hi}{\ensuremath{-}37.68}
\setnum{lc.LOWm_LOW.dauc.hi.abs}{37.68}
\setnum{lc.LOWm_LOW.dauc.lo}{\ensuremath{-}42.48}
\setnum{lc.LOWm_LOW.dauc.lo.abs}{42.48}
\setnum{lc.LOWm_LOW.dauc.up}{2.42}
\setnum{lc.LOWm_LOW.dhlate}{\ensuremath{-}46.33}
\setnum{lc.LOWm_LOW.dhlate.abs}{46.33}
\setnum{lc.LOWm_LOW.dhlate.dn}{4.84}
\setnum{lc.LOWm_LOW.dhlate.hi}{\ensuremath{-}41.56}
\setnum{lc.LOWm_LOW.dhlate.hi.abs}{41.56}
\setnum{lc.LOWm_LOW.dhlate.lo}{\ensuremath{-}51.17}
\setnum{lc.LOWm_LOW.dhlate.lo.abs}{51.17}
\setnum{lc.LOWm_LOW.dhlate.up}{4.77}
\setnum{lc.LOWm_LOW.labelratio}{53}
\setnum{lc.NONE.Clate}{36.95}
\setnum{lc.NONE.auc}{50.28}
\setnum{lc.NONE.hlate}{53.40}
\setnum{lc.NONE.labels}{0}
\setnum{lc.NONE_FULL.dClate}{\ensuremath{-}5.16}
\setnum{lc.NONE_FULL.dClate.abs}{5.16}
\setnum{lc.NONE_FULL.dClate.dn}{2.42}
\setnum{lc.NONE_FULL.dClate.hi}{\ensuremath{-}2.66}
\setnum{lc.NONE_FULL.dClate.hi.abs}{2.66}
\setnum{lc.NONE_FULL.dClate.lo}{\ensuremath{-}7.58}
\setnum{lc.NONE_FULL.dClate.lo.abs}{7.58}
\setnum{lc.NONE_FULL.dClate.up}{2.50}
\setnum{lc.NONE_FULL.dauc}{\ensuremath{+}37.91}
\setnum{lc.NONE_FULL.dauc.abs}{37.91}
\setnum{lc.NONE_FULL.dauc.dn}{2.25}
\setnum{lc.NONE_FULL.dauc.hi}{\ensuremath{+}40.14}
\setnum{lc.NONE_FULL.dauc.hi.abs}{40.14}
\setnum{lc.NONE_FULL.dauc.lo}{\ensuremath{+}35.66}
\setnum{lc.NONE_FULL.dauc.lo.abs}{35.66}
\setnum{lc.NONE_FULL.dauc.up}{2.23}
\setnum{lc.NONE_FULL.dhlate}{\ensuremath{+}50.27}
\setnum{lc.NONE_FULL.dhlate.abs}{50.27}
\setnum{lc.NONE_FULL.dhlate.dn}{4.68}
\setnum{lc.NONE_FULL.dhlate.hi}{\ensuremath{+}54.88}
\setnum{lc.NONE_FULL.dhlate.hi.abs}{54.88}
\setnum{lc.NONE_FULL.dhlate.lo}{\ensuremath{+}45.59}
\setnum{lc.NONE_FULL.dhlate.lo.abs}{45.59}
\setnum{lc.NONE_FULL.dhlate.up}{4.61}
\setnum{lc.ON120.Clate}{37.81}
\setnum{lc.ON120.auc}{52.89}
\setnum{lc.ON120.hlate}{47.81}
\setnum{lc.ON120.labels}{1733}
\setnum{lc.ON120_LB120.dClate}{\ensuremath{+}3.98}
\setnum{lc.ON120_LB120.dClate.abs}{3.98}
\setnum{lc.ON120_LB120.dClate.dn}{2.34}
\setnum{lc.ON120_LB120.dClate.hi}{\ensuremath{+}6.29}
\setnum{lc.ON120_LB120.dClate.hi.abs}{6.29}
\setnum{lc.ON120_LB120.dClate.lo}{\ensuremath{+}1.64}
\setnum{lc.ON120_LB120.dClate.lo.abs}{1.64}
\setnum{lc.ON120_LB120.dClate.up}{2.31}
\setnum{lc.ON120_LB120.dauc}{\ensuremath{+}4.90}
\setnum{lc.ON120_LB120.dauc.abs}{4.90}
\setnum{lc.ON120_LB120.dauc.dn}{1.77}
\setnum{lc.ON120_LB120.dauc.hi}{\ensuremath{+}6.77}
\setnum{lc.ON120_LB120.dauc.hi.abs}{6.77}
\setnum{lc.ON120_LB120.dauc.lo}{\ensuremath{+}3.13}
\setnum{lc.ON120_LB120.dauc.lo.abs}{3.13}
\setnum{lc.ON120_LB120.dauc.up}{1.87}
\setnum{lc.ON120_LB120.dhlate}{\ensuremath{+}16.56}
\setnum{lc.ON120_LB120.dhlate.abs}{16.56}
\setnum{lc.ON120_LB120.dhlate.dn}{4.29}
\setnum{lc.ON120_LB120.dhlate.hi}{\ensuremath{+}20.98}
\setnum{lc.ON120_LB120.dhlate.hi.abs}{20.98}
\setnum{lc.ON120_LB120.dhlate.lo}{\ensuremath{+}12.27}
\setnum{lc.ON120_LB120.dhlate.lo.abs}{12.27}
\setnum{lc.ON120_LB120.dhlate.up}{4.42}
\setnum{lc.ON80.Clate}{45.04}
\setnum{lc.ON80.auc}{9.45}
\setnum{lc.ON80.hlate}{0.20}
\setnum{lc.ON80.labels}{1659}
\setnum{lc.ON80_LB.dClate}{\ensuremath{+}3.20}
\setnum{lc.ON80_LB.dClate.abs}{3.20}
\setnum{lc.ON80_LB.dClate.dn}{2.07}
\setnum{lc.ON80_LB.dClate.hi}{\ensuremath{+}5.35}
\setnum{lc.ON80_LB.dClate.hi.abs}{5.35}
\setnum{lc.ON80_LB.dClate.lo}{\ensuremath{+}1.13}
\setnum{lc.ON80_LB.dClate.lo.abs}{1.13}
\setnum{lc.ON80_LB.dClate.up}{2.15}
\setnum{lc.ON80_LB.dauc}{\ensuremath{+}6.42}
\setnum{lc.ON80_LB.dauc.abs}{6.42}
\setnum{lc.ON80_LB.dauc.dn}{1.17}
\setnum{lc.ON80_LB.dauc.hi}{\ensuremath{+}7.72}
\setnum{lc.ON80_LB.dauc.hi.abs}{7.72}
\setnum{lc.ON80_LB.dauc.lo}{\ensuremath{+}5.25}
\setnum{lc.ON80_LB.dauc.lo.abs}{5.25}
\setnum{lc.ON80_LB.dauc.up}{1.30}
\setnum{lc.ON80_LB.dhlate}{\ensuremath{+}0.12}
\setnum{lc.ON80_LB.dhlate.abs}{0.12}
\setnum{lc.ON80_LB.dhlate.dn}{0.20}
\setnum{lc.ON80_LB.dhlate.hi}{\ensuremath{+}0.35}
\setnum{lc.ON80_LB.dhlate.hi.abs}{0.35}
\setnum{lc.ON80_LB.dhlate.lo}{\ensuremath{-}0.08}
\setnum{lc.ON80_LB.dhlate.lo.abs}{0.08}
\setnum{lc.ON80_LB.dhlate.up}{0.23}
\setnum{lc.POS.Clate}{40.43}
\setnum{lc.POS.auc}{41.35}
\setnum{lc.POS.hlate}{42.46}
\setnum{lc.POS.labels}{4069}
\setnum{lc.RB.s1.Clate}{43.40}
\setnum{lc.RB.s1.auc}{33.93}
\setnum{lc.RB.s1.caught5}{101}
\setnum{lc.RB.s1.h5}{22.3}
\setnum{lc.RB.s1.hit5}{45.9}
\setnum{lc.RB.s1.hlate}{36.33}
\setnum{lc.RB.s1.labels}{2451}
\setnum{lc.RB.s1.labels5}{220}
\setnum{lc.RB.s2.Clate}{38.87}
\setnum{lc.RB.s2.auc}{43.37}
\setnum{lc.RB.s2.caught5}{111}
\setnum{lc.RB.s2.h5}{26.6}
\setnum{lc.RB.s2.hit5}{45.5}
\setnum{lc.RB.s2.hlate}{51.68}
\setnum{lc.RB.s2.labels}{2714}
\setnum{lc.RB.s2.labels5}{244}
\setnum{lc.RB.s3.Clate}{32.58}
\setnum{lc.RB.s3.auc}{39.61}
\setnum{lc.RB.s3.caught5}{112}
\setnum{lc.RB.s3.h5}{25.7}
\setnum{lc.RB.s3.hit5}{43.9}
\setnum{lc.RB.s3.hlate}{45.00}
\setnum{lc.RB.s3.labels}{2561}
\setnum{lc.RB.s3.labels5}{255}
\setnum{lc.advIPW.dClate}{\ensuremath{+}0.0}
\setnum{lc.advIPW.dClate.abs}{0.0}
\setnum{lc.advIPW.dClate.dn}{2.5}
\setnum{lc.advIPW.dClate.hi}{\ensuremath{+}2.4}
\setnum{lc.advIPW.dClate.hi.abs}{2.4}
\setnum{lc.advIPW.dClate.lo}{\ensuremath{-}2.5}
\setnum{lc.advIPW.dClate.lo.abs}{2.5}
\setnum{lc.advIPW.dClate.up}{2.4}
\setnum{lc.advIPW.dauc}{\ensuremath{-}1.1}
\setnum{lc.advIPW.dauc.abs}{1.1}
\setnum{lc.advIPW.dauc.dn}{0.5}
\setnum{lc.advIPW.dauc.hi}{\ensuremath{-}0.7}
\setnum{lc.advIPW.dauc.hi.abs}{0.7}
\setnum{lc.advIPW.dauc.lo}{\ensuremath{-}1.6}
\setnum{lc.advIPW.dauc.lo.abs}{1.6}
\setnum{lc.advIPW.dauc.up}{0.4}
\setnum{lc.advIPW.hoC}{\ensuremath{-}1.0}
\setnum{lc.advIPW.hoC.abs}{1.0}
\setnum{lc.advIPW.hoC.dn}{1.4}
\setnum{lc.advIPW.hoC.hi}{\ensuremath{+}0.4}
\setnum{lc.advIPW.hoC.hi.abs}{0.4}
\setnum{lc.advIPW.hoC.lo}{\ensuremath{-}2.4}
\setnum{lc.advIPW.hoC.lo.abs}{2.4}
\setnum{lc.advIPW.hoC.up}{1.4}
\setnum{lc.advIPW.labels}{1470}
\setnum{lc.advIPW.labels.LB}{1385}
\setnum{lc.advIPW.s1.dClate}{\ensuremath{+}0.0}
\setnum{lc.advIPW.s1.dClate.abs}{0.0}
\setnum{lc.advIPW.s1.dClate.dn}{2.5}
\setnum{lc.advIPW.s1.dClate.hi}{\ensuremath{+}2.4}
\setnum{lc.advIPW.s1.dClate.hi.abs}{2.4}
\setnum{lc.advIPW.s1.dClate.lo}{\ensuremath{-}2.5}
\setnum{lc.advIPW.s1.dClate.lo.abs}{2.5}
\setnum{lc.advIPW.s1.dClate.up}{2.4}
\setnum{lc.advIPW.s1.dauc}{\ensuremath{-}1.1}
\setnum{lc.advIPW.s1.dauc.abs}{1.1}
\setnum{lc.advIPW.s1.dauc.dn}{0.5}
\setnum{lc.advIPW.s1.dauc.hi}{\ensuremath{-}0.7}
\setnum{lc.advIPW.s1.dauc.hi.abs}{0.7}
\setnum{lc.advIPW.s1.dauc.lo}{\ensuremath{-}1.6}
\setnum{lc.advIPW.s1.dauc.lo.abs}{1.6}
\setnum{lc.advIPW.s1.dauc.up}{0.4}
\setnum{lc.advIPW.s1.hlate}{0.12}
\setnum{lc.advIPW.s1.hoC}{\ensuremath{-}1.0}
\setnum{lc.advIPW.s1.hoC.abs}{1.0}
\setnum{lc.advIPW.s1.hoC.dn}{1.4}
\setnum{lc.advIPW.s1.hoC.hi}{\ensuremath{+}0.4}
\setnum{lc.advIPW.s1.hoC.hi.abs}{0.4}
\setnum{lc.advIPW.s1.hoC.lo}{\ensuremath{-}2.4}
\setnum{lc.advIPW.s1.hoC.lo.abs}{2.4}
\setnum{lc.advIPW.s1.hoC.up}{1.4}
\setnum{lc.advIPW.s1.labels}{1470}
\setnum{lc.advIPW.s1.labels.LB}{1385}
\setnum{lc.advIPW.s2.dClate}{\ensuremath{+}2.5}
\setnum{lc.advIPW.s2.dClate.abs}{2.5}
\setnum{lc.advIPW.s2.dClate.dn}{2.2}
\setnum{lc.advIPW.s2.dClate.hi}{\ensuremath{+}4.8}
\setnum{lc.advIPW.s2.dClate.hi.abs}{4.8}
\setnum{lc.advIPW.s2.dClate.lo}{\ensuremath{+}0.3}
\setnum{lc.advIPW.s2.dClate.lo.abs}{0.3}
\setnum{lc.advIPW.s2.dClate.up}{2.3}
\setnum{lc.advIPW.s2.dauc}{\ensuremath{+}5.2}
\setnum{lc.advIPW.s2.dauc.abs}{5.2}
\setnum{lc.advIPW.s2.dauc.dn}{0.8}
\setnum{lc.advIPW.s2.dauc.hi}{\ensuremath{+}6.1}
\setnum{lc.advIPW.s2.dauc.hi.abs}{6.1}
\setnum{lc.advIPW.s2.dauc.lo}{\ensuremath{+}4.4}
\setnum{lc.advIPW.s2.dauc.lo.abs}{4.4}
\setnum{lc.advIPW.s2.dauc.up}{0.9}
\setnum{lc.advIPW.s2.hlate}{3.44}
\setnum{lc.advIPW.s2.hoC}{\ensuremath{-}0.8}
\setnum{lc.advIPW.s2.hoC.abs}{0.8}
\setnum{lc.advIPW.s2.hoC.dn}{1.1}
\setnum{lc.advIPW.s2.hoC.hi}{\ensuremath{+}0.3}
\setnum{lc.advIPW.s2.hoC.hi.abs}{0.3}
\setnum{lc.advIPW.s2.hoC.lo}{\ensuremath{-}1.9}
\setnum{lc.advIPW.s2.hoC.lo.abs}{1.9}
\setnum{lc.advIPW.s2.hoC.up}{1.1}
\setnum{lc.advIPW.s2.labels}{1435}
\setnum{lc.advIPW.s2.labels.LB}{1314}
\setnum{lc.advIPW.s3.dClate}{\ensuremath{-}2.2}
\setnum{lc.advIPW.s3.dClate.abs}{2.2}
\setnum{lc.advIPW.s3.dClate.dn}{2.3}
\setnum{lc.advIPW.s3.dClate.hi}{\ensuremath{-}0.0}
\setnum{lc.advIPW.s3.dClate.hi.abs}{0.0}
\setnum{lc.advIPW.s3.dClate.lo}{\ensuremath{-}4.5}
\setnum{lc.advIPW.s3.dClate.lo.abs}{4.5}
\setnum{lc.advIPW.s3.dClate.up}{2.2}
\setnum{lc.advIPW.s3.dauc}{\ensuremath{-}1.5}
\setnum{lc.advIPW.s3.dauc.abs}{1.5}
\setnum{lc.advIPW.s3.dauc.dn}{0.4}
\setnum{lc.advIPW.s3.dauc.hi}{\ensuremath{-}1.1}
\setnum{lc.advIPW.s3.dauc.hi.abs}{1.1}
\setnum{lc.advIPW.s3.dauc.lo}{\ensuremath{-}1.9}
\setnum{lc.advIPW.s3.dauc.lo.abs}{1.9}
\setnum{lc.advIPW.s3.dauc.up}{0.4}
\setnum{lc.advIPW.s3.hlate}{0.16}
\setnum{lc.advIPW.s3.hoC}{\ensuremath{+}0.2}
\setnum{lc.advIPW.s3.hoC.abs}{0.2}
\setnum{lc.advIPW.s3.hoC.dn}{1.2}
\setnum{lc.advIPW.s3.hoC.hi}{\ensuremath{+}1.4}
\setnum{lc.advIPW.s3.hoC.hi.abs}{1.4}
\setnum{lc.advIPW.s3.hoC.lo}{\ensuremath{-}1.0}
\setnum{lc.advIPW.s3.hoC.lo.abs}{1.0}
\setnum{lc.advIPW.s3.hoC.up}{1.2}
\setnum{lc.advIPW.s3.labels}{1394}
\setnum{lc.advIPW.s3.labels.LB}{1359}
\setnum{lc.advIPWp10.hlate}{0.82}
\setnum{lc.advIPWp10.labels}{540}
\setnum{lc.start.h}{24.2}
\setnum{unit.c0.EG}{0.1161}
\setnum{unit.c0.EI}{0.1316}
\setnum{unit.c115.EG}{0.0333}
\setnum{unit.c115.EI}{\ensuremath{-}0.1494}

\newcommand{\E}{\mathbb{E}}
\newcommand{\LB}{\textsc{LB}}
\newcommand{\RB}{\textsc{RB}}
\newcommand{\IPW}{\textsc{IPW}}
\newcommand{\FULL}{\textsc{Full}}
\newcommand{\NONE}{\textsc{None}}
\newcommand{\PAC}{\textsc{PAC}}
\newcommand{\pp}{\,pp}

\title{Audit Where Hacks Pay: Budgeted Strong-Verifier Audits\\ Against Reward Hacking in GRPO}
\author{%
Shurun Li\textsuperscript{1} \qquad Michael Hua Wang\textsuperscript{1} \qquad Haibo Zeng\textsuperscript{2} \qquad Long Wang\textsuperscript{1}\\[6pt]
\normalsize\textsuperscript{1}REASONS Lab, Institute for Network Sciences and Cyberspace, Tsinghua University\\
\normalsize\textsuperscript{2}Virginia Tech\\[4pt]
\small\texttt{\{lisr22,\,wanghua24\}@mails.tsinghua.edu.cn}, \texttt{hbzeng@vt.edu}, \texttt{longwang@tsinghua.edu.cn}%
}
\date{Preprint, October 2026}

\begin{document}
\maketitle

\begin{abstract}
Reinforcement learning with verifiable rewards trains on cheap checks, such as unit tests, that a policy can learn to game, while a stronger check is too expensive to run on every rollout. We study how to spend a limited strong-verifier budget in GRPO, separating which weakly passing rollouts to audit from what reward a caught hack receives. At a fixed policy state with centered advantages, the expected update depends on a design only through each rollout's marginal audit probability and correction; reflecting a caught hack's reward about the failure reward negates the advantages of an all-hack group; and auditing whole groups with the fewest weak passes first (low-pass budgeted audits, \LB{}) minimizes the advantage mass left on hacks among pass-count designs with a fixed correction when a label's value falls with the pass count. We tested this analysis in actual LLM training: from each run's starting state, the sign of this mass on a frozen panel predicted whether hacking would be suppressed in 33 of 39 runs, including 13 of 16 preregistered ones. In LeetCode (Wong et al., 2026) and Countdown-Code (Khalifa et al., 2026), \LB{} drives hacking to near zero, while random audits with the same penalty and budget do not and use more labels. We also find that the randomness of the update affects training: in Countdown-Code, two unbiased corrections, per-item reward-level IPW and a group-level advantage correction, bring hacking down too but end \nm{ho3.cc.LBIPW.min.C.abs}--\nm{ho3.cc.LBIPW.max.C.abs} and \nm{ho.cc.LBadvIPW.min.dC}--\nm{ho.cc.LBadvIPW.max.dC} points lower in held-out accuracy than \LB{} at the same budget. A full audit, which the group-level correction reproduces in expectation, stays within about two points of \LB{} in three seeds, while \LB{} uses \nm{cc.LBFULL.min.labelratio}\% of its labels, and allocating the same correction by pass count recovers most of the loss (one seed). Training outcomes thus depend not only on the expected update but also on the randomness of its estimate.
\end{abstract}

\section{Introduction}
\label{sec:intro}

Reinforcement learning with verifiable rewards (RLVR) trains language models on automatic checks such as unit tests, and a policy can learn to pass a weak check without solving the task. In Countdown-Code it rewrites the target value in its own solution file so that the provided test passes \citep{khalifa2026countdown}; in the LeetCode environment of \citet{wong2026monitor} it defines its own \texttt{run\_tests} function, which the grader then executes. A stronger check, such as the full test suite or a semantic check of the answer, is usually available, but it costs too much to run on every rollout \citep{wang2025solved,sakib2026audit}.

This paper asks how to spend a fixed strong-verifier budget during group-relative policy optimization (GRPO; \citealp{shao2024deepseekmath}). An audit design makes two choices: which weakly passing rollouts to send to the strong verifier (\emph{selection}) and what reward a caught hack receives (\emph{correction}). Two defaults are natural: random audits with an inverse-propensity correction, which reproduce the full-audit reward in expectation \citep{horvitz1952generalization,moya2026verifier,sakib2026audit}, and giving a caught hack the reward its strong verdict implies. Neither accounts for how GRPO turns rewards into updates. A rollout's advantage depends on its siblings, so the same caught hack can push the policy strongly, weakly, or not at all.

We make three observations. First, without group standard-deviation normalization, the expected raw policy gradient depends on an audit design only through each rollout's marginal audit probability and correction value (\cref{lem:marginals}): coupling audits within groups changes the noise of an update, while selection and correction change its direction. Second, giving a caught hack the reward $2b-a$, the pass reward $a$ reflected about the failure reward $b$, exactly negates the advantages of a group whose weak passes are all hacks, whereas the strong-verdict reward $b$ only zeroes them (\cref{prop:reflect}, \cref{fig:theory}b). Third, an uncaught hack's advantage is largest when few siblings pass, so a label spent on a low-pass group buys the most correction (\cref{fig:theory}a). \emph{Low-pass budgeted audits} (\LB{}) therefore audit whole groups in increasing order of weak passes until a fixed fraction $p$ of the batch's weak passes has been checked in expectation (\cref{fig:main}a). Among whole-group designs that select by pass count and give caught hacks a fixed correction, \LB{} minimizes a frozen-state objective, the advantage mass left on hacks, whenever the value of a label does not increase with the pass count (\cref{thm:lb}).

\begin{figure}[t]
\centering
\includegraphics[width=\linewidth]{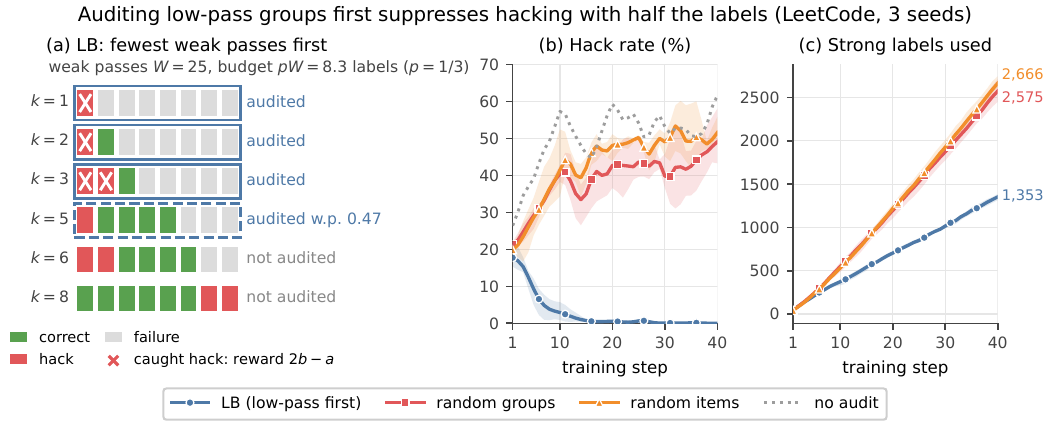}
\caption{Low-pass budgeted audits (\LB{}). (a) Groups of a batch sorted by their number of weak passes $k$: \LB{} audits whole groups, fewest weak passes first, until a fraction $p$ of the weak passes has been checked in expectation (the boundary group is audited with the remaining probability), and a caught hack receives the reflected penalty $2b-a$. (b) LeetCode hack rate with the same penalty and budget rule: \LB{}, auditing whole groups in random order (\RB{}), auditing single weak passes at random, and no audit; mean over three seeds with their range shaded, smoothed with a Gaussian kernel of one step. (c) Strong labels used: because the budget is a fraction of the weak passes, \LB{}'s spending falls as hacking is suppressed.}
\label{fig:main}
\end{figure}

We test these designs in two reward-hacking environments from prior work, LeetCode \citep{wong2026monitor} with LoRA training of Qwen3-4B and Countdown-Code \citep{khalifa2026countdown} with full fine-tuning of Qwen2.5-3B, starting from checkpoints where about a quarter of the training rollouts are hacks. Every comparison was preregistered. Our contributions follow three steps:
\begin{itemize}
\item \textbf{When the push on hacks is optimal} (\cref{sec:theory}). The three observations above characterize how an audit design changes the expected update at a frozen policy state, and \LB{} minimizes the advantage mass left on hacks among whole-group pass-count designs with a fixed correction whenever a label's value does not increase with the pass count (\cref{thm:lb}).
\item \textbf{Predicting training from the frozen push} (\cref{sec:results}). With the same penalty and budget rule, \LB{} drives LeetCode hacking to near zero within about ten steps in all three seeds, while auditing groups in random order leaves late hacking at \nm{lc.RB.s1.hlate}--\nm{lc.RB.s2.hlate}\% and auditing single weak passes at random at \nm{lc.IB.s1.hlate}--\nm{lc.IB.s3.hlate}\%, with \LB{} using about half their labels; random group order also fails in Countdown-Code (one seed; \cref{sec:res-select}). The correction value matters as much: the strong-verdict reward $c=b$ only removes the push on a caught all-hack group, and with it auditing the lowest-pass groups is no better than auditing random groups, whereas with $2b-a$ the same groups remove hacking with half the labels (\cref{sec:res-penalty}). From each run's starting state, the sign of the frozen hack mass $J$ predicted whether hacking would be suppressed in 33 of 39 runs, including 13 of 16 preregistered ones. The push stops predicting in two places: an online rule that improved $J$ on frozen panels trained worse than \LB{}, so the magnitude of $J$ does not rank designs, and with $c=b$ hacking spread beyond the low-pass groups (\cref{sec:boundaries}).
\item \textbf{Beyond the expected update, randomness affects training} (\cref{sec:res-ipw}). In Countdown-Code, per-item reward-level inverse-propensity weighting (\IPW{}) and a group-level advantage correction \citep{sakib2026audit} also suppress hacking, but in all three seeds end lower than \LB{} in training correctness (by \nm{cc.LBIPW.min.dClate.abs}--\nm{cc.LBIPW.max.dClate.abs} and \nm{cc.advIPWLB.min.dClate.abs}--\nm{cc.advIPWLB.max.dClate.abs}\pp{}) and in held-out accuracy (by \nm{ho3.cc.LBIPW.min.C.abs}--\nm{ho3.cc.LBIPW.max.C.abs} and \nm{ho.cc.LBadvIPW.min.dC}--\nm{ho.cc.LBadvIPW.max.dC}\pp{}); in LeetCode the group-level correction was close to \LB{}. A full audit, which every unbiased group-level design matches in expectation on a frozen batch, stayed within about two points of \LB{} on held-out problems in all three seeds while \LB{} used \nm{cc.LBFULL.min.labelratio}\% of its labels, and allocating the group-level correction by pass count closed most of the gap (one seed). Its loss therefore comes from the randomness of the estimate and the allocation of audits, not from the full-audit update it targets.
\end{itemize}

\paragraph{Relation to budgeted audit estimation.} \citet{sakib2026audit} study unbiased estimation of full-suite training signals on fixed software-engineering trajectories, including a direct correction of normalized group advantages, and target groups with more passes to reduce estimator variance. We study how selection and correction affect on-policy learning, and target groups with \emph{fewer} passes first, where an uncaught hack is reinforced the most. Our comparisons cover two unbiased corrections at matched budgets: per-item reward-level \IPW{} and the group-level advantage correction.

\section{Setup}
\label{sec:setup}

\paragraph{Weak and strong verifiers.} For each prompt GRPO samples a group of $K$ responses. The weak verifier gives reward $a$ to a response it accepts (a \emph{weak pass}), $b$ to a well-formed response it rejects, and $0$ to a malformed one, with $0\le b<a$. A strong verifier, run only on weak passes, decides whether a weak pass is genuine. Every strong pass is a weak pass, so responses fall into three classes: correct (C: weak and strong pass), hack (H: weak pass, strong fail), and failure (F: weak fail). In Countdown-Code $(a,b)=(1.2,0.2)$; in LeetCode $(a,b)=(3.5,0.5)$, which is 3.0 for passing the tests plus 0.5 for compiling. We assume the strong verifier is reliable and its errors are independent of the policy; \cref{sec:conclusion} discusses this assumption.

\paragraph{Group-relative advantages.} With rewards $r_1,\dots,r_K$, GRPO uses $A_i=(r_i-\bar r)/(\sigma+\epsilon)$, where $\bar r$ and $\sigma$ are the group mean and unbiased standard deviation (``group std'') and $A_i=0$ when $\sigma=\epsilon=0$, or the centered $A_i=r_i-\bar r$ (``no std''; \citealp{liu2025drgrpo}). In both of our environments, training uses a clipped surrogate \citep{schulman2017ppo} with token-mean aggregation \citep{yu2025dapo}. To first order at the sampling policy, a batch moves the parameters along $g=\sum_i A_i\nabla_\theta\log\pi_\theta(y_i\mid x_i)$, up to positive per-response weights that depend only on the batch.

\paragraph{Audit designs.} An audit design selects a set $M$ of weak passes for the strong verifier and replaces the reward of each caught hack $i\in M\cap H$ by a correction value. The budget is a fraction $p$ of the batch's $W$ weak passes in expectation, $\E|M|=pW$. With a fixed $p$ the spend follows $W$, the number of correct responses plus hacks, so it falls only when weak passes fall. We compare the following selection rules.
\begin{itemize}
\item \LB{} (low-pass budgeted): groups with at least one weak pass are sorted by their number of weak passes $k$, with ties broken at random. Whole groups are audited in this order while they fit in the remaining budget; the first group that does not fit is audited with probability equal to the remaining budget divided by its $k$. The expected cost is exactly $pW$.
\item \RB{} (random budgeted): the same rule with the groups in random order.
\item Random items: each weak pass is audited independently with probability $p$, and a caught hack receives the correction $c$ itself; the expected cost matches \RB{}'s, but audits within a group are not coupled.
\item \IPW{} (per item): each weak pass is audited independently with probability $p$, and a caught hack receives $a+(c-a)/p$, so its expected reward is $c$: the design is unbiased for a full audit with correction $c$ at the reward level, and with centered advantages also for the raw gradient (\cref{lem:marginals}) \citep{horvitz1952generalization}.
\item \FULL{} audits every weak pass; \NONE{} audits nothing. Two fixed-count variants used in LeetCode audit the 8 groups with the fewest weak passes (\textsc{Low}) or 8 random groups with at least one weak pass (\textsc{Pos}) at each step.
\end{itemize}
The correction value is either the strong-verdict reward $c=b$, which treats a caught hack like any well-formed failure, or the \emph{reflected penalty} $c=2b-a$: $-0.8$ in Countdown-Code and $-2.5$ in LeetCode. For \IPW{}, the reflected target gives a caught hack $1.2+(-0.8-1.2)/0.28=-5.94$ in Countdown-Code.

\paragraph{Metrics.} For measurement only, the strong verifier labels every training rollout. The \emph{hack rate} $h_t$ is the fraction of rollouts at step $t$ in class H, and correctness $C_t$ the fraction in class C. We report $\mathrm{AUC}(h)$, the mean of $h_t$ over the 40 training steps; the hack rate, correctness and pass@$K$ over the late window of steps 31--40; the number of strong labels used for training (audited weak passes, summed over training); and the number of generated tokens. Trained checkpoints from the main comparisons and key ablations are also evaluated on held-out test problems. Unless stated otherwise, differences are in percentage points (pp), first arm minus second.

\section{How audit designs change the update}
\label{sec:theory}

We analyze a single training step at the sampling policy: the batch, its weak rewards and its classes are fixed, and the audit design is the only source of randomness (\cref{thm:lb,prop:pure} also average over group compositions given pass counts). Proofs and further results are in \cref{app:proofs,app:theory}.

\subsection{Selection and correction change the direction; coupling changes the noise}

Let $m_i\in\{0,1\}$ indicate that weak pass $i$ is audited, with marginal probability $\pi_i=\Pr(m_i=1)$ given the batch, and let $c_i$ be the reward a caught hack receives ($c_i=c$ for a fixed penalty, $c_i=a+(c-a)/\pi_i$ for \IPW{}), fixed given the batch. The post-audit reward is $r_i=r^w_i+m_i\,\mathbf 1[i\in H]\,(c_i-a)$.

\begin{lemma}[Only marginals matter without std normalization]
\label{lem:marginals}
With centered advantages $A_i=r_i-\bar r$, the expected raw policy gradient at a fixed state, $\E[g]=\sum_i\E[A_i]\,\nabla_\theta\log\pi_\theta(y_i\mid x_i)$, depends on the design only through $\pi_i(c_i-a)$ for each hack $i$. In particular, auditing whole groups and auditing items independently with the same marginals and the same correction give the same $\E[g]$, and \IPW{} with target $c$ gives the same $\E[g]$ as auditing every weak pass with correction $c$.
\end{lemma}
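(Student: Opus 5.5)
The plan is to exploit linearity of the centered advantage in the rewards. Fix the batch, its weak rewards $r^w_i$, its classes, and the score vectors $\nabla_\theta\log\pi_\theta(y_i\mid x_i)$; these are all deterministic given the state, so the only randomness is in the audit indicators $m_i$. Since $\nabla_\theta\log\pi_\theta(y_i\mid x_i)$ does not depend on $m$, we have $\E[g]=\sum_i\E[A_i]\nabla_\theta\log\pi_\theta(y_i\mid x_i)$, and it suffices to show that each $\E[A_i]$ depends on the design only through the products $\pi_j(c_j-a)$ for hacks $j$.

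First, write the centered advantage of response $i$ in group $G$ as $A_i=r_i-\frac1K\sum_{j\in G}r_j$. This is a fixed linear map applied to the reward vector. Substitute the post-audit reward $r_j=r^w_j+m_j\mathbf 1[j\in H](c_j-a)$ and take expectations. Because $c_j$ is fixed given the batch, even for \IPW{} where it depends only on $\pi_j$, we get $\E[m_j(c_j-a)]=\pi_j(c_j-a)$. This gives
\[
\E[A_i]=\Bigl(r^w_i-\tfrac1K\sum_{j\in G}r^w_j\Bigr)+\mathbf 1[i\in H]\,\pi_i(c_i-a)-\tfrac1K\sum_{j\in G\cap H}\pi_j(c_j-a).
\]
The first term is design-independent, and the rest involves only the quantities $\pi_j(c_j-a)$, which proves the main claim. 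The key point is that no products $m_jm_l$ appear, because $A$ is linear and there is no division by $\sigma$. So the joint law of $m$, and in particular any within-group coupling, never enters.

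The two corollaries are then substitutions. A whole-group design and an independent-item design with the same marginals $\pi_j$ and the same $c$ produce identical quantities $\pi_j(c-a)$, hence the same $\E[g]$. For \IPW{}, $c_j=a+(c-a)/\pi_j$ gives $\pi_j(c_j-a)=c-a$. That equals $1\cdot(c-a)$, the value for a full audit with $\pi_j=1$ and correction $c$.

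There is no serious obstacle here; the main care point is bookkeeping. I need to confirm that $c_j$ is measurable with respect to the batch and not the realized audit set, so that $\E[m_jc_j]=\pi_jc_j$ holds. I also need to handle items with $\pi_j=0$ under \IPW{}: these are excluded, or \IPW{} requires $\pi_j>0$, as in the design where $\pi_j=p$. Finally, I would remark why the statement needs centered advantages. With group-std normalization, $A_i$ is a nonlinear function of the rewards, so $\E[A_i]$ involves the joint distribution of the $m_j$ within a group, and the reduction fails.
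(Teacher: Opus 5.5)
Your proof is correct and follows the paper's argument: substitute $r_j=r^w_j+m_j\mathbf 1[j\in H](c_j-a)$ with $c_j$ fixed given the batch, push the expectation through the linear centered advantage and the fixed score vectors, and get the corollaries by substituting equal marginals and, for \IPW{}, $\pi_j(c_j-a)=c-a$. Your remark that no products $m_jm_l$ appear is the paper's observation that no independence between audit decisions is needed.
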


The lemma follows from linearity of the reward correction, the group mean and the sum over responses. With centered advantages, the unit of audit therefore affects only the noise of the raw gradient, which is what \citet{sakib2026audit} measure, while its expectation changes only through \emph{which} responses are likely to be audited and \emph{what} a caught hack receives. Group std normalization breaks the lemma (\cref{sec:theory-ipw,app:unit}); adaptive optimizers and gradient clipping break its corollary for parameter steps, which are nonlinear in $g$.

\subsection{The reflected penalty negates a two-valued group}

\begin{figure}[t]
\centering
\includegraphics[width=0.66\linewidth]{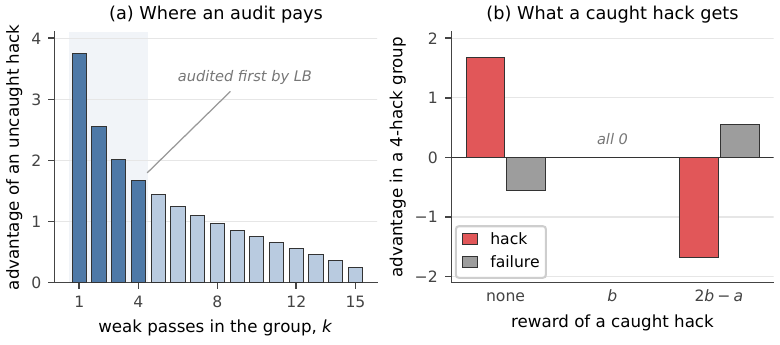}
\caption{The two mechanisms behind \LB{}, for a group of $K=16$ responses with LeetCode rewards ($a=3.5$, $b=0.5$, normalized advantages). (a) The advantage of an uncaught hack among $k$ weak passes, $\sqrt{(K-1)(K-k)/(Kk)}$, is largest when few siblings pass (\cref{prop:pure}). (b) In a group with four hacks and twelve failures, auditing every weak pass with the strong-verdict reward $b$ zeroes all advantages, while the reflected penalty $2b-a$ negates them (\cref{prop:reflect}).}
\label{fig:theory}
\end{figure}

\begin{proposition}[Reflection]
\label{prop:reflect}
Consider a group whose $k$ weak passes, $1\le k<K$, are all hacks and whose other responses all have reward $b$, and suppose every weak pass is audited. If caught hacks receive $c=2b-a$, the post-audit rewards are $r'_i=2b-r^w_i$, so the advantages are exactly negated, $A'_i=-A^w_i$, with centered advantages and with std normalization for any $\epsilon$ in the denominator. For a general $c<b$, let $\lambda=(b-c)/(a-b)$ and let $\sigma$ be the group's standard deviation before the audit; then $A'_i=-\lambda A^w_i$ with centered advantages and $A'_i=-\frac{\lambda(\sigma+\epsilon)}{\lambda\sigma+\epsilon}A^w_i$ with std normalization. If $c=b$, all rewards in the group equal $b$ and every advantage is zero.
\end{proposition}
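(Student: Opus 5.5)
The proof is a direct computation on a two-valued group. The idea is to write the post-audit rewards as an affine function of the pre-audit ones and then use how centering and std normalization transform under affine maps. Before the audit, the group has $k$ rewards equal to $a$ (the hacks) and $K-k$ rewards equal to $b$. Since every weak pass is audited and all of them are hacks, each caught hack's reward is replaced by $c$ and every other reward stays at $b$.

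First I will show that the map $r\mapsto r'$ on the two values $\{a,b\}$ is the restriction of an affine map $\phi(r)=b-\lambda(r-b)$ with $\lambda=(b-c)/(a-b)$. Indeed $\phi(b)=b$ and $\phi(a)=b-(b-c)=c$. For $c=2b-a$ we get $\lambda=1$ and $\phi(r)=2b-r$, which is the stated $r'_i=2b-r^w_i$. Because each $r'_i$ is the same affine function of $r^w_i$, the group mean transforms as $\bar r'=\phi(\bar r^w)$. The centered advantages are then $A'_i=r'_i-\bar r'=-\lambda(r^w_i-\bar r^w)=-\lambda A^w_i$, which settles the centered case for all $c$ and gives exact negation when $\lambda=1$.

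For std normalization, I will note that the unbiased standard deviation scales by $|\lambda|$. Since $c<b$ we have $\lambda>0$, so $\sigma'=\lambda\sigma$ and
\[
A'_i=\frac{-\lambda A^w_{i,\mathrm{cent}}}{\lambda\sigma+\epsilon}=-\frac{\lambda(\sigma+\epsilon)}{\lambda\sigma+\epsilon}\,A^w_i ,
\]
where $A^w_i=A^w_{i,\mathrm{cent}}/(\sigma+\epsilon)$. With $\lambda=1$ the factor equals $1$ for every $\epsilon$, which gives exact negation. I also need $\sigma>0$ so that the convention $A_i=0$ when $\sigma=\epsilon=0$ does not intervene. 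This holds because $1\le k<K$ and $a\ne b$, so the group takes both values.

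Finally, when $c=b$ we have $\lambda=0$: every reward equals $b$, both $\sigma'$ and the centered advantages vanish, and all advantages are zero under either convention. This includes the degenerate case $\epsilon=0$, where the stated convention sets $A'_i=0$. The only delicate point I expect is the bookkeeping of the $\epsilon$ term and of this zero-variance convention. The rest is routine linearity.
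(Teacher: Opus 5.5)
Your proof is correct and follows the paper's argument: both show that the post-audit deviations from the group mean are $-\lambda$ times the pre-audit ones, so the standard deviation scales by $\lambda>0$ and the stated factor follows, using $\sigma>0$ from $1\le k<K$. The only difference is packaging. You obtain the deviation scaling from the affine map $\phi(r)=b-\lambda(r-b)$, while the paper computes the two deviation values $(a-b)(K-k)/K$ and $-(a-b)k/K$ explicitly.
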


So the reflected penalty is the only fixed correction that turns the push toward the hacks in such a group into an equal and opposite push under both advantage forms; with std normalization and $\epsilon=0$, every $c<b$ does so. In LeetCode, which uses std normalization, \LB{} with penalty $-0.25$ instead of $-2.5$ also suppressed hacking (one seed; \cref{app:prereg}). The strong-verdict reward only removes the push, and a partially discounted reward $b<c<a$, as in detector-based reward shaping, leaves the hacks with positive advantages \citep{hackfree2026}. Whenever $2b-a\le0$, as in both of our environments, a caught hack has the lowest reward in any group and hence a non-positive advantage. In a mixed group, the group-adaptive correction $2\mu_N-a$, with $\mu_N$ the mean reward of the other responses, negates every hack's advantage and keeps the other contrasts, but needs the strong labels of the whole group (\cref{prop:adaptive}).

\subsection{The value of a label and low-pass budgeted audits}

Consider whole-group designs at a frozen state. A design audits each group with $k$ weak passes with some probability, at an expected cost of $k$ labels per audited group. The objective is the expected advantage mass on hacks, $\E[J]$ with $J=\sum_{\text{groups}}\tfrac1K\sum_{i\in H}A_i$, which an audit design should make as negative as possible. Write $\mu_k^0$ and $\mu_k^1$ for the expected value of $\frac1K\sum_{i\in H}A_i$ in a $k$-pass group when it is not audited and when it is fully audited, and define the value per label
\begin{equation}
d_k=\frac{\mu_k^0-\mu_k^1}{k}.
\end{equation}

\begin{theorem}[Conditional optimality of \LB{}]
\label{thm:lb}
Fix the pass counts of the groups in a batch, suppose each group's composition is random with a distribution that depends only on its pass count and is independent of the design's randomness, and let $B$ be the expected label budget ($B=pW$). Consider whole-group designs that give every caught hack the same correction value, whose audit decisions depend only on the pass counts and on independent randomness, with expected cost at most $B$. If $d_k\ge0$ for all $k$ and $d_k$ is non-increasing in $k$, then \LB{} minimizes $\E[J]$ among these designs. If instead $|d_k-\tilde d_k|\le\gamma$ for some non-negative, non-increasing sequence $\tilde d$, \LB{} is within $2B\gamma$ of the optimum.
\end{theorem}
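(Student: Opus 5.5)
The argument reduces \cref{thm:lb} to a fractional knapsack. First I linearize the objective. Since a design's audit decisions depend only on the pass counts and on randomness independent of group compositions, each group $g$ with $k_g$ weak passes is fully audited or not, with some marginal probability $q_g\in[0,1]$. The advantage mass $\frac1K\sum_{i\in H}A_i$ of a group depends only on that group's own rewards, because GRPO advantages are within-group. Under either advantage form, it is therefore a function of the group's composition and of whether the group is audited; a whole-group design audits all of a group or none of it, and every caught hack gets the same correction. Taking expectations over compositions, which are independent of the audit decision given $k_g$, gives
\[
\E[J]=\sum_g\bigl[(1-q_g)\mu^0_{k_g}+q_g\mu^1_{k_g}\bigr]=\sum_g\mu^0_{k_g}-\sum_g q_g\,k_g\,d_{k_g}.
\]
The expected cost is $\sum_g q_g k_g\le B$. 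Groups with $k_g=0$ contribute nothing and cost nothing. The first sum does not depend on the design, so minimizing $\E[J]$ is the same as maximizing $\sum_g x_g d_{k_g}$ over spends $x_g=q_gk_g\in[0,k_g]$ with $\sum_g x_g\le B$.

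This is a fractional knapsack in which every unit of budget spent on group $g$ returns $d_{k_g}\ge0$. The greedy rule fills groups in decreasing order of value per label, splitting only the last group, and it is optimal by the standard exchange argument. Suppose some feasible $x$ puts mass on a group with smaller $d$ while a group with larger $d$ is unfilled, or leaves budget unspent. Moving mass to the better group, or spending the leftover budget (allowed because $d\ge0$), does not decrease the objective. Since $d_k$ is non-increasing in $k$, the greedy order is increasing $k$, and ties in $k$ have equal $d$, so random tie-breaking is harmless. This is exactly \LB{}: whole groups in increasing $k$ until the budget binds, then the boundary group audited with probability equal to the remaining budget over $k$, giving total spend exactly $B=pW$. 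One point needs care. If $B$ exceeds the total number of weak passes, \LB{} audits everything, and that is still optimal.

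For the approximate statement, write $V_d(x)=\sum_g x_gd_{k_g}$ and $V_{\tilde d}(x)=\sum_g x_g\tilde d_{k_g}$. For any feasible $x$, $|V_d(x)-V_{\tilde d}(x)|\le\gamma\sum_g x_g\le\gamma B$. \LB{}'s allocation $x^{\LB{}}$ does not depend on $d$; it maximizes $V_{\tilde d}$ by the first part, since $\tilde d$ is non-negative and non-increasing. Hence for the true optimum $x^*$,
\[
V_d(x^*)\le V_{\tilde d}(x^*)+\gamma B\le V_{\tilde d}(x^{\LB{}})+\gamma B\le V_d(x^{\LB{}})+2\gamma B,
\]
which puts \LB{} within $2B\gamma$ of the optimal $\E[J]$.

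The main obstacle is the first step, justifying that $\E[J]$ is linear in the $q_g$ with coefficients $\mu^0_k,\mu^1_k$ depending only on $k$. Here I must state precisely that each group is all-or-nothing, that the correction is fixed (unlike \IPW{}, where it depends on $\pi_i$), and that compositions are independent of the design's randomness. The correlation between audits of different groups is irrelevant, because $J$ is additive across groups and advantages are within-group even with std normalization. Once that is in place, the rest is the knapsack exchange argument.
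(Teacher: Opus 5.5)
Your proposal is correct and follows the paper's proof. Both use the same reduction of $\E[J]$ to $\sum_g\mu^0_{k_g}-\sum_g x_g d_{k_g}$ with $x_g=q_gk_g$, the same fractional-knapsack view, and the same three-inequality chain for the $2B\gamma$ bound. The only difference is how greedy optimality is shown: you use an exchange argument, while the paper uses summation by parts, $V=\sum_j Y_j(\beta_j-\beta_{j+1})$, with non-negative coefficients and partial spends $Y_j\le\min(B,C_j)$ that \LB{} attains, which gives the comparison in one step without having to argue that repeated exchanges end at \LB{}'s allocation.
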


By linearity, $\E[J]$ falls by $d_k$ for each label spent on a $k$-pass group, so the problem is a fractional knapsack, and \LB{} spends labels in decreasing order of $d_k$. Any fixed correction $c\le a$ gives $d_k\ge0$ (\cref{prop:monotone}). The theorem compares whole-group designs that select by pass count alone and give caught hacks a fixed correction. Designs that read the prompts or the responses, audit part of a group, or rescale an audited group's correction fall outside this class; the group-level advantage correction of \citet{sakib2026audit} rescales and matches a full audit in expectation at any positive budget (\cref{prop:advipw}). With centered advantages and exchangeable compositions, the conclusion extends to all designs that use only the pass counts, including per-item and partial audits (\cref{thm:centered-lb}); with std normalization it does not (\cref{app:unit}). The monotonicity condition holds in a simple model.

\begin{proposition}[Pure groups]
\label{prop:pure}
Suppose that, for every $k$, the weak passes of a $k$-pass group are all hacks with probability $\rho_k$ and all correct otherwise, other responses have reward $b$, advantages use the unbiased group std with $\epsilon=0$, and caught hacks receive $2b-a$. Then $d_K=0$ and, for $1\le k<K$,
\begin{equation}
d_k=2\rho_k\sqrt{\frac{(K-1)(K-k)}{K^3\,k}},
\end{equation}
which is non-increasing in $k$ whenever $\rho_k$ is.
\end{proposition}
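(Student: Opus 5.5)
We compute $d_k$ directly from the definitions of $\mu_k^0$ and $\mu_k^1$, conditioning on whether the $k$ weak passes are all hacks (probability $\rho_k$) or all correct. If they are all correct, the group contains no hacks, so $\frac1K\sum_{i\in H}A_i=0$ whether or not the group is audited. Such outcomes contribute nothing to either $\mu_k^0$ or $\mu_k^1$. Hence $d_k=\rho_k\,(\nu_k^0-\nu_k^1)/k$, where $\nu_k^0$ and $\nu_k^1$ are the hack advantage masses of an all-hack $k$-pass group before and after a full audit.

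For the unaudited all-hack group, rewards are $a$ on $k$ responses and $b$ on $K-k$ responses. The mean is $\bar r=b+\tfrac{k}{K}(a-b)$, so a weak pass has centered advantage $(a-b)(K-k)/K$. The unbiased variance is $(a-b)^2\,k(K-k)/(K(K-1))$. Dividing gives the normalized advantage $\sqrt{(K-1)(K-k)/(Kk)}$ of \cref{fig:theory}a, valid for $1\le k<K$ with $\epsilon=0$. Summing over the $k$ hacks and dividing by $K$ yields
\[
\nu_k^0=\frac{k}{K}\sqrt{\frac{(K-1)(K-k)}{Kk}}=\sqrt{\frac{(K-1)(K-k)k}{K^3}}.
\]

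For the fully audited group, \cref{prop:reflect} applies: $c=2b-a$ exactly negates every advantage under std normalization with any $\epsilon$, and in particular $\epsilon=0$. So $\nu_k^1=-\nu_k^0$, and therefore
\[
d_k=\frac{2\rho_k\,\nu_k^0}{k}=2\rho_k\sqrt{\frac{(K-1)(K-k)}{K^3 k}}.
\]

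The case $k=K$ needs separate treatment and is the only delicate point. With $\epsilon=0$, an all-pass group has $\sigma=0$, so by the convention in \cref{sec:setup} all advantages are zero and $\nu_K^0=0$. After an audit of an all-hack group, every reward equals $2b-a$, so again $\sigma=0$ and $\nu_K^1=0$. An all-correct $K$-group contributes zero in both cases. Hence $d_K=0$. This is consistent with the formula, which also vanishes at $k=K$.

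Monotonicity follows from the factorization. The factor $\sqrt{(K-k)/k}$ is positive and strictly decreasing on $1\le k<K$, and it reaches $0$ at $k=K$. If $\rho_k\ge0$ is non-increasing, then $d_k$ is a product of two non-negative, non-increasing sequences and is therefore non-increasing on $\{1,\dots,K\}$. The main step to verify carefully is the variance computation and the $\sigma=0$ conventions; the rest is bookkeeping.
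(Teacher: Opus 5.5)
Your proof is correct and follows the paper's own argument essentially step for step. Like the paper, you condition on the all-hack versus all-correct composition, compute the normalized pass advantage $\sqrt{(K-1)(K-k)/(Kk)}$, use \cref{prop:reflect} to negate the all-hack contribution under a full audit, get $d_K=0$ from the constant-reward (zero-advantage) convention, and conclude monotonicity as a product of non-negative, non-increasing factors.
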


The factor $\sqrt{(K-k)/k}$ is proportional to the advantage of a weak pass among $k$ passes (\cref{fig:theory}a): a hack that shares its group with few other passes is reinforced the most, and a label that reverses it buys the most. On 72 frozen-state panels from both environments, where groups mix hacks and correct answers and, with malformed responses, a pass's advantage need not fall with $k$, \LB{} attained the optimum over the designs of \cref{thm:lb} in 123 of the 144 cases (each panel scored by two objectives) and came within $4.2\times10^{-4}$ of it in the rest (\cref{app:panels}). Designs that audit only part of a group can do better on these objectives, but such a design did worse than \LB{} in real training (\cref{sec:boundaries}).

\subsection{Unbiased per-item corrections under group std normalization}
\label{sec:theory-ipw}

Without std normalization, \cref{lem:marginals} makes the raw gradient of reward-level per-item \IPW{} equal in expectation to that of a full audit. With std normalization it is not, because the reward correction $(c-a)/p$ is large and lands on a single response before normalization.

\begin{proposition}[Saturation]
\label{prop:ipwstd}
Fix a group in which exactly one response receives reward $R$ and the other $K-1$ rewards lie in $[0,a]$. As $R\to-\infty$, the std-normalized advantage of that response tends to $-(K-1)/\sqrt K$ and every other advantage tends to $+1/\sqrt K$.
\end{proposition}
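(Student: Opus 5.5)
The plan is to write the group's rewards as an affine perturbation of a constant vector and take the limit directly in the formula for the std-normalized advantage. Index the special response as $1$ with reward $R$, and let the others have rewards $r_2,\dots,r_K\in[0,a]$. The others stay bounded, so as $R\to-\infty$ the deviations $r_j-\bar r$ and $R-\bar r$ are dominated by their $R$-dependence. Concretely, the group mean is $\bar r=(R+S)/K$ with $S=\sum_{j\ge2}r_j$. This gives
\[
R-\bar r=\tfrac{K-1}{K}R-\tfrac{S}{K}
\qquad\text{and}\qquad
r_j-\bar r=r_j-\tfrac{S}{K}-\tfrac{R}{K}\quad (j\ge 2).
\]

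First compute the unbiased variance. Since the total sum of squared deviations equals $\sum_i r_i^2-K\bar r^2$, expanding in $R$ shows the leading term is $R^2(1-1/K)=R^2(K-1)/K$, with the remaining terms $O(|R|)$ uniformly in the bounded $r_j$. Hence
\[
\sigma^2=\frac{1}{K-1}\Bigl(\frac{K-1}{K}R^2+O(|R|)\Bigr)=\frac{R^2}{K}\bigl(1+O(1/|R|)\bigr),
\]
so $\sigma=\frac{|R|}{\sqrt K}(1+o(1))$ and $\sigma\to\infty$. Any fixed $\epsilon\ge0$ in the denominator is therefore negligible.

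Next take ratios, using $|R|=-R$ for $R<0$. For the special response,
\[
A_1=\frac{(K-1)R/K+O(1)}{|R|/\sqrt K\,(1+o(1))}\to-\frac{K-1}{K}\sqrt K=-\frac{K-1}{\sqrt K}.
\]
For each other response, $r_j-\bar r=-R/K+O(1)$, so $A_j\to\frac{1}{K}\sqrt K=1/\sqrt K$. As a consistency check, these limits sum to zero, as the advantages must.

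The only point needing care is making the $O(\cdot)$ terms uniform. This follows because $r_j\in[0,a]$ gives $|S|\le(K-1)a$ and $\sum r_j^2\le(K-1)a^2$, so every error term is bounded by a constant depending only on $K$ and $a$. I do not expect a real obstacle; the main bookkeeping step is the variance expansion. The consequence to note afterwards is that the limit does not depend on the IPW magnitude $(c-a)/p$: normalization caps the push on the caught hack, and mean subtraction redistributes the remainder as a uniform positive push onto its siblings.
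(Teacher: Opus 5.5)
Your proposal is correct and follows essentially the same route as the paper. Both arguments write $\bar r=(R+S)/K$, expand the deviations as $\frac{K-1}{K}R+O(1)$ and $-\frac{R}{K}+O(1)$, get $\sigma^2=\frac{R^2}{K}(1+o(1))$ so that the fixed $\epsilon$ drops out, and take ratios; the only differences are cosmetic: you get the variance from $\sum_i r_i^2-K\bar r^2$ instead of summing the squared deviations directly, and you spell out the uniformity in the bounded $r_j$ that the paper leaves implicit.
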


A normalized advantage never falls below $-(K-1)/\sqrt K$ (\cref{lem:samuelson}), so under \IPW{} with small $p$ a caught hack receives a bounded negative advantage, whatever the size of the correction, while every other response in its group, including uncaught hacks, receives nearly the same positive advantage. In a Countdown-Code group with four hacks and four failures at our budget, the expected hack mass under \IPW{} is $+0.016$, against $-0.468$ under a full audit and $+0.468$ without one, and as $p\to0$ it tends to the unaudited value (\cref{prop:ipw-small}). Rescaling the post-normalization advantage change of a whole audited group by $1/q_g$, by contrast, stays unbiased (\cref{prop:advipw}).

\section{Experimental setup}
\label{sec:experiments}

\paragraph{LeetCode (Qwen3-4B, LoRA).} We use the training code and data of \citet{wong2026monitor}, built on verl \citep{sheng2024hybridflow}. Each prompt is a programming problem; the weak verifier runs either the problem's tests or a \texttt{run\_tests} function supplied by the response, and a \emph{hack} passes the weak verifier but fails the problem's real test suite, which serves as the strong verifier. We keep their batch shape and model (16 prompts with 16 responses per step, LoRA of rank 32 on Qwen3-4B \citep{qwen2025qwen3}, group std normalization) and use a constant learning rate of $1.75\times10^{-5}$, because pilot arms with a larger rate became unstable within 40 steps (\cref{tab:config}). All arms start from the step-80 adapter of an unaudited run, where \nm{lc.start.h}\% of training rollouts are hacks, and train for 40 steps with a fresh optimizer state. The budget is $p=0.33$ of the weak passes and the reflected penalty is $2b-a=-2.5$.

\paragraph{Countdown-Code (Qwen2.5-3B, full fine-tuning).} In Countdown-Code \citep{khalifa2026countdown} the model writes a solution file with an arithmetic expression and the target, and a test file; the weak reward is 1.2 if the test passes, 0.2 for well-formed output and 0 otherwise. The dominant hack replaces the target in the solution file by the value of an expression that uses the given numbers; the strong verifier checks the expression literal against the original numbers and target. We train Qwen2.5-3B-Instruct \citep{qwen2025qwen25} with full-parameter GRPO in TRL \citep{vonwerra2020trl}, matching the original settings (32 prompts with 8 responses per step, 2048-token limit, no KL term; \cref{tab:config}). All arms start from step 32 of an unaudited run (\nm{cc.start.h}\% hacks at the first step) and train for 40 steps. The budget is $p=0.28$ and the reflected penalty is $2b-a=-0.8$.

\paragraph{Protocol.} Each comparison's arms, primary metric and decision rule were committed before its runs started. Arms of a comparison share the seed and therefore the prompts at every step, and we report paired bootstrap 95\% intervals over (step, prompt) units. The two main comparisons (\LB{} against \RB{} in LeetCode and against \IPW{} in Countdown-Code) have three seeds each, with a seed-level $t$-test; so do per-item random audits and the full-audit penalty comparison in LeetCode and \LB{} without std normalization in Countdown-Code. The remaining comparisons use one seed. Final checkpoints are evaluated on 1000 Countdown-Code test problems with 8 samples each and on the 119 medium and hard LeetCode test problems of \citet{wong2026monitor} with 32 samples each. Further baselines (\PAC{}, an oracle selection rule and audited-only learning) are described in \cref{app:details}.

\section{Predicting training from the frozen push}
\label{sec:results}

The analysis of \cref{sec:theory} characterizes the expected update at a frozen policy state. This section tests how well that frozen push, the hack mass $J$, predicts training: which selections and correction values suppress hacking (\cref{sec:res-select,sec:res-penalty}), how the outcome shifts with the budget, the penalty size and the audit unit (\cref{sec:res-budget}), and how often the sign of $J$ at the starting state predicted a run and where it broke down (\cref{sec:boundaries}).

Differences are first arm minus second, in percentage points, with paired bootstrap 95\% intervals over (step, prompt) units in training and over problems on held-out panels. These intervals capture variation within a pair of runs; variation across seeds enters through the three-seed comparisons and the seed-level tests. \Cref{fig:forest} summarizes the main paired comparisons, per-seed tables are in \cref{app:tables}, and seed-level tests in \cref{app:stats}.

\begin{figure}[t]
\centering
\includegraphics[width=\linewidth]{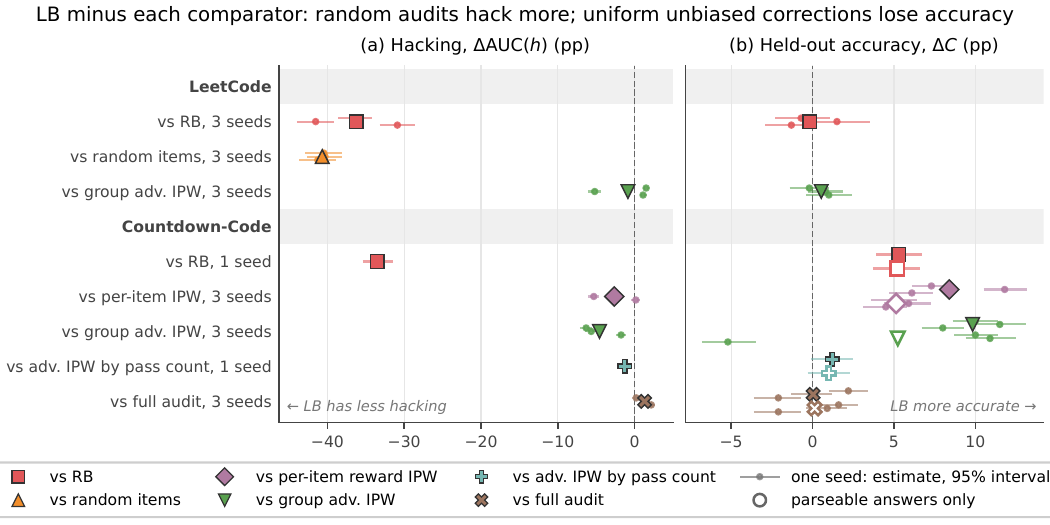}
\caption{Paired differences, \LB{} minus each comparator, with the same penalty and expected budget; the full audit checks every weak pass. Large markers are means over seeds; small dots and thin lines are each seed's estimate and paired bootstrap 95\% interval. Hollow markers count only parseable held-out answers. Held-out accuracy uses 119 LeetCode test problems with 32 samples each (against \RB{}), 353 held-out LeetCode training-split problems with 16 samples each (against the group-level correction), and 1000 Countdown-Code test problems with 8 samples each.}
\label{fig:forest}
\end{figure}

\subsection{Under the reflected penalty, which groups are audited decides the outcome}
\label{sec:res-select}

\LB{} and \RB{} use the same penalty and audit whole groups until a third of the weak passes are checked; only the order of the groups differs. In LeetCode, \LB{} removes hacking within about ten steps in all three seeds, while \RB{} lets it rise from about a quarter of rollouts to \nm{lc.RB.s1.hlate}--\nm{lc.RB.s2.hlate}\% in the last ten steps, close to the unaudited run (\nm{lc.NONE.hlate}\%; \cref{fig:main}b). Auditing single weak passes at random with the same penalty and budget fails as well, with late hacking of \nm{lc.IB.s1.hlate}--\nm{lc.IB.s3.hlate}\%. The area under the hacking curve is \nm{lc.LBRB.max.d_auc.abs}--\nm{lc.LBRB.min.d_auc.abs} points lower for \LB{} than for \RB{}, and \nm{lc.IBLB.min.d_auc.abs}--\nm{lc.IBLB.max.d_auc.abs} points lower than for random items (\cref{fig:forest}a). Late correctness shows no detectable difference between \LB{} and \RB{} (\cref{fig:lc} in \cref{app:tables}). Because the budget is a fraction of the weak passes, \LB{}'s spending falls as hacking is suppressed: it used \nm{lc.LBRB.mean.labelratio}\% of \RB{}'s labels and \nm{lc.LBIB.mean.labelratio}\% of those of random items (\cref{fig:main}c).

The difference starts with what each label finds. In the first five steps, when the arms are in similar states, each \LB{} label caught a hack \nm{lc.LB.s1.hit5}\%, \nm{lc.LB.s2.hit5}\% and \nm{lc.LB.s3.hit5}\% of the time, against \nm{lc.RB.s1.hit5}\%, \nm{lc.RB.s2.hit5}\% and \nm{lc.RB.s3.hit5}\% for \RB{}. On held-out problems, \RB{}'s hacking generalizes in all three seeds, at \nm{ho3.lc.s1.RB.H}\%, \nm{ho3.lc.s2.RB.H}\% and \nm{ho3.lc.s3.RB.H}\% of responses against \nm{ho3.lc.s1.LB.H}\%, \nm{ho3.lc.s2.LB.H}\% and \nm{ho3.lc.s3.LB.H}\% for \LB{} (\nm{ho.lc.start.H}\% at the starting checkpoint), while held-out accuracy shows no detectable difference (\cref{fig:forest}b).

In Countdown-Code, one seed of random whole-group audits with the same penalty and budget rule also failed to suppress hacking: late hacking was \nm{cc.RB.hlate}\% against \nm{cc.LB.s1.hlate}\% for \LB{}, and the area under the hacking curve was \nm{cc.RBLB.dauc.abs} points higher \ci{cc.RBLB.dauc}. Because hacking spread, it used \nm{cc.RBLB.labelratio.abs}\% more labels than \LB{}. On held-out problems its hacking persisted (\nm{ho.cc.RB.H}\% of responses) and \LB{} answered \nm{ho.cc.LBRB.dC.abs} points more problems correctly \ci{ho.cc.LBRB.dC}.

Strict hacking understates the difference in behavior: a LeetCode response can pass the real tests while defining its own \texttt{run\_tests}, and such responses carry the behavior that produces hacks \citep{wong2026monitor}. In the last ten steps, \nm{att.LB.s2.late.attempt}--\nm{att.LB.s1.late.attempt}\% of \LB{}'s responses define their own tests, against \nm{att.RB.s3.late.attempt}--\nm{att.RB.s2.late.attempt}\% for \RB{} (\cref{app:attempts}).

\subsection{The correction value decides whether targeting helps}
\label{sec:res-penalty}

Holding the selection fixed (\cref{tab:single}), auditing the 8 lowest-pass groups per step with the strong-verdict reward $c=b$ leaves late hacking at \nm{lc.LOW.hlate}\%, no better than auditing 8 random groups (\nm{lc.POS.hlate}\%), whereas with the reflected penalty the same 8 groups remove hacking (\nm{lc.LOWm.hlate}\%), with \nm{lc.LOWm_LOW.dClate.abs} points higher late correctness and \nm{lc.LOWm_LOW.labelratio}\% of the labels; they also beat a single-seed pilot that audited every weak pass with $c=b$ (\nm{lc.FULL.hlate}\% late hacking) at \nm{lc.LOWm_FULL.labelratio}\% of its labels. With $c=b$, a fully caught all-hack group contributes no update, while hacks in the many unaudited high-pass groups are still reinforced (\cref{prop:reflect}). In three later seeds of full coverage, $c=b$ left late strict hacking at \nm{att.FULLb.s1.late.H}\%, \nm{att.FULLb.s2.late.H}\% and \nm{att.FULLb.s3.late.H}\% and late attempts at \nm{att.FULLb.s1.late.attempt}\%, \nm{att.FULLb.s2.late.attempt}\% and \nm{att.FULLb.s3.late.attempt}\%, whereas the reflected penalty removed strict hacking and drove attempts to at most \nm{att.FULLr.s1.late.attempt}\% (\cref{app:attempts}).

\subsection{Budget, penalty size and audit unit}
\label{sec:res-budget}

\paragraph{Low budgets.} At a budget of 0.1 of the weak passes, \LB{} did not suppress hacking in LeetCode (late hacking \nm{lc.LBp10.s1.hlate}\% and \nm{lc.LBp10.s2.hlate}\% in two seeds), while at 0.2 it did (\nm{lc.LBp20.s1.hlate}\% and \nm{lc.LBp20.s2.hlate}\%) with \nm{lc.LBp20.min.labelratio}--\nm{lc.LBp20.max.labelratio}\% of the labels of \LB{} at 0.33. At 0.1 the audited groups are too few for their corrections to outweigh the reinforcement of the unaudited hacks. Multiplying the change in the audited groups' normalized advantages by 3.1 (the frozen-panel ratio of the full audit's change in hack mass to \LB{}'s; a design outside \cref{thm:lb}) suppressed hacking (\nm{lc.LBscale10.hlate}\%), with no detectable difference from \LB{} at 0.33 in the area under the hacking curve (\pmci{lc.LBscale10LBm.dauc}) and \nm{lc.LBscale10.labelratio}\% of its labels. The group-level advantage correction at 0.1 also suppressed hacking (\nm{lc.advIPWp10.hlate}\%), slightly more slowly (one seed each). A heavier penalty ($-11.5$) at 0.1 suppressed hacking only for a while: from \nm{lc.LBp10h.s1.minbin}\% around steps 16--20 it rose to \nm{lc.LBp10h.s1.lastbin}\% in steps 36--40, and to \nm{lc.LBp10hc40.bin2}\% within ten steps of a continuation with a fresh optimizer.

\paragraph{The audit unit under std normalization.} \Cref{lem:marginals} assumes centered advantages. With std normalization, LeetCode runs at equal marginals went both ways: with penalty 0 whole groups had slightly lower hacking than independent items (three seeds), and with a heavy penalty of $-11.5$ items suppressed hacking and whole groups did not (two seeds). An exact condition for one group explains both (\cref{app:unit}).

\subsection{Predictions from the starting state, and where the push stops predicting}
\label{sec:boundaries}

\paragraph{Predictions from the starting state.} For each run, we computed the hack mass $J$ of its design on a frozen panel of about 1000 prompts at its starting checkpoint and predicted from its sign whether hacking would be suppressed (late hacking below 2\%) or persist (10\% or more). Over the 39 runs whose designs we scored this way, 33 predictions held. For 16 runs the prediction was committed before the run started (\cref{app:prereg}), and 13 of these held, in both directions: \LB{} failed at a budget of 0.1 and succeeded at 0.2 in two seeds each, \LB{} with corrections scaled by 3.1 and the group-level advantage correction succeeded at 0.1, and random group audits failed in Countdown-Code. The other three ended between the two thresholds and nearer the predicted side: \nm{cc.advIPW.hlate}\% and \nm{lc.advIPW.s2.hlate}\% late hacking for two runs predicted to suppress hacking (seed 1 of the Countdown-Code group-level correction and seed 2 of the LeetCode one), and \nm{lc.LBp10h.s1.hlate}\% for one predicted not to (\LB{} with penalty $-11.5$ at 0.1, which suppressed hacking for about 15 steps before it rose); a continuation of that run, preregistered with separate thresholds, rose to \nm{lc.LBp10hc40.hlate}\% against a frozen-sign prediction of below 10\%. Of the other 23 runs, the sign matched all 13 LeetCode runs of the main comparison and its references (\LB{}, random group audits and random item audits in three seeds each, no audit, and the full audit in three seeds) and all six Countdown-Code runs of \LB{} and the full audit, but it missed three of the four audit-unit runs at penalty $-11.5$ and $p=0.4$ (\cref{app:unit}): whole-group audits ($J=-0.002$) did not suppress hacking in either seed, and per-item audits ($J=-0.067$) did in one seed but rose to 13.7\% in the other. Designs we did not score on the starting panels (the strong-verdict arms, the audit-unit runs at penalty 0, the per-item \IPW{} variants and \LB{} without std normalization) are not included. Within these two environments and related designs, the sign of the frozen push anticipated the outcome in most runs.

\paragraph{Light penalties and widespread hacking.} \LB{}'s ordering assumes that low-pass groups hold a large share of the hacks. In the LeetCode runs with $c=b$, hacking spread, and the share of hacks in groups with at least 12 of 16 weak passes rose from about half to over 80\%; the 8 lowest-pass groups then held no more hacks than random groups. Under the reflected penalty hacking collapsed within the first ten steps, before it could spread to high-pass groups.

\paragraph{An online rule that optimizes the frozen objective.} Auditing part of each group, with the number of labels chosen from the estimated composition of groups with each pass count, improves $J$ beyond \LB{} on frozen panels. In a preregistered LeetCode pilot from step 120 ($p=0.2$), such a rule was worse than \LB{}: its area under the hacking curve was \nm{lc.ON120_LB120.dauc.abs} points higher and its late hacking \nm{lc.ON120.hlate}\% against \nm{lc.LB120.hlate}\%, although its realized $J$ was more negative in every phase. So $J$ is not the first-order change in the hacking rate, which also depends on the directions of all responses' gradients and on the optimizer. A one-step diagnosis did not identify the cause, but showed that which of several similar hacks in a partially audited group receives the penalty is a large source of noise in a single update, which whole-group audits do not have (\cref{app:diag}).

\section{Beyond the expected update: unbiased corrections}
\label{sec:res-ipw}

Unbiased corrections target the full audit's update: on a frozen batch, per-item reward-level \IPW{} without std normalization has the full audit's expected update by \cref{lem:marginals}, and so does the group-level advantage correction of \citet{sakib2026audit} by \cref{prop:advipw}. Comparing the group-level correction with \LB{} and with a full audit therefore separates the expected update from the randomness of its estimate and the allocation of audits.

\begin{figure}[t]
\centering
\includegraphics[width=\linewidth]{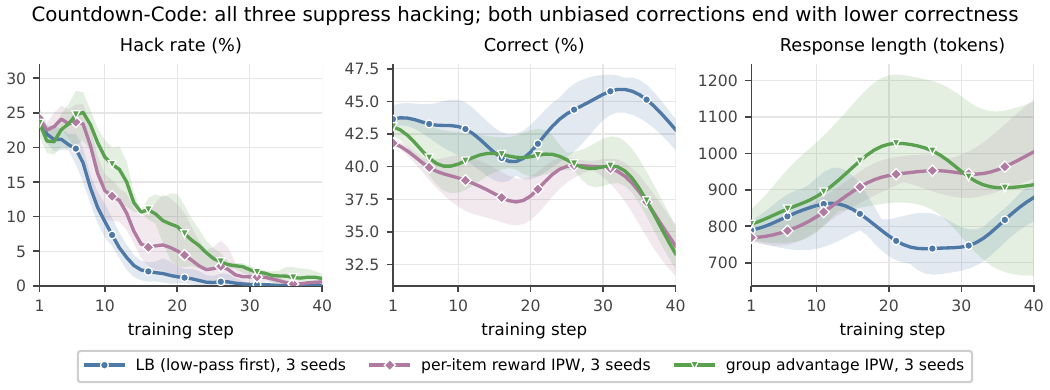}
\caption{Countdown-Code training curves with the reflected target ($-0.8$) at $p=0.28$: \LB{}, per-item reward-level \IPW{} and the group-level advantage correction of \citet{sakib2026audit}, each the mean over three seeds with the range shaded, smoothed with a Gaussian kernel of one step (hack rate) or three steps (correctness, length).}
\label{fig:cc}
\end{figure}

Per-item reward-level \IPW{} targets the same reflected reward as \LB{} in Countdown-Code, spends about the same number of labels, and, following \cref{lem:marginals}, runs without std normalization. Both suppress hacking, with late hacking below 1\% in every run, but \LB{} ends with higher late correctness in all three seeds, by \nm{cc.LBIPW.min.dClate.abs}--\nm{cc.LBIPW.max.dClate.abs} points, and answers more held-out problems correctly, by \nm{ho3.cc.LBIPW.min.C.abs}--\nm{ho3.cc.LBIPW.max.C.abs} points (\cref{fig:cc,fig:forest}). Part of the held-out gap comes from unfinished answers: \IPW{} left \nm{ho3.cc.IPW.trunc.min}--\nm{ho3.cc.IPW.trunc.max}\% of its answers cut at the 2048-token limit, against \nm{ho3.cc.LB.trunc.min}--\nm{ho3.cc.LB.trunc.max}\% for \LB{}. Among parseable answers \LB{} was still more accurate in every seed, by \nm{ho3.cc.LBIPW.min.Cparse.abs}--\nm{ho3.cc.LBIPW.max.Cparse.abs} points, with each interval above zero; comparisons among parseable answers condition on an answer being finished, so the subsets compared need not have the same difficulty. Continuing every cut answer to 4096 tokens from its own prefix made only \nm{ho4k.cc.IPW.rescued} of \nm{ho4k.cc.IPW.continued} correct, and \LB{}'s lead was unchanged at \nm{ho4k.cc.LBIPW.min.dC}--\nm{ho4k.cc.LBIPW.max.dC} points. The correction value matters for \IPW{} too: targeting $c=b$ instead of $-0.8$ leaves late hacking at \nm{cc.IPWtrue.hlate}\%, and adding group std normalization, which breaks its unbiasedness (\cref{sec:theory-ipw}), leaves it at \nm{cc.IPWstd.hlate}\% (single seeds; \cref{tab:single}).

\paragraph{Group-level advantage correction.} \citet{sakib2026audit} correct the advantages instead of the rewards: an audited group's normalized advantages move from their weak-reward values toward their full-audit values by the change divided by the audit probability, which matches a full audit in expectation (\cref{prop:advipw}). In LeetCode, auditing each group with probability 0.33 (the expected budget of \LB{}) suppressed hacking in two of three seeds, slightly faster than \LB{}, and left late hacking at \nm{lc.advIPW.s2.hlate}\% in the third; held-out accuracy showed no detectable difference from \LB{} in any seed (\cref{fig:forest}), although in LeetCode held-out accuracy barely moved even under widespread hacking (\cref{sec:res-select}), so this comparison has little power. In Countdown-Code at $p=0.28$ it also brought hacking down (late hacking \nm{cc.advIPW.min.hlate}--\nm{cc.advIPW.max.hlate}\%, the highest just above our 2\% threshold for suppression), but in all three seeds it ended with lower late training correctness, by \nm{cc.advIPWLB.min.dClate.abs}--\nm{cc.advIPWLB.max.dClate.abs} points, and \LB{} answered more held-out problems correctly, by \nm{ho.cc.LBadvIPW.min.dC}--\nm{ho.cc.LBadvIPW.max.dC} points, each interval above zero (\cref{fig:cc,fig:forest}). In seed 1, \nm{ho.cc.advIPW.s1.M}\% of the correction's held-out answers were unparseable, almost all cut at the length limit, and among parseable answers it was more accurate than \LB{}, but continuing the cut answers to 4096 tokens made only \nm{ho4k.cc.advIPW.s1.rescued} of \nm{ho4k.cc.advIPW.s1.continued} correct and left the gap at \nm{ho4k.cc.LBadvIPW.s1.dC.abs} points \ci{ho4k.cc.LBadvIPW.s1.dC}; in seeds 2 and 3 few answers were unparseable (\nm{ho.cc.advIPW.s2.M}\% and \nm{ho.cc.advIPW.s3.M}\%), and \LB{} was more accurate among parseable answers as well, by \nm{ho.cc.LBadvIPW.min23.dC_parseable}--\nm{ho.cc.LBadvIPW.max23.dC_parseable} points. Its late response length varied widely across seeds (\nm{cc.advIPW.min.lenlate}--\nm{cc.advIPW.max.lenlate} tokens, against \nm{cc.LB.min.lenlate}--\nm{cc.LB.max.lenlate} for \LB{}; \cref{fig:cc}) while its accuracy loss did not, so length growth alone cannot explain the accuracy loss across the three seeds.

Two preregistered controls locate the gap. On a common frozen batch, every unbiased group-level design has the full audit's advantages, and hence its raw first-order policy gradient, as its expectation. A full audit with the reflected penalty stayed within about two points of \LB{} on held-out problems in all three seeds, with \LB{} minus the full audit at \nm{ho.cc.LBFULL.s1.dC} \ci{ho.cc.LBFULL.s1.dC}, \nm{ho.cc.LBFULL.s2.dC.abs} \ci{ho.cc.LBFULL.s2.dC} and \nm{ho.cc.LBFULL.s3.dC} \ci{ho.cc.LBFULL.s3.dC}, far from the uniform correction's gap, so the correction does not lose by aiming at the full-audit update. Allocating the same correction by pass count, with audit probabilities that fall with the number of weak passes (weights chosen to reduce the variance of the corrected advantages on a frozen panel of 1000 groups whose 5250 weak passes were all checked by the strong verifier, labels not counted in its training budget of \nm{cc.optIPW.labels}; floor 0.1), recovered most of the gap at the same expected budget: its held-out accuracy was \nm{ho.cc.optIPWadvIPW.dC.abs} points above the uniform correction's \ci{ho.cc.optIPWadvIPW.dC} and \nm{ho.cc.LBoptIPW.dC.abs} points below \LB{}'s \ci{ho.cc.LBoptIPW.dC} (one seed). The loss therefore comes not from the full-audit update that the correction estimates but from the randomness of the estimate and how the audits behind it are allocated; which part of that randomness hurts we have not identified. At this budget \LB{} accepts a bias in the update for low noise, and its held-out accuracy stayed within about two points of a full audit's with \nm{cc.LBFULL.min.labelratio}\% of its labels.

\paragraph{\LB{} without std normalization.} \LB{} does not depend on std normalization. In Countdown-Code over three seeds, \LB{} without it also suppressed hacking and ended with higher late correctness than \LB{} with it, by \nm{cc.LBnostdLB.min.dClate.abs}--\nm{cc.LBnostdLB.max.dClate.abs} points, at the cost of longer responses and more labels (\cref{app:std}).

\section{Related work}
\label{sec:related}

\paragraph{Reward hacking under verifiable rewards.} Checks that a policy can satisfy without solving the task are a recurring failure of RLVR. Countdown-Code isolates the phenomenon in a small code environment \citep{khalifa2026countdown}; ImpossibleBench measures the propensity of models to exploit tests \citep{zhong2025impossiblebench}; patches counted as solved on SWE-bench fail the repository's full suite \citep{wang2025solved}; and systematic verifier errors can delay or derail RLVR \citep{egashira2026delay}. Optimizing against a monitor of reasoning teaches the policy to hide its intent \citep{baker2025monitoring}, a training-time instance of reward-model overoptimization \citep{gao2023overoptimization}, and \citet{simhack2026} predict when detectable hacking will emerge.

\paragraph{Interventions during training.} \citet{wong2026monitor} benchmark monitor-based penalties and screening in the LeetCode environment we use and observe that correct responses that also attempt the hack keep the behavior alive (\cref{app:attempts}). HackFree-GRPO \citep{hackfree2026} discounts and filters trajectories scored by a frozen detector under centered advantages and shows that a discounted hack can keep a positive advantage; it scores every trajectory and has no strong-verifier budget to allocate. GRAFT \citep{ward2026graft} routes the gradients of flagged samples into parameters removed after training. \citet{moya2026verifier} show that partial audits suffice when the correction outweighs the pressure toward hacking and propose \PAC{}, a projected gradient correction we compare against in adapted form; their audits are independent per item.

\paragraph{Budgeted audits and labels.} AI Control spends a scarce budget of trusted audits on the most suspicious outputs at deployment \citep{greenblatt2024control}. CARE acquires ground-truth labels where the gap between pseudo-label and true-label advantages is largest \citep{wang2026care}; our oracle selection uses this criterion with true labels. Selecting the hardest examples for GRPO under an annotation budget works best \citep{hardexamples2025}; \LB{} applies a related preference for low-pass groups online. PAIR applies Horvitz--Thompson reweighting to adaptive rollout allocation and notes that dividing by a random group standard deviation biases ratio estimators \citep{pair2026}. \citet{sakib2026audit} quantify, without training, the variance of whole-group and per-attempt random audits with reward-level and advantage-level inverse-propensity corrections for coding agents (\cref{sec:intro}).

\paragraph{Group normalization and correlated errors.} Dr.~GRPO removes std normalization from GRPO \citep{liu2025drgrpo}. \citet{gupta2026caches} shows that changing only the correlation of rewards within a group can reverse the expected group-normalized update, and \citet{xin2026independent} measure correlated verifier errors within groups. STAR-GRPO down-weights unreliable ratings before normalization \citep{star2026grpo}, and noise-corrected GRPO corrects exogenous reward noise in expectation \citep{mansouri2025noise}. \Cref{lem:marginals} is the centered-advantage counterpart of these observations for audits, and \cref{prop:ipwstd} makes explicit how std normalization defeats unbiased per-item corrections.

\section{Conclusion and limitations}
\label{sec:conclusion}

\paragraph{Conclusion.} A small strong-verifier budget can suppress reward hacking in GRPO, provided it is spent where uncaught hacks gain the most and caught hacks are pushed away rather than merely zeroed. At a frozen state, auditing whole groups with the fewest weak passes first minimizes the advantage mass left on hacks among whole-group pass-count designs with a fixed correction, when a label's value does not increase with the pass count. From the starting state, the sign of this mass predicted whether hacking would be suppressed in 33 of 39 runs, including 13 of 16 preregistered ones. In our experiments the following recipe worked in both environments: give caught hacks the reflected reward $2b-a$; audit whole groups with the fewest weak passes first; set the budget as a fraction of weak passes, so that spending falls when weak passes fall as hacking is suppressed; and avoid per-item reward-level inverse-propensity corrections under group std normalization. Randomness matters as well. Unbiased corrections can match \LB{} on hacking, but in Countdown-Code their randomness cost correctness at the same budget: a full audit stayed within about two points of \LB{}, and reallocating the audits recovered most of the loss.

\paragraph{Limitations.} Our experiments cover two environments, 3B and 4B models, and 40-step windows that start after hacking has emerged; we have not tested whether hacking returns over longer training or at larger scale. In both environments the strong verifier is cheap enough that the budget is a design choice rather than a necessity; we have not tested a setting where the strong verifier is itself expensive or imperfect. We treat the strong verifier as reliable and independent of the policy, and a strong verifier with blind spots would push the policy toward them, as observed for optimized monitors \citep{baker2025monitoring,ward2026graft}. The two main comparisons and several ablations have three seeds; the group-level advantage correction has three seeds in each environment and the full audit three in Countdown-Code, and other comparisons, including the allocated correction, have one or two. The correctness advantage of \LB{} over per-item \IPW{} is positive in all three seeds, with $p=\nm{cc.LBIPW.p.dClate}$ in training and $p=\nm{ho3.cc.LBIPW.p.C}$ on held-out problems at the seed level, and against the group-level correction it is positive in all three seeds, among parseable answers in two of them. Our decision metrics, hacking and correctness on training rollouts, missed length drift, which appeared in rollout lengths and on held-out panels. Our analysis is first order at a frozen state: the sign of its objective predicted whether hacking would be suppressed in 33 of 39 runs (13 of 16 preregistered), but its magnitude does not rank designs, as the online rule showed. These predictions come from two environments and related designs and have not been compared with a simple baseline. The reflected penalty is exact only when non-passing responses share the reward $b$.

\bibliographystyle{plainnat}
\bibliography{refs,generated/arxiv}

\appendix
\section{Proofs}
\label{app:proofs}

Throughout, the batch (prompts, responses, weak rewards, classes and response lengths) is fixed and the audit design is the only source of randomness, except in \cref{thm:lb,prop:pure}, where compositions are random given pass counts. All random objects take finitely many values. Token-mean aggregation weights each response's score by quantities that depend on response lengths, not on rewards, so the weights are fixed given the batch and are absorbed into $\nabla_\theta\log\pi_\theta(y_i\mid x_i)$. Normalized advantages are set to zero when $\sigma=\epsilon=0$.

\paragraph{\Cref{lem:marginals}.} The post-audit reward is $r_i=r^w_i+m_i\mathbf 1[i\in H](c_i-a)$, where $c_i$ is fixed given the batch (for \IPW{} it depends on $\pi_i$ only). Hence $\E[r_i]=r^w_i+\mathbf 1[i\in H]\,\pi_i(c_i-a)$. With centered advantages, $A_i=r_i-\frac1K\sum_{j\in g(i)}r_j$ is linear in the rewards of its group $g(i)$, so $\E[A_i]=\E[r_i]-\frac1K\sum_{j\in g(i)}\E[r_j]$, and $\E[g]=\sum_i\E[A_i]\nabla_\theta\log\pi_\theta(y_i\mid x_i)$ because the score vectors are fixed. All of these depend on the design only through $\pi_i(c_i-a)$, $i\in H$; no independence between audit decisions is needed. Whole-group and per-item designs with the same marginals and the same fixed $c$ therefore agree. For \IPW{}, $\pi_i(c_i-a)=p\cdot(c-a)/p=c-a$, the value for a full audit ($\pi_i=1$) with correction $c$. \qed

\paragraph{\Cref{prop:reflect}.} Write $e_i=r^w_i-\bar r^w$; the weak rewards are $a$ on the $k$ hacks and $b$ elsewhere, so $e_i=(a-b)(K-k)/K$ for a hack and $-(a-b)k/K$ for a failure, and $\sigma>0$ because $1\le k<K$. After a full audit with correction $c$ the deviations are $(c-b)(K-k)/K$ and $-(c-b)k/K$, that is, $-\lambda e_i$ with $\lambda=(b-c)/(a-b)$. For $\lambda\ne0$ the standard deviation becomes $|\lambda|\sigma$, centered advantages scale by $-\lambda$, and normalized ones become $-\lambda e_i/(|\lambda|\sigma+\epsilon)$. For $c<b$ ($\lambda>0$) this is $-\frac{\lambda(\sigma+\epsilon)}{\lambda\sigma+\epsilon}A^w_i$, which equals $-A^w_i$ for every $\epsilon$ exactly when $\lambda=1$, that is, $c=2b-a$, where indeed $r'_i=2b-r^w_i$; with $\epsilon=0$ it equals $-A^w_i$ for every $c<b$. For $b<c<a$ ($-1<\lambda<0$) the advantages keep their signs, and for $c=b$ all rewards equal $b$ and every advantage is zero. If $2b-a\le0$, the reward $2b-a$ is the smallest possible reward, so a caught hack has a non-positive advantage in any group. \qed

\paragraph{\Cref{thm:lb}.} Since advantages are computed within groups, $J=\sum_g\Phi_g(z_g)$, where $\Phi_g$ is the contribution of group $g$ and $z_g\in\{0,1\}$ its audit decision. A design in the class audits group $g$ with a probability $x_g$ that does not depend on the group's composition, so $\E[\Phi_g(z_g)]=(1-x_g)\mu^0_{k_g}+x_g\mu^1_{k_g}=\mu^0_{k_g}-y_gd_{k_g}$ with $y_g=x_gk_g\in[0,k_g]$, the expected labels spent on group $g$; groups without weak passes contribute nothing. Hence $\E[J]=\sum_g\mu^0_{k_g}-V(y)$ with $V(y)=\sum_{g:\,k_g\ge1}y_gd_{k_g}$, and the problem is to maximize $V$ over $\mathcal Y=\{y:0\le y_g\le k_g,\ \sum_gy_g\le B\}$; every $y\in\mathcal Y$ is attained by independent audits with probabilities $y_g/k_g$. Condition on \LB{}'s tie-breaking and list the groups with $k_g\ge1$ in its order, $g_1,\dots,g_n$ with $k_{g_1}\le\dots\le k_{g_n}$; let $C_j=\sum_{l\le j}k_{g_l}$. \LB{} audits $g_1,g_2,\dots$ in full while they fit and the first group that does not fit with probability (remaining budget)$/k$, so by induction its allocation is $y^{\LB}_{g_j}=\min(k_{g_j},\max(0,B-C_{j-1}))$ and its partial sums are $\min(B,C_j)$; in particular its expected cost is $B$, and its decisions depend only on pass counts and tie-breaking. Put $\beta_j=d_{k_{g_j}}$, which is non-negative and non-increasing in $j$, $\beta_{n+1}=0$, and $Y_j=\sum_{l\le j}y_{g_l}$. Summation by parts gives $V(y)=\sum_{j=1}^nY_j(\beta_j-\beta_{j+1})$ with non-negative coefficients, so $V$ is non-decreasing in each $Y_j$; every $y\in\mathcal Y$ has $Y_j\le\min(B,C_j)$, the partial sums of \LB{}'s allocation, so $V(y^{\LB})\ge V(y)$. Tied groups share the same $d$, so this holds for every tie-breaking and on average. For the approximate case let $\tilde V(y)=\sum_gy_g\tilde d_{k_g}$; the same argument shows that $y^{\LB}$ maximizes $\tilde V$, and $|V(y)-\tilde V(y)|\le\gamma\sum_gy_g\le\gamma B$ for every $y\in\mathcal Y$. Hence $V(y)\le\tilde V(y)+\gamma B\le\tilde V(y^{\LB})+\gamma B\le V(y^{\LB})+2\gamma B$. \qed

\paragraph{\Cref{prop:pure}.} In a $k$-pass group with $k$ rewards $a$ and $K-k$ rewards $b$, $1\le k<K$, the mean is $b+(a-b)k/K$, a pass deviates from it by $(a-b)(K-k)/K$, and the unbiased variance is $(a-b)^2k(K-k)/(K(K-1))$, so with $\epsilon=0$ the normalized advantage of a pass is
\[
\alpha_k=\frac{(a-b)(K-k)/K}{(a-b)\sqrt{k(K-k)/(K(K-1))}}=\sqrt{\frac{(K-1)(K-k)}{Kk}} .
\]
An unaudited all-hack group contributes $k\alpha_k/K$ to $J$, and by \cref{prop:reflect} a fully audited one contributes the negative of this; groups whose passes are correct contribute zero either way. Hence $\mu^0_k-\mu^1_k=2\rho_kk\alpha_k/K$ and $d_k=2\rho_k\alpha_k/K$. For $k=K$ the group is all-hack or all-correct by assumption, every response has the same reward before and after the audit, and $d_K=0$. Both $\sqrt{(K-k)/k}$ and $\rho_k$ are non-negative and non-increasing in $k$, and so is their product. \qed

\paragraph{\Cref{prop:ipwstd}.} Let the response with reward $R$ be the first and $S=\sum_{j\ge2}r_j\in[0,(K-1)a]$. Then $\bar r=(R+S)/K$, $R-\bar r=\frac{K-1}{K}R+O(1)$ and $r_j-\bar r=-\frac RK+O(1)$ as $R\to-\infty$, so the unbiased variance satisfies
\[
\sigma^2=\frac{1}{K-1}\Big[\frac{(K-1)^2R^2}{K^2}+(K-1)\frac{R^2}{K^2}\Big](1+o(1))=\frac{R^2}{K}(1+o(1)),
\]
and $\sigma+\epsilon=\frac{|R|}{\sqrt K}(1+o(1))$ for fixed $\epsilon$. Hence $A_1\to-\frac{K-1}{\sqrt K}$ and $A_j\to\frac{1}{\sqrt K}$ for $j\ge2$. \qed

\section{Further theoretical results}
\label{app:theory}

These results support remarks in \cref{sec:theory}; the same setting and conventions apply.

\begin{lemma}[A bound on normalized advantages]
\label{lem:samuelson}
For every group reward vector $r$ with $\sigma(r)>0$, every $\epsilon\ge0$ and every $i$, the normalized advantage satisfies $|A_i|\le(K-1)/\sqrt K$.
\end{lemma}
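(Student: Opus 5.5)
The bound is a Cauchy--Schwarz (Samuelson-type) argument on the centered deviations. Write $e_j=r_j-\bar r$, so that $\sum_j e_j=0$ and $\sigma^2=\frac{1}{K-1}\sum_j e_j^2$ is the unbiased variance used in \cref{sec:setup}. Since $\epsilon\ge0$ and $\sigma>0$, we have $|A_i|=|e_i|/(\sigma+\epsilon)\le|e_i|/\sigma$. It therefore suffices to prove $e_i^2\le\frac{(K-1)^2}{K}\sigma^2=\frac{K-1}{K}\sum_j e_j^2$.

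The key step uses the zero-sum constraint. It gives $e_i=-\sum_{j\ne i}e_j$, and Cauchy--Schwarz over the $K-1$ other indices then yields $e_i^2\le(K-1)\sum_{j\ne i}e_j^2$. Write $S=\sum_j e_j^2$ and $T=\sum_{j\ne i}e_j^2=S-e_i^2$. Substituting, $e_i^2\le(K-1)(S-e_i^2)$, so $Ke_i^2\le(K-1)S$. This is exactly the needed inequality, and dividing by $\sigma^2$ gives $|A_i|\le(K-1)/\sqrt K$.

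There is no real obstacle; the only point to handle carefully is the normalization convention. Cauchy--Schwarz must be applied to the $K-1$ siblings rather than to all $K$ responses, so that the factor comes out as $(K-1)$. With the unbiased variance, the bound matches the limit in \cref{prop:ipwstd}. That limit is attained when all other rewards are equal, which is the equality case of Cauchy--Schwarz, so the bound is sharp.
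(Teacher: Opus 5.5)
Your proof is correct and is the same argument as the paper's: apply Cauchy--Schwarz to $e_i=-\sum_{j\ne i}e_j$ over the $K-1$ siblings, use $\sum_j e_j^2=(K-1)\sigma^2$ to get $Ke_i^2\le(K-1)\sum_j e_j^2$, then divide by $\sigma+\epsilon\ge\sigma$. On the extra sharpness remark, equality holds exactly only when $\epsilon=0$ and all other rewards are equal; with $\epsilon>0$ the bound is only approached in the limit, as in \cref{prop:ipwstd}.
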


\begin{proof}
Let $e_j=r_j-\bar r$, so that $\sum_je_j=0$ and $\sum_je_j^2=(K-1)\sigma^2$. By the Cauchy--Schwarz inequality over $j\ne i$, $e_i^2=\big(\sum_{j\ne i}e_j\big)^2\le(K-1)\big((K-1)\sigma^2-e_i^2\big)$, so $|e_i|\le\sigma(K-1)/\sqrt K$. Divide by $\sigma+\epsilon\ge\sigma$.
\end{proof}

\begin{proposition}[Group-adaptive reflection]
\label{prop:adaptive}
Let a group contain $h$ hacks, $1\le h\le K-1$, all audited, and let $\mu_N$ be the mean reward of its $N=K-h$ other responses. If every hack receives $2\mu_N-a$ and $\Delta=2h(a-\mu_N)/K$, then each hack's deviation from the group mean is negated, each other response's deviation increases by $\Delta$, and the group's standard deviation is unchanged. Hence every hack's advantage is negated, for both advantage forms, and the advantages of the other responses all shift by the same amount ($\Delta$ centered; $\Delta/(\sigma+\epsilon)$ normalized, or $0$ if $\sigma=\epsilon=0$), so their contrasts are unchanged.
\end{proposition}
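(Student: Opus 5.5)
The proof is a direct computation with deviations from the group mean. Let $H$ be the set of $h$ hacks and $N$ the $K-h$ others. Before the audit, each hack has reward $a$ and the other responses have mean $\mu_N$. The group mean is then $\bar r=(ha+(K-h)\mu_N)/K$. A hack's deviation is $e_H=a-\bar r=(K-h)(a-\mu_N)/K$, and a non-hack $j$ has deviation $e_j=r_j-\bar r=(r_j-\mu_N)-h(a-\mu_N)/K$.

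After the audit, every hack receives $2\mu_N-a$. Each hack's reward drops by $2(a-\mu_N)$, so the group mean drops by $2h(a-\mu_N)/K=\Delta$. The new hack deviation is therefore $e_H-2(a-\mu_N)+\Delta=(a-\mu_N)\bigl[(K-h)/K-2+2h/K\bigr]=-(K-h)(a-\mu_N)/K=-e_H$, so the hack deviations are negated. Each non-hack keeps its reward, so its deviation rises by exactly $\Delta$. The centered-advantage claims follow immediately.

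Next we check that the standard deviation is unchanged, which is what makes the normalized case work. Since the $K$ deviations sum to zero before and after the audit, it suffices to compare sums of squares. The hack terms $he_H^2$ are unchanged because the hack deviations only flip sign. For the non-hack terms, the new sum is $\sum_{j\in N}(e_j+\Delta)^2=\sum_{j\in N} e_j^2+2\Delta\sum_{j\in N} e_j+(K-h)\Delta^2$. Using $\sum_{j\in N} e_j=-he_H=-h(K-h)(a-\mu_N)/K$ and $\Delta=2h(a-\mu_N)/K$, we get $2\Delta\sum_{j\in N} e_j=-2h(K-h)\cdot 2h(a-\mu_N)^2/K^2$ and $(K-h)\Delta^2=(K-h)\cdot 4h^2(a-\mu_N)^2/K^2$. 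These two terms cancel, so $\sigma$ is preserved. This cancellation is the only real obstacle, and it reduces to the identity $\sum_{j\in N} e_j=-he_H$.

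With $\sigma$ (and hence $\sigma+\epsilon$) unchanged, dividing the new deviations by $\sigma+\epsilon$ gives the normalized statements. Each hack's advantage is negated, and each non-hack's advantage shifts by $\Delta/(\sigma+\epsilon)$. Contrasts among non-hacks are unchanged because their deviations all shift by the same constant. There is one degenerate case. If $\sigma=\epsilon=0$, then all rewards are equal, so $a=\mu_N$ and $\Delta=0$. The post-audit rewards are then also all equal, and every advantage is $0$ by convention, which agrees with the statement.
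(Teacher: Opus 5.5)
Your proof is correct and follows essentially the same route as the paper's: compute the deviations directly, then show the standard deviation is preserved because the cross term $2\Delta\sum_{j\in N}e_j$ cancels $(K-h)\Delta^2$ (the paper writes the key identity as $\sum_j e_j=-N\Delta/2$, which equals your $-he_H$). Your explicit treatment of the degenerate case $\sigma=\epsilon=0$ is a small addition that the paper leaves implicit.
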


\begin{proof}
The weak mean is $\bar r=(ha+N\mu_N)/K$, so $\bar r-\mu_N=\Delta/2$, and the new mean is $\bar r-\Delta$. A hack's new deviation is $(2\mu_N-a)-\bar r+\Delta=-(a-\bar r)$ because $2(\bar r-\mu_N)=\Delta$. The other responses keep their rewards while the mean falls by $\Delta$. With $e_j$ their old deviations, $\sum_j(e_j+\Delta)^2=\sum_je_j^2+2\Delta\sum_je_j+N\Delta^2$, and $\sum_je_j=N(\mu_N-\bar r)=-N\Delta/2$, so the sum of squared deviations is unchanged.
\end{proof}

This correction needs the strong labels of every weak pass of the group and can be computed only after all of them have been audited. When the other responses all have reward $b$, it equals $2b-a$ (\cref{prop:reflect}).

\begin{proposition}[Lowering the hack reward never raises the hack mass]
\label{prop:monotone}
Fix a group with $h$ hacks, $1\le h\le K-1$, whose hacks share a common reward $t$ while the other rewards are fixed. For centered advantages, and for normalized advantages with any $\epsilon\ge0$, the group's hack mass $\Phi(t)=\frac1K\sum_{i\in H}A_i$ is non-decreasing in $t$. Consequently a whole-group audit with any fixed correction $c\le a$ does not raise the hack mass of any group, and $d_k\ge0$ for every $k$.
\end{proposition}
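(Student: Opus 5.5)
The plan is to reduce $\Phi$ to an explicit scalar function of $u=t-\mu_N$, where $\mu_N$ is the mean reward of the $N=K-h\ge1$ non-hack responses (fixed by assumption), and then check monotonicity directly. Since all $h$ hacks share the reward $t$, they share one advantage, so $\Phi(t)=\tfrac hK A_H(t)$ with $A_H$ the common hack advantage. The group mean is $\bar r=\mu_N+\tfrac hK u$. Hence a hack's deviation is $e_H=\tfrac NK u$, and a non-hack response $j$ has deviation $(r_j-\mu_N)-\tfrac hK u$.

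For centered advantages this already finishes the argument: $\Phi(t)=\tfrac{hN}{K^2}u$, which is increasing because $h,N\ge1$. For normalized advantages I will compute the spread. The cross terms $\sum_{j\notin H}(r_j-\mu_N)$ vanish, so with $V_N=\sum_{j\notin H}(r_j-\mu_N)^2$ we get
\[
(K-1)\sigma^2=h\frac{N^2u^2}{K^2}+N\frac{h^2u^2}{K^2}+V_N=\frac{hN}{K}u^2+V_N .
\]
This gives $\Phi=\tfrac{hN}{K^2}\,f(u)$ with $f(u)=u/(\sqrt{\alpha u^2+\beta}+\epsilon)$, where $\alpha=\tfrac{hN}{K(K-1)}>0$ and $\beta=V_N/(K-1)\ge0$.

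The main step is to show that $f$ is non-decreasing for all $\beta\ge0$ and $\epsilon\ge0$, including the degenerate convention $A=0$ when $\sigma=\epsilon=0$. Write $s=\sqrt{\alpha u^2+\beta}$. Wherever $s+\epsilon>0$, differentiation gives a numerator $(s+\epsilon)-\alpha u^2/s=\beta/s+\epsilon\ge0$, so $f'\ge0$. The only point excluded is the one with $\beta=0,\epsilon=0,u=0$. There $f(u)=\operatorname{sign}(u)/\sqrt\alpha$ elsewhere and $f(0)=0$ by convention, which is still non-decreasing. I expect this boundary case to be the one place that needs care. To be safe, I would argue it by monotonicity on each side of $0$ together with the sign of $f$, rather than by continuity.

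For the consequences, consider a group with $1\le h\le K-1$ hacks that is audited in full with a fixed correction $c\le a$. The audit changes only the common hack reward, from $a$ to $c$, while the correct responses and failures are unchanged. So by monotonicity the group's hack mass after the audit, $\Phi(c)$, is at most $\Phi(a)$, the mass before it. For $h=0$ the hack mass is zero both before and after the audit. For $h=K$ all rewards are equal before and after the audit, so every advantage is zero. The hack mass therefore never rises, for every composition. Taking expectations over compositions with $k$ passes gives $\mu_k^1\le\mu_k^0$, and dividing by $k$ gives $d_k\ge0$.
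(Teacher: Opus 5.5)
Your proposal is correct and follows essentially the same route as the paper: the substitution $u=t-\mu_N$, the spread identity $(K-1)\sigma^2=\tfrac{hN}{K}u^2+V_N$, the derivative numerator $\beta/s+\epsilon$, and a separate treatment of the degenerate case, with your explicit handling of $h=0$ and $h=K$ filling in what the paper leaves implicit. One small tidy-up: the derivative formula needs $s>0$, not just $s+\epsilon>0$, so the point $\beta=0$, $u=0$ should be covered by your sign argument for every $\epsilon\ge0$ (there $f(u)=u/(\sqrt{\alpha}\,|u|+\epsilon)$), which is how the paper handles it.
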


\begin{proof}
With $N=K-h$, $\mu_N$ and $S_N$ the mean and sum of squared deviations of the other rewards, and $x=t-\mu_N$, each hack deviates from the mean by $Nx/K$ and the sum of squared deviations is $S_N+hNx^2/K$. The centered hack mass is $hNx/K^2$. The normalized one is $(hN/K^2)\,x/(\sqrt{\gamma'+\beta x^2}+\epsilon)$ with $\gamma'=S_N/(K-1)\ge0$ and $\beta=hN/(K(K-1))>0$, whose derivative in $x$ is $\big(\gamma'/\sqrt{\gamma'+\beta x^2}+\epsilon\big)/\big(\sqrt{\gamma'+\beta x^2}+\epsilon\big)^2\ge0$ where $\gamma'+\beta x^2>0$; if $\gamma'=0$ the function is $x/(\sqrt\beta|x|+\epsilon)$ with value $0$ at $x=0$, which is also non-decreasing. A whole-group audit lowers the common hack reward from $a$ to $c\le a$.
\end{proof}

\begin{remark}[Other group-separable objectives]
\label{rem:objectives}
The proof of \cref{thm:lb} uses only that the objective is a sum over groups of a quantity determined by each group's composition and audit decision. It therefore holds for any such objective, for example $J$ minus the advantage mass on correct responses, with $d_k$ defined from that objective.
\end{remark}

\begin{proposition}[Centered advantages]
\label{prop:centered-dk}
With centered advantages and a fixed correction $c<a$, a full audit lowers the hack mass of any group with $h$ hacks by $(a-c)h(K-h)/K^2$, whatever its other responses. Hence $d_k=(a-c)\,\E[h(K-h)\mid k]/(kK^2)\ge0$. If each pass of a $k$-pass group is a hack independently with probability $\eta\in(0,1]$, then $d_k=(a-c)\eta(K-1+\eta-k\eta)/K^2$, which strictly decreases in $k$, also for groups that mix hacks and correct answers.
\end{proposition}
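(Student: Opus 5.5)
The plan is to get the per-group reduction from the closed form already derived in the proof of \cref{prop:monotone}, then average over compositions. The first two steps are exact identities; the only care needed is in the bookkeeping of the binomial moments in the last step.

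\textbf{Per-group reduction.} Fix a group with $h$ hacks, $N=K-h$ other responses, and $\mu_N$ the mean reward of those others. If the hacks share a common reward $t$, the proof of \cref{prop:monotone} shows the centered hack mass is
\[
\Phi(t)=\frac{hN(t-\mu_N)}{K^2}.
\]
This is affine in $t$, and $\mu_N$ does not change when only the hacks are relabelled. A full audit moves $t$ from $a$ to $c$, so the hack mass falls by $(a-c)hN/K^2=(a-c)h(K-h)/K^2$. Nothing about the other responses enters, which gives the "whatever its other responses" clause. The edge cases agree with this formula: if $h=0$ nothing changes, and if $h=K$ all rewards shift equally so centered advantages are unchanged. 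Both match $h(K-h)=0$.

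\textbf{Value per label.} Take $\mu^0_k$ and $\mu^1_k$ as in \cref{thm:lb}, as expectations over the composition given the pass count $k\ge1$. Linearity of expectation gives
\[
\mu^0_k-\mu^1_k=\frac{(a-c)\,\E[h(K-h)\mid k]}{K^2},
\qquad
d_k=\frac{(a-c)\,\E[h(K-h)\mid k]}{kK^2}.
\]
Since $c<a$ and $0\le h\le K$, this is non-negative.

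\textbf{Independent hacks.} If each pass is a hack independently with probability $\eta$, then $h\sim\mathrm{Bin}(k,\eta)$. So $\E h=k\eta$ and $\E h^2=k\eta(1-\eta)+k^2\eta^2$. Then
\[
\E[h(K-h)]=Kk\eta-k\eta(1-\eta)-k^2\eta^2=k\eta\,(K-1+\eta-k\eta).
\]
Dividing by $kK^2$ gives $d_k=(a-c)\eta(K-1+\eta-k\eta)/K^2$. This is affine in $k$ with slope $-(a-c)\eta^2/K^2<0$ because $\eta>0$ and $c<a$, so it strictly decreases in $k$.

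The binomial model allows groups that mix hacks and correct answers, and the per-group formula never assumed pure groups, so the monotonicity covers mixed groups too. This is the contrast with \cref{prop:pure}, which needed pure groups.
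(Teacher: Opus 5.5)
Your proof is correct and takes the paper's route. The paper computes directly that each hack's centered advantage falls by $(a-c)(1-h/K)$ and sums over the $h$ hacks, which is the same as your difference $\Phi(a)-\Phi(c)$ of the affine closed form from the proof of \cref{prop:monotone}; it then uses the same binomial moment identity, and your separate check of the cases $h=0$ and $h=K$, which \cref{prop:monotone} excludes, is a small but useful addition.
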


\begin{proof}
A full audit lowers each hack's reward by $a-c$ and the group mean by $h(a-c)/K$, so each hack's centered advantage falls by $(a-c)(1-h/K)$; sum over the $h$ hacks and divide by $K$. In the binomial model $\E[h(K-h)]=Kk\eta-k\eta(1-\eta)-k^2\eta^2$.
\end{proof}

\begin{theorem}[\LB{} among all pass-count designs, centered advantages]
\label{thm:centered-lb}
Under the assumptions of \cref{thm:lb}, suppose in addition that, given the number of hacks in a group, every set of its passes of that size is equally likely to be the hacks. With centered advantages and a fixed correction $c<a$, consider any design with this correction (per-item, whole-group or partial) whose audit law depends only on the pass counts and on independent randomness, and let $y_g$ be its expected labels on group $g$. Then $\E[J]=\sum_g\mu^0_{k_g}-\sum_{g:\,k_g\ge1}y_gd_{k_g}$ with $d_k$ from \cref{prop:centered-dk}, so if $d_k$ is non-increasing, \LB{} minimizes $\E[J]$ over all such designs.
\end{theorem}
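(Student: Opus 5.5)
The plan is to reduce any design in the class to its per-group expected label counts $y_g$ by linearity, and then reuse the fractional-knapsack argument from the proof of \cref{thm:lb}. With centered advantages the hack mass of group $g$ is linear in the rewards. A caught hack's reward drops from $a$ to $c$, so for an arbitrary realized audit set $S$ of passes in $g$ we have
\[
\Phi_g = \Phi_g^0 - \frac{a-c}{K}\sum_{i\in S\cap H}\Big(1-\frac{|H|}{K}\Big) + \frac{a-c}{K}\sum_{i\in S\cap H}\Big(-\frac{|H|-1}{K}\Big)\cdot 0,
\]
and more carefully: lowering one hack's reward by $a-c$ lowers that hack's advantage by $(a-c)(1-1/K)$ and every other member's by $-(a-c)/K$. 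Summing over the $h$ hacks shows that each caught hack lowers the group's hack mass by $(a-c)(1-h/K)/K=(a-c)(K-h)/K^2$, whoever else is caught. So $\Phi_g$ is affine in the indicator vector of caught hacks, with the common coefficient $(a-c)(K-h)/K^2$ per caught hack.

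Next I take expectations. The design's audit law depends only on pass counts and independent randomness, so conditional on the composition the probability that pass $i$ of group $g$ is audited is some $\pi_i$ that does not depend on which passes are hacks. By the exchangeability assumption, given $h$ the hack set is a uniform $h$-subset of the $k_g$ passes, independent of the audit randomness. Hence $\E[\#\text{caught hacks}\mid h]=\sum_i\pi_i\,h/k_g=y_g h/k_g$, where $y_g=\sum_i\pi_i$ is the expected number of labels on $g$. Therefore $\E[\Phi_g]=\mu^0_{k_g}-y_g(a-c)\E[h(K-h)\mid k_g]/(k_gK^2)=\mu^0_{k_g}-y_gd_{k_g}$, which matches \cref{prop:centered-dk}. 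The value $\mu^1_k$ is consistent because a full audit has $y_g=k_g$. Summing over groups gives the stated formula for $\E[J]$.

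Once $\E[J]=\sum_g\mu^0_{k_g}-V(y)$, the constraint set is again $0\le y_g\le k_g$ with $\sum_g y_g\le B$, since every label lands on a weak pass of some group. The summation-by-parts argument in the proof of \cref{thm:lb} then applies unchanged: $d_k\ge0$ holds since $c<a$, and $d_k$ is non-increasing by hypothesis, so \LB{}'s partial sums $\min(B,C_j)$ dominate those of any feasible $y$ and \LB{} maximizes $V$. The main point to check is the exchangeability step. The audit marginals must be independent of the hack labels, and partial or per-item designs must not correlate the audited set with the hacks; both follow from the assumptions that the audit law reads only pass counts and that the hack set is uniform given $h$. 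A further subtlety is that $h$ and the non-hack rewards (correct versus malformed) can vary, but centered linearity makes the coefficient depend only on $h$, so this causes no difficulty.
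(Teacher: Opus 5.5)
Your proposal is correct and follows the paper's own proof. Linearity of centered advantages gives a decrease of $(a-c)(K-h)/K^2$ in the group's hack mass per caught hack. Exchangeability, together with marginals that do not depend on the composition, gives $\E[\sum_{i\in H_g}\pi_i\mid h]=(h/k_g)y_g$. The resulting form $\E[\Phi_g]=\mu^0_{k_g}-y_gd_{k_g}$ is exactly what the summation-by-parts argument of \cref{thm:lb} needs.

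Only your first display is garbled: drop the term multiplied by $0$, leaving $\Phi_g=\Phi^0_g-(a-c)(K-h)K^{-2}\,|S\cap H|$. The corrected computation you give right after it is already right.
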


\begin{proof}
For a fixed composition with hack set $H_g$ of size $h$, catching hack $i$ (probability $\pi_i$) lowers the group's hack mass by $(a-c)(1-h/K)/K$ by linearity, so the expected hack mass is $\Phi_g(0)-(a-c)(1-h/K)K^{-1}\sum_{i\in H_g}\pi_i$. The marginals do not depend on the composition, and by exchangeability $\E[\sum_{i\in H_g}\pi_i\mid h]=(h/k_g)y_g$. Taking expectations gives $\E[\Phi_g]=\mu^0_{k_g}-y_gd_{k_g}$, which has the form used in the proof of \cref{thm:lb}.
\end{proof}

With std normalization the conclusion fails: in the group of \cref{app:unit}, per-item audits at the same marginals lower the hack mass more than a whole-group audit when the penalty is heavy.

\begin{proposition}[Per-item reward-level \IPW{} at small rates]
\label{prop:ipw-small}
Fix a group with $h\ge1$ hacks, normalized advantages with $\epsilon\ge0$, and independent per-item audits with probability $p$ in which a caught hack receives any reward (possibly depending on $p$). Let $\Phi_w$ be the group's hack mass without an audit and $C_{K,h}=\sqrt{h(K-h)(K-1)/K^3}$. Then $|\E[\Phi_p]-\Phi_w|\le2C_{K,h}(1-(1-p)^h)\le2C_{K,h}hp$, so the expected hack mass tends to its unaudited value as $p\to0$.
\end{proposition}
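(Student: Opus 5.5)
Our plan is to condition on which hacks are caught and observe that the audit matters only when at least one hack is caught. Let $E$ be the event that at least one of the $h$ hacks is audited. Under independent per-item audits with probability $p$, $\Pr(E)=1-(1-p)^h$. On the complement of $E$ every reward keeps its weak value, so the group's hack mass is exactly $\Phi_w$. This gives
\[
\E[\Phi_p]-\Phi_w=\Pr(E)\,\bigl(\E[\Phi_p\mid E]-\Phi_w\bigr),
\]
and we only need a uniform bound $|\Phi|\le C_{K,h}$ on the hack mass of any reward vector. The bound $|\E[\Phi_p\mid E]-\Phi_w|\le 2C_{K,h}$ then follows from the triangle inequality. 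The last inequality, $1-(1-p)^h\le hp$, is Bernoulli's inequality (or a union bound over the $h$ hacks).

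The key step is the uniform bound: for every reward vector $r$ and every $\epsilon\ge0$,
\[
\Bigl|\tfrac1K\sum_{i\in H}A_i\Bigr|\le\sqrt{h(K-h)(K-1)/K^3}.
\]
This is a Samuelson-type bound for a set of indices, generalizing \cref{lem:samuelson}, and we prove it in the same way. Write $e_j=r_j-\bar r$, so that $\sum_j e_j=0$ and $\sum_j e_j^2=(K-1)\sigma^2$, and put $S=\sum_{i\in H}e_i=-\sum_{j\notin H}e_j$.
\begin{itemize}
\item Cauchy--Schwarz on the hack indices gives $\sum_{i\in H}e_i^2\ge S^2/h$.
\item Cauchy--Schwarz on the remaining $K-h$ indices gives $\sum_{j\notin H}e_j^2\ge S^2/(K-h)$.
\end{itemize}
Adding the two,
\[
(K-1)\sigma^2\ge S^2\Bigl(\tfrac1h+\tfrac1{K-h}\Bigr)=\frac{S^2K}{h(K-h)},
\qquad\text{so}\qquad
|S|\le\sigma\sqrt{h(K-h)(K-1)/K}.
\]
Dividing by $\sigma+\epsilon\ge\sigma$ and by $K$ gives exactly $C_{K,h}$. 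If $\sigma=0$, then all advantages are zero by convention and the bound is trivial; if $\sigma=0$ with $\epsilon>0$, the advantages also vanish because $e_j=0$ for every $j$. The edge case $h=K$ gives $C_{K,h}=0$, and it is consistent because every $e_i$ sums to zero over all indices.

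We must also check that nothing depends on which rewards caught hacks receive, including rewards that depend on $p$, such as the IPW value $a+(c-a)/p$, which is unbounded as $p\to0$. The uniform bound holds for arbitrary real reward vectors, so this is harmless. This is the point where the normalization matters: without std normalization the mass would not be bounded. The main (mild) obstacle is therefore getting the set-version Samuelson bound with the sharp constant; everything else is the conditioning identity. The limit statement follows immediately, since $2C_{K,h}hp\to0$ as $p\to0$.
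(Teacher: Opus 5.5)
Your proof is correct and matches the paper's. Both condition on whether any hack is caught and use a Cauchy--Schwarz bound $|\Phi|\le C_{K,h}$ that holds for arbitrary reward vectors; the only cosmetic difference is that the paper gets this bound in one step, writing $\Phi=\frac1K\sum_iA_i(\mathbf 1[i\in H]-h/K)$ and using $\sum_iA_i^2\le K-1$, whereas you split the deviations into hack and non-hack parts, which yields the same constant.
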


\begin{proof}
Since the normalized advantages sum to zero, $\Phi=\frac1K\sum_iA_i(\mathbf 1[i\in H]-h/K)$, and by the Cauchy--Schwarz inequality $|\Phi|\le\frac1K\sqrt{K-1}\sqrt{h(K-h)/K}=C_{K,h}$, using $\sum_iA_i^2\le K-1$. If no hack is caught, which has probability $(1-p)^h$, the rewards equal the weak rewards and $\Phi_p=\Phi_w$; otherwise $|\Phi_p-\Phi_w|\le2C_{K,h}$.
\end{proof}

\Cref{prop:ipwstd} does not imply that \IPW{} with std normalization fails in general: with $K=2$, one hack and one failure at $b$, a target $c<b$ and $\epsilon=0$, the hack's advantage is $+1/\sqrt2$ when it is not caught and $-1/\sqrt2$ when it is, so its expected advantage $(1-2p)/\sqrt2$ is negative for $p>1/2$.

\begin{proposition}[Group-level advantage correction]
\label{prop:advipw}
For each group let $Z_g\sim\mathrm{Bernoulli}(q_g)$ with $q_g\in(0,1]$ fixed given the batch, let $r^F$ be the rewards after a full audit with correction $c$, and set $\hat A_i=A_i(r^w)+(Z_g/q_g)(A_i(r^F)-A_i(r^w))$ for $i$ in group $g$, for either advantage form. Then $\E[\hat A_i]=A_i(r^F)$, so the expected update equals that of a full audit; $\mathrm{Var}(\hat A_i)=\frac{1-q_g}{q_g}(A_i(r^F)-A_i(r^w))^2$; the expected number of labels is $\sum_gq_gk_g$; and $\sum_{i\in g}\hat A_i=0$.
\end{proposition}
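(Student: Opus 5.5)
The plan is to condition on the batch, so that $A_i(r^w)$ and $A_i(r^F)$ are deterministic, and to note that the only randomness in $\hat A_i$ is $Z_g$. The estimator is then an affine function of a single Bernoulli variable, $\hat A_i=A_i(r^w)+(Z_g/q_g)\,\delta_i$ with $\delta_i=A_i(r^F)-A_i(r^w)$ fixed. Nothing here depends on whether $A$ is centered or std-normalized, since both $A(r^w)$ and $A(r^F)$ are just fixed numbers given the batch. This is why the statement holds for either advantage form, unlike \cref{lem:marginals}, whose linearity argument needs the centered form.

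\emph{Expectation.} From $\E[Z_g/q_g]=1$ we get $\E[\hat A_i]=A_i(r^w)+\delta_i=A_i(r^F)$. The score vectors and token-mean weights are fixed given the batch, as in the proof of \cref{lem:marginals}. Linearity of $g=\sum_i\hat A_i\nabla_\theta\log\pi_\theta(y_i\mid x_i)$ therefore gives $\E[g]$ equal to the full-audit first-order update.

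\emph{Variance.} This is $\delta_i^2\,\mathrm{Var}(Z_g)/q_g^2=\delta_i^2(1-q_g)/q_g$.

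\emph{Labels.} An audited group consumes the strong labels of all its $k_g$ weak passes and an unaudited one consumes none. Hence $\E[\text{labels}]=\sum_g q_gk_g$.

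\emph{Zero sum.} Both advantage vectors sum to zero within a group. For the centered form this holds by construction. For the normalized form it holds because the numerators sum to zero and the denominator $\sigma+\epsilon$ is common to the group; when $\sigma=\epsilon=0$ all entries are zero by convention. Therefore $\sum_{i\in g}\hat A_i=\sum_i A_i(r^w)+(Z_g/q_g)\sum_i\delta_i=0$.

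There is no real obstacle. The one point to make explicit is that the full-audit vector $A(r^F)$ is needed only when $Z_g=1$, so it is always computable from the labels actually acquired. I would also remark that $\hat A$ stays unbiased even though std normalization is nonlinear, because the correction is applied after normalization rather than to the rewards. This is the contrast with \cref{prop:ipwstd}.
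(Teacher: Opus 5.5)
Your proposal is correct and follows essentially the same route as the paper: $\hat A_i$ is affine in the single random variable $Z_g/q_g$, which has mean $1$ and variance $(1-q_g)/q_g$, and the zero-sum property holds because both $A(r^w)$ and $A(r^F)$ sum to zero within each group. Your added details are steps the paper leaves implicit: the label count, the fixed score vectors that carry unbiasedness over to the expected update, and the $\sigma=\epsilon=0$ convention.
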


\begin{proof}
$\E[Z_g/q_g]=1$ and $\mathrm{Var}(Z_g/q_g)=(1-q_g)/q_g$, and $\hat A_i$ is affine in $Z_g/q_g$; both advantage vectors sum to zero within the group.
\end{proof}

In our implementation the advantages are normalized before the correction and the trainer's own normalization is switched off: normalizing $\hat A$ again would be nonlinear in $Z_g$ and would destroy unbiasedness, whereas the trainer's group centering leaves $\hat A$ unchanged. The design generally falls outside the class of \cref{thm:lb}, because the update applied to an audited group is the full-audit change multiplied by $1/q_g$.

\paragraph{A worked example of saturation.} At the Countdown-Code budget ($K=8$, $p=0.28$, so a caught hack receives $R=1.2+(-0.8-1.2)/0.28\approx-5.94$), a group with four hacks of which one is caught and four failures at $0.2$ gives the caught hack a normalized advantage of $-2.42$, the uncaught hacks $+0.59$ and the failures $+0.17$; the limits of \cref{prop:ipwstd} are $-2.47$ and $+0.35$. A whole-group audit of this group gives the hacks $-0.94$ and the failures $+0.94$ (\cref{prop:reflect}). Averaged over the binomial number of caught hacks, the group's expected hack mass is $+0.016$ under \IPW{}, $-0.468$ under a full audit and $+0.468$ without an audit; with centered advantages \IPW{} and the full audit agree at $-0.25$. This compares \IPW{} with the full audit, not with \LB{} at the same budget: auditing this one group as a whole with probability $0.28$ gives $+0.206$.

\paragraph{Frozen-panel check of the monotonicity condition.}
\label{app:panels}
We checked the condition of \cref{thm:lb} on frozen-state panels of about 1000 prompts each: 48 Countdown-Code combinations (steps 12--32 of an unaudited run, penalty $-0.8$ or $-3.8$, audit rate 0.1--0.4) and 24 LeetCode combinations (steps 70, 80 and 120; penalty $-2.5$ or $-11.5$; rate 0.1--0.4). Each was scored by $J$ alone and by $J$ minus the advantage mass on correct responses (\cref{rem:objectives}). With the panels' empirical label values, \LB{} attained the optimum over the designs of \cref{thm:lb} in 123 of the 144 cases and was within $4.2\times10^{-4}$ of it in the rest.

\section{Experimental details}
\label{app:details}

\begin{table}[h]
\centering
\small
\caption{Training configurations.}
\label{tab:config}
\begin{tabular}{lll}
\toprule
& LeetCode & Countdown-Code \\
\midrule
Model & Qwen3-4B, LoRA rank 32, $\alpha=32$ & Qwen2.5-3B-Instruct, all parameters \\
Framework & verl, code of \citet{wong2026monitor} & TRL 0.26.2 with vLLM \\
Prompts $\times$ responses per step & $16\times16$ & $32\times8$ \\
Optimizer updates per step & 1 & 2 \\
Learning rate (constant) & $1.75\times10^{-5}$ & $3\times10^{-6}$ \\
AdamW $(\beta_1,\beta_2)$, weight decay & $(0.9,0.99)$, $0.1$ & $(0.9,0.999)$, $0.01$ \\
Gradient clipping, clip ratio & 1.0, 0.2 & 1.0, 0.2 \\
KL to reference & $10^{-3}$ (low-variance estimator) & none \\
Advantage & group std, $\epsilon=10^{-6}$ & group std, $\epsilon=10^{-4}$ (or none) \\
Loss aggregation & token mean & token mean \\
Sampling & temperature 0.7, top-$p$ 0.95 & temperature 1.0 \\
Prompt / response limit (tokens) & 1536 / 1536 & 2048 / 2048 \\
Starting checkpoint & step-80 adapter of an unaudited run & step 32 of an unaudited run \\
Training steps & 40 & 40 \\
Audit rate $p$, reflected penalty & 0.33, $-2.5$ & 0.28, $-0.8$ \\
Weak rewards $(a,b)$ & $(3.5,0.5)$ & $(1.2,0.2)$ \\
\bottomrule
\end{tabular}
\end{table}

\paragraph{Strong verifiers.} In LeetCode, a response is a hack when it passes the weak verifier only through its own \texttt{run\_tests} function and fails the problem's tests; responses that pass the problem's tests are correct even if they also define \texttt{run\_tests}. In Countdown-Code, a weak pass is a hack when its expression literal does not use exactly the given numbers or does not evaluate to the given target.

\paragraph{Baselines.} In Countdown-Code we also compare against three alternatives. \PAC{} \citep{moya2026verifier} keeps the weak reward and adds a correction along an inverse-propensity estimate of the hack-probability gradient, projected to be orthogonal to the reward gradient. The original Euclidean rule collapsed the policy within one step under full-parameter training, so we use an adaptation in which the projection is taken in the metric of AdamW's diagonal preconditioner after the batch's optimizer updates; the guarantees of the original method, stated for a fixed metric and gradient flow, do not carry over. The correction size was selected on a separate development state. An oracle selection rule (\textsc{Cag}) reads the semantic labels and audits groups in decreasing order of the corrective advantage gap, the acquisition criterion of CARE \citep{wang2026care}, under \LB{}'s budget; it is not the full CARE method. Audited-only learning (\textsc{Ao}) uses \LB{}'s audits but drops unaudited groups from the update.

\paragraph{Held-out panels and protocol.} Held-out evaluation uses the final checkpoints of all three seeds of the two main comparisons, seeds 2 and 3 of no-std \LB{}, and the single-seed Countdown-Code arms: 1000 Countdown-Code test problems with 8 samples each, and the 119 medium and hard LeetCode test problems of \citet{wong2026monitor} with 32 samples each, plus 353 held-out training-split LeetCode problems with 16 samples each for the difficulty strata of \cref{app:attempts} and the group-level correction. Paired bootstrap intervals use 4000 resamples over 640 (LeetCode) or 1280 (Countdown-Code) (step, prompt) units, or over problems.
\paragraph{Compute.} Training ran on two NVIDIA RTX PRO 6000 GPUs, one arm per GPU, at about two hours per 40-step arm. Held-out panels ran on separate RTX 3090 and A6000 GPUs.

\section{Per-seed results}
\label{app:tables}

\begin{figure}[h]
\centering
\includegraphics[width=\linewidth]{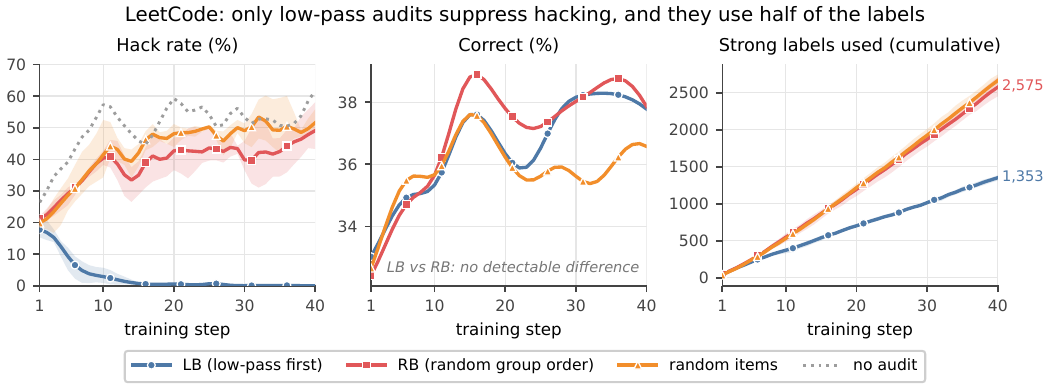}
\caption{LeetCode training curves with the reflected penalty ($-2.5$) and the same budget rule ($p=0.33$): mean over three seeds with their range shaded (correctness: means only), smoothed with a Gaussian kernel of one step (hack rate) or three steps (correctness).}
\label{fig:lc}
\end{figure}

In the tables, entries are estimates with the distances to the upper and lower ends of the 95\% interval, $x^{+u}_{-l}$.

\begin{table}[h]
\centering
\footnotesize
\caption{LeetCode: \LB{} minus \RB{} (reflected penalty $-2.5$, $p=0.33$), per seed. Paired bootstrap 95\% intervals over (step, prompt) units. Labels are strong-verifier calls over 40 steps.}
\label{tab:lbrb}
\setlength{\tabcolsep}{5pt}
\begin{tabular}{lccccc}
\toprule
Seed & $\Delta\mathrm{AUC}(h)$ & $\Delta h$, steps 31--40 & $\Delta C$, steps 31--40 & $\Delta$pass@16 & Labels \LB{} / \RB{} \\
\midrule
1 & \pmci{lc.LBRB.s1.dauc} & \pmci{lc.LBRB.s1.dhlate} & \pmci{lc.LBRB.s1.dClate} & \nm{lc.LBRB.s1.dpass16} & \nm{lc.LB.s1.labels} / \nm{lc.RB.s1.labels} \\
2 & \pmci{lc.LBRB.s2.dauc} & \pmci{lc.LBRB.s2.dhlate} & \pmci{lc.LBRB.s2.dClate} & \nm{lc.LBRB.s2.dpass16} & \nm{lc.LB.s2.labels} / \nm{lc.RB.s2.labels} \\
3 & \pmci{lc.LBRB.s3.dauc} & \pmci{lc.LBRB.s3.dhlate} & \pmci{lc.LBRB.s3.dClate} & \nm{lc.LBRB.s3.dpass16} & \nm{lc.LB.s3.labels} / \nm{lc.RB.s3.labels} \\
\midrule
Mean ($p$) & \nm{lc.LBRB.mean.d_auc} (\nm{lc.LBRB.p.d_auc}) & \nm{lc.LBRB.mean.d_h} (\nm{lc.LBRB.p.d_h}) & \nm{lc.LBRB.mean.d_c} (\nm{lc.LBRB.p.d_c}) & \nm{lc.LBRB.mean.d_p16} (\nm{lc.LBRB.p.d_p16}) & \nm{lc.LBRB.mean.labelratio}\% \\
\bottomrule
\end{tabular}
\end{table}

\begin{table}[h]
\centering
\footnotesize
\caption{Countdown-Code: \LB{} minus per-item \IPW{} (target $-0.8$, $p=0.28$), per seed. \LB{} uses group std normalization, \IPW{} does not.}
\label{tab:lbipw}
\setlength{\tabcolsep}{5pt}
\begin{tabular}{lccccc}
\toprule
Seed & $\Delta\mathrm{AUC}(h)$ & $\Delta h$, steps 31--40 & $\Delta C$, steps 31--40 & $\Delta$pass@8 & Labels \LB{} / \IPW{} \\
\midrule
1 & \pmci{cc.LBIPW.s1.dauc} & \pmci{cc.LBIPW.s1.dhlate} & \pmci{cc.LBIPW.s1.dClate} & \nm{cc.LBIPW.s1.dpass8} & \nm{cc.LB.s1.labels} / \nm{cc.IPW.s1.labels} \\
2 & \pmci{cc.LBIPW.s2.dauc} & \pmci{cc.LBIPW.s2.dhlate} & \pmci{cc.LBIPW.s2.dClate} & \nm{cc.LBIPW.s2.dpass8} & \nm{cc.LB.s2.labels} / \nm{cc.IPW.s2.labels} \\
3 & \pmci{cc.LBIPW.s3.dauc} & \pmci{cc.LBIPW.s3.dhlate} & \pmci{cc.LBIPW.s3.dClate} & \nm{cc.LBIPW.s3.dpass8} & \nm{cc.LB.s3.labels} / \nm{cc.IPW.s3.labels} \\
\midrule
Mean ($p$) & \nm{cc.LBIPW.mean.dauc} (\nm{cc.LBIPW.p.dauc}) & \nm{cc.LBIPW.mean.dhlate} (\nm{cc.LBIPW.p.dhlate}) & \nm{cc.LBIPW.mean.dClate} (\nm{cc.LBIPW.p.dClate}) & \nm{cc.LBIPW.mean.dpass8} (\nm{cc.LBIPW.p.dpass8}) & $+$\nm{cc.LBIPW.labels.excess}\% \\
\bottomrule
\end{tabular}
\end{table}

\begin{table}[h]
\centering
\footnotesize
\caption{Held-out test problems, per seed: \LB{} minus \RB{} in LeetCode (119 problems $\times$ 32 samples) and \LB{} minus \IPW{} in Countdown-Code (1000 problems $\times$ 8 samples), in all answers and among parseable answers. Per-problem paired bootstrap 95\% intervals; the last row gives the mean over seeds and the seed-level $p$.}
\label{tab:heldout}
\setlength{\tabcolsep}{6pt}
\begin{tabular}{lcccc}
\toprule
Seed & LeetCode $\Delta H$ & LeetCode $\Delta C$ & Countdown-Code $\Delta C$ & $\Delta C$, parseable \\
\midrule
1 & \pmci{ho3.lc.s1.LBRB.H} & \pmci{ho3.lc.s1.LBRB.C} & \pmci{ho3.cc.s1.LBIPW.C} & \pmci{ho3.cc.LBIPW.s1.Cparse} \\
2 & \pmci{ho3.lc.s2.LBRB.H} & \pmci{ho3.lc.s2.LBRB.C} & \pmci{ho3.cc.s2.LBIPW.C} & \pmci{ho3.cc.LBIPW.s2.Cparse} \\
3 & \pmci{ho3.lc.s3.LBRB.H} & \pmci{ho3.lc.s3.LBRB.C} & \pmci{ho3.cc.s3.LBIPW.C} & \pmci{ho3.cc.LBIPW.s3.Cparse} \\
\midrule
Mean ($p$) & \nm{ho3.lc.LBRB.mean.H} (\nm{ho3.lc.LBRB.p.H}) & \nm{ho3.lc.LBRB.mean.C} (\nm{ho3.lc.LBRB.p.C}) & \nm{ho3.cc.LBIPW.mean.C} (\nm{ho3.cc.LBIPW.p.C}) & \nm{ho3.cc.LBIPW.mean.Cparse} \\
\bottomrule
\end{tabular}
\end{table}

\begin{table}[h]
\centering
\small
\caption{Single-seed arms (seed 1). AUC and rates in \%. Top: LeetCode, fixed count of 8 groups per step or full audit. Bottom: Countdown-Code, $p=0.28$.}
\label{tab:single}
\begin{tabular}{llcccr}
\toprule
& Arm & AUC$(h)$ & $h$, 31--40 & $C$, 31--40 & Labels \\
\midrule
\multirow{6}{*}{LeetCode} & \NONE{} & \nm{lc.NONE.auc} & \nm{lc.NONE.hlate} & \nm{lc.NONE.Clate} & \nm{lc.NONE.labels} \\
& \FULL{}, $c=b$ & \nm{lc.FULL.auc} & \nm{lc.FULL.hlate} & \nm{lc.FULL.Clate} & \nm{lc.FULL.labels} \\
& \textsc{Pos}, $c=b$ & \nm{lc.POS.auc} & \nm{lc.POS.hlate} & \nm{lc.POS.Clate} & \nm{lc.POS.labels} \\
& \textsc{Low}, $c=b$ & \nm{lc.LOW.auc} & \nm{lc.LOW.hlate} & \nm{lc.LOW.Clate} & \nm{lc.LOW.labels} \\
& \textsc{Low}, $c=2b-a$ & \nm{lc.LOWm.auc} & \nm{lc.LOWm.hlate} & \nm{lc.LOWm.Clate} & \nm{lc.LOWm.labels} \\
& \LB{}, $c=2b-a$, $p=0.33$ & \nm{lc.LB.s1.auc} & \nm{lc.LB.s1.hlate} & \nm{lc.LB.s1.Clate} & \nm{lc.LB.s1.labels} \\
\midrule
\multirow{13}{*}{Countdown-Code} & \LB{}, $c=2b-a$ & \nm{cc.LB.s1.auc} & \nm{cc.LB.s1.hlate} & \nm{cc.LB.s1.Clate} & \nm{cc.LB.s1.labels} \\
& \RB{}, $c=2b-a$ & \nm{cc.RB.auc} & \nm{cc.RB.hlate} & \nm{cc.RB.Clate} & \nm{cc.RB.labels} \\
& \LB{}, no std & \nm{cc.LBnostd.auc} & \nm{cc.LBnostd.hlate} & \nm{cc.LBnostd.Clate} & \nm{cc.LBnostd.labels} \\
& \IPW{}, $c=2b-a$ & \nm{cc.IPW.s1.auc} & \nm{cc.IPW.s1.hlate} & \nm{cc.IPW.s1.Clate} & \nm{cc.IPW.s1.labels} \\
& \IPW{}, $c=2b-a$, group std & \nm{cc.IPWstd.auc} & \nm{cc.IPWstd.hlate} & \nm{cc.IPWstd.Clate} & \nm{cc.IPWstd.labels} \\
& \IPW{}, $c=b$ & \nm{cc.IPWtrue.auc} & \nm{cc.IPWtrue.hlate} & \nm{cc.IPWtrue.Clate} & \nm{cc.IPWtrue.labels} \\
& Group advantage correction & \nm{cc.advIPW.auc} & \nm{cc.advIPW.hlate} & \nm{cc.advIPW.Clate} & \nm{cc.advIPW.labels} \\
& \quad allocated by pass count & \nm{cc.optIPW.auc} & \nm{cc.optIPW.hlate} & \nm{cc.optIPW.Clate} & \nm{cc.optIPW.labels} \\
& \FULL{}, $c=2b-a$ & \nm{cc.FULL.s1.auc} & \nm{cc.FULL.s1.hlate} & \nm{cc.FULL.s1.Clate} & \nm{cc.FULL.s1.labels} \\
& \PAC{} & \nm{cc.PAC.auc} & \nm{cc.PAC.hlate} & \nm{cc.PAC.Clate} & \nm{cc.PAC.labels} \\
& \PAC{} without projection & \nm{cc.RawA.auc} & \nm{cc.RawA.hlate} & \nm{cc.RawA.Clate} & \nm{cc.RawA.labels} \\
& \textsc{Cag} (oracle selection) & \nm{cc.CAG.auc} & \nm{cc.CAG.hlate} & \nm{cc.CAG.Clate} & \nm{cc.CAG.labels} \\
& \textsc{Ao} (audited groups only) & \nm{cc.AO.auc} & \nm{cc.AO.hlate} & \nm{cc.AO.Clate} & \nm{cc.AO.labels} \\
\bottomrule
\end{tabular}
\end{table}

\section{Group std normalization and other baselines in Countdown-Code}
\label{app:std}

Adding group std normalization to \IPW{} breaks its unbiasedness and, in our run, strongly attenuated its correction, consistent with \cref{prop:ipwstd}: hacking plateaus at \nm{cc.IPWstd.hlate}\% instead of \nm{cc.IPW.s1.hlate}\%. \LB{} does not depend on std normalization, and in Countdown-Code it trained better without it. Over three seeds, \LB{} without std normalization suppressed hacking as well (late hacking at most \nm{cc.LBnostd.s3.hlate}\%) and ended with higher late correctness than \LB{} with it in every seed, by \nm{cc.LBnostdLB.min.dClate.abs}--\nm{cc.LBnostdLB.max.dClate.abs}\pp{}, with higher pass@8 in every seed. The cost is \nm{cc.LBnostdLB.s3.labelratio.abs}\% to \nm{cc.LBnostdLB.s1.labelratio.abs}\% more labels and \nm{cc.LBnostdLB.s1.tokenratio.abs}\% to \nm{cc.LBnostdLB.s3.tokenratio.abs}\% more generated tokens, because responses grow longer. On held-out problems, the no-std \LB{} checkpoints of seeds 2 and 3 answer \nm{ho3.nostd.s2.LBnostd.C}\% and \nm{ho3.nostd.s3.LBnostd.C}\% correctly (\nm{ho3.nostd.s2.LBnostdLB.C.abs} and \nm{ho3.nostd.s3.LBnostdLB.C.abs}\pp{} above \LB{} with std normalization, and \nm{ho3.nostd.s2.LBnostdIPW.C.abs} and \nm{ho3.nostd.s3.LBnostdIPW.C.abs}\pp{} above \IPW{} at the same seeds), with a truncation tradeoff: in seed 3, \nm{ho3.nostd.s3.LBnostd.M}\% of held-out responses were unparseable, almost all after reaching the 2048-token limit, compared with \nm{ho3.cc.s3.LB.M}\% for std-normalized \LB{}. Removing std normalization changes GRPO's weighting of easy and hard prompts \citep{liu2025drgrpo}; \LB{} composes with it. When \LB{} runs without std normalization, its late correctness exceeds that of \IPW{} at the same seed by \nm{cc.LBnostdIPW.s2.dClate.abs} and \nm{cc.LBnostdIPW.s3.dClate.abs}\pp{} in seeds 2 and 3, so the gap between \LB{} and \IPW{} is not an artifact of the normalization.

\PAC{} \citep{moya2026verifier} keeps the weak reward, so each GRPO step still reinforces hacks, and the projected correction is locally orthogonal to the weak-reward gradient in the chosen metric. With per-item audits at the same rate, hacking rose to \nm{cc.PAC.hlate}\% in the late window, and the unprojected correction reached \nm{cc.RawA.hlate}\% (\cref{tab:single}). The oracle selection rule (\textsc{Cag}) matched \LB{} on hacking and correctness, including held-out accuracy, with \nm{cc.CAG.labels} labels against \nm{cc.LB.s1.labels}, mainly because it skips groups whose corrective gap is zero. Learning only from audited groups (\textsc{Ao}) matched \LB{} on hacking and correctness but drifted toward long responses (\nm{cc.AO.lenlate} against \nm{cc.LB.s1.lenlate} tokens on average in the late window); \nm{ho.cc.AO.M}\% of its held-out responses gave no answer, almost all because they reached the 2048-token limit.

\section{The audit unit at equal marginals}
\label{app:unit}

\begin{table}[h]
\centering
\small
\caption{LeetCode, independent per-item audits minus random whole-group audits at the same audit rate $p=0.4$ (group std normalization). D0: penalty 0, 32 prompts per step, a fixed number of audited groups per step ($12$ plus a Bernoulli$(0.8)$ draw). D4: penalty $-11.5$, 16 prompts per step, each group audited with probability $p$. Paired bootstrap 95\% intervals.}
\label{tab:unit}
\begin{tabular}{llccc}
\toprule
& Seed & $\Delta\mathrm{AUC}(h)$ & $\Delta h$, steps 31--40 & $\Delta C$, steps 31--40 \\
\midrule
\multirow{3}{*}{D0} & 1 & \pmci{lc.D0.s1.dauc} & \pmci{lc.D0.s1.dhlate} & \pmci{lc.D0.s1.dClate} \\
& 2 & \pmci{lc.D0.s2.dauc} & \pmci{lc.D0.s2.dhlate} & \pmci{lc.D0.s2.dClate} \\
& 3 & \pmci{lc.D0.s3.dauc} & \pmci{lc.D0.s3.dhlate} & \pmci{lc.D0.s3.dClate} \\
\midrule
\multirow{2}{*}{D4} & 1 & \pmci{lc.D4.s1.dauc} & \pmci{lc.D4.s1.dhlate} & \pmci{lc.D4.s1.dClate} \\
& 2 & \pmci{lc.D4.s2.dauc} & \pmci{lc.D4.s2.dhlate} & \pmci{lc.D4.s2.dClate} \\
\bottomrule
\end{tabular}
\end{table}

The two experiments also differ in prompts per step and in how the number of audited groups is drawn, so they are not a strict two-factor design. Spreading heavy penalties over many groups saturates the normalized advantages of every group with a caught hack (\cref{prop:ipwstd}), which also shrinks the positive advantage of the uncaught hacks beside it, whereas random whole-group audits leave the unaudited groups untouched. At penalty $-2.5$ and $p=0.33$, per-item random audits also failed to suppress hacking, with an AUC difference against \RB{} of \nm{lc.IBRB.s1.dauc}, \nm{lc.IBRB.s2.dauc} and \nm{lc.IBRB.s3.dauc}\pp{} over the three seeds; we have not tested per-item low-pass targeting.

\paragraph{An exact comparison for one group.} Fix a group with $k$ weak passes, of which $h\ge1$ are hacks and the rest correct, and failures at reward $b$, and let $\Psi(j)$ be its normalized hack mass $\frac1K\sum_{i\in H}A_i$ when $j$ of the $h$ hacks are caught. Auditing the whole group with probability $p$ gives $E_G=(1-p)\Psi(0)+p\Psi(h)$; auditing each weak pass independently with probability $p$ catches $X\sim\mathrm{Binomial}(h,p)$ hacks and gives $E_I=\sum_j\binom hjp^j(1-p)^{h-j}\Psi(j)$, at the same expected cost. Since $\E[X]=hp$,
\[
E_I-E_G=\sum_{j=1}^{h-1}\Pr(X=j)\Big[\Psi(j)-\Big(1-\frac jh\Big)\Psi(0)-\frac jh\Psi(h)\Big],
\]
so per-item audits lower the hack mass more than whole-group audits exactly when $\Psi$ at intermediate depths lies, on average, below the chord between its end points; with centered advantages $\Psi$ is affine in $j$ and the two agree. For $K=16$, $h=8$, $k=10$, $(a,b)=(3.5,0.5)$ and $p=0.4$, penalty 0 gives $E_G=\nm{unit.c0.EG}$ and $E_I=\nm{unit.c0.EI}$ (whole groups better), while penalty $-11.5$ gives $E_G=\nm{unit.c115.EG}$ and $E_I=\nm{unit.c115.EI}$ (single items better).

The D0 comparison was first run with 16 prompts per step and a random number of audited groups; its three seeds gave $-4.18$, $-0.41$ and $+3.02$\pp{}, too noisy to decide, and we preregistered the 32-prompt version above before running it. The seed-level mean of the 32-prompt version is \nm{lc.D0.mean.dauc}\pp{} ($p=\nm{lc.D0.p.dauc}$).

\section{Hack attempts in LeetCode}
\label{app:attempts}

Following the grading harness of \citet{wong2026monitor}, we define the attempt rate as the union of the three evaluator categories in which a response defines or modifies \texttt{run\_tests}: strict hack, correct with an attempt, and failed attempt. \Cref{tab:attempts} reports, over steps 31--40 of each arm, the shares of all responses that are strict hacks, correct, correct with an attempt, correct without one, and attempts. The categories overlap and correctness keeps its usual definition; the attempt label is an observable property of the response, not a measure of intent.

\begin{table}[h]
\centering
\small
\caption{LeetCode, steps 31--40, shares of all responses (\%). ``Attempt'' counts every response that defines its own tests.}
\label{tab:attempts}
\begin{tabular}{lrrrrr}
\toprule
Arm & Strict H & Correct & Correct, attempt & Correct, no attempt & Attempt \\
\midrule
\LB{}, seed 1 & \nm{att.LB.s1.late.H} & \nm{att.LB.s1.late.C} & \nm{att.LB.s1.late.C_att} & \nm{att.LB.s1.late.C_pure} & \nm{att.LB.s1.late.attempt} \\
\LB{}, seed 2 & \nm{att.LB.s2.late.H} & \nm{att.LB.s2.late.C} & \nm{att.LB.s2.late.C_att} & \nm{att.LB.s2.late.C_pure} & \nm{att.LB.s2.late.attempt} \\
\LB{}, seed 3 & \nm{att.LB.s3.late.H} & \nm{att.LB.s3.late.C} & \nm{att.LB.s3.late.C_att} & \nm{att.LB.s3.late.C_pure} & \nm{att.LB.s3.late.attempt} \\
\RB{}, seed 1 & \nm{att.RB.s1.late.H} & \nm{att.RB.s1.late.C} & \nm{att.RB.s1.late.C_att} & \nm{att.RB.s1.late.C_pure} & \nm{att.RB.s1.late.attempt} \\
\RB{}, seed 2 & \nm{att.RB.s2.late.H} & \nm{att.RB.s2.late.C} & \nm{att.RB.s2.late.C_att} & \nm{att.RB.s2.late.C_pure} & \nm{att.RB.s2.late.attempt} \\
\RB{}, seed 3 & \nm{att.RB.s3.late.H} & \nm{att.RB.s3.late.C} & \nm{att.RB.s3.late.C_att} & \nm{att.RB.s3.late.C_pure} & \nm{att.RB.s3.late.attempt} \\
\textsc{Ib}, seed 1 & \nm{att.IB.s1.late.H} & \nm{att.IB.s1.late.C} & \nm{att.IB.s1.late.C_att} & \nm{att.IB.s1.late.C_pure} & \nm{att.IB.s1.late.attempt} \\
\textsc{Ib}, seed 2 & \nm{att.IB.s2.late.H} & \nm{att.IB.s2.late.C} & \nm{att.IB.s2.late.C_att} & \nm{att.IB.s2.late.C_pure} & \nm{att.IB.s2.late.attempt} \\
\textsc{Ib}, seed 3 & \nm{att.IB.s3.late.H} & \nm{att.IB.s3.late.C} & \nm{att.IB.s3.late.C_att} & \nm{att.IB.s3.late.C_pure} & \nm{att.IB.s3.late.attempt} \\
\FULL{} ($c=b$), seed 1 & \nm{att.FULLb.s1.late.H} & \nm{att.FULLb.s1.late.C} & \nm{att.FULLb.s1.late.C_att} & \nm{att.FULLb.s1.late.C_pure} & \nm{att.FULLb.s1.late.attempt} \\
\FULL{} ($c=b$), seed 2 & \nm{att.FULLb.s2.late.H} & \nm{att.FULLb.s2.late.C} & \nm{att.FULLb.s2.late.C_att} & \nm{att.FULLb.s2.late.C_pure} & \nm{att.FULLb.s2.late.attempt} \\
\FULL{} ($c=b$), seed 3 & \nm{att.FULLb.s3.late.H} & \nm{att.FULLb.s3.late.C} & \nm{att.FULLb.s3.late.C_att} & \nm{att.FULLb.s3.late.C_pure} & \nm{att.FULLb.s3.late.attempt} \\
\FULL{} ($c=2b-a$), seed 1 & \nm{att.FULLr.s1.late.H} & \nm{att.FULLr.s1.late.C} & \nm{att.FULLr.s1.late.C_att} & \nm{att.FULLr.s1.late.C_pure} & \nm{att.FULLr.s1.late.attempt} \\
\FULL{} ($c=2b-a$), seed 2 & \nm{att.FULLr.s2.late.H} & \nm{att.FULLr.s2.late.C} & \nm{att.FULLr.s2.late.C_att} & \nm{att.FULLr.s2.late.C_pure} & \nm{att.FULLr.s2.late.attempt} \\
\FULL{} ($c=2b-a$), seed 3 & \nm{att.FULLr.s3.late.H} & \nm{att.FULLr.s3.late.C} & \nm{att.FULLr.s3.late.C_att} & \nm{att.FULLr.s3.late.C_pure} & \nm{att.FULLr.s3.late.attempt} \\
\NONE{} & \nm{att.NONE.late.H} & \nm{att.NONE.late.C} & \nm{att.NONE.late.C_att} & \nm{att.NONE.late.C_pure} & \nm{att.NONE.late.attempt} \\
\FULL{} ($c=b$, pilot) & \nm{att.FULL.late.H} & \nm{att.FULL.late.C} & \nm{att.FULL.late.C_att} & \nm{att.FULL.late.C_pure} & \nm{att.FULL.late.attempt} \\
\textsc{Pos}, $c=b$ & \nm{att.POS.late.H} & \nm{att.POS.late.C} & \nm{att.POS.late.C_att} & \nm{att.POS.late.C_pure} & \nm{att.POS.late.attempt} \\
\textsc{Low}, $c=b$ & \nm{att.LOW.late.H} & \nm{att.LOW.late.C} & \nm{att.LOW.late.C_att} & \nm{att.LOW.late.C_pure} & \nm{att.LOW.late.attempt} \\
\textsc{Low}, $c=2b-a$ & \nm{att.LOWm.late.H} & \nm{att.LOWm.late.C} & \nm{att.LOWm.late.C_att} & \nm{att.LOWm.late.C_pure} & \nm{att.LOWm.late.attempt} \\
\bottomrule
\end{tabular}
\end{table}

Under the reflected penalty, both low-pass designs remove the attempts along with the strict hacks. Auditing every weak pass with the strong-verdict reward $c=b$ leaves strict hacking at \nm{att.FULLb.s2.late.H}--\nm{att.FULLb.s1.late.H}\% and attempts at \nm{att.FULLb.s2.late.attempt}--\nm{att.FULLb.s1.late.attempt}\% across the three seeds (pooled attempt rate \nm{lc.FULLb.pool.attempt}\%, exceeding strict hacking by \nm{lc.FULLb.pool.residue}\pp{}), whereas auditing every weak pass with $2b-a$ eliminates strict hacking in all three seeds and reduces attempts to at most \nm{att.FULLr.s1.late.attempt}\%; the difference in late attempts has the same sign in every seed but varies widely across seeds. That correct responses with attempts reinforce the behavior was observed by \citet{wong2026monitor}.

\paragraph{Held-out problems by difficulty.} We sampled 16 responses on each of 353 held-out problems that were never used for RL training, from the starting checkpoint and from the final \LB{} and \RB{} adapters of all three seeds, with the training sampling settings. Problems were split by the dataset's difficulty label and into thirds of the starting checkpoint's per-problem correctness (\cref{tab:difficulty}; bootstrap intervals resample problems conditional on the trained adapters and fixed strata). \RB{}'s strict hacking rises from the easiest to the hardest third by \nm{diff.RBgap.emp.H}\pp{} \ci{diff.RBgap.emp.H}, while its attempt rate stays between \nm{diff.RB.label_hard.attempt}\% and \nm{diff.RB.emp_easiest.attempt}\% in every stratum; strict hacking thus tracks how often an attempt coincides with a failed solution. \LB{} and \FULL{} ($c=2b-a$) show no resurgence on hard problems. Under \FULL{} ($c=b$), strict hacking is \nm{diff.FULLb.emp_easiest.H}\% on the easiest third with \nm{diff.FULLb.emp_easiest.attempt}\% attempts, and rises to \nm{diff.FULLb.emp_hardest.H}\% on the hardest third (\nm{diff.FULLbgap.emp.H}\pp{} \ci{diff.FULLbgap.emp.H}).

\begin{table}[h]
\centering
\small
\caption{Held-out LeetCode problems by difficulty (\%; \LB{} and \RB{} pooled over three seeds). Thirds are defined by the starting checkpoint's correctness on each problem.}
\label{tab:difficulty}
\begin{tabular}{llrrr}
\toprule
Model & Stratum (problems) & Strict H & Attempt & Correct \\
\midrule
Start & hardest third (\nm{diff.size.emp_hardest}) & \nm{diff.start.emp_hardest.H} & \nm{diff.start.emp_hardest.attempt} & \nm{diff.start.emp_hardest.C} \\
Start & easiest third (\nm{diff.size.emp_easiest}) & \nm{diff.start.emp_easiest.H} & \nm{diff.start.emp_easiest.attempt} & \nm{diff.start.emp_easiest.C} \\
\LB{} & hardest third & \nm{diff.LB.emp_hardest.H} & \nm{diff.LB.emp_hardest.attempt} & \nm{diff.LB.emp_hardest.C} \\
\LB{} & easiest third & \nm{diff.LB.emp_easiest.H} & \nm{diff.LB.emp_easiest.attempt} & \nm{diff.LB.emp_easiest.C} \\
\LB{} & labelled hard (\nm{diff.size.label_hard}) & \nm{diff.LB.label_hard.H} & \nm{diff.LB.label_hard.attempt} & \nm{diff.LB.label_hard.C} \\
\RB{} & hardest third & \nm{diff.RB.emp_hardest.H} & \nm{diff.RB.emp_hardest.attempt} & \nm{diff.RB.emp_hardest.C} \\
\RB{} & easiest third & \nm{diff.RB.emp_easiest.H} & \nm{diff.RB.emp_easiest.attempt} & \nm{diff.RB.emp_easiest.C} \\
\RB{} & labelled hard & \nm{diff.RB.label_hard.H} & \nm{diff.RB.label_hard.attempt} & \nm{diff.RB.label_hard.C} \\
\FULL{} ($c=b$) & hardest third & \nm{diff.FULLb.emp_hardest.H} & \nm{diff.FULLb.emp_hardest.attempt} & \nm{diff.FULLb.emp_hardest.C} \\
\FULL{} ($c=b$) & easiest third & \nm{diff.FULLb.emp_easiest.H} & \nm{diff.FULLb.emp_easiest.attempt} & \nm{diff.FULLb.emp_easiest.C} \\
\FULL{} ($c=2b-a$) & hardest third & \nm{diff.FULLr.emp_hardest.H} & \nm{diff.FULLr.emp_hardest.attempt} & \nm{diff.FULLr.emp_hardest.C} \\
\bottomrule
\end{tabular}
\end{table}

\section{The online rule and a one-step diagnosis at step 120}
\label{app:diag}

The online rule estimates, from the audits' own labels and a smoothed estimate of composition by pass count, how many labels to spend in each group, and may audit only part of a group. It improved $J$ over \LB{} in simulation on frozen panels. From the main starting point it fell behind \LB{} early (area under the hacking curve \nm{lc.ON80_LB.dauc.abs}\pp{} higher) before both reached near zero. The two arms of a pair did not see identical first-step samples, so the pilot is not a controlled counterfactual for the cause.

At the starting checkpoint of the step-120 pilot, we drew \nm{diag.batches} common batches of 16 prompts with 16 responses each from the training distribution, assigned audits to the same responses under \LB{} and under the frozen online rule (penalty $-2.5$, $p=0.2$), and computed the exact projections of each response's score gradient on estimated directions $d_H=\nabla P(H)$ and $d_C=\nabla P(C)$. The directions were estimated by score-function averages with leave-one-out baselines on 864 other prompts with four samples each at temperature 1; their cosine is \nm{diag.cos}. KL, clipping, gradient clipping and weight decay were omitted.

Online minus \LB{}, the hack mass $S_H$ was \nm{diag.dSH} $\pm$ \nm{diag.dSH.se} (standard error over batches), and the projection of the first-order SGD direction on $d_H$ was \nm{diag.dsgdH} $\pm$ \nm{diag.dsgdH.se}. These batches were sampled as in training, with top-$p$ 0.95, but their scores were evaluated under the full softmax, so the score of a sample does not have mean zero: every response class had a large positive mean projection on $d_H$ (\nm{diag2.proj.min} to \nm{diag2.proj.max}). Splits of the difference by response class, or into the change in the std denominator and the direct audit correction, are dominated by this shared component. In these batches the hack responses contributed \nm{diag.dfromH} $\pm$ \nm{diag.dfromH.se} and the other responses \nm{diag.dfromRest} $\pm$ \nm{diag.dfromRest.se}, as if the other responses cancelled a stronger correction through the hacks, and the denominator and direct parts were \nm{diag2.reweight} $\pm$ \nm{diag2.reweight.se} and \nm{diag2.direct} $\pm$ \nm{diag2.direct.se}. We repeated the comparison on \nm{diag.batches} new batches sampled with top-$p$ 1, where the class means ranged from \nm{diagp1.proj.min} to \nm{diagp1.proj.max}. The hack mass again differed by \nm{diagp1.dSH} $\pm$ \nm{diagp1.dSH.se} and the projection by \nm{diagp1.dsgdH} $\pm$ \nm{diagp1.dsgdH.se}, but the hack responses contributed \nm{diagp1.dfromH} $\pm$ \nm{diagp1.dfromH.se} and the other responses \nm{diagp1.dfromRest} $\pm$ \nm{diagp1.dfromRest.se}, and the denominator and direct parts were \nm{diagp1.reweight} $\pm$ \nm{diagp1.reweight.se} and \nm{diagp1.direct} $\pm$ \nm{diagp1.direct.se}. Under both samplers the lower hack mass of the online rule did not produce a detectable difference in the projection, and the attribution to response classes or to the two parts changed with the sampler, so we do not attribute the outcome to any of them. The errors include only the variation over batches, not the error of the estimated directions or of the frozen online state.

Randomly permuting the advantages among the hacks of each group leaves every class coefficient unchanged. Under the online rule the mean change in the $d_H$ projection was \nm{diag.perm.mean}, as it must be by symmetry, but its root mean square over \nm{diag.perm.pairs} assignments was \nm{diag.perm.rms}, \nm{diag.perm.ratio} times the mean net projection (\nm{diag.online.net}); in the batches sampled with top-$p$ 1 it was \nm{diagp1.perm.rms}, \nm{diagp1.perm.ratio} times the mean net projection (\nm{diagp1.online.net}). Under \LB{}, which audits whole groups, it was exactly zero. Which hack receives the penalty is therefore a large source of noise in a single update, although its role in the long-run outcome is not identified. We also computed the first step of a fresh Adam state; its projections were too noisy to compare, and since Adam is nonlinear and the omitted terms do not cancel after it, this diagnosis does not identify the optimizer's contribution.

\section{Preregistration and deviations}
\label{app:prereg}

Each comparison was entered in a dated log, with its arms, primary metric and decision rule, and committed to version control before its runs started. Predictions and their outcomes:
\begin{itemize}
\item The per-item random audit comparison in LeetCode was preregistered without a predicted direction and is reported with two-sided intervals and a seed-level $t$-test (\cref{app:stats}).
\item The online pilot was preregistered as a test of predictions from frozen-state simulation; the predictions failed.
\item For the Countdown-Code runs of \RB{} and of the group-level advantage correction, we predicted from the sign of the frozen-panel hack mass that \RB{} would not suppress hacking (it did not) and that the group-level correction would (its late hacking of \nm{cc.advIPW.hlate}\% fell just short of our 2\% threshold; both later seeds suppressed hacking).
\item For seeds 2 and 3 of the LeetCode group-level advantage correction we predicted suppression from the frozen-panel sign; seed 3 suppressed hacking, and seed 2 ended with \nm{lc.advIPW.s2.hlate}\% late hacking (partial).
\item LeetCode low budgets: from the sign of the frozen-panel hack mass we predicted that \LB{} would not suppress hacking at 0.1 and would at 0.2 (both held in two seeds), and that \LB{} with corrections scaled by 3.1 and the group-level advantage correction would suppress it at 0.1 (both held). For \LB{} with penalty $-11.5$ at 0.1 we predicted no suppression; it suppressed hacking for about 15 steps and then rose again (partial). A continuation from its step 40 rose to \nm{lc.LBp10hc40.hlate}\% in steps 31--40, against the prediction of below 10\% from the frozen-panel sign at that state.
\item In total, before 16 training runs we predicted from the sign of the frozen-panel hack mass whether hacking would be suppressed (late hacking below 2\%) or not (10\% or more); 13 predictions held. Besides the runs above, the 16 include seed 1 of the LeetCode group-level advantage correction, the Countdown-Code correction allocated by pass count, and a single-seed LeetCode run of \LB{} with penalty $-0.25$ instead of $-2.5$; all three were predicted to suppress hacking and did. The three misses (seed 1 of the Countdown-Code group-level correction, seed 2 of the LeetCode one, and \LB{} with penalty $-11.5$ at 0.1) all ended between the two thresholds and nearer the predicted one: the first two, predicted to suppress hacking, ended at \nm{cc.advIPW.hlate}\% and \nm{lc.advIPW.s2.hlate}\%, and the third, predicted not to, ended at \nm{lc.LBp10h.s1.hlate}\% after suppressing hacking for about 15 steps. The continuation of the last was preregistered with a pair of thresholds outside this count, and its frozen-panel prediction failed (above). Applied to the starting panels of 23 further runs, the same rule matched 20 (\cref{sec:boundaries}).
\end{itemize}

\section{Statistical details of the main-text comparisons}
\label{app:stats}

Training intervals are paired bootstrap 95\% intervals over (step, prompt) units, held-out intervals over problems; seed-level tests are one-sample $t$-tests on the three per-seed differences. Differences are in percentage points.
\begin{itemize}
\item \LB{} minus \RB{}, LeetCode (\cref{tab:lbrb}): area under the hacking curve \nm{lc.LBRB.mean.d_auc} on average (seed-level $t(2)=\nm{lc.LBRB.t.d_auc}$, $p=\nm{lc.LBRB.p.d_auc}$); late correctness \nm{lc.LBRB.s1.dClate}, \nm{lc.LBRB.s2.dClate} and \nm{lc.LBRB.s3.dClate} (mean \nm{lc.LBRB.mean.d_c}, $p=\nm{lc.LBRB.p.d_c}$); labels \nm{lc.LBRB.mean.labelratio}\% of \RB{}'s ($p=\nm{lc.LBRB.p.labelratio}$).
\item Random items minus \LB{}, LeetCode: area under the hacking curve \pmci{lc.IBLB.s1.dauc}, \pmci{lc.IBLB.s2.dauc} and \pmci{lc.IBLB.s3.dauc} (mean \nm{lc.IBLB.mean.d_auc}, $t(2)=\nm{lc.IBLB.t.d_auc}$, $p\nm{lc.IBLB.p.d_auc}$); late correctness \pmci{lc.IBLB.s1.dClate}, \pmci{lc.IBLB.s2.dClate} and \pmci{lc.IBLB.s3.dClate} (mean \nm{lc.IBLB.mean.d_c}, $p=\nm{lc.IBLB.p.d_c}$). Random items minus \RB{}: \pmci{lc.IBRB.s1.dauc}, \pmci{lc.IBRB.s2.dauc} and \pmci{lc.IBRB.s3.dauc}.
\item Single-seed LeetCode arms (\cref{tab:single}): area under the hacking curve, \textsc{Low} minus \textsc{Pos} \pmci{lc.LOW_POS.dauc}, \textsc{Low}$_{2b-a}$ minus \textsc{Low} \pmci{lc.LOWm_LOW.dauc}; late correctness, \textsc{Low}$_{2b-a}$ minus \FULL{} \pmci{lc.LOWm_FULL.dClate}.
\item Full audits, $c=b$ minus $c=2b-a$, LeetCode: late attempts \pmci{lc.FULLbFULLr.s1.datt}, \pmci{lc.FULLbFULLr.s2.datt} and \pmci{lc.FULLbFULLr.s3.datt} (mean \nm{lc.FULLbFULLr.mean.d_att}, seed-level $p=\nm{lc.FULLbFULLr.p.d_att}$).
\item \LB{} minus \IPW{}, Countdown-Code (\cref{tab:lbipw,tab:heldout}): area under the hacking curve mean \nm{cc.LBIPW.mean.dauc} ($p=\nm{cc.LBIPW.p.dauc}$); late correctness $t(2)=\nm{cc.LBIPW.t.dClate}$, $p=\nm{cc.LBIPW.p.dClate}$; held-out accuracy $t(2)=\nm{ho3.cc.LBIPW.t.C}$, $p=\nm{ho3.cc.LBIPW.p.C}$. No-std \LB{} minus \IPW{}, late correctness: \pmci{cc.LBnostdIPW.s2.dClate} and \pmci{cc.LBnostdIPW.s3.dClate} (seeds 2 and 3).
\item Countdown-Code single-seed comparisons with \LB{} (seed 1): \RB{} minus \LB{}, area under the hacking curve \pmci{cc.RBLB.dauc}, late correctness \pmci{cc.RBLB.dClate}; held-out, \LB{} minus \RB{}, accuracy \pmci{ho.cc.LBRB.dC} and among parseable answers \pmci{ho.cc.LBRB.dC_parseable}.
\item Countdown-Code group-level advantage correction minus \LB{} of the same seed (seeds 1, 2, 3): area under the hacking curve \pmci{cc.advIPWLB.s1.dauc}, \pmci{cc.advIPWLB.s2.dauc}, \pmci{cc.advIPWLB.s3.dauc}; late correctness \pmci{cc.advIPWLB.s1.dClate}, \pmci{cc.advIPWLB.s2.dClate}, \pmci{cc.advIPWLB.s3.dClate}. Held-out, \LB{} minus the correction: accuracy \pmci{ho.cc.LBadvIPW.s1.dC}, \pmci{ho.cc.LBadvIPW.s2.dC}, \pmci{ho.cc.LBadvIPW.s3.dC}; unparseable answers \pmci{ho.cc.LBadvIPW.s1.dM}, \pmci{ho.cc.LBadvIPW.s2.dM}, \pmci{ho.cc.LBadvIPW.s3.dM}; accuracy among parseable answers \pmci{ho.cc.LBadvIPW.s1.dC_parseable}, \pmci{ho.cc.LBadvIPW.s2.dC_parseable}, \pmci{ho.cc.LBadvIPW.s3.dC_parseable}.
\item Countdown-Code held-out accuracy with cut answers continued to 4096 tokens: \LB{} minus the group-level advantage correction \pmci{ho4k.cc.LBadvIPW.s1.dC}, \pmci{ho4k.cc.LBadvIPW.s2.dC}, \pmci{ho4k.cc.LBadvIPW.s3.dC}; \LB{} minus per-item \IPW{} \pmci{ho4k.cc.LBIPW.s1.dC}, \pmci{ho4k.cc.LBIPW.s2.dC}, \pmci{ho4k.cc.LBIPW.s3.dC} (seeds 1, 2, 3).
\item Countdown-Code full audit with the reflected penalty minus \LB{} (seeds 1, 2, 3): area under the hacking curve \pmci{cc.FULLLB.s1.dauc}, \pmci{cc.FULLLB.s2.dauc}, \pmci{cc.FULLLB.s3.dauc}; late correctness \pmci{cc.FULLLB.s1.dClate}, \pmci{cc.FULLLB.s2.dClate}, \pmci{cc.FULLLB.s3.dClate}; held-out accuracy, \LB{} minus the full audit, \pmci{ho.cc.LBFULL.s1.dC}, \pmci{ho.cc.LBFULL.s2.dC}, \pmci{ho.cc.LBFULL.s3.dC}; \nm{cc.FULL.s1.labels}, \nm{cc.FULL.s2.labels} and \nm{cc.FULL.s3.labels} labels. Correction allocated by pass count (seed 1): minus \LB{}, area under the hacking curve \pmci{cc.optIPWLB.dauc} and late correctness \pmci{cc.optIPWLB.dClate}; minus the uniform correction, late correctness \pmci{cc.optIPWadvIPW.dClate}; held-out, \LB{} minus it \pmci{ho.cc.LBoptIPW.dC}, it minus the uniform correction \pmci{ho.cc.optIPWadvIPW.dC}, unparseable answers \pmci{ho.cc.optIPWadvIPW.dM}.
\item LeetCode low budgets (seed 1): \LB{} with corrections scaled by 3.1 at 0.1 minus \LB{} at 0.33, area under the hacking curve \pmci{lc.LBscale10LBm.dauc}, late correctness \pmci{lc.LBscale10LBm.dClate}; minus the group-level advantage correction at 0.1, area under the hacking curve \pmci{lc.LBscale10advIPWp10.dauc}, late correctness \pmci{lc.LBscale10advIPWp10.dClate}; \nm{lc.LBscale10.labels}, \nm{lc.advIPWp10.labels} and \nm{lc.LBp10.s1.labels} labels for the scaled \LB{}, the advantage correction and \LB{} at 0.1.
\item LeetCode group-level advantage correction minus \LB{} of the same seed (seeds 1, 2, 3): area under the hacking curve \pmci{lc.advIPW.s1.dauc}, \pmci{lc.advIPW.s2.dauc}, \pmci{lc.advIPW.s3.dauc} (the three \LB{} seeds range from \nm{lc.LB.s2.auc}\% to \nm{lc.LB.s3.auc}\%); late correctness \pmci{lc.advIPW.s1.dClate}, \pmci{lc.advIPW.s2.dClate}, \pmci{lc.advIPW.s3.dClate}; held-out accuracy on 353 training-split problems \pmci{lc.advIPW.s1.hoC}, \pmci{lc.advIPW.s2.hoC}, \pmci{lc.advIPW.s3.hoC}; late hacking \nm{lc.advIPW.s1.hlate}\%, \nm{lc.advIPW.s2.hlate}\% and \nm{lc.advIPW.s3.hlate}\%; \nm{lc.advIPW.s1.labels}, \nm{lc.advIPW.s2.labels} and \nm{lc.advIPW.s3.labels} labels against \nm{lc.advIPW.s1.labels.LB}, \nm{lc.advIPW.s2.labels.LB} and \nm{lc.advIPW.s3.labels.LB}.
\item Group std normalization, Countdown-Code: \IPW{} with minus without it, area under the hacking curve \pmci{cc.IPWstd_IPW.dauc}. No-std \LB{} minus \LB{}: late correctness \nm{cc.LBnostdLB.s1.dClate}, \nm{cc.LBnostdLB.s2.dClate} and \nm{cc.LBnostdLB.s3.dClate}, each interval above zero (mean \nm{cc.LBnostdLB.mean.dClate}, $p=\nm{cc.LBnostdLB.p.dClate}$); held-out accuracy in seeds 2 and 3, \pmci{ho3.nostd.s2.LBnostdLB.C} and \pmci{ho3.nostd.s3.LBnostdLB.C}, and against \IPW{} \pmci{ho3.nostd.s2.LBnostdIPW.C} and \pmci{ho3.nostd.s3.LBnostdIPW.C}.
\item Oracle selection (\textsc{Cag}) minus \LB{}, held-out accuracy: \pmci{ho.cc.CAG-LB.C}.
\item Online rule minus \LB{}, LeetCode, single seed: area under the hacking curve \pmci{lc.ON120_LB120.dauc} from step 120 and \pmci{lc.ON80_LB.dauc} from step 80.
\end{itemize}

\end{document}